\documentclass[a4paper, 12pt, twoside]{report}

\usepackage{amsmath,amsfonts,bm}

\def\eqref#1{equation~\ref{#1}}

\def\1{\bm{1}}

\DeclareMathAlphabet{\mathsfit}{\encodingdefault}{\sfdefault}{m}{sl}
\SetMathAlphabet{\mathsfit}{bold}{\encodingdefault}{\sfdefault}{bx}{n}

\newcommand{\mparam}{\bm{\theta}}	
\newcommand{\vparam}{\bm{\phi}}	
\newcommand{\weight}{\mathbf{W}} 
\newcommand{\data}{\mathcal{D}} 
\newcommand{\latent}{\bm{z}} 
\newcommand{\hparam}{\bm{\varphi}} 

\newcommand{\x}{\bm{x}}

\newcommand{\y}{\mathbf{y}}
\newcommand{\z}{\bm{z}}
\newcommand{\bv}{\bm{v}}

\DeclareMathOperator*{\argmax}{arg\,max}
\DeclareMathOperator*{\argmin}{arg\,min}

\usepackage{url}
\usepackage[pdftex]{graphicx} 
\usepackage{caption}
\usepackage{subcaption}
\usepackage{enumitem}
\usepackage{soul}
\usepackage[normalem]{ulem}
\usepackage{cancel}
\usepackage{float}
\usepackage{multirow}
\usepackage{adjustbox}
\usepackage{wrapfig}
\usepackage{natbib}
\usepackage{hyperref}
\usepackage{algpseudocode}

\usepackage[a4paper,top=2.5cm,bottom=2.5cm,left=3cm,right=3cm,marginparwidth=1.75cm]{geometry}
\usepackage{setspace}
\usepackage{amsmath,amsfonts,bm,amsthm}
\usepackage{booktabs}
\usepackage{tikz}
\usepackage{parskip}
\usepackage{indentfirst}

\usepackage{amsmath,amsfonts,amssymb,amsthm}
\usepackage{pgfplots}
\usepackage{caption}
\usepackage{subcaption}
\usepackage{graphicx}
\usepackage{adjustbox}
\usepackage{multirow}
\usepackage{enumitem}
\usepackage{comment}
\usepackage{wrapfig}
\usepackage{pdflscape}  

\usepackage{cancel}

\usepackage{url}            
\usepackage{booktabs}      
\usepackage{amsfonts}       
\usepackage{nicefrac}       
\usepackage{microtype}      
\usepackage{xcolor}  

\usepackage{tikz,lipsum,lmodern}
\usepackage[most]{tcolorbox}
\usepackage{graphicx}

\makeatletter
\newcommand*{\resetMathstrut}{%
  \setbox\z@\hbox{(}%
  \ht\Mathstrutbox@\ht\z@
  \dp\Mathstrutbox@\dp\z@
}
\makeatother

\newenvironment{talign}
{\align}
{\endalign}

\hypersetup{
colorlinks=true,
allcolors=[rgb]{0.24,0,0.80}
}
\usepackage[capitalise]{cleveref}

\usepackage{url}
\usepackage{amsmath}
\usepackage{amssymb}
\usepackage{mathtools}
\usepackage{amsthm}
\usepackage{ulem}
\usepackage{caption}
\usepackage{subcaption}
\usepackage{changepage}
\usepackage[bottom]{footmisc}

\newcommand{\bK}{\mathbf{K}}

\newcommand{\bS}{\mathbf{S}}

\newcommand{\bX}{\mathbf{X}}
\newcommand{\bu}{\mathbf{u}}

\newcommand{\bZ}{\mathbf{Z}}

\newcommand{\by}{\mathbf{y}}

\theoremstyle{plain}
\newtheorem{theorem}{Theorem}[section]

\theoremstyle{definition}

\newtheorem{remark}[theorem]{Remark}
\newcommand{\ConditionallyIndependent}[3]{#1 \perp\kern-5pt \perp #2 \mid #3}

\title{Probabilistic Learning and Generation in Deep Sequence Models}
\author{Wenlong Chen}
\date{December 18, 2025}

\begin{document}
\pagenumbering{arabic}
\begin{titlepage}
    \newcommand{\HRule}{\rule{\linewidth}{0.5mm}} 
    
    
    \includegraphics[width=7cm]{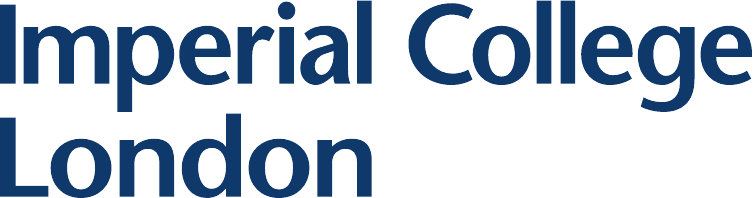}\\[2cm] 
    
    
    \center 
    
    
    \textsc{\Large Imperial College London}\\[0.2cm] 
    \textsc{\large Department of Computing}\\[0.8cm] 
    
    \makeatletter
    \vfill
    { \huge \@title}\\[1.0cm] 
    { \Large \@author} 
    \vfill
    
    
    {\large \@date}\\[2cm] 
    \makeatother
    
    This thesis is submitted for the degree of \textit{Doctor of Philosophy}
    
\end{titlepage}
\chapter*{Declaration of Originality}
    I hereby declare that the work in this thesis is my own. The work of others has been appropriately referenced. A full list of references is given in the bibliography.

\chapter*{Copyright}
    The copyright of this thesis rests with the author and its contents are made available under a Creative Commons Attribution Non-Commercial Share-Alike 4.0 International (CC BY-NC-SA 4.0) License. You may copy and redistribute the material in any medium or format. You may also remix, transform or build upon the material. In doing so, you must give appropriate credit to the author, provide a link to the license and indicate if any changes were made. If you remix, transform or build upon this material, you must redistribute your contributions under the same license. You may not use the material for commercial purposes.
    
    Please seek permission from the copyright holder for uses of this work that are not included in the license mentioned above.

\chapter*{Acknowledgements}
    First and foremost, I would like to thank my supervisor, Yingzhen Li, for her unwavering support throughout my PhD. Yingzhen, in many ways, is the best supervisor that I can ever imagine. In addition to her enthusiasm for research, she is also such a knowledgeable, patient, modest, and fun person to work with. I am grateful for the freedom and encouragement that Yingzhen gave me to pursue my own ideas and explore different research directions. Her deep insights and candid feedback always inspired me to think deeper and helped me grow as a better researcher. Thanks to Yingzhen, these four years become one of the most memorable experiences in my life.

    I am very lucky to have worked with excellent collaborators during my PhD, from whom I learned a great deal: Yegor Klochkov, Naoki Kiyohara, Harrison Zhu, Wenlin Chen, Lapo Rastrelli, Shavindra Jayasekera, Jacob Si, Filippo Valdettaro, Yohan Jung, Hyungi Lee, Thomas Möllenhoff, Mohammad Emtiyaz Khan, Bolian Li, and Ruqi Zhang. I especially thank Yegor Klochkov and Mohammad Emtiyaz Khan for offering incredible internship experiences at Bytedance AI lab and RIKEN AIP. Yegor introduced me to the world of machine learning fairness and optimization, and Emti encouraged me to think outside the box.
    
    I am grateful to everyone I shared interesting conversations with during my PhD: Carles Balsells Rodas, Zijing Ou, Xavier Sumba Toral, Jiaming Zhang, Wen Wu, Rui Xia, Jean-Francois Ton, Ruocheng Guo, Muhammad Faaiz Taufiq, Zonghao Chen, Yang Liu, Junyu Xuan, Xue Yan, Theodore Papamarkou, Andre Freitas, Danilo S. Carvalho, Hugo Monzón Maldonado, Marco Miani, Luke Ong, and many others. I also want to express my heartfelt thanks to Yingxue Yang, with whom I have shared more than twenty years of friendship, for all the jokes and memorable moments we have shared together in my leisure life.

    Many thanks to Felipe Tobar and Arno Solin for examining my thesis. I am very grateful for their time and effort in reading my thesis and providing feedback.
    
    Finally, I owe a lot to my family and loved ones. I would like to express my deepest gratitude to my parents for their unconditional support throughout this journey.

\chapter*{Abstract}
    Deep sequence models have achieved profound success across a wide range of data modalities. Despite exceptional predictive performance, the main concern of their deployment centers around the lack of uncertainty awareness. In contrast, probabilistic models quantify the uncertainty associated with unobserved variables with rules of probability. Notably, Bayesian methods leverage Bayes' rule to express our belief of unobserved variables given some observed variables in a principled way. Since exact Bayesian inference is computationally infeasible at scale, approximate inference is required in practice. Two major bottlenecks of Bayesian methods, especially when applied in deep neural networks, are prior specification and approximation quality. In Chapter \ref{cha:sgpa} and \ref{cha:hsgp}, we investigate how the architectures of deep sequence models themselves can be informative for specifying priors or choosing approximation methods in probabilistic models. We first develop an approximate Bayesian inference method tailored to the Transformer architecture based on the similarity between attention mechanism and sparse Gaussian process. Next, we exploit the long-range memory preservation capability of HiPPOs (High-order Polynomial Projection Operators) to construct an interdomain inducing point for Gaussian process, which successfully memorizes the history in online or continual learning. In addition to the progress of deep sequence models in predictive tasks, sequential generative models consisting of a sequence of latent variables (e.g., diffusion models), are popularized in the domain of deep generative models. Inspired by the explicit self-supervised signals for these latent variables in diffusion models, in Chapter \ref{cha:pseudovid}, we explore the possibility of improving other deep generative models with self-supervised signals for their latent states, and investigate desired probabilistic structures over the sequence of latent states in sequential generation. Overall, this thesis leverages inductive biases in deep sequence models to design probabilistic inference or structure, which bridges the gap between deep sequence models and probabilistic models, leading to mutually reinforced improvement.
\tableofcontents
\listoffigures
\listoftables

\chapter{Introduction}
While deep learning techniques can be traced back to the last century \citep{mcCulloch1943logical, rosenblatt1958perceptron}, their revival began in the early 2010s, with the advent of a series of breakthroughs \citep{krizhevsky2012imagenet, graves2013speech, brown2020language}, proving their effectiveness in modeling complex functions and scalability to large datasets \citep{lecun2015dl}. Given that input features from many data modalities are in the form of sequences (e.g., natural language, audio, and video) or can be represented by sequences (e.g., an image can be viewed as a sequence of patches \citep{dosovitskiy2020image}), recent progress in deep learning is largely driven by advances in deep sequence models \citep{vaswani2017attention, gu_hippo_2020, gu_s4_2022, gu_mamba_2023}, which tend to become a unifying modeling paradigm across different modalities and keep shattering our expectation of their upper limit \citep{jumper2021highly, openai2024gpt4technicalreport, zeni2024mattergengenerativemodelinorganic}.

One of the key components of deep sequence models is a few carefully designed network architectures tailored to sequence modeling. The success of Transformers \citep{vaswani2017attention} based on attention architecture has been observed in a wide range of fields including computer vision \citep{dosovitskiy2020image}, natural language processing \citep{devlin2018bert, brown2020language, openai2024gpt4technicalreport}, speech recognition \citep{openai2024gpt4ocard}, and applications in scientific domains \citep{jumper2021highly, zeni2024mattergengenerativemodelinorganic}. In addition, deep state space models (SSMs; \citep{gu_hippo_2020, gu_s4_2022, gu_mamba_2023}), initially designed to efficiently model long-term dependencies in sequences with extensive context, offer a competitive alternative to Transformers and have been deployed in a growing number of applications \citep{zhu_vision_2024, li2024videomamba, quan2024multichannel}. 

In the deep generative modeling domain, sequential generative models \citep{sonderby2016ladder, ho2020denoising} extend single latent variable models by incorporating a sequence of latent variables which improve the model expressiveness, and instead of generating data in the target domain from noise in just one shot, these models gradually transform noise into target data with multiple steps. Among them, diffusion models \citep{sohl2015deep,ho2020denoising,song2020score} and their variants \citep{kingma2021variational,nichol2021improved,song2021denoising,rissanen2022generative, bansal2023cold, hoogeboom2023blurring} have shown impressive performance in generating photorealistic images, and these models have also been adapted to other data modalities in addition to vision \citep{benita2023diffar, campbell2023trans, lou2024discrete}. 

Although deep sequence models have become foundational building blocks in modern machine learning systems, it is important to be aware that they are still not the ultimate answer to all applications. For example, these models are often unaware of the limits of their own knowledge and provide unreliable uncertainty in their predictions, which prevents their deployment in safety-critical applications \citep{xiong2024can, wen2024mitigating}. 

On the other hand, probabilistic methods, based on the law of probability and Bayesian statistics \citep{bayes1763essay}, provide a principled framework to reason about uncertainty and study the structure of random variables. In this framework, we start with a prior distribution representing our initial belief of the unknown, and the knowledge from the observed data is used to update the prior into a posterior distribution, typically with uncertainty reduction, via Bayes' rule. Furthermore, the framework is well suited for sequential learning or decision making under uncertainty. The Bayes' update can be carried out whenever new data arrives by treating the old posterior as the new prior. 

Clever integration of the complementary strength of probabilistic methods into deep sequence models may address some fundamental problems associated with them, including unreliable uncertainty estimate. In the past decade, many efforts have been made to exploit probabilistic methods to improve the reliability of deep learning, which gives rise to a vital research area, namely Bayesian deep learning \citep{wilson2022bayesian,arbel2023primer,papamarkou2024positionbayesiandeeplearning, chen2025bayesian}. However, two main bottlenecks of Bayesian statistics are amplified when applied in deep learning. First, meaningful prior and model specification is challenging, especially for Bayesian neural networks. Since deep neural networks are highly nonlinear and overparameterized, the effect of prior specified for their weights on the functions induced is not straightforward \citep{sun2019functional, ma2021functional}. Second, in practice, approximate Bayesian inference \citep{jordan1999vi, brooks2011mcmc, li2018approx} is indispensable since the exact posterior requires intractable integration. This raises the question about choices and structure of the approximate posterior and an ideal approximation needs to be both accurate and scalable.

This thesis aims to advance Bayesian deep learning specifically for deep sequence models, which now become a unifying modeling tool for widespread applications. Instead of thinking of solutions purely from probabilistic perspective, such as handcrafting meaningful priors on countless network weights, we propose to build upon the enormous progress of deep sequence models by exploiting the inductive biases within them, such as the network architectures. We leverage these inductive biases to construct probabilistic models and approximation techniques tailored to these models. This not only enables reinforced improvements by combining both deep learning and probabilistic modeling, but also ensures that the methods developed here leave the characteristic mechanisms of deep sequence models intact.

\section{Thesis Outline and Contributions}
The remaining chapters in this thesis are organized as follows.

\begin{itemize}
\item Chapter~\ref{cha:back} introduces the technical background for the thesis. We review key concepts and ideas in deep sequence models, probabilistic modeling, and Bayesian deep learning, which form the basis of the methods proposed in this thesis. The chapter is concluded with two core applications of deep probabilistic modeling, namely uncertainty quantification and deep generative models.

\item Chapter~\ref{cha:sgpa} aims at calibrating the uncertainty of Transformer models \citep{vaswani2017attention} by identifying connection between dot-product attention and the posterior mean of sparse Gaussian process (SGP; \citep{snelson2005sgp}). Based on this key connection, we propose Sparse Gaussian Process attention (SGPA), which replaces the scaled dot-product operation in Transformer with a valid symmetric kernel and uses sparse Gaussian processes techniques to approximate the posterior processes in the output space of multi-head attention blocks (MHAs) directly. Empirically, on a suite of prediction tasks on text, images and graphs, SGPA-based Transformers achieve competitive predictive accuracy, while noticeably improving both in-distribution uncertainty calibration and out-of-distribution robustness and detection. This chapter is based on \citet{chen2023calibrating}

\item Chapter~\ref{cha:hsgp} proposes a novel online Gaussian process (GP) model \citep{bui_streaming_2017} to capture long-term memory in sequentially observed data in an online learning setting. Our model, Online HiPPO Sparse Variational Gaussian Process (OHSVGP), leverages the long-range memory modeling capabilities of HiPPO framework \citep{gu_hippo_2020} by interpreting the HiPPO time-varying orthogonal projections as inducing variables capable of memorizing the process history. We show that the HiPPO framework fits naturally into the interdomain GP framework \citep{van_der_wilk_framework_2020} and demonstrate that the kernel matrices can also be updated online in a recurrence form based on the ODE evolution of HiPPO. We evaluate OHSVGP with online prediction for 1D time series, continual learning in discriminative GP model for data with multidimensional inputs, and deep generative modeling with sparse Gaussian process variational autoencoder \citep{jazbec_scalable_2021}, showing that it outperforms existing online GP methods in terms of predictive performance, long-term memory preservation, and computational efficiency. This chapter is based on \citet{chen2025recurrent}

\item Chapter~\ref{cha:pseudovid} is motivated by the superior performance of diffusion models \citep{sohl2015deep,ho2020denoising,song2020score} against other sequential generative models, such as hierarchical variational autoencoders (HVAEs; \citep{sonderby2016ladder,maaloe2019biva,vahdat2020nvae}). We hypothesize that this can be partly attributed to the additional self-supervision information for their intermediate latent states provided by corrupted images, which along with the original image form a pseudo video. Based on this hypothesis, we explore the possibility of improving other types of generative models with such pseudo videos by first extending an image generative model to its video generative model counterpart, and then train the video generative model on pseudo videos constructed by applying data augmentation to the original images. Furthermore, we analyze the potential issues of first-order Markov data augmentation methods, as typically used in diffusion models, in sequential generation from a probabilistic perspective, and propose to use data augmentation with more expressive probabilistic structure to construct more useful information in pseudo videos. We empirically verify the effectiveness of additional self-supervised information from pseudo videos with experiments on the CIFAR10 and CelebA datasets. This chapter is based on \citet{chen2025your}

\item Chapter~\ref{cha:conclusion} provides an outlook of our key findings and contributions made in this thesis. Moreover, we identify several open challenges and plausible avenues for future work.
\end{itemize}

\section{List of Publications}
This section provides a full list of publications that I co-authored during my PhD. Titles are boldfaced for papers whose content is included in this thesis and I also give a brief description of my contribution to each of these works. Some of the material, including ideas, concepts, texts, figures, tables presented in this thesis have previously appeared in these works. The asterisk superscript ($\ast$) indicates co-first authorship with equal contribution.

\paragraph{Peer-reviewed Conference Publications.}
\begin{enumerate}
    \item \citep{chen2025recurrent} \underline{Wenlong Chen}$^\ast$, Naoki Kiyohara$^\ast$, Harrison Bo Hua Zhu$^\ast$, Jacob Curran-Sebastian, Samir Bhatt, and Yingzhen Li. \textbf{Recurrent Memory for Online Interdomain Gaussian Processes}. In \textit{Advances in Neural Processing Systems (NeurIPS)}, 2025.\\
    My contribution: the main idea was developed by me while the last author provided useful suggestions. The three co-first authors ($^\ast$) all contributed to the code implementation, experimentation, and paper writing under the supervision of the last author. Moreover, I performed all the derivations and helped orchestrate other authors’ contributions.
    
    \item \citep{jayasekera2025variational} I. Shavindra Jayasekera$^\ast$, Jacob Si$^\ast$, Filippo Valdettaro, \underline{Wenlong Chen}, A. Aldo Faisal, and Yingzhen Li. Variational Uncertainty Decomposition for In-Context Learning. In \textit{Advances in Neural Processing Systems (NeurIPS)}, 2025.
    
    \item \citep{jung2025compact2} Yohan Jung, Hyungi Lee, \underline{Wenlong Chen}, Thomas Möllenhoff, Yingzhen Li, Juho Lee, Mohammad Emtiyaz Khan. Compact Memory for Continual Logistic Regression. In \textit{Advances in Neural Processing Systems (NeurIPS)}, 2025.

    \item \citep{chen2024post} \underline{Wenlong Chen}$^\ast$, Yegor Klochkov$^\ast$, and Yang Liu. Post-hoc Bias Scoring is Optimal for Fair Classification. In \textit{International Conference on Learning Representations (ICLR)}, 2024. Awarded spotlight presentation.
    
    \item \citep{chen2023calibrating} \underline{Wenlong Chen} and Yingzhen Li. \textbf{Calibrating Transformers via Sparse Gaussian Processes}. In \textit{International Conference on Learning Representations (ICLR)}, 2023.\\
    My contribution: the main idea was developed by me while the last author provided useful suggestions. The manuscript was mainly written by me while the last author helped polish it. Moreover, I wrote all the code and performed all the experiments for this work.
\end{enumerate}

\paragraph{Peer-reviewed Workshop Publications.}
\begin{enumerate}
    \item \citep{chen2025your} \underline{Wenlong Chen}$^\ast$, Wenlin Chen$^\ast$, Lapo Rastrelli, and Yingzhen Li. \textbf{Your Image is Secretly the Last Frame of a Pseudo Video}. In \textit{Deep Generative Model in Machine Learning: Theory, Principle and Efficacy (DeLTa) Workshop at ICLR}, 2025.\\
    My contribution: the main idea was developed by me while the last author provided useful suggestions. Both co-first authors ($\ast$) contributed to code implementation, experimentation, and paper writing under the supervision of the last author. The third author helped with the experiments related to video diffusion models. I also helped orchestrate other authors’ contributions.

    \item \citep{jung2025compact} Yohan Jung$^\ast$, Hyungi Lee$^\ast$, \underline{Wenlong Chen}$^\ast$, Thomas Möllenhoff, Yingzhen Li, Juho Lee, and Mohammad Emtiyaz Khan. Compact Memory for K-prior Based Continual Learning. In \textit{Symposium on Advances in Approximate Bayesian Inference (AABI)}, 2025.
    
\end{enumerate}

\paragraph{Preprints.}
\begin{enumerate}
     \item \citep{chen2025bayesian} \underline{Wenlong Chen}$^\ast$, Bolian Li$^\ast$, Ruqi Zhang, and Yingzhen Li. Bayesian Computation in Deep Learning. In \textit{arXiv preprint arXiv:2502.18300}, 2025 (to appear as a chapter in Handbook of Markov Chain Monte Carlo - 2nd Edition).
    
\end{enumerate}

\chapter{Background}
\label{cha:back}
As two rapidly evolving research areas, both deep sequence models and probabilistic modeling have a vast amount of literature. This chapter aims to provide a compact introduction to relevant concepts and methods in these two fields, based on which the research in this thesis is developed. We begin by reviewing two popular network architectures in deep sequence modeling, namely Transformers (Section~\ref{sec:transformer}), and High-order Polynomial Projection Operators (HiPPO) which establishes the theoretical foundation of deep state space models (SSMs) (Section~\ref{sec:hippo}). Next, we introduce key concepts in probabilistic machine learning and variational inference for approximate posterior (Section~\ref{sec:intro_vi}). We review in detail a classical class of probabilistic models, Gaussian processes, and common techniques of building deep models with them (Section~\ref{sec:gp_back}). In addition to predictive models, we last review the applications of probabilistic machine learning in the domain of deep generative models, with a particular focus on sequential generative models (Section~\ref{sec:dsgm}).

\section{Deep Sequence Model Architectures}
Sequence modeling is at the core of many machine learning applications nowadays, and the tremendous success of deep learning in this field can be largely attributed to several powerful network architectures. In this section, we review two popular deep sequence model architectures, Transformer and High-order Polynomial Projection Operators (HiPPO). 

\subsection{Transformer}

\label{sec:transformer}

A generic Transformer is constructed by multi-layer perceptron (MLP) and multi-head self-attention (MHSA), and we review these two base architectures below.

\paragraph{Multi-layer Perceptron.} Multi-layer perceptron (MLP; \citep{rosenblatt1958perceptron}) is an ubiquitous base component in deep neural networks. Given an input $\bm{x} \in \mathbb{R}^{d_{in}}$, an $L$-layer MLP $f_{\mparam}(\bm{x}): \mathbb{R}^{d_{in}} \rightarrow \mathbb{R}^{d_{out}}$ applies a series of transformations to the input as follows:
\begin{equation}
f_{\mparam}(\bm{x})=\weight^L g(\weight^{L-1}g(\cdot\cdot\cdot g(\weight^1 \bm{x}+\bm{b}^1))+\bm{b}^{L-1})+\bm{b}^L,
\label{eq:feed_forward_dnn}
\end{equation}
where we use $\mparam=\{\weight^l, \bm{b}^l\}_{l=1}^L$ to denote the collection of learnable weight matrices and bias vectors,
\begin{equation*}
\begin{split}
&\weight^1 \in \bm{R}^{d_h \times d_{in}}, \bm{b}^1 \in \bm{R}^{d_{in}}, \weight^l \in \bm{R}^{d_h \times d_h}, \bm{b}^l \in \bm{R}^{d_h}, l = 2, ..., L-1,\\
&\weight^L \in \bm{R}^{d_{out} \times d_h}, \bm{b}^L \in \bm{R}^{d_{out}}. 
\end{split}
\end{equation*}
Here, the hidden dimension (or width) and the output dimension of this MLP are $d_{h}$ and $d_{out}$, respectively. $g(\cdot)$ is an element-wise nonlinear activation, which makes
the MLP a non-linear function w.r.t. $\bm{x}$. Among them, Rectified Linear Unit (ReLU; \citep{hinton2010relu}) and its variants \citep{klambauer2017self} are the most commonly used activation functions. ReLU is a piecewise function that maps positive inputs to itself and negative inputs to 0:
\begin{equation}
    \text{ReLU}(s) := \max(0, s).
\end{equation}
Although most modern neural network architectures are more complex than pure MLPs, they are still often used as basic modules within modern architectures, for instance, as expressive neural feature extractors.

\paragraph{Multi-head Attention.} Attention mechanism, first introduced in \citet{graves2013hybrid}, has become the characteristic building block for Transformer models. The most widely used attention module in modern Transformers are dot-product based attention \citep{vaswani2017attention, dosovitskiy2020image}. Given a sequence of $T_q$ queries $\bm{q}\in\mathbb{R}^{T_q\times d_q}$, a sequence of $T_k$ (often $T_q=T_k:=T$ as in self-attention discussed below) keys $\bm{k}\in\mathbb{R}^{T_k\times d_k}$ ($d_k=d_q$), and the values, $\bm{v}\in\mathbb{R}^{T_k\times d_v}$, associated with the keys, dot-product attention \citep{vaswani2017attention} is computed as follows:
\begin{equation}
\bm{F}=\omega(\bm{q}\bm{k}^{\top})\bm{v},
\label{eq:attn}
\end{equation}
where $\omega$ is a nonlinear activation function. The dot-product attention computes the weighted sum of the values, where the weights can be viewed as similarity scores between the queries and keys measured by their dot products, $\bm{q}\bm{k}^{\top}$. For self-attention, the keys are simply set to be equal to the queries, i.e., $\bm{k} = \bm{q}$. Transformers employ multi-head self-attention (MHSA) constructed by multiple heads of dot-product attention whose results are aggregated to produce the final output. The queries, keys, and values for each head are all obtained from the same input sequence. Specifically, MHSA modifies dot-product self-attention as follows. Assume $H$ attention heads are in use, then given a sequence of $T$ inputs $\bm{s}\in \mathbb{R}^{T\times d_s}$ to the MHSA block, we project them to the queries for each head $h$ with a projection matrix $\bm{W}_q^{h}\in \mathbb{R}^{d_s\times d_q}$: $\bm{q}^{h}=\bm{s}\bm{W}_q^{h}$. We obtain the keys $\bm{k}^{h}$ and values $\bm{v}^{h}$ accordingly by projections using matrices $\bm{W}_k^{h}\in \mathbb{R}^{d_s\times d_k}$ and $\bm{W}_v^{h}\in \mathbb{R}^{d_s\times d_v}$ respectively. Typically, we use the same $d_q = d_k = d_v$ for all the heads. Then the head's output $\bm{F}_h$ is obtained by plugging $\bm{q}^{h}$, $\bm{k}^{h}$ and $\bm{v}^{h}$ to Eq.~\ref{eq:attn}. Lastly, the attention outputs from each head is combined with the output projection matrix $\bm{W}_F\in \mathbb{R}^{(H d_v) \times (H d_v)}$ as follows:
\begin{equation}
    \bm{F} = \text{concat}(\bm{F}_1, \cdot\cdot\cdot, \bm{F}_H)\bm{W}_F.
\label{eq:combine_heads}
\end{equation}
In Transformers, multiple layers of MHSA may be stacked together, where the output of the $(l-1)$-th MHSA layer is further processed by a nonlinear function $G_{\vparam^l}$ -- parameterized by an MLP with common deep learning tricks (e.g., residual connection \citep{https://doi.org/10.48550/arxiv.1512.03385} and layer normalization \citep{ba2016layer}) -- to obtain the input to the $l$-th MHSA layer, i.e., $\bm{s}^l = G_{\vparam^l}(\bm{F}^{l-1})$. Figure~\ref{fig:attn} illustrates an MHSA block in a Transformer model (excluding the combination projection step of Eq.~\ref{eq:combine_heads}).
\begin{figure}[t]
\centering
   \includegraphics[width=0.5\linewidth]{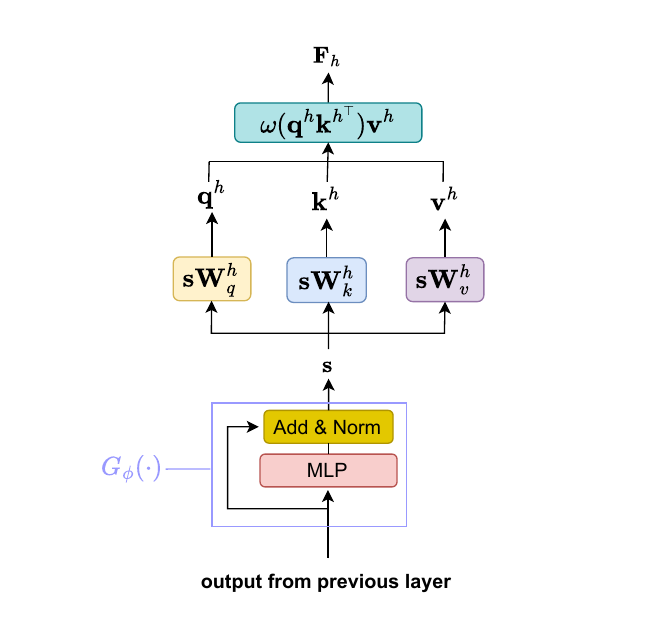}
\caption{Illustration of one head ($h$) of multi-head self attention in one layer of Transformer.}
\label{fig:attn}
\end{figure}

\newpage
\subsection{HiPPO: Recurrent Memory with Optimal Polynomial Projections}
\label{sec:hippo}
\begin{figure}
    \centering
    \includegraphics[width=0.8\linewidth]{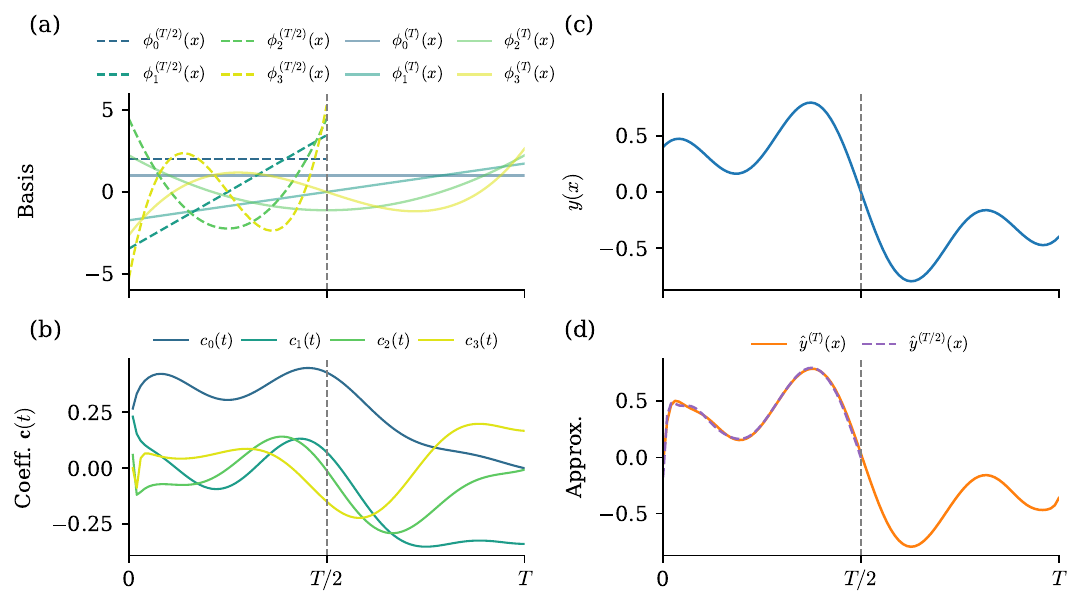}
    \caption{An illustration of HiPPO-LegS approximation of a toy time series (c) $y(x)$. Here $x$ is used to denote arbitrary time index. (a) Time-dependent basis functions with end time index $t = T/2$ and $t = T$ ($T=1$ here). (b) Evolution of the memory state $\bm{c}(t)$. (d) Finite basis approximation $\hat{y}^{t}(x)$ with the end time index $t=T/2$ and $T$.}
    \label{fig:hippo_deterministic}
\end{figure}
The HiPPO framework \citep{gu_hippo_2020} is proposed to build a compact representation that captures long-term dependencies in sequences, which is a major challenge in long sequence modeling. HiPPO provides mathematical foundations for compressing continuous-time signals into finite-dimensional memory states through optimal polynomial projections. Given a time series \( y(t) \), HiPPO maintains a memory state \( \bm{c}(t) \in \mathbb{R}^M \) that optimally approximates the historical signal \( \{y(x)\}_{x \leq t} \). The framework consists of a time-dependent measure \( \omega^{(t)}(x) \) over \( (-\infty, t] \) that defines input importance, along with adaptive polynomial basis functions \( \{g_m^{(t)}(x)\}_{m=0}^{M-1} \) obtained from adapting the domain of the basis functions to cover the new time region. For example, the standard Legendre polynomials $P_m(x)$ considered in HiPPO-LegS variant have domain $[-1,1]$, and when the last historical time instance is $t$ (assuming to be positive), we define the adaptive basis 
\begin{equation}
g_m^{(t)}(x):=(2m+1)^{1/2}P_m(\frac{2x}{t}-1),
\end{equation}
which has the domain $[0, t]$. Moreover, these adaptive bases are orthonormal under certain \( \omega^{(t)}(x) \) (e.g., for adaptive Legendre polynomials above, the corresponding  $\omega^{(t)}(x)$ is uniform over $[0, t]$): 
\begin{equation}
\int_{-\infty}^t g_m^{(t)}(x)g_l^{(t)}(x)\omega^{(t)}(x)\mathrm{d}x = \delta_{ml}.
\label{eq:hippo_orthonormal}
\end{equation}
The historical signal is encoded through projection coefficients given by
\begin{equation}
    c_m(t) = \int_{-\infty}^t y(x)g_m^{(t)}(x)\omega^{(t)}(x)\mathrm{d}x,
    \label{eq:hippo_coef}
\end{equation}
which yields the approximation 
\begin{equation}
    y(x) \approx \hat{y}^{(t)}(x)=\sum_{m=0}^{M-1}c_m(t)g_m^{(t)}(x),
\end{equation}
for $x \in (-\infty,t]$. The compression is usually lossy unless $y(x)$ lives in the span of the finite bases. However, $\{c_m(t)\}_{m=0}^{M-1}$ obtained in Eq.~\ref{eq:hippo_coef} is optimal in the sense that it minimizes the following $L^2$-error: 
%
%
\begin{equation}
   \{c_n\}_{n=0}^{M-1} = \argmin_{\{c'_n\}_{n=0}^{M-1}}  \int_{-\infty}^t \|y(x)-\sum_{n=0}^{M-1} c'_n(t)g_n(x)\|^2 \omega^{(t)}(x)\mathrm{d}x.
\end{equation}
Hence, the $M$-dimensional memory state $\bm{c}(t) := [c_0(t), \ldots, c_{M-1}(t)]^\top$ forms a compact representation that optimally captures the historical information in $y(x)$ up to $x=t$.

This memory state $\bm{c}(t)$ is a function of the end point of the history $t$. Every time when observation at new time instance, $t+\Delta t$ appear, we adapt the basis to $t+\Delta t$, and compute $\bm{c}(t+\Delta t)$. A neat property of the memory state is that it enables online updates: we do not need to recompute the integrals from scratch to obtain the memory state each time we observe new data, instead we can obtain $\bm{c}(t+\Delta t)$ using an online update based on $\bm{c}(t)$. The time derivative of the adaptive polynomial basis functions considered in HiPPO, $\frac{\partial}{\partial t}g_m^{(t)}(x)$, can be expressed as a linear combination of lower-order polynomial bases $\{g_l^{(t)}(x)\}_{l=0}^{m-1}$. Consequently, the time derivative of $\bm{c}(t)$ w.r.t. $t$ induces a linear ordinary differential equation (ODE)
\begin{equation}
    \frac{\mathrm{d}}{\mathrm{d}t}\bm{c}(t) = \bm{A}(t)\bm{c}(t) + \bm{B}(t)y(t),
    \label{eq:hippo_ode}.
\end{equation}
Discretizing the linear ODE yields the recurrence of the form
\begin{equation}
    \bm{c}_k = \bm{A}_k \bm{c}_{k-1} + \bm{B}_k y_k,
\end{equation}
which allows online updates without having to revisit the past signals. Here, $\bm{A}_k$ and $\bm{B}_k$ are determined by $\bm{A}(t)$, $\bm{B}(t)$ and the discretization step size.

HiPPO supports various measure-basis configurations which can enable online updates of the corresponding memory state \citep{gu_hippo_2020,gu_httyh_2023}. Here, we provide some details for a canonical instantiation, HiPPO-LegS. For a given end time index $t$, HiPPO-LegS uses the uniform measure $\omega^{(t)}(x) = \frac{1}{t}\bm{1}_{[0,t]}(x)$ and scaled Legendre polynomials with input domain adapted to $[0,t]$, \(g_m^{(t)}(x) = (2 m+1)^{1 / 2} P_m\left(\frac{2 x}{t}-1\right)\), as basis functions, where $P_m(\cdot)$ is the canonical $m$-th Legendre polynomial with input domain $[-1, 1]$, and $\bm{1}_{[0,t]}(x)$ is the indicator function on the interval $[0,t]$. The uniform measure over the past provides the inductive bias that encourages HiPPO-LegS to keep the whole past in memory. The memory state of HiPPO-LegS evolves according to a linear ODE as in Eq.~\ref{eq:hippo_ode}. To see this, we define $\phi_m^{(t)}(x) := g_m^{(t)}(x)\omega^{(t)}(x)$. The time derivative of $\phi_m^{(t)}(x)$ can be shown to be:
\begin{equation}
\begin{split}
\frac{\partial}{\partial t}\phi_{m}^{(t)}(x)
&= \omega_{m}^{(t)}(x)\frac{\partial}{\partial t}g_{m}^{(t)}(x) + g_{m}^{(t)}(x)\frac{\partial}{\partial t}\omega_{m}^{(t)}(x)\\
&=-\frac{\sqrt{2m+1}}{t}
[
\frac{m+1}{\sqrt{2m+1}}\phi_{m}^{(t)}(x)
+
\sqrt{2m-1}\phi_{m-1}^{(t)}(x)\\&\qquad\qquad\qquad + \sqrt{2m-3}\phi_{m-2}^{(t)}(x) + \cdots
]
+
\frac{1}{t}\,\delta_{t}(x),
\end{split}
\end{equation}
where $\delta_{t}(x)$ is the Dirac delta at $x=t$ and it arises from the time derivative of the uniform measure $\omega^{(t)}(x)$ (see Appendix~D.3 in \citet{gu_hippo_2020} for details). Plugging this expression into 
\begin{equation}
\begin{split}
\frac{\mathrm{d}}{\mathrm{d}t}\bm{c}(t)&=[\frac{\mathrm{d}}{\mathrm{d}t}c_0(t), \cdots, \frac{\mathrm{d}}{\mathrm{d}t}c_{M-1}(t)]^\top\\
&=[\int y(x)\frac{\partial}{\partial t}\phi_0^{(t)}(x)dx, \cdots, \int y(x)\frac{\partial}{\partial t}\phi_{M-1}^{(t)}(x)dx]^\top
\end{split}
\end{equation}
allows us to identify the matrices $\bm{A}(t) \in \mathbb{R}^{M \times M}$ and $\bm{B}(t) \in \mathbb{R}^{M \times 1}$ in the ODE update of the memory state (Eq.~\ref{eq:hippo_ode}). For HiPPO-LegS, their explicit formulas are given by:
\begin{equation}
\label{eq:hippo_matrix_a}
\bm{A}(t) = \frac{1}{t}\bm{A}, \quad
[\bm{A}]_{mk} = \begin{cases}
    -\sqrt{(2m+1)(2k+1)} & \text{if } m > k \\
    -m-1 & \text{if } m = k \\
    0 & \text{if } m < k
\end{cases}
\end{equation}
and
\begin{equation}
\label{eq:hippo_matrix_b}
[\bm{B}(t)]_m =\frac{[\bm{B}]_m}{t} =\frac{\sqrt{2m+1}}{t},
\end{equation}
where the factor $1/t$ reflects the time-dependent scaling of the basis functions to the adaptive interval $[0,t]$. Figure~\ref{fig:hippo_deterministic} shows evolution of the memory state for a toy time series in HiPPO-LegS, and the approximation of the time series based on the memory state and adaptive Legendre polynomials for two different end time indices of the history.

We include details of other HiPPO variants which use different combinations of function basis and measures in Appendix~\ref{appendix:hippo_measure_basis}. Compared with HiPPO-LegS, they use measures allocating more importance over the more recent history so that they are less suitable for tasks requiring long-term memory in sequences.

A few recent works extend the HiPPO recurrences for more efficient long-range memory modeling and stack them along with MLPs to build deep SSMs, which is now a class of architectures competitive to Transformers. The structured state space sequential (S4) model \citep{gu_s4_2022} extends HiPPO with trainable recurrence parameters and accelerates the computation with convolutional kernels, while Mamba \citep{gu_mamba_2023,dao_mamba2_2024} further introduces hardware-aware selective state mechanisms.

\section{Scalable Probabilistic Learning}
Probabilistic machine learning, based on rules of probability and Bayesian statistics, offers a principled framework for systematically expressing and updating our knowledge about unknowns under uncertainty. In particular, Bayesian inference leverages Bayes' rule \citep{bayes1763essay} to update our belief about the unobserved variables after observing the data. 

\subsection{Bayesian Inference}
\label{sec:back_bayes}
Given a machine learning model specified by some parameters $\mparam$ (e.g., weights of a deep neural network), we turn the model into a probabilistic model by treating the parameters as unobserved random variables and placing a prior distribution, $p(\mparam)$, over them, which represents our initial belief about the model parameters before observing any data. For instance, it can be used to rule out extreme parameter values that can lead to erratic predictions. The data $\data=\{(\bm{x}_i,\bm{y}_i)\}_{n=1}^N$ are treated as observed variables and the information for $\mparam$ from the data, also known as likelihood $p(\data|\mparam)$, is used to refine the prior into the posterior:
\begin{equation}
    p(\mparam|\data)=\frac{p(\mparam, \data)}{p(\data)}=\frac{p(\mparam)p(\data|\mparam)}{\int p(\mparam)p(\data|\mparam)d\mparam}.
\label{eq:bayespost}
\end{equation}
The model uncertainty in $\mparam$ from the posterior is then translated to predictive uncertainty associated with the prediction at new input $\bm{x}^\ast$, via the posterior predictive distribution defined by the following marginalization:
\begin{equation}
    p(\bm{y}^\ast|\bm{x}^\ast,\data) = \int p(\bm{y}^\ast|\bm{x}^\ast,\mparam) p(\mparam|\data) d\mparam.
\label{eq:bayespred}
\end{equation}
In addition to the uncertainty quantification capability, Bayesian inference is naturally suitable for online learning \citep{cesabianchi2006prediction,ritter2018online} or continual learning \citep{kirkpatrick2017overcoming,nguyen2018variational}, where data points are observed sequentially in batches. When new data arrive, the previous posterior obtained based on old data, $p(\mparam|\data_{old})$, becomes the new prior and we combine the information from the current data, $\data_{new}$, to form the new posterior again via Bayes' rule:
\begin{equation}
    p(\mparam|\data_{new}, \data_{old}) =  \frac{p(\mparam|\data_{old})p(\data_{new}|\mparam, \data_{old})}{\int p(\mparam|\data_{old})p(\data_{new}|\mparam, \data_{old}) d\mparam},
\end{equation}
This procedure can be repeated whenever new data arrive, enabling online or continual learning under uncertainty.

Unfortunately, exact Bayesian inference as in Eq.~\ref{eq:bayespost} is typically intractable since it requires solving a complex integral for the marginal likelihood in the denominator, $p(\data)=\int p(\mparam)p(\data|\mparam)d\mparam$, which has no analytic solution unless the prior is conjugate to the likelihood. Hence, in practice, approximate inference techniques \citep{li2018approx}, which approximate the target posterior with an approximate posterior $q(\mparam) \approx p(\mparam | \data)$, become indispensable. In such case, the posterior predictive distribution in Eq.~\ref{eq:bayespred} can also be approximated based on $q(\mparam)$ as follows:
\begin{equation}
    p(\bm{y}^* | \bm{x}^*, \data) \approx \int p(\bm{y}^\ast|\bm{x}^\ast,\mparam) q(\mparam) d\mparam \approx \frac{1}{M} \sum_{m=1}^M p(\bm{y}^\ast|\bm{x}^\ast,\mparam_m), \quad \mparam_m \sim q(\mparam),
\label{eq:approximate_predictive_distribution}
\end{equation}
where a further Monte Carlo estimation is typically applied for marginalization.

In this thesis, we use variational inference \citep{jordan1999vi,beal:vi2003,li2018approx,zhang2018advances}, an optimization-based approximate inference technique, which is much more scalable than simulation-based techniques, such as Markov chain Monte Carlo \citep{metropolis1953equation,gelfand2000gibbs,ma2015recipe}. This makes variational inference a popular method for approximate inference in deep learning applications, where scalability is crucial.

\subsection{Variational Inference}
\label{sec:intro_vi}

Variational inference (VI) \citep{beal:vi2003,jordan1999vi} turns the inference problem into an optimization procedure. It first specifies a parametric distribution family, $\mathcal{Q} := \{q_{\vparam}(\mparam) \}$, with tunable parameters $\vparam$, such as factorized Gaussians where $\vparam$ are the mean and variance parameters. Then we optimize over $\mathcal{Q}$ to find the best approximate posterior (also called optimal variational distribution), $q_{\vparam^*}(\mparam)$, which is defined as the one closest to the target posterior $p(\mparam|\mathcal{D})$ within the parametric family, measured by the Kullback-Leibler (KL) divergence. Specifically, the procedure requires the minimization of KL-divergence w.r.t. the parameters $\vparam$ of the chosen parametric distribution family to obtain the optimal variational distribution:
\begin{equation}
\begin{split}
    &\vparam^* = \arg\min_{q_{\vparam} \in \mathcal{Q}} \text{KL}(q_{\vparam}(\mparam)\|p(\mparam|\data)),\\ &\text{KL}(q_{\vparam}(\mparam)\|p(\mparam|\data)) = \int q_{\vparam}(\mparam) \log \frac{q_{\vparam}(\mparam)}{p(\mparam|\data)} d\mparam.
\end{split}
\label{eq:kl}
\end{equation}
We can then substitute in $q_{\vparam^*}(\mparam)$ for $q(\mparam)$ in Eq.~\ref{eq:approximate_predictive_distribution} for approximate posterior predictive distribution.

However, the KL-divergence requires the evaluation of $p(\mparam | \mathcal{D})$, and its dependence on the intractable marginal likelihood (also known as model evidence) $p(\mathcal{D})$ is the motivation for approximate inference in the first place. Therefore, direct minimization of KL-divergence is infeasible, and an alternative objective is needed for its minimization, which prompts the following evidence lower bound (ELBO) as objective. The ELBO is a lower bound of the log marginal likelihood and it is obtained by subtracting the KL-divergence from the log marginal likelihood, which is a constant w.r.t. $\vparam$. Hence, minimization of the KL-divergence can be achieved indirectly via the maximization of the (ELBO).
\begin{equation}
    \begin{split}
        \mathcal{L}_{ELBO}(\vparam) &:=\log p(\data) -\text{KL}(q_{\vparam}(\mparam)\|p(\mparam|\data))\\
        &=\log p(\mathcal{D}) - \int q_{\vparam}(\mparam) \log \frac{q_{\vparam}(\mparam)p(\mathcal{D})}{p(\mparam)p(\mathcal{D}|\mparam)} d\mparam\\
        &=\int q_{\vparam}(\mparam) \log \frac{p(\mparam)p(\mathcal{D}|\mparam)}{q_{\vparam}(\mparam)} d\mparam\\
        &=\underbrace{\mathbb{E}_{q_{\vparam}(\mparam)}[\log p(\data|\mparam)]}_{\text{ELL}} - \underbrace{\text{KL}(q_{\vparam}(\mparam)\|p(\mparam))}_{\text{KL regularizer}}
    \end{split}
\label{eq:elbo_back}
\end{equation}
The above ELBO is decomposed into two terms. The first term is commonly called the expectation of log-likelihood (ELL), and the second term is a KL regularizer between the approximate posterior $q_{\vparam}(\mparam)$ and prior $p(\mparam)$. The ELL term encourages $q_{\vparam}(\mparam)$ to place more importance over the model configurations that explain the data, while
the second term regularizes $q_{\vparam}(\mparam)$ toward the prior. The combination of the two terms balance the data fit and the regularization.

In practice, unless with special conditions (e.g., Gaussianity) for the specification of the prior, likelihood, and the variational distribution, we may not be able to analytically compute one or both terms in ELBO. Hence, typically a Monte Carlo estimation is further employed:
\begin{equation}
    \mathcal{L}_{ELBO} = \frac{1}{M} \sum_{m=1}^M \left[ \log p(\mathcal{D}|\mparam_m) - \log q_{\vparam}(\mparam_m) + \log p(\mparam_m) \right], \qquad \mparam_m \stackrel{\text{iid}}{\sim} q_{\vparam}(\mparam).
\end{equation}
If the chosen variational distribution family, $\mathcal{Q}$, contains the true posterior, VI will return it, if the optimization is solved perfectly. In practice, due to the computational budget and tractability of distribution families, a simple yet tractable $\mathcal{Q}$ is often considered to enable fast computation and reduced memory complexity.

To exploit gradient based optimization of the ELBO, the \textit{reparameterization trick} \citep{welling2014auto} is proposed for a variety of variational distribution families including Gaussians. This approach makes the sampling operation $\mparam \sim q_{\vparam}(\mparam)$ differentiable w.r.t. $\vparam$ by viewing it as passing an auxiliary noise variable $\bm{\epsilon} \sim p_{base}(\bm{\epsilon})$ through a function $T_{\vparam}(\bm{\epsilon})$ that is differentiable w.r.t.~$\vparam$:
\begin{equation}
   \mparam \sim q_{\vparam}(\mparam) \quad \Leftrightarrow \quad \mparam = T_{\vparam}(\bm{\epsilon}), \ \bm{\epsilon} \sim p_{base}(\bm{\epsilon}).
\end{equation}
For instance, for $D$-dimensional factorized Gaussians with mean $\bm{\mu}=[\mu_1, \cdots, \mu_d]^\top$ and variance $\bm{\sigma}^2=[\sigma_1^2, \cdots, \sigma_d^2]^\top$, $q_{\vparam}(\mparam) = \prod_{d=1}^D \mathcal{N}(\mparam_d; \mu_d, \sigma_d)$, this sampling operation can be written as $\mparam = T_{\vparam}(\bm{\epsilon}) := \bm{\mu} + \bm{\sigma} \odot \bm{\epsilon}$, with $\bm{\epsilon}_{d} \sim \mathcal{N}(0, 1)$, and $\odot$ denotes element-wise multiplication. The Monte Carlo estimate of the ELL term in Eq.~\ref{eq:elbo_back} can therefore be rewritten using the change-of-variable rule as follows:
\begin{equation}
    \mathbb{E}_{q_{\vparam}(\mparam)}[\log p(\data|\mparam)] \approx \frac{1}{M} \sum_{m=1}^M \log p(\data|\bm{\mu} + \bm{\sigma} \odot \bm{\epsilon}_m), \quad \bm{\epsilon}_m \stackrel{\text{iid}}{\sim} \mathcal{N}(\bm{0}, \bm{I}).
\end{equation}
Moreover, with likelihood that factorizes w.r.t. the data points, stochastic optimization methods based on mini-batch estimation of the above ELL term, such as stochastic gradient descent (SGD), can be used to scale VI to large datasets.

VI has been used extensively for scalable inference in probabilistic models, such as Gaussian processes and Bayesian neural networks, and we will review its applications in Gaussian process models in detail next. I would be remiss to not review some excellent works about weight space VI in the context of Bayesian neural networks since they are not directly related to the research in this thesis. We refer the interested readers to \citet{arbel2023primer} for an overview of the topic.
\section{Gaussian Processes}
\label{sec:gp_back}
Gaussian processes (GPs; \citep{rasmussen2006gp}) is a class of powerful Bayesian non-parametric models that are widely used to infer unknown functions under uncertainty. Instead of assuming a parametric form of the model and performing posterior inference over parameters, GPs directly specify distributions over function values. Informally, GPs can be viewed as infinite-dimensional Gaussian distributions for the function values, evaluated over an index set $\mathcal{X}$ (domain of $f$), which can be infinite, e.g., $\mathcal{X}=\mathbb{R}^{D}$.

A GP is fully specified by a mean function $m_{\hparam}(\cdot): \mathcal{X}\rightarrow \mathbb{R}$ and a symmetric positive-definite covariance covariance function, typically parameterized by a kernel function $k_{\hparam}(\cdot, \cdot):\mathcal{X}\times\mathcal{X}\rightarrow \mathbb{R}$, $p(f;\hparam) = \mathcal{GP}(f;m_{\hparam}(\cdot), k_{\hparam}(\cdot, \cdot))$. Here, $\hparam$ is used to to denote the hyperparameters associated with the mean function and the kernel function.

GP provides a powerful Bayesian modeling paradigm in the function space by first placing a GP prior over the unknown function $f$. Typically, an uninformative prior mean function is in use, e.g., zero function $m(\cdot):=0$, and this convention will be followed throughout this thesis.
\begin{equation}
    p(f;\hparam) = \mathcal{GP}(f;0, k_{\hparam}(\cdot, \cdot)).
\end{equation}
Notably, as stochastic processes, a core property of GPs is marginal consistency. For example, the marginal distribution according to the above GP prior for function values $\bm{f_X}:=[f(\bm{x}_1),\cdot\cdot\cdot,f(\bm{x}_N)]^\top$ over a finite subset of input locations, $\bm{X} := [\bm{x}_1,\cdot\cdot\cdot,\bm{x}_N]^\top\in \mathcal{X}$, is a multivariate Gaussian as follows:
\begin{equation}
    p(\bm{f_X};\hparam) = \mathcal{N}(\bm{f_X};\bm{0}, \bm{K}_{\bm{XX}}),
\end{equation}
where the covariance matrix is obtained via the kernel matrix, $\bm{K}_{\bm{XX}}$, whose $ij$-th element is the kernel value between $\bm{x}_i$ and $\bm{x}_j$, $[\bm{K}_{\bm{XX}}]_{ij}=k_{\hparam}(\bm{x}_i,\bm{x}_j)$. Notice that $\bm{K}_{\bm{XX}}$ depends on hyperparameters $\hparam$, but we do not include $\hparam$ in its notation for consice presentation.

\subsection{Exact Inference for Regression}
In regression problems with observed data $\data=(\bm{X}, \bm{y}):=\{\bm{x}_n, y_n\}_{n=1}^N$, typically an i.i.d Gaussian likelihood with noise variance $\sigma^2$ is assumed,
\begin{equation}
    p(\bm{y}|\bm{f_X}; \sigma^2)=\mathcal{N}(\bm{y};\bm{f_X}, \sigma^2\bm{I}),
\end{equation}
the posterior process is also a GP since the Gaussian prior and likelihood are conjugate, and the following closed-form posterior predictive distribution can be obtained:
\begin{equation}
\begin{split}
    p(\bm{y}^\ast|\bm{X}^\ast,\data; \hparam)=\mathcal{N}(&\bm{y}^\ast;\bm{K}_{\bm{X}^\ast\bm{X}}(\bm{K}_{\bm{XX}}+\sigma^2\bm{I})^{-1}\bm{y},\\
    &\bm{K}_{\bm{X^\ast X^\ast}} + \sigma^2\bm{I} \underbrace{-\bm{K}_{\bm{X}^\ast\bm{X}}(\bm{K}_{\bm{XX}}+\sigma^2\bm{I})^{-1}\bm{K}_{\bm{X}\bm{X}^\ast})}_{\text{uncertainty reduction}},
\end{split}
\label{eq:posterior_predictive_fgp}
\end{equation}
where $\bm{X}^\ast$ are test inputs. The last term in the posterior predictive covariance is commonly referred to as uncertainty reduction, and its magnitude is determined by the similarity between test inputs $\bm{X}^\ast$ and observed inputs $\bm{X}$, measured by the kernel. For $\bm{X}^\ast$ closer to $\bm{X}$, $\bm{K}_{\bm{X^\ast X}}$ will be large, resulting in a larger uncertainty reduction. For instance, if kernels based on Euclidean distance, such as RBF kernel, are used, the predictive uncertainty will be Euclidean distance-aware. In general, the kernel can be specified to encode some prior assumptions for the underlying function (e.g., smoothness, periodicity).

The kernel hyperparameters $\hparam$ (e.g., the length-scale in RBF kernel) along with the observation noise variance $\sigma^2$ can influence the predictions of GPs noticeably, and a common practice to select them is via the maximization of log marginal likelihood (also known as type-II maximum likelihood estimation). For regression problems, the log marginal likelihood also admits a closed form as follows:
\begin{equation}
\begin{split}
    \log p(\bm{y}|\bm{X}; \hparam) &= \log \int p(\bm{y}|\bm{f_X}; \hparam)p(\bm{f_X};\hparam)d\bm{f_X}\\
    &=\log \mathcal{N}(\bm{y};\bm{0}, \bm{K_{XX}}+\sigma^2 \bm{I})\\
    &=-\underbrace{\frac{1}{2}{\bm{y}^\top(\bm{K}_{\bm{XX}}}+\sigma^2\bm{I})^{-1}\bm{y}}_{\text{data fit}}-\underbrace{\frac{1}{2}{\log \det(\bm{K}_{\bm{XX}}+\sigma^2\bm{I})}}_{\text{model complexity}} + \text{const.}
\end{split}
\end{equation}
It consists of two terms that balance between the data fit and the model complexity penalty, which ultimately selects the configuration of hyperparameters corresponding to the simplest model that explains the data well. 

Interestingly, the posterior inference of GP can be reformulated through the lens of Bayesian linear regression with kernel feature map \citep[Chapter 2]{rasmussen2006gp}. Consider the linear regression problem as follows.
\begin{equation}
    y(x) = \bm{w}^\top\Phi(\bm{x}) + \epsilon, \quad \epsilon \sim \mathcal{N}(0, \sigma^2),
\end{equation}
where $\Phi(\cdot): \mathcal{X}\rightarrow\mathbb{R}^{d_{\Phi}}$ ($d_{\Phi}$ can be $\infty$) is the feature map associated with a kernel $k(\cdot, \cdot)$ such that $k(\bm{x}, \bm{x}')=\Phi(\bm{x})^\top\Phi(\bm{x}')$. Mercer's Theorem \citep{mercer1909functions} guarantees the existence of such feature maps for kernel functions meeting Mercer's condition. A standard Gaussian prior over the weight, $p(\bm{w})=\mathcal{N}(\bm{0}, \bm{I})$, induces a GP prior in the function space: given data $\data=(\bm{X}, \bm{y})$, we know that the distribution of function values evaluated over $\bm{X}$ is a Gaussian due to the linearity of the model, and its mean and covariance can be shown to be $\bm{0}$ and $\bm{K_{XX}}+\sigma^2 \bm{I}$, respectively, which exactly matches the prior for $\bm{y_X}$ in GP model. The posterior of weight is a Gaussian 
\begin{equation}
\begin{split}
    p(\bm{w}| \data) &= \mathcal{N}(\bm{w}; \bm{m_w}, \bm{S_w}),\\
    \bm{m_w}&=\frac{1}{\sigma^2}\bm{S_w}\bm{\Phi_X}^\top\bm{y},\\
    \bm{S_w}&=(\bm{I}+\frac{1}{\sigma^2}\bm{\Phi_X}^\top\bm{\Phi_X})^{-1}\\&=\bm{I} - \bm{\Phi_X}^\top(\sigma^2 \bm{I}+ \bm{\Phi_X}\bm{\Phi_X}^\top)^{-1}\bm{\Phi_X} \qquad \text{(Woodbury identity)} \\
    &=\bm{I} - \bm{\Phi_X}^\top(\sigma^2 \bm{I}+ \bm{K_{XX}})^{-1}\bm{\Phi_X},
\end{split}
\end{equation}
where $\bm{\Phi_X} \in \mathbb{R}^{N \times d_{\phi}}$ is the design matrix, and notice that $\bm{\Phi_X}\bm{\Phi_X}^\top = \bm{K_{XX}}$.
The posterior predictive for new input locations $\bm{X}^\ast$ can be derived based on the weight posterior and it is a Gaussian as follows:
\begin{equation}
    \begin{split}
        p(\bm{y}^\ast|\bm{X}^\ast,\data)&=\int p(\bm{y}^\ast| \bm{w}, \bm{X}^\ast)p(\bm{w}|\data)d\bm{w}\\
        &=\mathcal{N}(\bm{y}^\ast; \bm{\Phi}_{\bm{X}^\ast}\bm{m_w}, \sigma^2\bm{I}+\bm{\Phi_w}^\top\bm{S_w}\bm{\Phi_w}).
    \end{split}
\end{equation}
It is easy to see that the posterior predictive covariance is exactly the same as the posterior GP in Eq.~\ref{eq:posterior_predictive_fgp}. The posterior predictive mean can be further simplified below:
\begin{equation}
\begin{split}
    \bm{\Phi}_{\bm{X}^\ast}\bm{m_w} &=  \frac{1}{\sigma^2} \bm{\Phi}_{\bm{X}^\ast} \bm{S_w}\bm{\Phi_X}^\top\bm{y}\\
    &=\bm{K}_{\bm{X}^\ast \bm{X}} \left\{ \frac{1}{\sigma^2}\left[\bm{I}-(\sigma^2\bm{I}+\bm{K_{XX}})^{-1}\bm{K_{XX}}\right] \right\}\bm{y}\\
    &=\bm{K}_{\bm{X}^\ast\bm{X}}(\bm{K}_{\bm{XX}}+\sigma^2\bm{I})^{-1}\bm{y}.
\end{split}
\end{equation}
For the last line, simply notice that
\[
 (\bm{K}_{\bm{XX}}+\sigma^2\bm{I})\left\{\frac{1}{\sigma^2}\left[\bm{I}-(\sigma^2\bm{I}+\bm{K_{XX}})^{-1}\bm{K_{XX}}\right]\right\}=\frac{1}{\sigma^2} \bm{K}_{\bm{XX}}+ \bm{I} -\frac{1}{\sigma^2}\bm{K_{XX}}=\bm{I}.
\]
Hence, a GP model is equivalent to a Bayesian linear regression model based on kernel feature map.

\subsection{Sparse Variational Gaussian Process}
\label{sec:svgp_back}
Although GP exhibits exceptional performance and provides reliable predictive uncertainty in the small data regime, its posterior inference requires matrix inversion with a cubic computational cost w.r.t. the number of observed data points ($O(N^3)$), which makes it computationally intractable for large dataset. One way to extend GP to big data applications is working with an approximate posterior GP with low rank structure in the framework of VI. One popular such approach is sparse variational Gaussian process (SVGP; \citep{snelson2005sgp}), which approximates the posterior process with a GP, $Q$, constructed based on a small set of augmented inducing points (pseudo input-function value pairs) of size $M$ (typically $M \ll N$), $\{(\bm{z}_m, u_m)\}_{m=1}^M$ \citep{hensman2013gpbig, hensman2015sgpclass}. SVGP reduces the $O(N^3)$ computational cost of full GP to $O(M^3+NM^2)$, since, by construction, $Q$ is fully determined by the approximate posterior of the small set of inducing points, which is set to be a Gaussian with learnable variational mean, $\bm{m_u}$, and variational covariance, $\bm{S_u}$, $q(\bm{u})=\mathcal{N}(\bm{m_u},\bm{S_u})$.

According to the standard GP prior, $p(f)=\mathcal{GP}(0(\cdot), k_{\hparam}(\cdot, \cdot))$, the prior marginal distribution of the augmented set consisting of $\bm{u}$ and $\bm{f}^\ast$, evaluated at any arbitrary set of input locations $\bm{X}^\ast$, is
\begin{equation}
    p(\bm{f}^\ast, \bm{u}|\bm{X}^\ast,\bm{Z})= \mathcal{N}(\bm{0}, \begin{pmatrix}
     \bm{K}_{\bm{X^*X^*}} & \bm{K}_{\bm{X^*Z}} \\
     \bm{K}_{\bm{Z}\bm{X}^*} & \bm{K}_{\bm{ZZ}}.
    \end{pmatrix}).
\end{equation}
The aforementioned computational gain is achieved by imposing the following low rank structure for the approximate posterior GP. Given the approximate posterior of the inducing points, $q(\bm{u})$, the approximate posterior conditional distribution of $\bm{f}^\ast$ given $\bm{u}$ is assumed to be the same as the prior conditional distribution.
\begin{figure}[t]
\centering
   \includegraphics[width=0.5\linewidth]{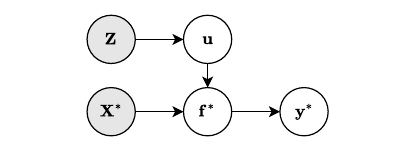}
\caption{The graphical model of prediction of sparse variational Gaussian process (SVGP).}
\label{fig:svgp_graphical}
\end{figure}
\begin{equation}
    q(\bm{f}^\ast|\bm{u,Z,X^\ast})=p(\bm{f}^\ast|\bm{u,Z,X^\ast}).
    \label{eq:prior_conditional_matching}
\end{equation}
Hence, the induced approximate posterior for $\bm{f}^\ast$ is a Gaussian with a low rank structure defined by the inducing points:
\begin{equation}
\begin{split}
    q(\bm{f}^\ast&|\bm{X}^\ast, \bm{Z})=\int p(\bm{f}^\ast|\bm{u,Z,X^\ast})q(\bm{u}) d \bm{u}\\
    &=\mathcal{N}(\bm{f}^{\ast};\bm{K_{X^\ast Z}K_{ZZ}^{-1}m_u},\bm{K_{X^\ast X^\ast}+K_{X^\ast Z}K_{ZZ}^{-1}(S_u-K_{ZZ})K_{ZZ}^{-1}K_{ZX^\ast}}).
\end{split}
\label{eq:q_predictive_svgp}
\end{equation}
Note that now the matrix inversion is applied to $\bm{K_{ZZ}}$, with a $O(M^3)$ computational cost, which is significantly lower than the $O(N^3)$ inversion cost in full GP. We show the graphical model of prediction of SVGP in Figure~\ref{fig:svgp_graphical}. Furthermore, from the perspective of weight space inference, the prediction of SVGP is equivalent to an approximate Bayesian linear model based on kernel feature map, $f(x) = \bm{w}^\top\Phi(\bm{x})$, where the following structured low-rank approximate posterior, parameterized by variational parameters $\bm{m_u}$ and $\bm{S_u}$, for $\bm{w}$ is considered.
\begin{equation}
    q(\bm{w}) = \mathcal{N}(\bm{w};\bm{\Phi_Z}^\top\bm{K_{ZZ}}^{-1}\bm{m_u},\bm{I}+ \bm{\Phi_Z}^\top\bm{K_{ZZ}}^{-1}(\bm{S_u}-\bm{K_{ZZ}})\bm{K_{ZZ}}^{-1}\bm{\Phi_Z}).
\end{equation}
Given observed data $\data=(\bm{X}, \bm{y})$, the inducing input locations $\bm{Z}$ are treated as model hyperparameters and are tuned jointly with the kernel hyperparameters and the variational parameters by ELBO maximization.
\begin{equation}
    \begin{split}
        \mathcal{L}_{ELBO}&=\mathbb{E}_{q(\bm{f},\bm{u},|\bm{Z}, \bm{X})}[\log\frac{p(\bm{y}|\bm{f})p(\bm{f}|\bm{u,Z,X})p(\bm{u}|\bm{Z})}{q(\bm{f},\bm{u}|\bm{Z},\bm{X})}]\\
        &=\int\int p(\bm{f}|\bm{u,Z,X})q(\bm{u}) \log\frac{p(\bm{y}|\bm{f})p(\bm{u}|\bm{Z})}{q(\bm{u})} d\bm{u} d\bm{f}\\
       &=\underbrace{\mathbb{E}_{q(\bm{f}|\bm{X},\bm{Z})}[\log p(\bm{y|f})]}_{\text{ELL}} -\underbrace{\text{KL}(q(\bm{u})||p(\bm{u}|\bm{Z}))}_{\text{KL regularizer}}
    \end{split}
    \label{eq:elbo_svgp}
\end{equation} 
The full derivation of the above ELBO can be found in Appendix~\ref{a4:svgp_elbo}. The VI framework also permits the use of non-Gaussian observation likelihood, which broadens the applications of GPs beyond regression tasks. For non-Gaussian likelihoods, the ELL term has no closed form, and we typically use Monte Carlo estimation, where the main cost ($O(M^3+NM^2)$) comes from computing $q(\bm{f}|\bm{X}, \bm{Z})$. The KL regularizer can be evaluated analytically, since $q(\bm{u})$ and $p(\bm{u}|\bm{Z})$ are both Gaussian, and its main computational cost comes from the evaluation of the determinant $\det\bm{K_{ZZ}}$, and the matrix inversion $\bm{K_{ZZ}}^{-1}$ (both are $O(M^3)$). Combined, evaluating $\mathcal{L}_{ELBO}$ for SVGP incurs a cost of $O(M^3+NM^2)$.

The posterior predictive distribution of $\bm{y}^\ast$ at test input locations, $\bm{X}^\ast$, is 
\begin{equation}
\begin{split}
    p(\bm{y}^\ast|\bm{X}^\ast, \bm{Z}, \bm{u})&=\int\int p(\bm{y}^\ast|\bm{f}^\ast)p(\bm{f}^\ast|\bm{u,Z, X^\ast})q(\bm{u}) d \bm{u} d\bm{f}^\ast\\&= \mathbb{E}_{q(\bm{f}^\ast|\bm{X}^\ast, \bm{Z})}[p(\bm{y^\ast|f^\ast})].
\end{split}
\end{equation}
Again, we can resort to Monte Carlo to estimate it in tasks requiring non-Gaussian observation likelihoods.

For regression based on Gaussian observation likelihood with noise variance $\sigma^2$, the ELL term can also be computed analytically. Thus, the optimal variational distribution $q_{\ast}(\bm{u})$ can be derived analytically, and it is a Gaussian,
\begin{equation}
\begin{split}
    q_{\ast}(\bm{u})=\mathcal{N}(\bm{u}; 
     \frac{1}{\sigma^2}\bm{K_{ZZ}}\bm{M}^{-1}\bm{K_{ZX}}\bm{y},
    \bm{K_{ZZ}}\bm{M}^{-1}\bm{K_{ZZ}}),
\end{split}
\label{eq:optimal_qu_svgp}
\end{equation}
where $\bm{M}:=\bm{K_{ZZ}}+\frac{1}{\sigma^2}\bm{K_{ZX}k_{XZ}}$. We commonly call the SVGP in this case sparse Gaussian process regression (SGPR; \Citep{svgp2009titsias}). Substituting $q_{\ast}(\bm{u})$ back into ELBO, we can obtain a so-called \textit{collapsed variational bound}, which becomes an objective to further tune the kernel hyperparameters $\hparam$ and inducing input locations $\bm{Z}$,
\begin{equation}
\begin{split}
    \mathcal{L}_{ELBO}(q_{\ast})=\log \mathcal{N}(\bm{y};\bm{0}, &\sigma^2\bm{I}+ \bm{K_{ZX}K_{ZZ}^{-1}K_{ZX}})\\&-\frac{1}{2\sigma^2}\text{Tr}(\bm{K_{XX}}-\bm{K_{ZX}K_{ZZ}^{-1}K_{ZX}}).
\end{split}
\end{equation}
Naturally, the corresponding posterior predictive distribution is also analytically tractable with a Gaussian form.
\begin{equation}
\begin{split}
    p(\bm{y}^\ast|\bm{X}^\ast,\data, \bm{Z})=\mathcal{N}(&\bm{y}^\ast;\frac{1}{\sigma^2}\bm{K_{X^\ast Z}\bm{M}^{-1}K_{ZX}}\bm{y},\\
    &\bm{K}_{\bm{X^\ast X^\ast}} + \sigma^2\bm{I} \underbrace{-\bm{K}_{\bm{X}^\ast\bm{Z}}(\bm{K}^{-1}_{\bm{ZZ}}-\bm{M}^{-1})^{-1}\bm{K}_{\bm{Z}\bm{X}^\ast})}_{\text{uncertainty reduction}}.
\end{split}
\end{equation}
For SVGP, the inducing points can be viewed as a compact summary of the training set, and the uncertainty reduction in the posterior predictive depends on the kernel similarity between $\bm{X}^\ast$ and $\bm{Z}$ (which tends to cover the whole region of the training inputs).

\subsection{Deep Kernel Learning and Deep Gaussian Processes}
It is not surprising that GPs may underperform in complex predictive tasks, compared with deep neural networks. After all, they are just shallow Bayesian linear models based on kernel feature as discussed previously. This section introduces two common approaches to enhance GPs with deep structures.

\paragraph{Deep kernel learning.}
While linear models rely on hand-crafted feature maps, deep neural networks can learn expressive feature representations for input data if we view the network right before the last layer as an expressive nonlinear feature extractor. Therefore, a natural way to improve GP is to combine kernel feature map and deep feature extractor. Deep kernel learning \citep{agw2015dkl} introduces deep kernel by parameterizing the kernel with a deep neural network. The network weights $\mparam$ then become hyperparameters of the deep kernel. Specifically, a deep kernel is obtained by compositing a regular base kernel $k_{base}(\cdot,\cdot)$ (e.g., RBF kernel) and a deep feature extractor parameterized by a deep neural network ($h_{\mparam}(\cdot)$): $k_{deep}(\cdot,\cdot)=k_{base}(h_{\mparam}(\cdot),h_{\mparam}(\cdot))$. 

\paragraph{Deep Gaussian processes.} In addition to incorporating deep structure in the kernel function, multiple GP layers, on their own, can also be stacked together to construct the so-called deep Gaussian processes (deep GPs) \citep{damia2013dgp}. Unlike standard GPs, deep GPs are not liner models, and therefore the posterior inference becomes intractable. Again, we resort to the variational inducing points method discussed previously to approximate the posterior process for each GP layer \citep{salimbeni2017doubly}. Consider an $L$-layer deep GP consisting of $L$ latent functions $\{f_l\}_{l=1}^L$. The input for each layer, $f_l$, is the function value output by the previous layer: $\bm{f}_l=f_{l}({\bm{f}_{l-1}})$. In each layer, a GP prior is used for its output: $p(f_l)=\mathcal{GP}(0(\cdot), k^{(l)}(\cdot,\cdot))$. Moreover, we also incorporate a set of inducing points $\{\bm{z}_l^{(i)}, u_l^{(i)}\}_{i=1}^{M_l}$ for each layer. The prior model is then
\begin{equation}
\begin{split}
    p(\bm{y},\{\bm{f}_l\}_{l=1}^L, & \{\bm{u}\}_{l=1}^L|\bm{X},\{\bm{Z}_l\}_{l=1}^L)=\\&
    p(\bm{y}|\bm{f}_L)\left[\prod_{l=2}^L p(\bm{f}_l|\bm{u}_l,\bm{Z}_l, \bm{f}_{l-1})p(\bm{u}_l|\bm{Z}_l)\right]p(\bm{f}_1|\bm{u}_1,\bm{Z}_1,\bm{X})p(\bm{u}_1|\bm{Z}_1).
\end{split}
\end{equation}
The approximate posterior of $\{\bm{u}_l\}_{l=1^L}$ is assumed to factorize between layers and the approximate posterior conditional distribution of $\bm{f}_l$ given $\bm{u}_l$, in each layer, is again assumed to be the same as the prior conditional distribution. In summary, the joint approximate posterior factorizes as follows:
\begin{equation}
    q(\{\bm{f}_l\}_{l=1}^L, \{\bm{u}_l\}_{l=1}^L|\bm{X},\{\bm{Z}_l\}_{l=1}^L)=\left[\prod_{l=2}^Lp(\bm{f}_l|\bm{u}_l,\bm{Z}_l, \bm{f}_{l-1})q(\bm{u}_l)\right]p(\bm{f}_1|\bm{u}_1,\bm{Z}_1,\bm{X})q(\bm{u}_1).
\end{equation}
The structured approximate posterior above allows us to obtain the following ELBO as training objective.
\begin{equation}
    \mathcal{L}_{ELBO}=\mathbb{E}_{q(\bm{f}_L|\bm{X},\{\bm{Z}_l\}_{l=1}^L)}[\log p(\bm{y}|\bm{f}_L)]-\sum_{l=1}^L \text{KL}(q(\bm{u}_l)||p(\bm{u}_l|\bm{Z}_l)),
\end{equation}
where 
\begin{equation}
q(\bm{f}_L|\bm{X},\{\bm{Z}_l\}_{l=1}^L)=\int\cdots\int q(\bm{f}_1|\bm{X},\bm{Z}_1)\prod_{l=2}^L q(\bm{f}_l|\bm{f}_{l-1}, \bm{Z}_l)  d\bm{f}_{1:L-1}.
\end{equation}
\begin{figure}[t]
\centering
   \includegraphics[width=0.6\linewidth]{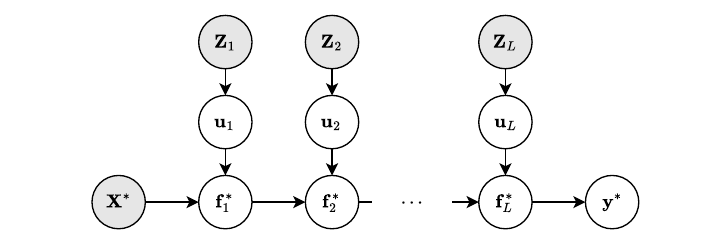}
\caption{The graphical model of prediction of an $L$-layer deep Gaussian process with sparse GP approximations.}
\label{fig:dgp_graphical}
\end{figure}

The approximate posterior for the final output defined by the above integral is not analytically tractable, and in practice we can only work with samples from it. Usually, the final output samples from $q(\bm{f}_L|\bm{x},\{\bm{Z}_l\}_{l=1}^L)$ is obtained by passing samples from $q(\bm{f}_l|\bm{f}_{l-1}, \bm{Z}_l)$ iteratively through each layer, which can be generated using the reparameterization trick since $q(\bm{f}_l|\bm{f}_{l-1}, \bm{Z}_l)$ is a Gaussian,
\begin{equation}
\begin{split}
q(\bm{f}_l|\bm{f}_{l-1}, \bm{Z}_l)&=\int p(\bm{f}_l|\bm{u}_l,\bm{Z}_l,\bm{f}_{l-1})q(\bm{u}_l) d \bm{u}_l\\
&=\mathcal{N}(\bm{K_{f_{l-1} Z_l}^{(l)}K^{(l)-1}_{Z_lZ_l}m_{u_l}},\\
&\quad\quad \bm{K^{(l)}_{f_{l-1} f_{l-1}}+K^{(l)}_{f_{l-1} Z_l}K_{Z_l Z_l}^{(l)-1}(S_{u_l}-K^{(l)}_{Z_lZ_l})K_{Z_lZ_l}^{(l)-1}K^{(l)}_{Z_lf_{l-1}}}).
\end{split}
\end{equation}
We show the graphical model of prediction of deep GP in Figure~\ref{fig:dgp_graphical}.
\section{Deep Sequential Generative Models}
\label{sec:dsgm}
In addition to predictive models, we also leverage probabilistic machine learning and deep neural networks to build deep generative models, which now form a rapidly evolving research field due to the tremendous advances from both topics. In short, deep generative models aim to estimate the true data distribution $p(\bm{x})$ with a parametric model $p_{\mparam}(\bm{x})$ learned from a training set consisting of samples from $p(\bm{x})$, $\data=\{\bm{x}_n\}_{n=1}^N$, where $p_{\mparam}(\bm{x})$ is parameterized with deep neural networks with parameters $\mparam$. After learning, new samples can be generated from $p_{\mparam}(\bm{x})$, which can be treated as approximate samples from the true data-generating process. This section introduces the framework of deep sequential generative models, and we discuss in detail two popular models in this framework, hierarchical variational autoencoder and diffusion model.

\subsection{Deep Latent Variable Models}
We first review deep latent variable models (DLVMs; \citep{welling2014auto,rezende:vae2014}) since many deep sequential generative models can be viewed as extensions of DLVMs. DLVMs assume that each observation $\bm{x}$ is generated by transforming a latent variable $\latent$, through a deep neural network $f_{\mparam}(\cdot)$ with parameters $\mparam$. Typically, the distribution of the latent variable, $p_{\mparam}(\z)$, is chosen to be a simple distribution (e.g., a standard Gaussian) a priori so that one can easily sample from it. We demonstrate the graphical model of a DLVM in Figure~\ref{fig:dlvm}. Mathematically, the generative model can be written as follows.
\begin{figure}[t]
    \centering
    \includegraphics[width=0.4\textwidth]{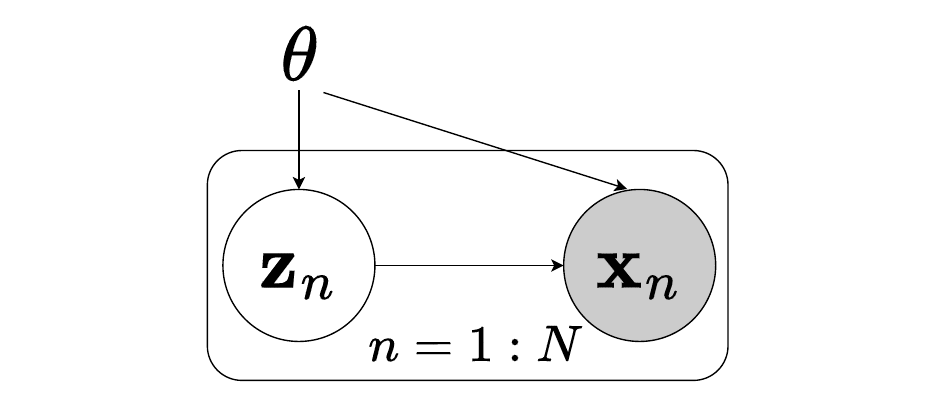}
    \caption{Graphical model of a deep latent variable model.}
    \label{fig:dlvm}
\end{figure}
\begin{equation}
\latent \sim p_{\mparam}(\z) := \mathcal{N}(\bm{0}, \bm{I}), \quad \bm{x} \sim p_{\mparam}(\bm{x} | \latent) := \mathcal{N}(f_{\mparam}(\z), \sigma^2 \bm{I}).
\label{eq:dlvm_model}
\end{equation}
Given a training set $\data=\{\bm{x}_n\}_{n=1}^N$, ideally, the model parameters $\mparam$ can be fitted by maximum likelihood estimation (MLE):
 \begin{equation}
     \mparam^* = \arg\max_{\mparam} \sum_{n=1}^N \log p_{\mparam}(\bm{x}_n), \quad p_{\mparam}(\bm{x}_n) = \int p_{\mparam}(\bm{x} | \latent) p_{\mparam}(\latent) d\latent.
     \label{eq:dlvm_mle}
 \end{equation}
However, the MLE objective (Eq.~\ref{eq:dlvm_mle}) above requires solving complex integrals w.r.t. the latent variables $\latent_n$ for the log marginal likelihoods $\log p_{\mparam}(\bm{x}_n)$, which are intractable since $f_{\mparam}(\z)$ is a nonlinear deep neural network. In practice, we again resort to approximate inference and many DLVMs are learned via VI-based methods \citep{welling2014auto,rezende:vae2014}. We discuss a class of representative VI-based DLVMs, variational autoencoders, in detail.

\paragraph{Variational Autoencoders.}
Variational Autoencoders (VAEs) \citep{welling2014auto,rezende:vae2014} are a class of DLVMs that employ VI for approximate maximum likelihood maximization. Notably, to reduce computational costs, they further consider amortized inference \citep{gershman2014amortized} to construct the approximate distributions in DLVMs: instead of constructing individual approximate distribution to the posterior $p_{\mparam}(\latent_n|\bm{x}_n)$ for each $\bm{x}_n$, VAEs use another deep neural network (commonly referred to as \textit{inference network} or \textit{encoder network}) to map each $\bm{x}_n$ to the variational parameters of its corresponding variational distribution. For example, when a factorized Gaussian distribution family, $q_{\vparam}(\latent|\bm{x})=\mathcal{N}(\bm{\mu}_{\vparam}(\bm{x}), \bm{\sigma}^2_{\vparam}(\bm{x}))$, is considered, the variational mean and variance associated with an observation $\bm{x}$ are then obtained by an inference network $g_{\vparam}(\bm{x}) := [\bm{\mu}_{\vparam}(\bm{x}), \log \bm{\sigma}_{\vparam}(\bm{x})]$ with amortized variational parameters $\vparam$.
For each observation $\bm{x}_n$, an ELBO can be constructed as a lower bound for the corresponding log marginal likelihood, and we can sum these ELBOs to form the final objective for joint training of the model parameters $\mparam$ and the variational parameters $\vparam$: 
\begin{equation}
\begin{aligned}
\mparam^*, \vparam^* &= \arg\max_{\mparam, \vparam} \sum_{n=1}^N \mathcal{L}_{ELBO}(\bm{x}_n, \mparam, q_{\vparam}), \\
\log p_{\mparam}(\bm{x}) \geq \mathcal{L}_{ELBO}(\bm{x}, \mparam, q_{\vparam}) &= \underbrace{\mathbb{E}_{q_{\vparam}(\latent|\bm{x})}[\log p_{\mparam}(\bm{x}|\latent)]}_{\text{ELL}} - \underbrace{\text{KL}(q_{\vparam}(\latent|\bm{x})\|p_{\mparam}(\latent))}_{\text{KL regularizer}}.
\end{aligned}
\label{eq:vae_objective}
\end{equation}
The intractable expectation in the ELL term can again be approximated with Monte Carlo estimation and usually one sample is sufficient for decent empirical performance. To enable backpropagation through the inference network, the reparameterization trick \citep{welling2014auto,rezende:vae2014} is again applied to sample from $q_{\vparam}(\bm{z}|\bm{x})$:
\begin{equation}
    \mathcal{L}_{ELBO}(\bm{x}, \mparam, q_{\vparam}) \approx \log p_{\mparam}(\bm{x}|\bm{\mu}_{\vparam}(\bm{x})+ \bm{\sigma}_{\vparam}(\bm{x}) \odot \bm{\epsilon}) - \text{KL}(q_{\vparam}\|p_{\mparam}), \quad \bm{\epsilon} \sim \mathcal{N}(\bm{0}, \bm{I}).
\label{eq:vae_objective_mc_reparam}
\end{equation}
The inference network $q_{\vparam}(\latent | \bm{x})$ and the DLVM's conditional distribution $p_{\mparam}(\bm{x} | \latent)$ can be viewed as a stochastic encoder and decoder, respectively. This explains why we call the method an ``autoencoder''. 


\subsection{Hierarchical Variational Autoencoders}
\label{sec:hvae_back}
\begin{figure}[t]
    \centering
    \includegraphics[width=0.5\textwidth]{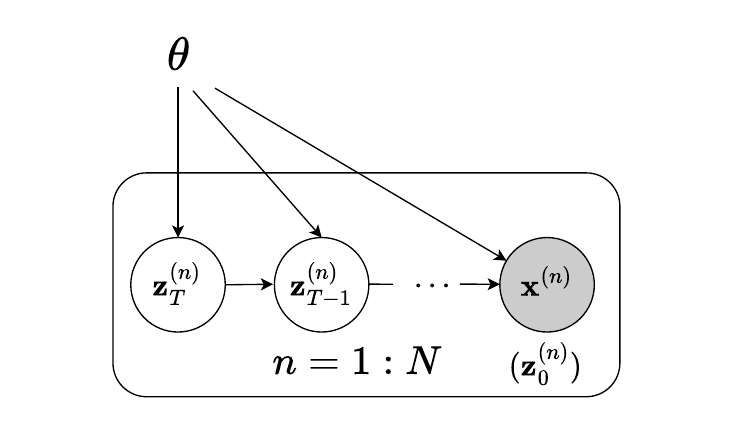}
    \caption{Graphical model of a deep sequential generative variable model.}
    \label{fig:hvae_graphical}
\end{figure}
It can be challenging for DLVMs with only one latent variable to generate realistic observations, such as photo-realistic images, since they try to transform noise to target distribution in just one shot. Deep sequential generative models (DSGMs) extend DLVMs by introducing a sequence of latent variables and transforming noise into the target distribution using multiple intermediate steps. Specifically, they employ a sequence of latent variables $\z_1,\cdots,\z_T$ to capture representations of the data $\z_0\coloneqq \x$ at different fidelity \citep{salimans2016structured}:
\begin{equation}
    \x=\z_0 \sim p_{\mparam}(\z_0)=\int p(\z_T) \prod_{t=1}^T p_{\mparam}(\z_{t-1}|\z_t) d\z_{1:T},\label{eq:hvae-diffusion}
\end{equation}
where the prior distribution over the last latent variable $\z_T$ is often set to a simple distribution, such as standard Gaussian $p(\z_T):=\mathcal{N}(\z_T;\bm{0},\bm{I})$, and the latent conditional distributions are parameterized by deep neural networks (we often only parameterize the conditional means):
\begin{equation}
    p_{\mparam}(\z_{t-1}|\z_t):=\mathcal{N}(\z_{t-1};\bm{\mu}_{\mparam}(\z_t,t),\sigma_t^2 \bm{I}).\label{eq:hvae-denoising}
\end{equation}
We show the graphical model of a DSGM in Figure~\ref{fig:hvae_graphical}.

Similar to DLVMs, direct MLE estimation for model parameters $\mparam$ of DSGMs is intractable, and again VI provides a principled framework for constructing a lower bound objective for log likelihood. A prominent VI-based DSGM is hierarchical variational autoencoders (HVAEs; \citep{sonderby2016ladder,maaloe2019biva,vahdat2020nvae}). Specifically, HVAEs approximate the intractable posterior
\begin{equation}
   p_{\mparam}(\z_{1:T}|\z_0)= \frac{p(\z_T) \prod_{t=1}^T p_{\mparam}(\z_{t-1}|\z_t)}{p_{\mparam}(\z_0)}
\end{equation}
with an amortized variational distribution (or inference model) $q_{\vparam}(\z_{1:T}|\z_0)$. Different design choices for the factorization of the inference model have been proposed, including ``bottom-up'' factorization \citep{burda2015importance}:
\begin{equation}
    q_{\vparam}(\z_{1:T}|\z_0)=\prod_{t=1}^T q_{\vparam}(\z_t|\z_{t-1}),
\end{equation} 
and ``top-down'' factorization \citep{sonderby2016ladder}:
\begin{equation}
    q_{\vparam}(\z_{1:T}|\z_0)=q_{\vparam}(\z_T|\z_0) \prod_{t=1}^T q_{\vparam}(\z_{t-1}|\z_t,\z_0),
\end{equation}
where the factors $q_{\vparam}(\z_t|\z_{t-1})$ and $q_{\vparam}(\z_{t-1}|\z_t,\z_0)$ are factorized Gaussian distributions with mean and diagnoal variance parameterized by neural networks. Given a set of training observations $\data=\{\bm{x}^{(n)}\}_{n=1}^N$ ($\z_0^{(n)}:=\x^{(n)}$), we again jointly train both the generative model and the inference model by maximizing the sum of ELBOs:
\begin{equation}
\begin{aligned}
\mparam^*, \vparam^* &= \arg\max_{\mparam, \vparam} \sum_{n=1}^N \mathcal{L}_{ELBO}(\bm{x}^{(n)}, \mparam, q_{\vparam}), \\
    \log p_{\mparam}(\bm{x}^{(n)}) \geq \mathcal{L}_{ELBO}(\bm{x}^{(n)},\mparam,& q_{\vparam})=\mathbb{E}_{q_{\vparam}(\z^{(n)}_{1:T}|\z^{(n)}_0)}\left[\log \frac{p(\z^{(n)}_T) \prod_{t=1}^T p_{\mparam}(\z^{(n)}_{t-1}|\z^{(n)}_t)}{q_{\vparam}(\z^{(n)}_{1:T}|\z^{(n)}_0)}\right].
\end{aligned}
\label{eq: have_loss_back}
\end{equation}
Monte Carlo estimation and reparameterization trick can again be employed to further simplify the above objective and enable backpropagation.
%
%
%
%
\subsection{Diffusion Models}
\label{sec:diffusion_back}
Diffusion models \citep{sohl2015deep,ho2020denoising,song2020score} are arguably the most successful DSGMs in recent years and they achieve photo-realistic image generation quality. It assumes a Markovian data-generating process (or denoising process) based on a sequence of latent variables as in Eq.~\ref{eq:hvae-diffusion} and the parameterization of the conditional distributions is similar to that in Eq.~\ref{eq:hvae-denoising}. However, unlike HVAEs, diffusion models define a fixed ``bottom up'' inference model (or diffusion process):
\begin{equation}
    q(\z_{1:T}|\z_0)=\prod_{t=1}^T q(\z_t|\z_{t-1})=\prod_{t=1}^T \mathcal{N}(\z_t;\sqrt{\alpha_t} \z_{t-1}, (1-\alpha_t)\bm{I}), \quad \z_0:=\x
    \label{eq:diffusion_inference}
\end{equation}
where $q(\z_t|\z_{t-1})$'s are predefined Gaussian convolution kernels and no dimensionality reduction for latent variables is performed (i.e., $d=d_x$, where $d_x=\text{dim}(\x)$). Diffusion models are also trained by maximizing the ELBO but only with respect to the parameters $\mparam$ of the generation model, since the inference model is fixed and therefore does not include any trainable parameters. Since the inference model is simply defined to be a sequence of fixed Gaussian convolutions, one can analytically derive the corresponding ``top-down'' version of this inference model, which is of the form:
\begin{equation}
    q(\z_{t-1}|\z_t,\z_0)=\mathcal{N}(x_{t-1};\tilde{\mu}(\z_t,\z_0),\tilde{\beta}_t^2),
\end{equation}
where $\tilde{\mu}$ and $\tilde{\beta}_t$ have analytical solutions with closed-form expressions. As a result of the simple and fixed inference model, the ELBO of diffusion models for each training observation $\x$ ($\z_0$) admits the following simplified form:
\begin{equation}
\begin{split}
    \mathcal{L}_{ELBO}(\x, \mparam)
    &=\mathbb{E}_{q(\z_{1:T}|\z_0)}\left[\log p_{\mparam}(\z_0|\z_1) - \sum_{t=1}^T  \text{KL}(q(\z_{t-1}|\z_t,\z_0)||p_{\mparam}(\z_t|\z_{t-1})) \right]\\
    &=\mathbb{E}_{q(\z_{1:T}|\z_0)}\left[\log p_{\mparam}(\z_0|\z_1) -\sum_{t=1}^T  \underbrace{\frac{\lVert \mu_{\mparam}(\z_t, t) - \tilde{\mu}(\z_t,\z_0) \rVert^2}{2\sigma_t^2}}_{\text{mean matching}} \right].
\end{split}
\label{eq: diffusion_loss_back}
\end{equation}
The fixed inference model imposes strong inductive bias for the DSGM as the DSGM now is optimized towards a denoising process for a fixed diffusion process. The inductive bias can also be seen from the direct supervision signal for each intermediate latent variable in the ``mean matching" term of the above ELBO. In practice, during each training iteration, we typically just sample one term from the above ELBO (consisting of $T+1$ terms) for each training observation to form a even simpler objective.

Based on the structure of $\tilde{\mu}(\z_t, \z_0)$, one can also reparameterize $\mu_{\mparam}(\z_t, t)$ accordingly to rewrite the ``mean matching" term in other forms. For example, using the reparameterization trick, the noise parameterization of diffusion models \citep{ho2020denoising, song2021denoising} rewrites $\z_0=\frac{1}{\sqrt{\bar{\alpha}_t}} \z_t- \sqrt{\frac{1-\bar{\alpha}_t}{\bar{\alpha_t}}} \bm{\epsilon}$, $\bm{\epsilon} \sim \mathcal{N}(\bm{0},\bm{I})$, where $\sqrt{\bar{\alpha}_t}$'s are fixed scalars determined by the fixed inference model. With it we can further reparameterize $\mu_{\mparam}(\z_t, t) = \tilde{\mu}(\z_t, \frac{1}{\sqrt{\bar{\alpha}_t}} \z_t- \sqrt{\frac{1-\bar{\alpha}_t}{\bar{\alpha_t}}} \epsilon_{\mparam}(\z_t,t))$, which allows us to rewrite the ``mean matching" term into a ``noise matching" term:
\begin{equation}
    \lambda_t \lVert \epsilon_{\mparam}(\z_t, t) - \bm{\epsilon} \rVert^2 = \bar{\lambda}_t \lVert \epsilon_{\mparam}(\sqrt{\bar{\alpha}_t} \z_0+ \sqrt{1-\bar{\alpha}_t} \bm{\epsilon}, t) - \bm{\epsilon} \rVert^2, \quad \bm{\epsilon} \sim \mathcal{N}(\bm{0}, \bm{I}),
\end{equation}
where $\lambda_t$'s are positive scalars determined by analytical form of $\tilde{\mu}$. This objective encourages the predicted noise $\epsilon_{\mparam}(\z_t, t)$ to match the noise $\bm{\epsilon}$ used to corrupt the observation $\x$ ($\z_0$) to its noisy version $\z_t$. Alternatively, by exploiting the connection between the denoising mean and the score of the noisy model, one can also obtain score-matching based objective to train diffusion models \citep{song2020score}.

\chapter{Calibrating Transformers via Sparse Gaussian Processes}
\label{cha:sgpa}

\begin{tcolorbox}           [enhanced,colback=gray!5!white,colframe=gray!75!black,colbacktitle=red!80!black,fonttitle=\bfseries]
  This chapter is based on \citet{chen2023calibrating}:
  \begin{itemize}
  \item  \underline{Wenlong Chen} and Yingzhen Li. \textbf{Calibrating Transformers via Sparse Gaussian Processes}. In \textit{International Conference on Learning Representations (ICLR)}, 2023.
  \end{itemize}
  The main idea was developed by me while the last author provided useful suggestions. The manuscript was mainly written by me while the last author helped polish it. Moreover, I wrote all the code and performed all the experiments for this work.
\end{tcolorbox}

In this chapter, we propose an approximate Bayesian inference method, sparse Gaussian process attention, to improve the uncertainty quantification in Transformer models. Generic weight-space inference techniques treat all deep neural networks as some parametric models and ignore the inductive bias within their architectures. Furthermore, because of the complex interactions between weights during the forward pass, there is no good reason to favor any specific prior or approximation techniques in these weight-space methods. In contrast, sparse Gaussian process attention is inspired by the identification of the connection between attention architecture in Transformers and the posterior mean of sparse variational Gaussian processes. This connection naturally informs us to use GP priors over the attention outputs and construct approximate posterior for them based on sparse GPs.

\section{Introduction}
Significant improvements have been made for deep sequence models in predictive tasks, and as reviewed in Section~\ref{sec:transformer}, Transformer \citep{vaswani2017attention}, a deep sequence model architecture based on multi-head attention (MHA), has gained popularity in recent years. 
With Transformers being deployed in many downstream applications \citep{vaswani2017attention, dosovitskiy2020image, brown2020language}, it is crucial to prevent poor robustness which often comes from erratic outputs with high confidence from these models \citep{guo2017calibration, mukhoti2020calibrating}.
This requires calibrated uncertainty quantification for Transformers which is much less well-studied at the time of this work, and it raises concerns about using Transformers for safety-critical tasks which require rational and risk-averse decision making under uncertainty.

Regarding uncertainty quantification, Bayesian inference is a powerful and principled framework to build probabilistic models for rational prediction and decision-making under uncertainty \citep{gal2016thesis}. Significant progress is observed for applying (approximate) Bayesian inference methods to quantify uncertainty in fully-connected, convolutional and recurrent neural networks \citep{blundell2015bbp,gal2016mcdrop,zhang2019cyclical,ritter2021sparse}. Initial efforts have been made on extending these techniques to Transformers but with mixed results \citep{tran2019bl,xue2021bayestran}.
On the other hand, Gaussian processes (GPs), reviewed in Section~\ref{sec:gp_back}, are gold standard methods for tasks requiring reliable function-space uncertainty estimates \citep{rasmussen2006gp, wilsonl2020eff}. Researchers have proposed to integrate deep learning ideas to GP model design, including deep kernel learning \citep{agw2015dkl} and deep GPs \citep{damia2013dgp,salimbeni2017doubly}. Still, these models have yet to be scaled to modern deep learning tasks such as large-scale image classification and language modeling.

In this work, we propose sparse Gaussian process attention (SGPA), a novel uncertainty quantification technique tailored to attention-based models (e.g., Transformers), by leveraging techniques from sparse variational Gaussian processes (SVGP) \citep{snelson2005sgp, hensman2013gpbig} for improved uncertainty estimates. Our work presents the following insights and contributions:
\begin{itemize}
    \item (Section~\ref{sec:generic_sgpa}) Our key observation is that kernel-based attention \citep{tsai2019TransformerDissection} is equivalent to the posterior mean of an SVGP. This inspires us to specify GP priors for the attention outputs and leverage SVGP techniques for approximate posterior inference in Transformers. The resulting Transformer based on our SGPA approach can be viewed as a sparse deep GP \citep{salimbeni2017doubly} with deep kernel in use for each GP layers.
    \item (Section~\ref{sec:decoupled_sgpa} \&~\ref{sec:sgpa_transformer}) We address the computational inefficiency issues of a naive extension of SVGP to multi-head self-attention with decoupled inducing points techniques \citep{cheng2017variational,sali2018ortho}, making SPGA scalable to deep learning tasks that Transformers are applied to.
    
    \item (Section~\ref{sec:sgpa_experiments}) Empirically, on a variety of vision, NLP and graph prediction tasks and compared with baselines, SGPA-based Transformers improve considerably over in-distribution calibration, out-of-distribution (OOD) robustness, and OOD detection, while achieving competitive accuracy against Transformers with standard \citep{vaswani2017attention} or kernel attention \citep{tsai2019TransformerDissection}.
\end{itemize}
\section{Sparse Gaussian Process Attention}
\label{sec:generic_sgpa}
We propose Sparse Gaussian Process Attention (SGPA) to perform approximate Bayesian inference for Transformer-based models. The key idea is to replace the softmax operation in scaled dot-product attention \citep{vaswani2017attention} with a kernel \citep{tsai2019TransformerDissection}, and connect the resulting attention to the mean of an SVGP. This insight allows us to apply SVGP equations for uncertainty estimation, and we further introduce decoupled inducing points to improve computational efficiency.

%
\subsection{Attention as the Mean of a Sparse Variational Gaussian Process}
%
Standard Transformers use attention blocks based on scaled dot-product \citep{vaswani2017attention}. Given queries $\bm{q}\in\mathbb{R}^{T\times d_q}$, keys $\bm{k}\in\mathbb{R}^{T\times d_k}$ and value $\bm{v}\in\mathbb{R}^{T\times d_v}$, the scaled dot-product (SDP) attention is given as follows:
%
%
\begin{equation}
    \text{SDP-Attention:} \quad \bm{F} = \text{softmax}(\frac{\bm{qk}^\top}{\sqrt{d_k}})\bm{v},
\label{eq:sdp}
\end{equation}
%
%
where $d_k$ ($=d_q$) is the dimension of the keys. Since attention involves measuring the similarity between $\bm{q}$ and $\bm{k}$, \cite{tsai2019TransformerDissection} generalized SDP-Attention by replacing $\text{softmax}(\frac{\bm{qk}^\top}{\sqrt{d_k}})$ in Eq.~\ref{eq:sdp} with a kernel gram matrix $\bm{K}_{\bm{qk}}$ ($[\bm{K}_{\bm{qk}}]_{i,j}=k(\bm{q}_i,\bm{k}_j)$) computed using a valid symmetric kernel $k(\cdot, \cdot)$, for which we refer to it as kernel attention or $K$-Attention for short:
%
%
\begin{equation}
    K\text{-Attention:} \quad \bm{F}=\bm{K}_{\bm{qk}}\bm{v}.
\label{eq:kattn}
\end{equation}
%
%
Recall that the posterior mean of SVGP in Eq.~\ref{eq:q_predictive_svgp} reviewed in the Section~\ref{sec:svgp_back} is $\bm{m}=\bm{K}_{\bm{XZ}}\bm{K}_{\bm{ZZ}}^{-1}\bm{m}_{\bm{u}}$, when evaluated on inputs $\bm{X}$. Now we reparameterize the variational mean parameter of SVGP as $[\bm{v}]_{:,d} := \bm{K}_{\bm{ZZ}}^{-1}\bm{m}_{\bm{u}}$ for each dimension ($d$) of $\bm{v}$, and define the queries and keys as the input locations and inducing point locations: $\bm{q}:=\bm{X}, \bm{k}:=\bm{Z}$. This allows us to rewrite the posterior mean of SVGP as
\begin{equation}
    \text{SVGP-mean}\quad \bm{m}=\bm{K}_{\bm{qk}}[\bm{v}]_{:,d}
\end{equation}
By doing so, equivalence can be identified between the posterior mean of an SVGP and each dimension of the output of a kernel attention block. This allows us to naturally extend the toolbox of Gaussian processes and their scalable approximations for quantifying uncertainty in Transformers in the following sections.

\begin{figure}[t]
\centering
\begin{subfigure}{0.33\textwidth}
\centering
   \includegraphics[width=0.999\linewidth]{C3/figures/self_attn.pdf}
\caption{Vanilla Transformer}
\label{fig:attn_ch3}
\end{subfigure}
\hfill
\begin{subfigure}{0.320\textwidth}
\centering
   \includegraphics[width=0.999\linewidth]{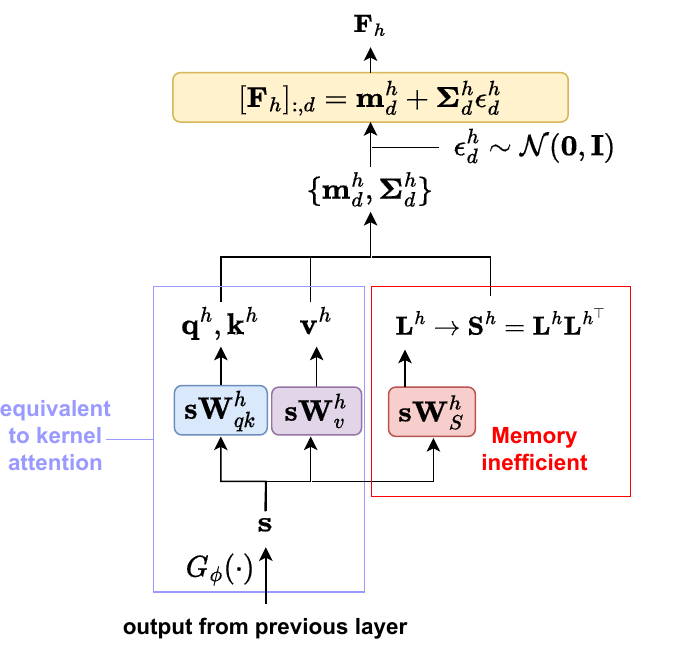}
\caption{Standard SGPA (ours)}
\label{fig:sgpa_v1}
\end{subfigure}
\hfill
\begin{subfigure}{0.333\textwidth}
\centering
   \includegraphics[width=0.999\linewidth]{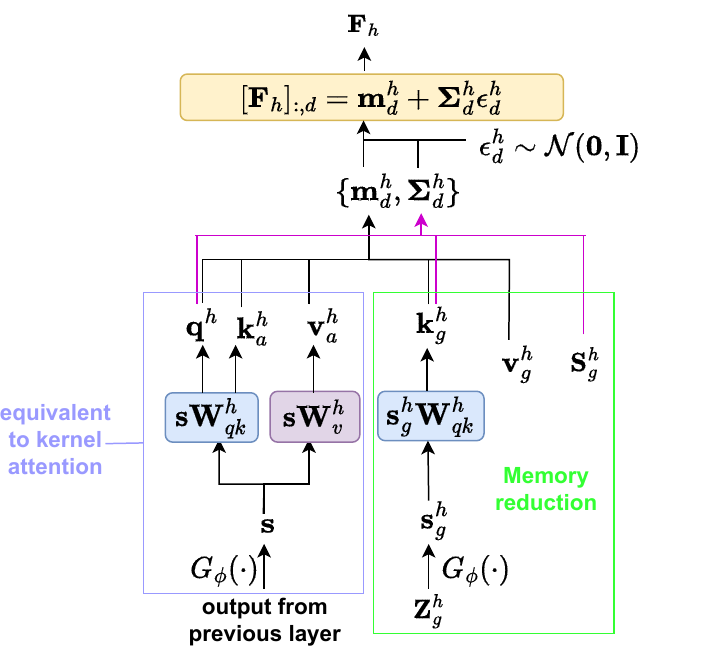}
\caption{Decoupled SGPA (ours)}
\label{fig:sgpa}
\end{subfigure}
\caption{Illustration of one head (h) of multi-head self attention in one layer of (a) vanilla Transformer, (b) Transformer based on standard SGPA and (c) Transformer based on decoupled SGPA.}
\end{figure}
\subsection{Standard SGPA \& its Inefficiency for Self-attention}
%
Observing the equivalence between $K$-Attention and SVGP mean, a natural idea for uncertainty estimation is to apply SVGP techniques for approximate posterior variance computations. In detail, we introduce a set of variational covariance parameters $\bm{S}\in \mathbb{R}^{T\times T\times d_v}$ (with $T$ as the number of keys/inducing inputs), and optimize them using the ELBO (Eq.~\ref{eq:elbo_svgp}) as discussed in Section~\ref{sec:svgp_back} (see Appendix~\ref{a4:svgp_elbo} for full derivation). This procedure returns the mean and covariance for each dimension ($d$) of the posterior attention output as:
%
%
\begin{equation}
    \bm{m}_{d}=\bm{K}_{\bm{q} \bm{k}}[\bm{v}]_{:,d},\quad
    \bm{\Sigma}_{d}= \bm{K}_{\bm{q} \bm{q}} + \bm{K}_{\bm{q} \bm{k}}(\bm{K_{kk}^{-1}[S]_{:,:,d}K_{kk}^{-1}-K^{-1}_{kk}})\bm{K}_{\bm{k}\bm{q}}.
\label{eq:fpost}
\end{equation}
%
%
In this way, we fit an SVGP for each dimension of attention outputs independently: for each dimension $d$, an SVGP given in Eq.~\ref{eq:fpost} is fitted using the same kernel, but with different variational mean ($[\bm{v}]_{:,d}$) 
and covariance ($[S]_{:,:,d}$) parameters. We name this approach as \textit{standard SGPA} and provide a visualization of the operations in Figure~\ref{fig:sgpa_v1}.

%
%
Unfortunately, standard SGPA becomes computationally inefficient when applied to Transformers based on multi-head self-attention. In each attention layer the keys for each head $h$, $\bm{k}^h$, are obtained by passing the output from previous layer through a neural network. Moreover, the projection matrices (defined in Section~\ref{sec:transformer}) for queries and keys are tied as in \citet{tsai2019TransformerDissection} (i.e., $\bm{W}^{h}_{q}=\bm{W}^{h}_{k}:=\bm{W}^{h}_{qk}\in\mathbb{R}^{d_s \times d_k}$) to obtain a valid symmetric kernel.
As a result, the queries and keys in a self-attention layer are the same, more importantly they are input-dependent, i.e., $\bm{k}^h = \bm{s} \bm{W}_k^h$ (where $\bm{s} \in \mathbb{R}^{T\times d_s}$ is the input to the MHSA block) and they vary as the input sequence to the Transformer changes. Therefore, to extend standard SVGP framework (Eq.~\ref{eq:fpost}) to self-attention, the covariance parameters $\bm{S}^h$ need to be input-dependent as well to accommodate the varying inducing inputs $\bm{k}^h$. A naive idea would parameterize $\bm{S}^h$ by linear projection, e.g., $vec(\bm{L}^h) = \bm{s} \bm{W}_s^h$ for one head where $\bm{L}^h$ is the Cholesky factor of $\bm{S}^h$ (see Figure~\ref{fig:sgpa_v1}). This will incur a memory cost of $O(T^2)$ per head even if we tie the variational covariances across output dimensions, and a run-time cost of $O(T^3)$ per head per input sequence for inverting $\bm{K}_{\bm{k}^h\bm{k}^h}$ as $\bm{k}^h$ is input-dependent. Therefore standard SGPA is both memory and run-time inefficient especially for long input sequences. 

\begin{remark}
    Unlike the case where variational parameters are freely-optimized variables, note that in self-attention settings, different parameterizations of the posterior mean in general define different posterior predictive distributions. To see this, suppose that the value for one head in a MHSA block $\bm{v}^h$ is obtained by projection $\bm{v}^h=\bm{s}^h\bm{W}_v^h$ ($\bm{W}_v^h \in \mathbb{R}^{d_s \times d_v}$) and the reparameterized SVGP posterior mean yields $\bm{m}^h=\bm{K}_{q^hk^h}\bm{v}^h=\bm{K}_{q^hk^h}\bm{s}^h\bm{W}_v^h$. Now given the canonical posterior mean parameterization and the corresponding values obtained via $\tilde{\bm{v}}^{h}=\bm{s}^h\tilde{\bm{W}}_v^h$, in order to obtain the same posterior mean as the reparameterized one for all $N$ sequences in the dataset, we need to solve for $\tilde{\bm{W}}_v^h$ with $N$ linear systems of the form
    \begin{equation}
    \begin{split}
        \bm{K}_{q^hk^h}\bm{K}_{k^hk^h}^{-1}\bm{s}^h\tilde{\bm{W}}_v^h=\bm{K}_{q^hk^h}\bm{s}^h\bm{W}_v^h \iff \left(\bm{K}_{k^hk^h}^{-1}\bm{s}^h\right)\tilde{\bm{W}}_v^h=\bm{s}^h\bm{W}_v^h.
    \end{split}
    \end{equation}
    Hence, we need to solve a linear system with $N\times T\times d_v$ equations in total, and $d_s \times d_v$ variables. In general, since $N\times T \gg d_s$, it does not have a solution.
\end{remark}

\section{Improving Time \& Memory Efficiencies via Decoupled SGPA}
\label{sec:decoupled_sgpa}

We propose to address the aforementioned inefficiency issues by extending the decoupled sparse Gaussian process approximation \citep{cheng2017variational,sali2018ortho} to self-attention SGPA. 
\subsection{Preliminaries of Decoupled SVGPs}
Decoupled SVGPs \citep{cheng2017variational,sali2018ortho} can be interpreted as SVGPs but with structured variational distributions for the inducing points. Suppose we split the inducing points into two sets: $\{(\bm{z}_{g}^{(m)}, u_{g}^{(m)})\}_{m=1}^{M_{g}}$ and $\{(\bm{z}_{a}^{(m)}, u_{a}^{(m)})\}_{m=1}^{M_{a}}$. The original decoupled SVGP (DSVGP; \citep{cheng2017variational}) considers a structured variational distribution over $\bm{u} := \bm{u}_{g\cup a}=\bm{u}_{g}\cup \bm{u}_{a}$:
\begin{equation}
    q(\bm{u})=q(\bm{u_{g}})q(\bm{u}_a|\bm{u}_g)
    =\mathcal{N}(\bm{u}_g; \bm{0}, \bm{S}_g)\mathcal{N}(\bm{u}_{a}; \bm{C}\bm{u}_g+\bm{m}_a,, \bm{B}),
\end{equation}
where $\bm{B}=\bm{K_{Z_a Z_a}-K_{Z_a Z_g}K_{Z_g Z_g}^{-1}K_{Z_g Z_a}}$ and $\bm{C}=\bm{K_{Z_a Z_g}K_{Z_g Z_g}^{-1}}$. The corresponding variational mean and covariance are:
\begin{equation}
    \bm{m}_{\bm{u}}=\begin{pmatrix}
     \bm{0} \\ 
     \bm{m}_{a}
    \end{pmatrix},\quad
    \bm{S}_{\bm{u}}=\begin{pmatrix}
     \bm{S}_g & \bm{S}_g\bm{C}^\top \\
     \bm{C} \bm{S}_g & \bm{B}+\bm{C} \bm{S}_g  \bm{C}^\top
    \end{pmatrix}.
\end{equation}
Some terms used in the posterior predictive covariance of SVGP can then be canceled out:
\begin{equation}
    \bm{K}^{-1}_{\bm{ZZ}} \bm{S_u} = \begin{bmatrix}
    (\bm{K}^{-1}_{\bm{Z}_g \bm{Z}_g} \bm{S}_{g} - \bm{I}) \bm{K}^{-1}_{\bm{Z}_g \bm{Z}_g} \bm{K}_{\bm{Z}_a \bm{Z}_g} & \bm{K}^{-1}_{\bm{Z}_a \bm{Z}_a} \bm{S}_{g}\\
    \bm{I} & \bm{0} 
    \end{bmatrix},
    \label{eq:cancel1}
\end{equation}
\begin{equation}
    \bm{K}^{-1}_{\bm{ZZ}} \bm{S_u} \bm{K}^{-1}_{\bm{ZZ}} - \bm{K}^{-1}_{\bm{ZZ}} = \begin{bmatrix}
    \bm{K}^{-1}_{\bm{Z_g Z_g}} \bm{S}_{g} \bm{K}^{-1}_{\bm{Z_g Z_g}} - \bm{K}^{-1}_{\bm{Z_g Z_g}} & \bm{0} \\
    \bm{0} & \bm{0}
    \end{bmatrix}.
\label{eq:cancel2}
\end{equation}
Plugging this structured variational mean and covariance parameters into the posterior predictive distribution of SVGP (Eq.~\ref{eq:q_predictive_svgp}) allows us to obtain the posterior distribution $q(\bm{f} |\bm{X}, \bm{Z})$ used in DSVGP, which is a Gaussian with the following mean $\bm{m}_{\bm{f}}$ and covariance $\bm{\Sigma}_{\bm{f}\bm{f}}$:
\begin{equation}
    \begin{split}
         \bm{m}_{\bm{f}}&=\bm{K_{XZ_a}}\bm{K}_{\bm{Z_{a}Z_{a}}}^{-1}\bm{m}_{a}, \\
         \bm{\Sigma}_{\bm{f} \bm{f}}&= K_{\bm{X} \bm{X}} + \bm{K}_{\bm{X} \bm{Z}_{g}}\bm{K}_{\bm{Z}_{g}\bm{Z}_{g}}^{-1}(\bm{S}_{g}-\bm{K}_{\bm{Z}_{g}\bm{Z}_{g}})\bm{K}^{-1}_{\bm{Z}_{g}\bm{Z}_{g}}\bm{K}_{\bm{Z}_{g}\bm{X}}.
    \end{split}
    \label{eq:dsvgp_predcitive}
\end{equation}
Clearly, the inducing points used for this posterior predictive mean and covariance are now completely decoupled. Moreover, since in the posterior predictive distribution $q(\bm{f}|\bm{X}, \bm{Z})$, the variational mean parameter $\bm{m}_{a}$ always appears together with a left matrix multiplication of $\bm{K}_{\bm{Z_a Z_a}}^{-1}$, we can reparameterize the variational mean as $\bm{v}_{a}:=\bm{K_{Z_a Z_a}}^{-1}\bm{m}_{a}$, which is then treated as a final variational parameter to optimize. As a result, the posterior predictive mean can be rewritten as $\bm{m}_{\bm{f}}=\bm{K_{XZ_a}}\bm{v}_{a}$.

Orthogonally decoupled SVGP (ODSVGP; \citep{sali2018ortho}) considers another structured Gaussian variational distribution for the two sets of inducing points:
\begin{equation}
\begin{split}
    q(\bm{u})&=q(\bm{u_g})q(\bm{u}_a|\bm{u}_g)\\
    &=\mathcal{N}(\bm{u}_g; \bm{m}_g, \bm{S}_g)\mathcal{N}(\bm{u}_a; \bm{C}\bm{u}_g+\bm{B}\bm{K_{Z_a Z_a}}^{-1}\bm{m}_a, \bm{B}),
\end{split}
\end{equation}
The corresponding variational mean and covariance become:
\begin{equation}
    \bm{m}_{\bm{u}}=\begin{pmatrix}
    \bm{m}_{g} \\ \bm{C} \bm{m}_{g} + \bm{B}\bm{K_{Z_a Z_a}}^{-1}\bm{m}_{a}
    \end{pmatrix},\quad
    \bm{S}_{\bm{u}}=\begin{pmatrix}
     \bm{S}_g & \bm{S}_g\bm{C}^\top \\
     \bm{C} \bm{S}_g & \bm{B}+\bm{C} \bm{S}_g  \bm{C}^\top
    \end{pmatrix}.
\label{eqa:odsvgp}
\end{equation}
Again, plugging the above $\bm{m}_{\bm{u}}$ and $\bm{S}_{\bm{u}}$ in Eq.~\ref{eq:q_predictive_svgp}, we can obtain the posterior predictive distribution of ODSVGP for $\bm{f}$ after canceling some terms (Eq.~\ref{eq:cancel1} and ~\ref{eq:cancel2}). It is a Gaussian with mean $\bm{m}_{\bm{f}}$ and covariance $\bm{\Sigma}_{\bm{f}\bm{f}}$ given as:
\begin{equation}
    \begin{split}
         \bm{m}_{\bm{f}}&=\bm{K_{XZ_a}}\bm{K_{Z_aZ_a}}^{-1}\bm{m}_{a}-\bm{K}_{\bm{X} \bm{Z}_g} \bm{K}_{\bm{Z}_{g}\bm{Z}_{g}}^{-1} \bm{K}_{\bm{Z}_g \bm{Z}_a} \bm{K}_{\bm{Z}_a \bm{Z}_a}^{-1}\bm{m}_{a}+\bm{K}_{\bm{X} \bm{Z}_g}\bm{K}_{\bm{Z}_{g}\bm{Z}_{g}}^{-1}\bm{m}_g, \\
         \bm{\Sigma}_{\bm{f} \bm{f}}&= K_{\bm{X} \bm{X}} + \bm{K}_{\bm{X} \bm{Z}_{g}}\bm{K}_{\bm{Z}_{g}\bm{Z}_{g}}^{-1}(\bm{S}_{g}-\bm{K}_{\bm{Z}_{g}\bm{Z}_{g}})\bm{K}^{-1}_{\bm{Z}_{g}\bm{Z}_{g}}\bm{K}_{\bm{Z}_{g}\bm{X}}.
    \end{split}
\end{equation}
By again reparameterizing the variational mean parameters, $\bm{v}_{a}:=\bm{K}_{\bm{Z}_{a}\bm{Z}_{a}}^{-1} \bm{m}_{a}$ and $\bm{v}_{g}:=\bm{K}_{\bm{Z}_{g}\bm{Z}_{g}}^{-1}\bm{m}_g$, we arrive at the final expressions of the posterior predictive mean and covariance of ODSVGP:
\begin{equation}
    \begin{split}
         \bm{m}_{\bm{f}}&=\bm{K}_{\bm{X} \bm{Z}_a} \bm{v}_{a} -\bm{K}_{\bm{X} \bm{Z}_g} \bm{K}_{\bm{Z}_{g}\bm{Z}_{g}}^{-1} \bm{K}_{\bm{Z}_g \bm{Z}_a} \bm{v}_{a}
         +\bm{K}_{\bm{X} \bm{Z}_g}\bm{v}_{g}, \\
         \bm{\Sigma}_{\bm{f} \bm{f}}&= K_{\bm{X} \bm{X}} + \bm{K}_{\bm{X} \bm{Z}_{g}}\bm{K}_{\bm{Z}_{g}\bm{Z}_{g}}^{-1}(\bm{S}_{g}-\bm{K}_{\bm{Z}_{g}\bm{Z}_{g}})\bm{K}^{-1}_{\bm{Z}_{g}\bm{Z}_{g}}\bm{K}_{\bm{Z}_{g}\bm{X}}.
    \end{split}
    \label{eq:dsgp_predictive}
\end{equation}
Notice that unlike DSVGP, the posterior predictive mean of ODSVGP are computed based on both sets of inducing points. Through our preliminary experiments at the stage of hyperparameter tuning, we find ODSVGP works better than DSVGP when applied in the SGPA framework (see Appendix~\ref{a2:ssgpa}), so we focus on integrating ODSVGP in SGPA and call the resulting method decoupled SGPA from now on.

\subsection{Decoupled SGPA}
When applying ODSVGP in the framewok of SGPA, in addition to input-dependent (or ``amortized'') keys/inducing inputs $\bm{k}^h$, which we will call $\bm{k}_{a}^h$ from now on, for each head $h$, we also incorporate another $M_g$ number of ``global'' keys/inducing inputs $\bm{k}_g^h$ that are shared across all input sequences. The main idea is to compute the variance of sparse GP using the global keys only, so that the variational parameters for the $\bm{S}^h$ matrix become independent to the input sequences. Indeed, following Eq.~\ref{eq:dsgp_predictive} presented in the above paragraph, we can compute the mean and covariance for each output dimension ($d$) of each head as (we drop the superscript $h$ here for more concise notation):
%
%
\begin{equation}
\begin{split}
\bm{m}_{d}&=\bm{K}_{\bm{q} \bm{k}_a}[\bm{v}_{a}]_{:,d}-\bm{K}_{\bm{q} \bm{k}_g}\bm{K}_{\bm{k}_g \bm{k}_g}^{-1} \bm{K}_{\bm{k}_g \bm{k}_a}[\bm{v}_{a}]_{:,d}+\bm{K}_{\bm{q} \bm{k}_g}[\bm{v}_g]_{:,d}, \\
\bm{\Sigma}_{d}&= \bm{K}_{\bm{q} \bm{q}} + \bm{K}_{\bm{q} \bm{k}_{g}}\bm{K}_{\bm{k}_{g}\bm{k}_{g}}^{-1}([\bm{S}_{g}]_{:,:,d}-\bm{K}_{\bm{k}_{g}\bm{k}_{g}})\bm{K}^{-1}_{\bm{k}_{g}\bm{k}_{g}}\bm{K}_{\bm{k}_{g}\bm{q}},
\end{split}
\label{eq:odsvgp}
\end{equation}
%
%
where $\bm{v}_{g}\in \mathbb{R}^{M_{g}\times d_v}, \bm{S}_{g}\in \mathbb{R}^{M_{g}\times M_g \times d_v}$ are the variational parameters associated with the global keys $\bm{k}_{g}$, and $\bm{v}_{a} \in \mathbb{R}^{T\times d_v}$ is computed via projection $\bm{v}_a = \bm{s} \bm{W}_v$. We name this approach as \textit{decoupled SPGA} which is illustrated in Figure~\ref{fig:sgpa}.

%
%
Compared to standard SGPA (Eq.~\ref{eq:fpost}), where $\bm{k}_a^h$ in decoupled SGPA is the same as $\bm{k}^h$ in standard SGPA), we see that the posterior mean of decoupled SGPA also involves two extra terms to take into account the effect of global inducing points. But more importantly, the posterior variance of the two SGPA methods differ only in the keys/inducing inputs in use (input-dependent keys $\bm{k}^h$ versus global keys $\bm{k}_g^h$), and this brings in the key advantange of decoupled SGPA. As the posterior covariance in Eq.~\ref{eq:odsvgp} only involves the global inducing points, the variational covariance no longer needs to be input-dependent, and (the Cholesky factor of) $\bm{S}_g^h$ can be parameterized freely. Now the number of parameters for the covariance part is of order of $O(M_{g}^2)$ (vs $O(T^2)$ in standard SPGA), and the computation of matrix inversion pays a one-off cost of $O(M_g^3)$ (vs $O(T^3)$ for every input sequence). Notice that we are free to choose the number of global inducing points $M_{g}$, and in practice we find $M_{g}=O(\frac{T_{avg}}{H})$ is usually sufficient for experiments considered in this work, where $T_{avg}$ is the average length of training input sequences. In Table~\ref{table:cost}, we summarize time complexity (with batch size $B$) and the additional memory (number of parameters) required for SGPA in one head of a Transformer. We also include maximum likelihood estimation (MLE) for reference (note that memory complexity for MLE does not depend on input sequence length $T$). 
\begin{table}[t]
\centering
\caption{Complexity comparison for standard and decoupled SGPA.}
 \begin{tabular}{  c  c c    } 
\hline
 Model & Time & Additional Memory \\ 
 \hline
 MLE & $O(BT^2)$ & - \\
   Standard SGPA & $O(BT^3)$ & $O(T^2)$ \\
  Decoupled SGPA & $O(BT^2M_g+M_g^3)$ & $O(M_g^2)$ \\
\hline
\end{tabular}
\label{table:cost}
\end{table}
As the time and memory savings are significant, we mainly evaluate decoupled SGPA in our experiments, and in the rest of the main text we will refer to decoupled SGPA as SGPA for short. 
\section{Transformer Based on Decoupled SGPA}
%
\label{sec:sgpa_transformer}
So far we have presented SGPA methods for uncertainty quantification in a multi-head self-attention module. When applied to Transformer models, multiple layers of attention blocks are in use, and in the following we describe the construction of a Transformer model based on decoupled SGPA. Note that, as SGPA is equivalent to a sparse GP, the Transformer model presented below can be viewed as a sparse approximation to a deep GP \citep{damia2013dgp, salimbeni2017doubly} with deep kernel in each layer.

%
%
Our Transformer architecture mostly follows the one in \cite{vaswani2017attention}. The input to the $l$-th SGPA layer is the output from previous SGPA layer $\bm{F}^{l-1}\in \mathbb{R}^{T\times d^{l-1}}$. We first process the input with a non-linear mapping $G_{\phi^l}: \mathbb{R}^{d^{l-1}}\rightarrow \mathbb{R}^{d^{l}}$, and then perform projections to obtain the queries, amortized \& global keys and values. Specifically, we have for each head $h$:
%
%
\begin{equation}
        \bm{q}^{l,h} = \bm{k}_{a}^{l,h} = G_{\phi^l}(\bm{F}^{l-1})\bm{W}^{l,h}_{qk},\quad
        \bm{k}_{g}^{l,h} = G_{\phi^l}(\bm{Z}_{g} ^{l,h})\bm{W}^{l,h}_{qk},\quad
        \bm{v}_{a}^{l,h} = G_{\phi^l}(\bm{F}^{l-1})\bm{W}^{l,h}_v,
\label{eq:fhl}
\end{equation}
%
%
where $\bm{Z}_{g}^{l,h}\in \mathbb{R}^{M_{g}\times d^{l-1}}$ are global inducing locations of the $l$-th layer defined on the same space as $\bm{F}^{l-1}$. Then we apply a base kernel $K_{base}(\cdot, \cdot)$ to compute the kernel matrices. This is equivalent to using a deep kernel defined on the space of $\bm{F}^{l-1}$, and the parameters of $G_{\phi^l}$ are viewed as the hyperparameters of the deep kernel.
Lastly, with variational parameters ($\bm{v}_{g}^{l,h}, \bm{S}_{g}^{l,h}$) associated with the global inducing locations $\bm{Z}_{g}^{l,h}$, we can obtain $\bm{m}^{l,h}_{d}$ and $\bm{\Sigma}_{d}^{l,h}$ using Eq.~\ref{eq:odsvgp}. We then propagate uncertainty to the next layer by generating samples of output for each head using the reparameterization trick as in \cite{salimbeni2017doubly}:
%
%
\begin{equation}
    [\bm{F}^l_h]_{:,d}=\bm{m}^{l,h}_{d}+\bm{L}_{\bm{\Sigma}_d}^{l,h}\bm{\epsilon}_d^{l,h},\quad \bm{\epsilon}_d^{l,h} \sim \mathcal{N}(\bm{0},\bm{I}),
\label{eq:reparam_dgp}
\end{equation}
where $\bm{L}_{\bm{\Sigma}_d}^{l,h}$ is the Cholesky factor of $\bm{\Sigma}_d^{l,h}$.
%
%
The final output $\bm{F}^l\in\mathbb{R}^{T\times d^l}$ of this SGPA layer is obtained by linear combination in the same way as in standard Transformers (see Eq.~\ref{eq:combine_heads}).

%
%
The ELBO objective for training the variational \& kernel parameters is derived following deep GP and additive GP \citep{add2011david} approaches. The key idea is that as each head in MHSA with SGPA is a (sparse) GP, the final output $\bm{F}^l$ can also be viewed as a weighted summation of (sparse) GPs, which is again a GP \citep{add2011david}. This allows us to perform variational approximations on each of the heads before the final combination instead of using a direct approximation on the $\bm{F}^l$ process \citep{hkd2021sun}. Assuming the approximate posterior $q$ for $\{\bm{F}_h^l\}^{H}_{h=1}$ factorizes over $h$, the corresponding ELBO with input sequence $\bm{F}^0 := \bm{X}$ is (derivations in Appendix~\ref{a4:sgpa_obj}):
%
%
\begin{equation}
\begin{split}
     \mathcal{L}_{ELBO}&=\mathbb{E}_{q(\bm{F}^L|\bm{F}^0,\{\bm{k}^{l,h}_{g}\}_{l=1, h=1}^{L,H})}[\log p(\bm{Y}|\bm{F}^L)]\\ &-\sum_{l=1}^L \sum_{h=1}^H \mathbb{E}_{q(\bm{F}^{l}|\bm{F}^0,\{\bm{k}^{j,h}_{g}\}_{j=1, h=1}^{l,H}))}[\text{KL}(q(\bm{u}_{a \cup g}^{l,h}|\bm{k}^{l,h}_{g},\bm{F}^{l-1})||p(\bm{u}_{a \cup g}^{l,h}|\bm{k}^{l,h}_{g},\bm{F}^{l-1}))].
\end{split}
\label{eq:elbo}
\end{equation}
%
%
In practice, we resort to Monte-Carlo to estimate $\mathcal{L}_{ELBO}$ with samples of function values generated iteratively while passing through each layer using the reparameterization trick (Eq.~\ref{eq:reparam_dgp}).

\section{Experiments}
\label{sec:sgpa_experiments}
We evaluate SGPA on prediction tasks across modalities, with the following experimental set-up.
\begin{itemize}
    \item Datasets: CIFAR10 \& CIFAR100 (image classification \citep{cifar}, CV tasks); CoLA (linguistic acceptability prediction \citep{cola}, NLP task) and IMDB (sentiment analysis, \citep{imdb}, NLP task).
    \item Network architectures: We use Vision Transformers (ViT \citep{dosovitskiy2020image}) for CV tasks. For kernel attention we use the exponential kernel \citep{tsai2019TransformerDissection} and the ARD-RBF kernel \citep{rasmussen2006gp} for NLP and CV tasks respectively. Scaled dot-product (SDP) attention based Transformers are also evaluated. As in \citet{tsai2019TransformerDissection}, we find kernel attention tends to outperform SDP attention in most tasks considered, thus we do not include the results of SDP attention in the main text. These results can be found in the tables in Appendix~\ref{a5:table}.
    \item Baselines: We compare our approach with the following ``single-model'' methods: maximum likelihood estimation (MLE), Bayesian inference methods including mean-field variational inference (MFVI, \citep{blundell2015bbp}), Monte-Carlo Dropout (MCD, \citep{gal2016mcdrop}), Kronecker-factored last layer Laplace approximation (KFLLLA) \citep{kris2020bayes}, and Spectral-normalized Neural Gaussian Process (SNGP) \citep{liu2020sngp}. For tasks where a validation set is used, we also consider temperature scaling (TS) \citep{guo2017calibration} and use the validation set as the calibration set. For CV tasks, we also consider ensemble methods: we compare SGPA ensemble (SGPAE) with deep ensemble (DE) \citep{bala2017ensemble}. We don't consider ensemble models in NLP tasks since we use different train-(valid)-test splits in different runs for them.
    \item Evaluations \& metrics: We consider three evaluation set-ups: in-distribution performance, out-of-distribution (OOD) robustness and OOD detection. The metrics on test set include predictive accuracy metrics for each task, uncertainty calibration metrics such as negative predictive log-likelihood (NLL), expected calibration error (ECE) and maximum calibration error (MCE) \citep{Pakdaman_Naeini_Cooper_Hauskrecht_2015,guo2017calibration}. A brief introduction for these uncertainty calibration metrics is provided in Appendix~\ref{appendix:uq_metric}. We report the mean$\pm$two standard errors for each metric obtained from 5 independent runs. For OOD detection tasks we consider the area under the ROC \& precision-recall curves (AUROC \& AUPR, respectively), and we report the average ranks in terms of AUROC and AUPR over all of the 6 OOD detection tasks for each method. 
\end{itemize}
For fair comparisons, within each task, all the models are trained using the same architecture and optimization setting. All the models are trained from scratch without pre-training. We include the experimental details in Appendix~\ref{a1:hyperparam} and the comparison of wall-clock running time in Appendix~\ref{appendix:sgpa_time}. Results in tables are also presented in Appendix~\ref{a5:table}.

\subsection{In-distribution Calibration}
We report the evaluation results for in-distribution test data on image classification (CIFAR10 \& CIFAR100, without data augmentation), sentiment analysis (IMDB), and linguistic acceptability (CoLA) tasks in the first, second, third and fourth row of Figure~\ref{cifar} respectively. Here for the CoLA dataset, predictive accuracy is measured by Matthew correlation coefficient (MCC) \citep{mcc} instead of accuracy, as in \citet{cola}.

\begin{figure}[t]
\centering
\begin{subfigure}{0.99\textwidth}
    \includegraphics[width=0.99\linewidth]{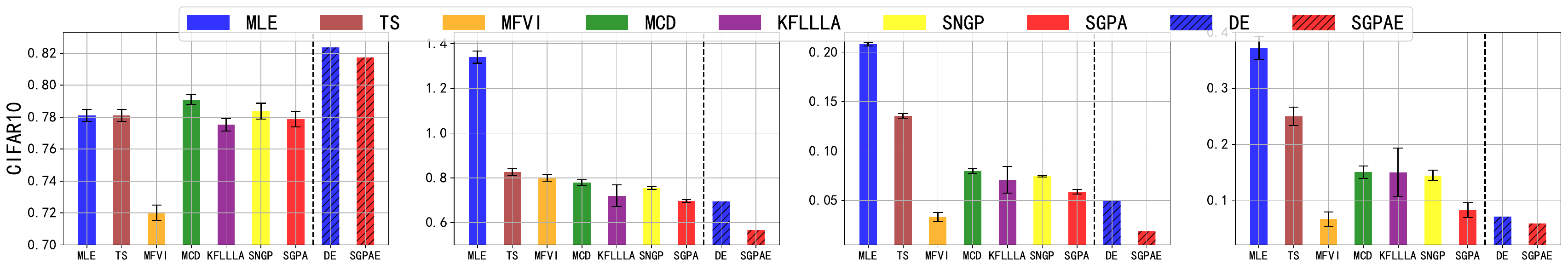}
\end{subfigure}
\begin{subfigure}{0.99\textwidth}
    \includegraphics[width=0.99\linewidth]{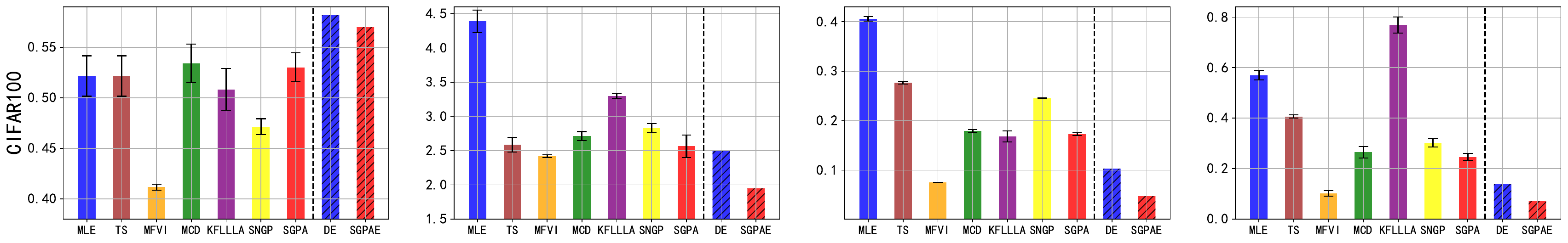}
\end{subfigure}
\begin{subfigure}{0.99\textwidth}
    \includegraphics[width=0.99\linewidth]{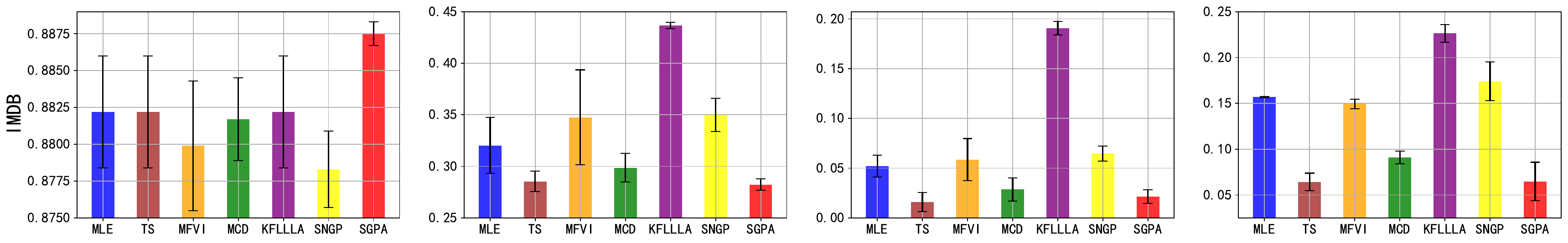}
\end{subfigure}
\begin{subfigure}{0.99\textwidth}
    \includegraphics[width=0.99\linewidth]{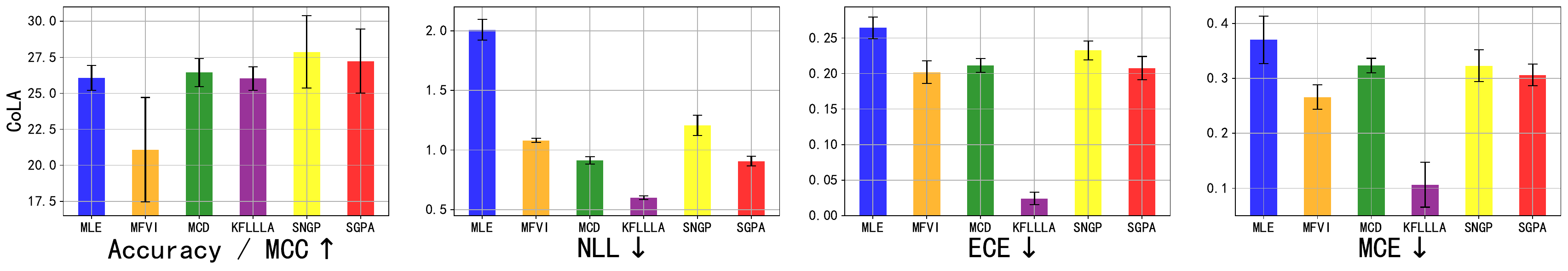}
\end{subfigure}
\caption{Test set accuracy (or MCC for CoLA) \& calibration metrics of Transformers or ViTs trained on CIFAR10 (1st row), CIFAR100 (2nd row), IMDB (3rd row) and CoLA (4th row).}
\label{cifar}
\end{figure}
%
%
\begin{figure}[b]
\centering
\begin{subfigure}{0.99\textwidth}
    \includegraphics[width=0.99\linewidth]{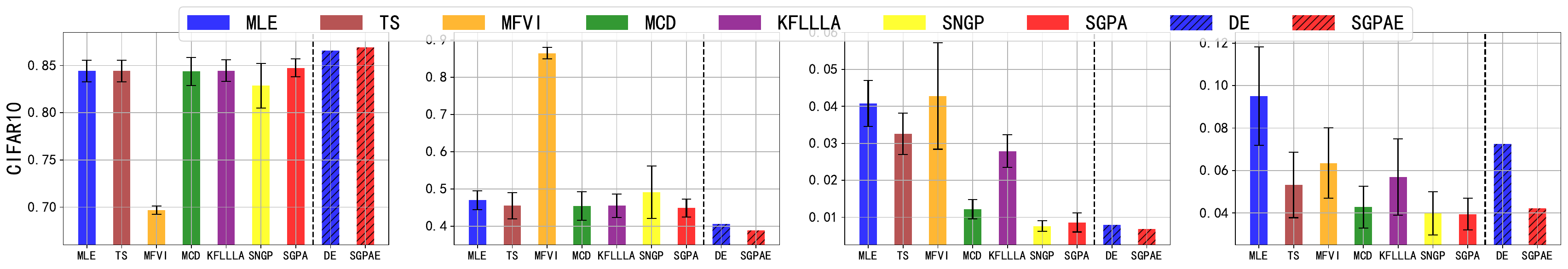}
\end{subfigure}
\begin{subfigure}{0.99\textwidth}
    \includegraphics[width=0.99\linewidth]{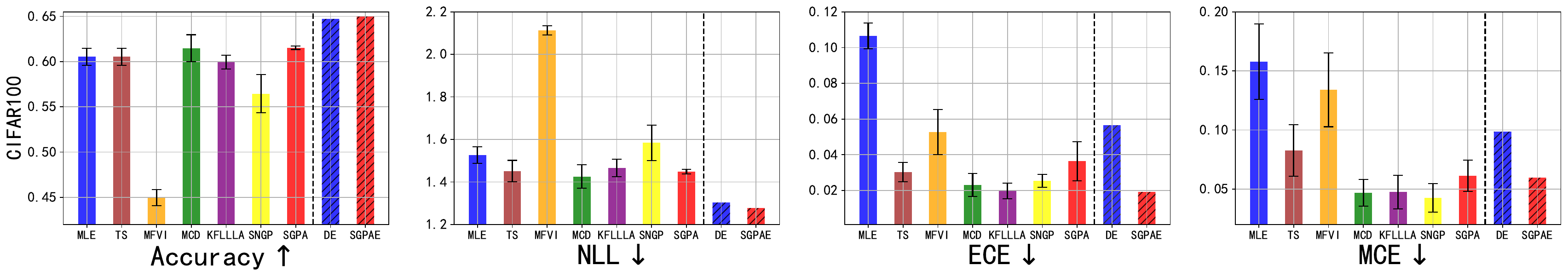}
\end{subfigure}
\caption{Test set accuracy \& calibration metrics of ViTs trained on CIFAR10 (top row) and CIFAR100 (bottom row) with data
augmentation.}
\label{cifar_aug}
\end{figure}

All ``single-model'' calibration methods considered tend to improve the calibration, except for sentiment analysis, where KFLLLA fails in the sense that it achieves worse calibration even than MLE (although KFLLLA achieves best calibration for linguistic acceptability (CoLA), its performance is unstable across tasks). Although MFVI tends to achieve the lowest calibration errors, it severely underfits the data in all the experiments. This is undesirable, as improvement in calibration should not come at a price of noticeable drop in predictive correctness. As a counter example, one can achieve perfect calibration in terms of zero ECE by predicting marginal class probability, but this prediction is useless in practice. For image classification on CIFAR100, KFLLLA achieves competitive ECE compared with SGPA, however, it achieves worse NLL and the worst MCE among all the methods. Overall, SGPA achieves the best performance when compared with the other ``single-model'' baselines: it consistently achieves better calibration across all tasks while maintaining competitive (or even better, on IMDB) predictive accuracy. Compared with ``single-model'' methods, both ensemble methods, DE and SGPAE, achieve much better predictive accuracy. SGPAE noticeably outperforms DE in terms of calibration while maintaining competitive predictive accuracy.

For CIFAR10 and CIFAR100, we also consider training ViTs with data augmentation and report the results of in-distribution calibration in Figure~\ref{cifar_aug}. Although some ``single-model'' methods can achieve lower ECE or MCE than DE and SGPAE in some cases, DE and SGPAE consistently outperform them in terms of accuracy and NLL. SGPAE again achieves the best overall performance. Among ``single-model''methods, MFVI still underfits the data, but for the other methods, data augmentation improves the model performance with SNGP achieving relatively low accuracy. The difference between SGPA and other ``single-model'' baselines becomes smaller, perhaps due to the strong regularization from data augmentation. Still, SGPA performs more robustly as it generally returns smaller error bars when compared to TS, MCD, and KFLLLA.

\subsection{Robust Prediction on Out-of-distribution Data}
%
%
\begin{figure}[t]
\includegraphics[width=0.99\linewidth]{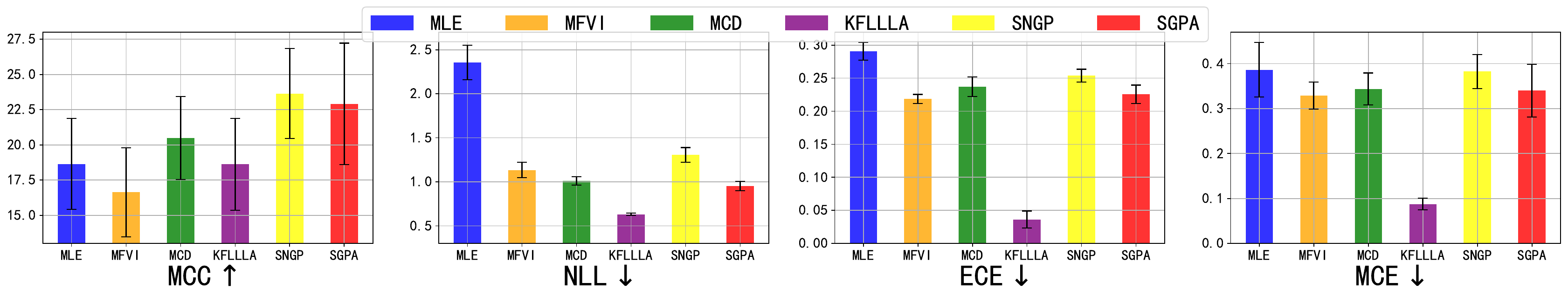}
\centering
\caption{Test set MCC \& calibration metrics for OOD test set of Transformers trained on COLA.}
\label{cola}
\end{figure}
\begin{figure}[t]
\centering
\begin{subfigure}{0.99\textwidth}
    \includegraphics[width=0.999\linewidth]{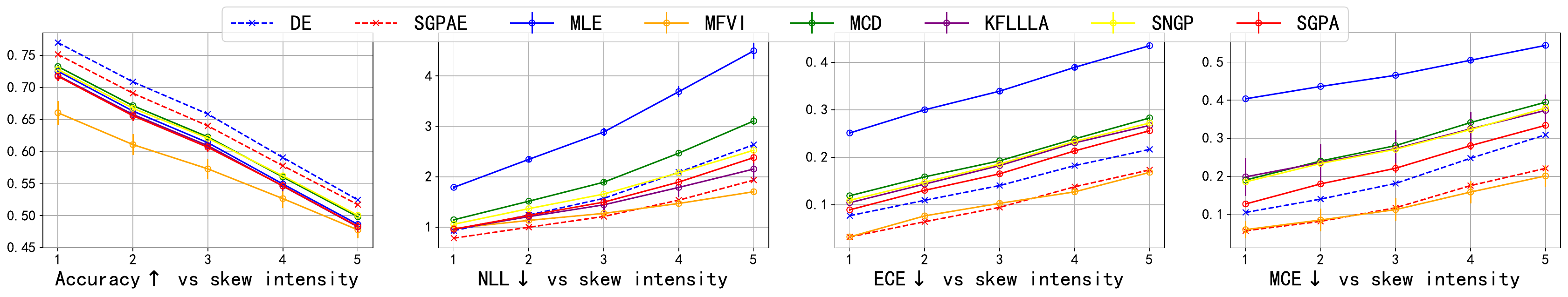}
\end{subfigure}
\begin{subfigure}{0.99\textwidth}
    \includegraphics[width=0.999\linewidth]{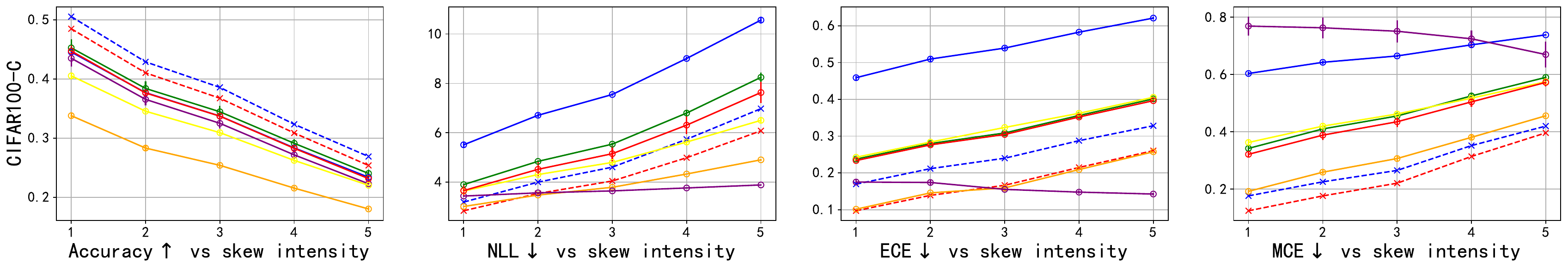}
\end{subfigure}
\caption{Test set accuracy \& calibration metrics on CIFAR10-C (top row) and CIFAR100-C (bottom row) against skew intensity of corruption for ViTs trained on corresponding clean data without data augmentation.}
\label{rob_cifar}
\end{figure}
\begin{figure}[t]
\centering
\begin{subfigure}{0.99\textwidth}
    \includegraphics[width=0.999\linewidth]{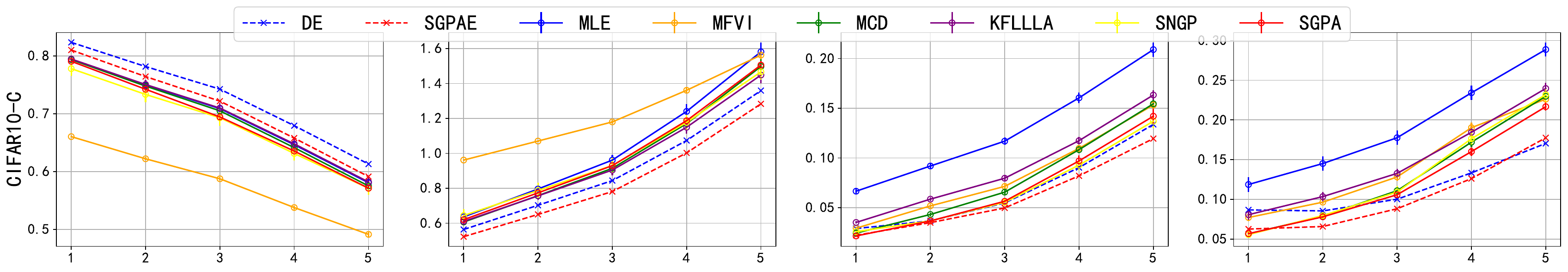}
\end{subfigure}
\begin{subfigure}{0.99\textwidth}
    \includegraphics[width=0.999\linewidth]{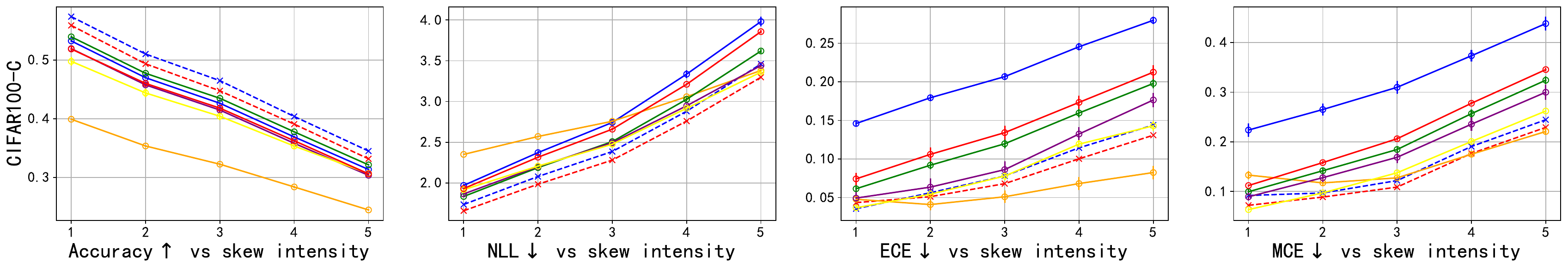}
\end{subfigure}
\caption{Test set accuracy \& calibration metrics on CIFAR10-C (top row) and CIFAR100-C (bottom row) against skew intensity of corruption for ViTs trained on corresponding clean data with data augmentation.}
\label{rob_cifar_aug}
\end{figure}
Next we evaluate the performance of SGPA under distribution shift for both the linguistic acceptability task (CoLA) and the image classification tasks (CIFAR10 \& CIFAR100). The OOD data for CoLA is introduced by the same authors of \citet{cola}, while for the CIFAR datasets, we use corrupted CIFAR datasets (CIFAR10-C and CIFAR100-C) \citep{cifarc} as the OOD data, which contains noisy CIFAR images with different types of distortions introduced to their clean counterparts at different skew intensities. Note that we don't consider TS for OOD tasks in this and the next section since as a Frequentist method, it is proposed to calibrate the uncertainty on in-distribution data only.

%
%
We report in Figure~\ref{cola} the MCC and calibration metrics for the OOD test on CoLA. The observations are similar with the in-distribution test: SGPA outperforms MLE, MCD, and SNGP in terms of NLL and calibration errors while achieving improved accuracy. MFVI and KFLLLA achieve lower calibration errors but they achieve worse predictive accuracy than SGPA. In particular, MFVI again underfits the data.

%
%
%
%
\begin{table}[b]
\centering
 \caption{Average ranks of different methods in terms of AUROC and AUPR over 6 OOD detection tasks}
\begin{adjustbox}{width=\textwidth}
 \begin{tabular}{  c  c c   c c } 
\hline
 \multicolumn{1}{c}{} & \multicolumn{2}{c}{Without data augmentation}                                                 & \multicolumn{2}{c}{With data augmentation}  \\
 \multicolumn{1}{c}{Model} & \multicolumn{1}{c}{rank (AUROC) $\downarrow$} & \multicolumn{1}{c}{rank (AUPR) $\downarrow$} & \multicolumn{1}{c}{rank (AUROC) $\downarrow$} & \multicolumn{1}{c}{rank (AUPR) $\downarrow$}\\ 
 \hline
MLE & 6.1667 & 5.5000 & 6.1667 & 6.0000\\
MFVI & 7.0000 & 7.1667 & 8.0000 & 8.0000\\
MCD & 5.6667 & 5.6667 & 4.1667 & 4.5000\\
KFLLLA & 4.3333 & 4.5000 & 4.3333 & 4.0000\\
SNGP & 5.3333 & 5.3333 & 6.3333 & 6.3333\\
SGPA & 4.5000 & 4.6667 & 4.0000 & 4.1667\\
\hline
DE & 1.6667 & 2.0000 & 1.8333 & 1.8333\\
SGPAE & \textbf{1.3333}& \textbf{1.1667} & \textbf{1.1667} & \textbf{1.1667}\\
\hline
\end{tabular}
\end{adjustbox}
\label{table:ood_rank}
\end{table}
For OOD robustness test on image classification, we compute metrics against skew intensity on the corrupted CIFAR datasets, and report the results without data augmentation in Figure~\ref{rob_cifar} and the results with data augmentation in Figure~\ref{rob_cifar_aug}. Without data augmentation, among ``single-model'' methods, SGPA outperforms MLE, MCD and SNGP in terms of calibration without hurting accuracy. MFVI achieves lower calibration errors than SGPA but it pays the price of underfitting especially when the skew intensity is small. The performance of KFLLLA seems to be not as stable as SGPA. For CIFAR10-C, SGPA achieves better calibration than KFLLLA. For CIFAR100-C, KFLLA achieves the best NLL and ECE but the worst MCE. When the data augmentation is applied, MFVI still suffers from underfitting, but the performance gap between SGPA and other ``single-model'' baselines again becomes smaller. Regardless of using data augmentation or not, ensemble methods achieve better accuracy than ``single-model'' methods and SGPAE still outperforms DE in terms of calibration while achieving similar accuracy.

\subsection{Out-of-distribution Detection}
\begin{figure}[t]
\centering
\includegraphics[width=0.99\linewidth]{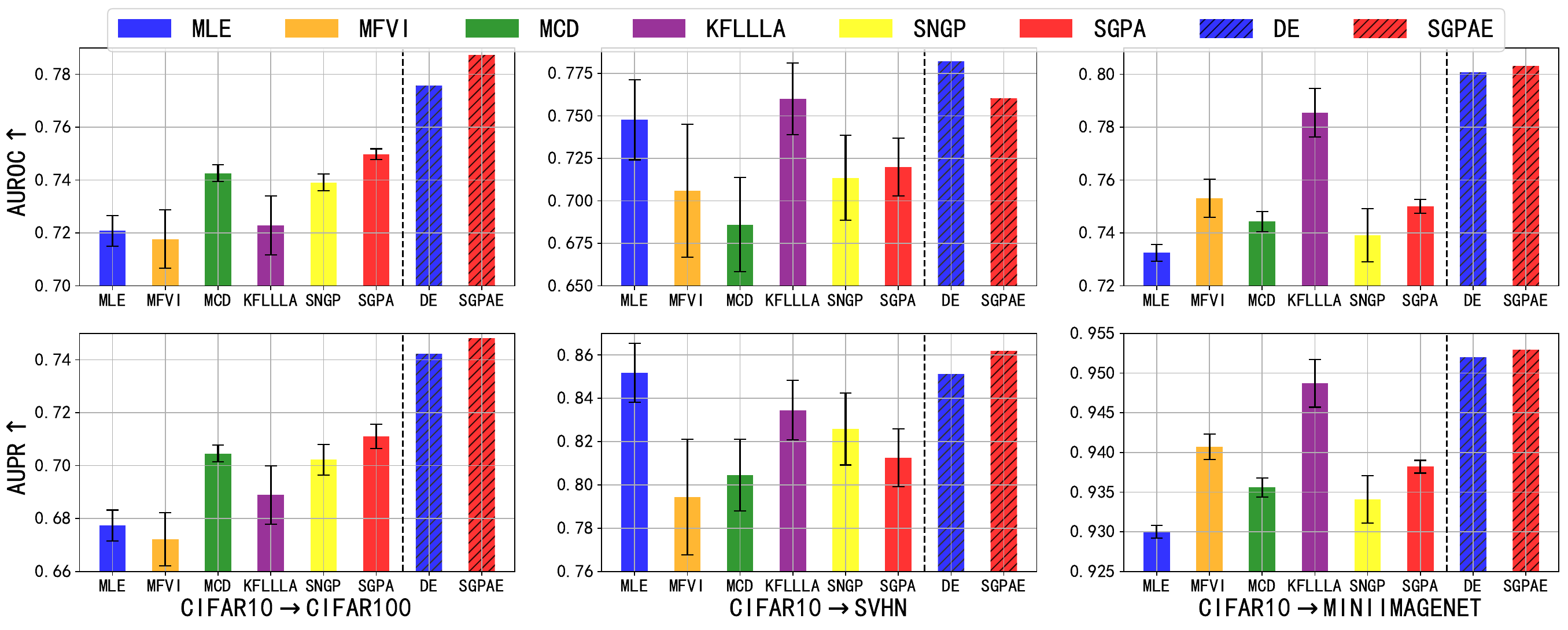}
\caption{AUROC (top) and AUPR (bottom) metrics for OOD detection using ViTs trained on CIFAR10 without data augmentation.}
\label{ood_cifar10}
\end{figure}
\begin{figure}[b]
\centering
\includegraphics[width=0.99\linewidth]{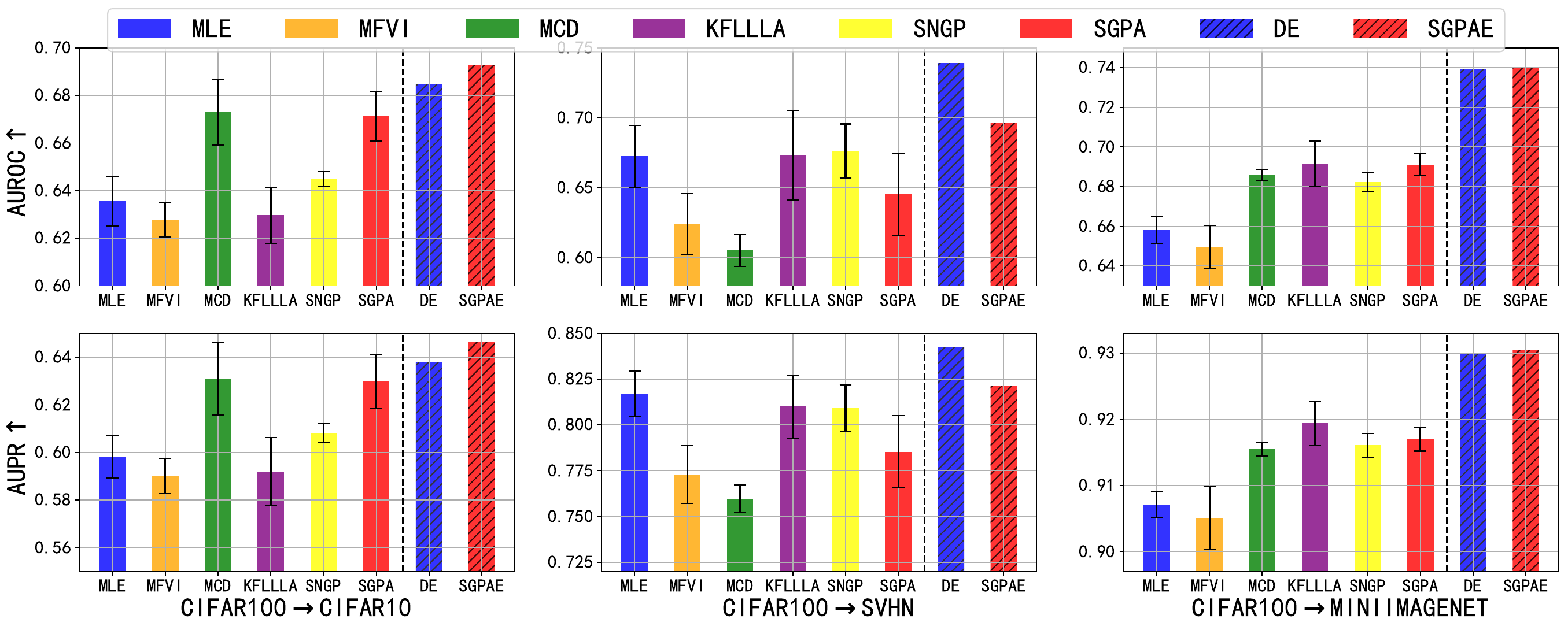}
\caption{AUROC (top) and AUPR (bottom) metrics for OOD detection using ViTs trained on CIFAR100 without data augmentation.}
\label{ood_cifar100}
\end{figure}
\label{a:ood_fig}
\begin{figure}[t]
\centering
\includegraphics[width=0.99\linewidth]{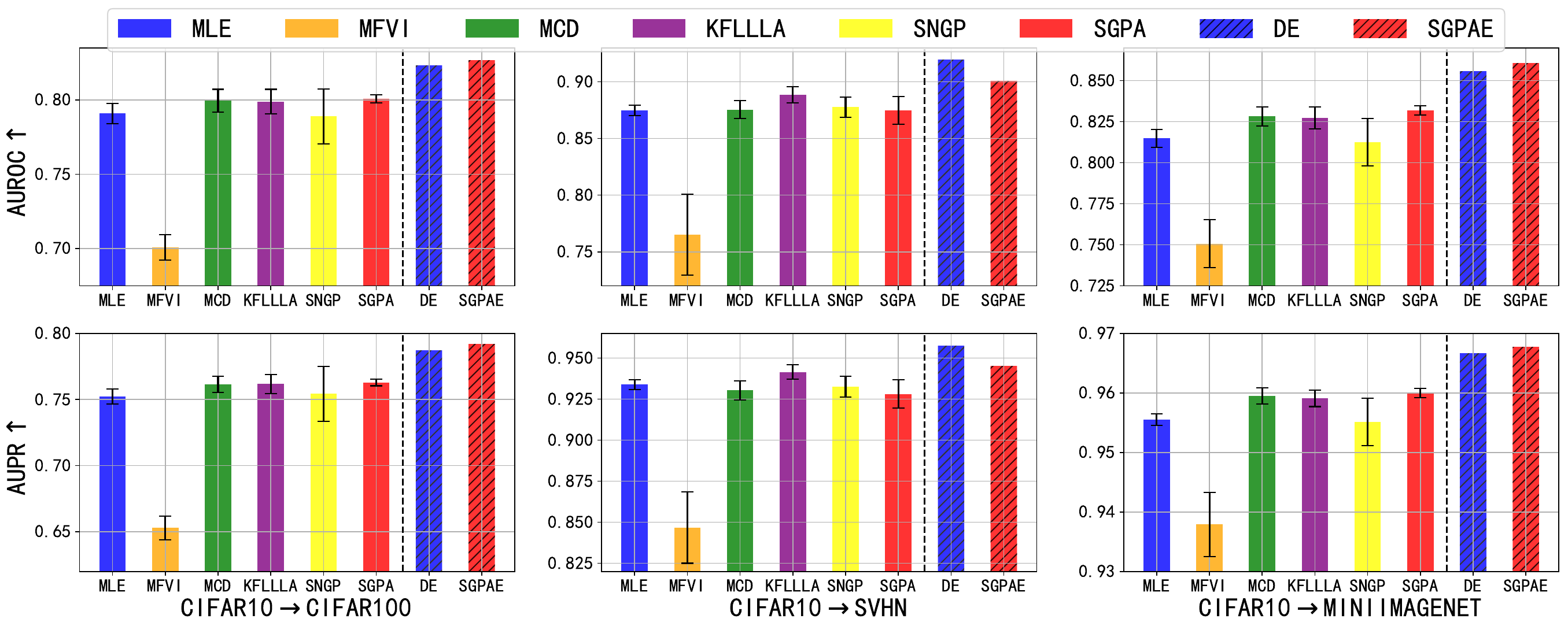}
\caption{AUROC (top) and AUPR (bottom) metrics for OOD detection using ViTs trained on CIFAR10 with data augmentation.}
\label{ood_cifar10_aug}
\end{figure}
\begin{figure}[t]
\centering
\includegraphics[width=0.99\linewidth]{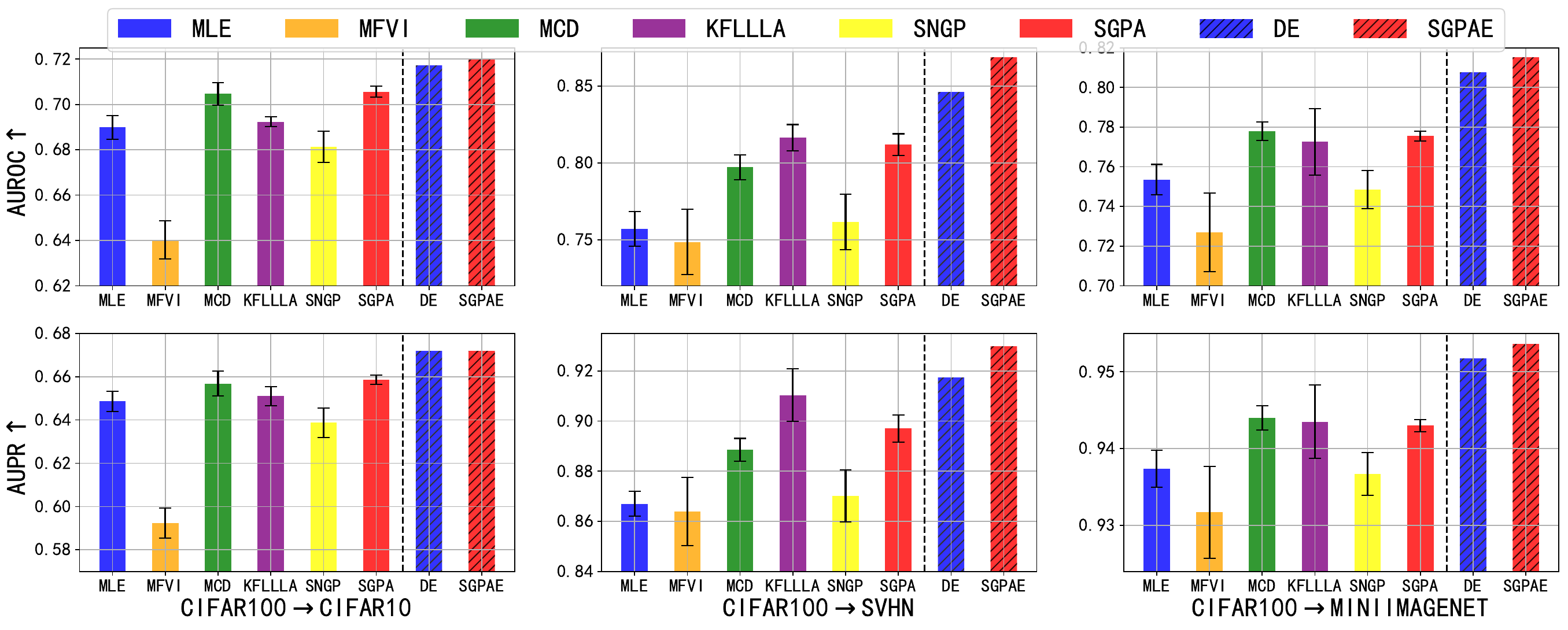}
\caption{AUROC (top) and AUPR (bottom) metrics for OOD detection using ViTs trained on CIFAR100 with data augmentation.}
\label{ood_cifar100_aug}
\end{figure}
Lastly we consider OOD detection tasks on Transformer models trained for image classification, to further evaluate the quality of uncertainty estimates. Here we use predictive entropy to score each input (from either in-distribution or OOD data) and make decisions of ``in/out'' if the entropy is smaller/greater than a specified threshold. Using different thresholds for this detection task allows us to compute both the receiver operator characteristic (ROC) and the precision-recall (PR) curves, and we use the area under the curves, i.e., AUROC and AUPR, for performance evaluations.

For each of the two CIFAR datasets, we consider the other CIFAR dataset, SVHN and mini-ImageNet as OOD datasets so that we construct 6 OOD detection tasks in total. For each method trained with or without data augmentation, we report its average ranks in terms of AUROC and AUPR over 6 tasks in Table~\ref{table:ood_rank}. Ensemble methods outperform ``single-model'' methods with SGPAE achieving the best performance. Among ``single-model'' methods, apart from KFLLLA which achieves the best performance, SGPA outperforms all the other baselines. For comparison within each task, we plot the values of AUROC and AUPR achieved by all methods trained without data augmentation for each task in Figure~\ref{ood_cifar10} and~\ref{ood_cifar100}, and in Figure~\ref{ood_cifar10_aug} and~\ref{ood_cifar100_aug}, we plot the values of AUROC and AUPR achieved by all methods trained with data augmentation within each task.

\subsection{Summary of Additional Results}
%
Additional experiments are reported in Appendix~\ref{a2} where we summarize the main results as follows.
\begin{itemize}
    \item In Appendix~\ref{a2:ssgpa}, we find that in addition to the parameter inefficiency problem, Transformers based on standard SGPA also suffer from underfitting. Compared with decoupled SGPA, standard SGPA achieves significantly worse accuracy on CIFAR10 classification task. 
    
    \item In Appendix~\ref{a2:zinc}, we report results of an additional experiment, graph property regression with ZINC dataset \citep{dwivedi2020benchmarkgnns}.
    For this task, SGPAE achieves the best performance and SGPA outperforms the other ``single-model'' baselines. 
\end{itemize}
\section{Related Work}
\textbf{Bayesian Transformers.} \citet{tran2019bl} and \citet{xue2021bayestran} propose to perform approximate posterior inference using MFVI in weight space for a subset of layers in Transformers. However, in our experiments we find this type of approaches underfits the data. This pathology of underfitting is theoretically confirmed for weight space MFVI \citep{foong2020expressive,coker_aistats2022}. Another line of research proposes to perform VI over the attention matrices directly \citep{bam,cinquin2021bayestrans}. However, \cite{bam} only considers finetuning with variational attention, and \citep{cinquin2021bayestrans} only considers experiments on synthetic or simple datasets with shallow networks and the variational distribution fitted over the attention weights are shared across data, which might be too restrictive for complex problems. Moreover, they find that a data-dependent variational distribution over attention weights can even hurt the performance of their approaches. 
\citet{liu2020sngp} consider performing Bayesian inference directly over the Transformer output by fitting a GP over the last layer output \citep{bradshaw2017adversarial}. This approach can be viewed as using a GP model with a deep kernel defined by the Transformer. Instead, SGPA fits a deep GP so that uncertainty is propagated through each attention layer of the Transformer. In addition, \citet{liu2020sngp} propose to preserve the distance awareness property for the deep kernel. Note that this distance-preserving trick is orthogonal to ours and can also be easily integrated into SGPA.

%
%
\textbf{Related GP methods.} 
The ELBO of SGPA is similar to that in \citet{hkd2021sun} which also propose to independently approximate the posterior for each additive component in an additive GP \citep{add2011david}. The difference is in the kernel design: \citet{hkd2021sun} aim to decompose a given kernel function into orthogonal ``kernel basis'', while in SGPA we consider the same type of kernel for each attention head but with different kernel hyperparameters. 
Our approach is also related to sparse within sparse Gaussian process (SWSGP) \citep{tran2021swsgp, daniel2022inputde} which allows adaptive inducing points for each data point (similar to the input-dependent keys $\bm{k}_a$ in SGPA). This connection between SGPA and SWSGP is further discussed in Appendix~\ref{a3:swsgp}.

\section{Conclusion and Future Work}
We have proposed SGPA to directly perform approximate Bayesian inference over the output of attention blocks in Transformers. Unlike other Bayesian neural network approaches, SGPA takes advantage of the inductive bias embedded in the attention architecture to build Bayesian Transformers. Moreover, we showed Transformers based on SGPA achieve better balance between predictive accuracy and calibration compared with other baselines. Furthermore, the improved quality of uncertainty estimation provided by SGPA has been proved useful in maintaining robustness under distribution shift and in out of distribution detection.

The following directions are interesting to explore as future work. First, masked pre-training \citep{devlin2018bert}, which has been proved crucial for downstream tasks for standard Transformers, may also improve the performance of Transformers based on SGPA. In this work, we are not able to consider pretraining due to the high computational cost, and since SGPA replaces scaled dot-product with a valid kernel, there is no existing pre-trained backbone that can be directly used for the downstream fine-tuning tasks. Second, many modern tasks using Transformers require next token prediction in an autoregressive manner, which involve decoder-based architectures \citep{vaswani2017attention,brown2020language,openai2024gpt4technicalreport}. While we only consider encoder-based Transformers in this work, extending SGPA to decoder-based Transformers will further broaden the impact of this method, and We give one potential solution in Chapter~\ref{cha:conclusion}. 

\chapter{Recurrent Memory for Online Interdomain
Gaussian Processes}
\label{cha:hsgp}

\begin{tcolorbox}           [enhanced,colback=gray!5!white,colframe=gray!75!black,colbacktitle=red!80!black,fonttitle=\bfseries]
  This chapter is based on \citet{chen2025recurrent}:
  \begin{itemize}
  \item  \underline{Wenlong Chen}$^\ast$, Naoki Kiyohara$^\ast$, Harrison Bo Hua Zhu$^\ast$, Jacob Curran-Sebastian, Samir Bhatt, and Yingzhen Li. \textbf{Recurrent Memory for Online Interdomain Gaussian Processes}. In \textit{Advances in Neural Processing Systems (NeurIPS)}, 2025.
  \end{itemize}
  The main idea was developed by me while the last author provided useful suggestions. The three co-first authors ($^\ast$) all contributed to the code implementation, experimentation, and paper writing under the supervision of the last author. Moreover, I performed all the derivations and helped orchestrate other authors’ contributions.
\end{tcolorbox}

As we briefly mentioned in Section~\ref{sec:back_bayes}, Bayesian inference provides a natural framework for online or continual learning. In this chapter, we focus on improving approximate online Bayesian inference in sparse GP models, again, by exploiting inductive bias in deep sequence models. Specifically, we propose HiPPO-SVGP which leverages the long-range memory preservation capability of HiPPO, a powerful RNN architecture reviewed in Section~\ref{sec:hippo}, to construct a better sparse approximation that mitigates the catastrophic forgetting issue in online sparse GP methods.


\section{Introduction}
Gaussian processes (GPs) are popular choices for modeling time series due to their functional expressiveness and uncertainty quantification abilities \citep{roberts_gaussian_2013, fortuin_gpvae_2020}. However, GPs are computationally expensive and memory intensive, with cubic and quadratic complexities, respectively. In online regression settings, such as weather modeling, the number of time steps can be very large, quickly making GPs infeasible. Although variational approximations, such as utilizing sparse inducing points (SGPR \citep{svgp2009titsias}; SVGP \citep{hensman2013gpbig,hensman2015sgpclass}) and Markovian GPs \citep{sarkka2019applied,wilkinson_sparse_2021}, have been proposed to address the computational complexity, it would still be prohibitive to re-fit the GP model from scratch every time new data arrives. \citet{bui_streaming_2017} proposed an online sparse variational GP (OSVGP) learning method that sequentially updates the GP posterior distribution only based on the newly arrived data. However, as indicated in their paper, their models may not maintain the memory of the previous data, as the inducing locations will inevitably shift as new data arrive. This is a major drawback, as their models may not model long-term memory unless using a growing number of inducing points. 

In deep learning, as an alternative to Transformers \citep{vaswani2017attention}, significant works on state space models (SSMs) have been proposed to model long-term memory in sequential data. Originally proposed to instill long-term memory in recurrent neural networks, the HiPPO (High-order Polynomial Projection Operators) framework \citep{gu_hippo_2020} provides mathematical foundations for compressing continuous-time signals into memory states through orthogonal polynomial projections. HiPPO is computationally efficient and exhibits strong performance in long-range memory tasks, and forms the basis for the state-of-the-art SSMs, e.g., structured state space sequential (S4) model \citep{gu_s4_2022} and Mamba \citep{gu_mamba_2023,dao_mamba2_2024}.

Inspired by HiPPO, we propose Online HiPPO SVGP (OHSVGP), by applying the HiPPO framework to SVGP in order to leverage the long-range memory modeling capabilities. Our method interprets the HiPPO time-varying orthogonal projections as inducing variables of an interdomain SVGP \citep{lazaro-gredilla_inter-domain_2009,leibfried_tutorial_2020,van_der_wilk_framework_2020}, where the basis functions are time-dependent orthogonal polynomials. We show that we are able to significantly resolve the memory-loss issue in OSVGP, thereby opening up the possibility of applying GPs to long-term online learning tasks. In summary, our contributions include: 
\begin{itemize}
\item (Section~\ref{sec:method}) We demonstrate that HiPPO integrates into the interdomain GPs by interpreting the HiPPO projections as inducing variables with time-dependent orthogonal polynomial basis functions. This allows the inducing variables to compress historical data, capturing long-term information. 

\item (Section~\ref{sec:covariance_evol} \&~\ref{sec:wall_clock_run}) We show that the kernel matrices can leverage the efficient ODE evolution of the HiPPO framework, bringing an extra layer of computational efficiency to OHSVGP.  

\item (Section~\ref{sec:experiments}) We demonstrate OHSVGP on a variety of online/continual learning tasks including time series prediction, continual learning on UCI benchmarks, and continual learning in Gaussian process variational autoencoder, showing that it outperforms other online sparse GP baselines in terms of predictive performance, long-term memory preservation, and computational efficiency.
\end{itemize}

\section{Preliminaries}
In this section, we provide a brief overview of interdomain inducing point methods, online learning with sparse GPs, and Gaussian process variational autoencoders.
%
\subsection{Interdomain Gaussian Processes}\label{sec:vi_and_interdomain_GP}

Interdomain GPs are almost the same as the standard SVGP, and the only difference is that instead of setting the inducing variables as the function values evaluated at some input locations $\bm{Z}$, interdomain GPs \citep{lazaro-gredilla_inter-domain_2009,van_der_wilk_framework_2020} propose to generalize the inducing variables to 
\begin{equation}
u_m := \int f(x)\phi_m(x) \mathrm{d}x,
\end{equation}
where $\phi_m(x)$ are basis functions, to allow for further flexibility. Since integration is a linear operator and $f$ is a GP, $u_m$ follows a Gaussian distribution\footnote{In fact, interdomain $u$ can be defined by applying any linear operator $L$ to the prior GP $f(\cdot)$ such that $u:=Lf(\cdot)$. In this work, we focus on the case where $L$ is an integral operator.}. The prior cross-covariance between the function values $f_x:=f(x)$ and the inducing variables, and the prior inducing covariance between the inducing variables themselves are computed as integrals rather than direct kernel evaluations: 
\begin{equation}
\begin{split}
[\bK_{f \bu}]_m &:= \text{COV}(f_x,u_m)=\int k(x, x^\prime)\phi_m(x^\prime)\mathrm{d}x^\prime,\\ 
[\bK_{\bu\bu}]_{nm} &:= \text{COV}(u_n, u_m,)= \iint k(x, x^\prime)\phi_n(x)\phi_m(x^\prime)\mathrm{d}x\mathrm{d}x^\prime.
\end{split}
\end{equation}
After replacing the standard prior covariance matrices $\bm{K_{XZ}}$ and $\bm{K_{ZZ}}$ with $\bm{K_{fu}}$ and $\bm{K_{uu}}$ defined above, the posterior predictive distribution and the ELBO training objective for interdomain GPs remain exactly the same as the standard SVGP shown in Section~\ref{sec:svgp_back}. 

We see that unlike standard SVGP based on the inducing locations $\bZ\in\mathbb{R}^{M}$, which can shift locations according to the training data, the interdomain GP bypasses the selection of the inducing locations, and reformulates it with the selection of the basis functions $\phi_i$. The basis functions dictate the structure of the two covariance/kernel matrices above, which in turn modulate the function space of the GP approximation. For example, some interdomian GPs utilize properties of some basis functions to sparsify $\bm{K_{uu}}$ and therefore reduce the computational cost for its inversion \citep{hensman2018variational, dutordoir2020sparse}.

\subsection{Online Sparse Gaussian Processes}\label{sec:online_gp}
Consider online learning where data arrives sequentially in batches $\data_{t_1}:=(\bX_{t_1},\by_{t_1})$, $\data_{t_2}:=(\bX_{t_2},\by_{t_2}),\ldots$ etc. For example, in the time series prediction setting, the data arrives in intervals of $(0, t_1), (t_1, t_2), \ldots$ etc. Sequential Bayesian inference provides a principled framework for online learning where the old posterior for the model can be used as the new prior for the new task. Combining with the likelihood based on the new data, we can then obtain the new posterior. Again, approximation is required when the true sequential posterior update is intractable. 

In this work, we focus on online learning with sparse GPs (reviwed in Section~\ref{sec:svgp_back}), which sequentially update the sparse GP posterior distribution as new data arrive. Suppose that we have already obtained an SVGP (with inducing points $(\bZ_{t_1}, \bu_{t_1})$), $q_{t_1}(f)=\int p_{t_1}(f|\bu_{t_1})q_{t_1}(\bu_{t_1})d\bu_{t_1}$, trained on the first batch of data, $\data_{t_1}$, online SVGP (OSVGP; \citep{bui_streaming_2017}) utilizes VI based approximation to update the SVGP when observing the second task. The new SVGP, based on an updated set of inducing points $(\bZ_{t_2}, \bu_{t_2})$, is denoted as $q_{t_2}(f)=\int p_{t_2}(f|\bu_{t_2})q_{t_2}(\bu_{t_2})d\bu_{t_2}$. Notice that the prior for the new SVGP can be updated as well (i.e., the kernel hyperparameters can be updated), so $p_{t_2}(f)$ can be different from $p_{t_1}(f)$. Ultimately, we would like to minimize the full-batch KL-divergence $\text{KL}\left(q_{t_2}(f)|| p_{t_2}(f|\data_{t_1}, \data_{t_2})\right)$, which is equivalent to maximizing the following ELBO: 
\begin{equation}
\begin{split}
    \mathcal{L}_{ELBO}&= \log \frac{p_{t_2}(\data_{t_1}, \data_{t_2})}{p_{t_1}(\data_{t_1})} - \text{KL}\left(q_{t_2}(f)|| p_{t_2}(f|\data_{t_1}, \data_{t_2})\right)\\
    &= \int q_{t_2}(f) \log \frac{p_{t_2}(\by_{t_2}, \by_{t_1})}{p_{t_1}(\y_{t_1})q_{t_2}(f)} p_{t_2}(f|\by_{t_1}, \by_{t_2})df\\
    &=\int q_{t_2}(f) \log \frac{p_{t_2}(f)p_{t_2}(\by_{t_2}|f)p_{t_2}(\by_{t_1}|f)}{p_{t_1}(\y_{t_1})q_{t_2}(f)} df.
\end{split}
\label{eq:original_online_elbo}
\end{equation}
Notice that $p_{t_1}(\mathcal{D}_{t_1})$ in the denominator of the first term in the first line is a constant since it only depends on the old kernel hyperparameters. Unfortunately, in an online setting, $p_{t_2}(\by_{t_1}|f)$ is intractable since we are not allowed to revisit the old data with the new model. To bypass this intractability, \citet{bui_streaming_2017} proposed to approximate this term with
\begin{equation}
    p_{t_2}(\by_{t_1}|f) \approx p_{t_1}(\by_{t_1}|f) = \frac{p_{t_1}(f|\by_{t_1})p_{t_1}(\by_{t_1})}{p_{t_1}(f)}\approx \frac{q_{t_1}(f)p_{t_1}(\by_{t_1})}{p_{t_1}(f)}.
\end{equation}
Plugging this approximation back to Eq.~\ref{eq:original_online_elbo} yields the ``online ELBO":
\begin{equation}
    \mathcal{L}_{ELBO}^{o} = \int q_{t_2}(f) \log \frac{p_{t_2}(f)p_{t_2}(\by_{t_2}|f)q_{t_1}(f)}{q_{t_2}(f)p_{t_1}(f)} df.
\end{equation}
Since $q_{t_1}(f)$ and $q_{t_2}(f)$ are completely determined based on finite sets of inducing points ($\bu_{t_1}$ and $\bu_{t_2}$, respectively) and we assume that the likelihood factorizes across data points, the objective can be further simplified to be:
\begin{equation}
\begin{aligned}
    \mathcal{L}_{ELBO}^{o} =&\sum_{i=1}^{n_{t_2}} \mathbb{E}_{q_{t_2}(f_i)}
    \left[ \log p_{t_2}(y_i \mid f_i) \right] 
    + \text{KL} \left( \tilde{q}_{t_2}(\bu_{t_1}) \,\middle\|\, p_{t_1}(\bu_{t_1}) \right) \\
    & \quad - \text{KL} \left( \tilde{q}_{t_2}(\bu_{t_1}) \,\middle\|\, q_{t_1}(\bu_{t_1}) \right) 
    - \text{KL} \left( q_{t_2}(\bu_{t_2}) \,\middle\|\, p_{t_2}(\bu_{t_2}) \right),
\end{aligned}
\label{eq:online_elbo}
\end{equation}
where $y_i\in\by_{t_2}$ for $i=1,\ldots,n_{t_2}$ and $\tilde{q}_{t_2}(\bu_{t_1}) := \int p_{t_2}(\bu_{t_1}|\bu_{t_2})q_{t_2}(\bu_{t_2}) \mathrm{d}\bu_{t_2}$. 

Unfortunately, with more and more tasks, OSVGP may not capture the long-term memory in the data since as new data arrives, it is not guaranteed that the inducing locations after optimization can sufficiently cover all the previous tasks' input domains.

\subsection{Gaussian Processes Variational Autoencoders}
In addition to predictive tasks, we also consider continual learning in Gaussian processes variational autoencoder (GPVAE; \citep{casale_gaussian_2018, fortuin_gpvae_2020, ashman_sparse_2020, jazbec_scalable_2021, zhu_markovian_2023}), a GP based generative model, and here we give a brief introduction. GPVAEs embed Gaussian processes within a variational autoencoder (VAE; \citep{welling2014auto}) framework. For sparse GPs with inducing variables, \citet{jazbec_scalable_2021} introduced the SVGPVAE, which combines the sparse variational GP (SVGP) with the VAE formulation. The likelihood $p(y \mid \varphi_\theta(f))$
is parameterized by a decoder network $\varphi_\theta$, which takes GP latent draws $f$ as input, together with the variational inducing posterior $q_\theta(\bm{u}\mid y)$. This posterior, $q_\theta(\bm{u}\mid \phi(y))$, is parameterized by the encoder network $\phi$. Finally, the latent GP $f$ is typically modeled as a multi-output GP with independent components. GPVAEs have been shown to successfully model high-dimensional time series such as weather data and videos \citep{zhu_markovian_2023,fortuin_gpvae_2020}. In this work, we consider the SVGPVAE model defined in \citet{jazbec_scalable_2021} for one set of our experiments, and the detailed specification of the model and training objective can be found in Appendix~\ref{appendix:gpvae}.

\section{Interdomain
 Inducing Point Gaussian Processes with HiPPO}
\label{sec:method}

We bridge the HiPPO framework with interdomain Gaussian processes by interpreting HiPPO's state vector defined by time-varying orthogonal projections as interdomain inducing points. This enables adaptive compression of the history of a GP while preserving long-term memory.

\subsection{HiPPO as Interdomain Inducing Variables}
\label{sec:hippo_inter}

Recall that in an interdomain setting in Section~\ref{sec:vi_and_interdomain_GP}, inducing variables are defined through an integral transform against a set of basis functions. Let $f\sim\mathcal{GP}(0,k)$, and consider time-dependent basis functions 
\begin{equation}
\phi_{m}^{(t)}(x) = g_{m}^{(t)}(x)\omega^{(t)}(x),
\end{equation}
where \(g_{m}^{(t)}\) are the orthogonal functions of HiPPO and \(\omega^{(t)}\) is the associated measure. We define the corresponding interdomain inducing variables as 
\begin{equation}
u_{m}^{(t)} = \int f(x)\phi_{m}^{(t)}(x)\mathrm{d}x,
\end{equation}
which is not a one-dimensional random variable as in Section~\ref{sec:vi_and_interdomain_GP}. Rather, it is a random functions (i.e. stochastic processes) over time ($u_m^{(t)}:= u_m(t)$) due to time-dependent basis functions. These inducing variables adapt in time, capturing long-range historical information in a compact form via HiPPO's principled polynomial projections.

\subsection{Adapting the Kernel Matrices over Time}
\label{sec:covariance_evol}
When new observations arrive at later times in a streaming scenario, we must adapt both the prior cross-covariance \(\bm{K}_{\bm{fu}}\) and the prior covariance of the inducing variables \(\bm{K}_{\bm{uu}}\). In particular, the basis functions in our HiPPO construction evolve with time, so the corresponding kernel quantities also require updates. Below, we describe how to compute and update these matrices at a new time \(t_2\) given their values at time \(t_1\). For clarity, we first discuss \(\bm{K}_{\bm{fu}}\), then \(\bm{K}_{\bm{uu}}\).
%
\paragraph{Prior cross-covariance \(\bm{K}_{\bm{fu}}^{(t)}\).}
Recall that for a single input \(x_{n}\), the prior cross-covariance with the \(m\)-th inducing variable is 
\begin{equation}
\left[\bm{K}_{\bm{fu}}^{(t)}\right]_{nm} = \int k\left(x_{n}, x\right)\phi_{m}^{(t)}(x)\mathrm{d}x,
\end{equation}
which is of the same form as a projection coefficient (Eq.~\ref{eq:hippo_coef}) in the HiPPO framework. Hence, We can compute the temporal evolution of \(\bm{K}_{\bm{fu}}^{(t)}\) in a manner consistent with the HiPPO approach, leveraging the same parameters \(\bm{A}(t)\) and \(\bm{B}(t)\). Specifically,  
\begin{talign}
\label{eq:kfu_ode}
\frac{\mathrm{d}}{\mathrm{d}t}\left[\bm{K}_{\bm{fu}}^{(t)}\right]_{n,:}
=
\bm{A}(t)\left[\bm{K}_{\bm{fu}}^{(t)}\right]_{n,:}
+
\bm{B}(t)k\left(x_{n},t\right),
\end{talign}
where $\left[\bm{K}_{\bm{fu}}^{(t)}\right]_{n,:}$ is the $n$-th row of $\bm{K}_{\bm{fu}}^{(t)}$. The matrices $\bm{A}(t)$ and $\bm{B}(t)$ depend on the specific choice of the HiPPO measure and basis functions. In our experiments, we employ HiPPO-LegS, whose explicit matrix forms are provided in Section~\ref{sec:hippo}. One then discretizes in \(t\) (e.g. using an Euler method or a bilinear transform) to obtain a recurrence update rule.
%
\paragraph{Prior inducing covariance \(\bm{K}_{\bm{uu}}^{(t)}\).}
\label{sec:kuu_ode_rff} 
The $\ell m$-th element of the prior covariance matrix for the inducing variables is given by 
\begin{equation}
\left[\bm{K}_{\bm{uu}}^{(t)}\right]_{\ell m}
=
\iint
k\left(x,x^\prime\right)\phi_{\ell}^{(t)}(x)\phi_{m}^{(t)}(x^\prime)\mathrm{d}x\mathrm{d}x^\prime.
\label{eq:double_integral_kuu}
\end{equation} 
Since $k(x, x^\prime)$ depends on both $x$ and
$x^\prime$, a recurrence update rule based on the original HiPPO formulation, which is designed for single integral, can not be obtained directly for \(\bm{K}_{\bm{uu}}^{(t)}\). Fortunately, for stationary kernels, Bochner Theorem \citep{rudin_fourier_1994} can be applied to factorize the double integrals into two separate single integrals, which gives rise to Random Fourier Features (RFF) approximation \citep{rahimi_random_2007}: for a stationary kernel \(k(x,x^\prime) = k(|x-x^\prime|)\), RFF approximates it as follows:
\begin{equation}
\begin{split}
k(x,x^\prime) &= \mathbb{E}_{p(w)}\Bigl[\cos(wx)\,\cos(wx^{\prime}) +\sin(wx)\,\sin(wx^{\prime})\Bigr]\\
&\approx \frac{1}{N}\sum_{n=1}^N\left[
\cos\left(w_nx\right)\cos\left(w_nx^\prime\right) + \sin\left(w_nx\right)\sin\left(w_nx^\prime\right)\right],
\end{split}
\end{equation}
where $w_n\sim p(w)$ is the spectral density of the kernel\footnote{For non-stationary kernels, more advanced Fourier feature approximation techniques (e.g., \citep{ton2018spatial}) can be applied.}. Substituting this into the double integral (Eq.~\ref{eq:double_integral_kuu}) factorizes the dependency on \(x\) and \(x^\prime\), reducing \([\bm{K}_{\bm{uu}}^{(t)}]_{\ell m}\) to addition of products of one-dimensional integrals. Each integral based on a Monte Carlo sample $w$ has the form of either 
\begin{equation}
 Z_{w,\ell}^{(t)} = \int \cos(wx)\phi_{\ell}^{(t)}(x)\,\mathrm{d}x  \quad \text{or} \quad Z_{w,\ell}^{\prime(t)} = \int \sin(wx)\phi_{\ell}^{(t)}(x)\,\mathrm{d}x,
 \end{equation}
which corresponds to a standard projection coefficient in the HiPPO framework (Eq.~\ref{eq:hippo_coef}).
We further stack these integrals based on $M$ basis functions and define
\begin{equation}
    \bm{Z}_{w}^{(t)} = \left[Z_{w,1}^{(t)},  \cdots, Z_{w,M}^{(t)}\right]^\top, \quad \bm{Z}_{w}^{\prime(t)} = \left[Z_{w,1}^{\prime(t)}, \cdots, Z_{w,M}^{\prime(t)}\right]^\top. 
\end{equation}
Collecting $N$ Monte Carlo samples $\{w_n\}_{n=1}^N$, we form the feature matrix
\begin{equation}
    \bm{Z}^{(t)} = \begin{bmatrix} \bm{Z}_{w_1}^{(t)} & 
    \bm{Z}_{w_2}^{(t)} & \cdots & 
    \bm{Z}_{w_N}^{(t)} & 
    \bm{Z}_{w_1}^{\prime(t)} & 
    \bm{Z}_{w_2}^{\prime(t)} & \cdots & 
    \bm{Z}_{w_N}^{\prime(t)} 
    \end{bmatrix},
\end{equation}
and the RFF approximation of the covariance is
\begin{equation}
    \bm{K}_{\bm{uu}}^{(t)} \approx \frac{1}{N}\,\bm{Z}^{(t)}\left(\bm{Z}^{(t)}\right)^\top.
\end{equation}
Since $\bm{Z}_{w_n}^{(t)}$ and $\bm{Z}_{w_n}^{\prime(t)}$ are standard HiPPO projection coefficient, their computation is governed by the HiPPO ODE evolution as before
\begin{equation}
    \frac{\mathrm{d}}{\mathrm{d}t}\bm{Z}_{w_n}^{(t)} = \bm{A}(t)\,\bm{Z}_{w_n}^{(t)} + \bm{B}(t)\,h_n(t), \quad \frac{\mathrm{d}}{\mathrm{d}t}\bm{Z}_{w_n}^{\prime(t)} = \bm{A}(t)\,\bm{Z}_{w_n}^{\prime(t)} + \bm{B}(t)\,h^{\prime}_n(t),
\end{equation}
with $ h_n(t) = \cos(w_nt) $ and $ {h}^{\prime}_n(t) = \sin(w_nt) $ and these ODEs can be solved in parallel across different Monte Carlo samples. In summary, the procedure involves sampling multiple random features, updating them recurrently to time $t$, and averaging across samples to obtain RFF approximation of $\bm{K_{uu}}^{(t)}$.

Alternatively, one may differentiate \(\bm{K}_{\bm{uu}}^{(t)}\) directly w.r.t. \(t\). This yields a matrix ODE of the form different from the original HiPPO formulation. For details, see Appendix~\ref{appendix:direct-ode-hippo-legs}. Empirically, a vanilla implementation of this approach shows numerical unstability (see Appendix~\ref{appendix:direct-ode-hippo-legs-unstability}). Hence, we conduct our experiments based on the RFF approximation described above.
%
\paragraph{Sequential variational updates.}
%
\begin{figure}[t]
    \begin{subfigure}[t]{0.49\textwidth}
      \centering
      \includegraphics[width=\textwidth]{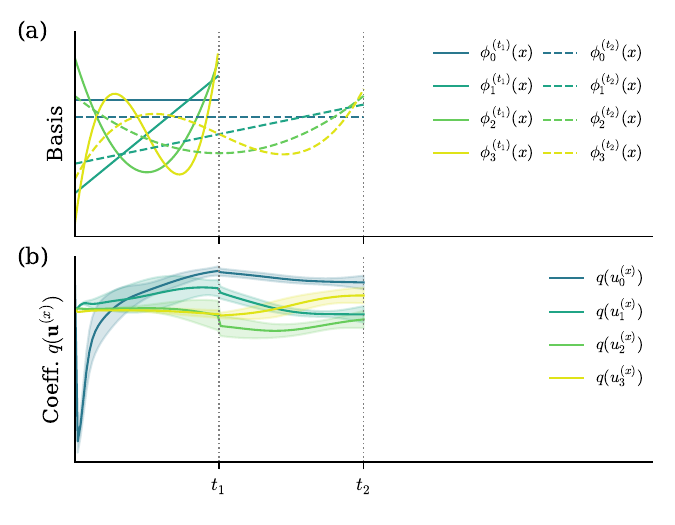}
    \end{subfigure}
    \hfill
    \begin{subfigure}[t]{0.49\textwidth}
      \centering
      \includegraphics[width=\textwidth]{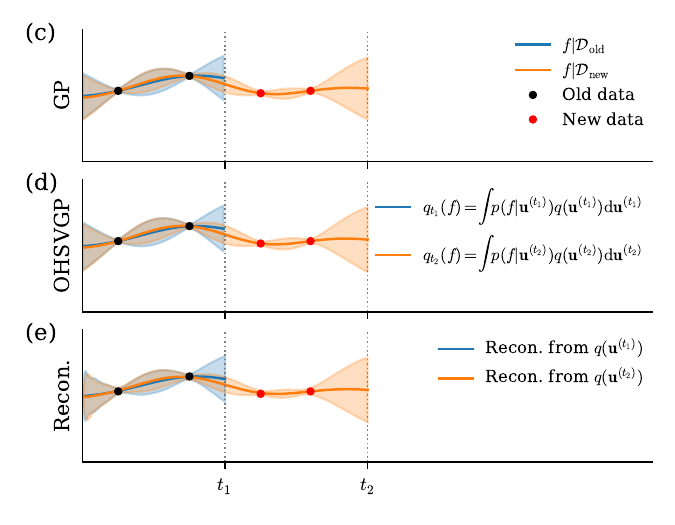}
    \end{subfigure}
    \caption{Online HiPPO Sparse Variational Gaussian Process (OHSVGP) on a toy time series with 2 tasks. Here $x$ is used to denote arbitrary time index. \textbf{(a)} Time-dependent basis functions with end time index $x=t_1$ and $x=t_2$. \textbf{(b)} Evolution of optimal approximate posterior of inducing variables (mean $\pm$2 marginal standard deviation). \textbf{(c)}, \textbf{(d)}, \textbf{(e)} illustrate predictive mean $\pm$2 standard deviation of posterior online GP, OHSVGP and finite basis reconstruction of posterior OHSVGP, respectively.}
    \label{fig:figure1}
\end{figure}
Having obtained \(\bm{K}_{\bm{fu}}^{(t_2)}, \bm{K}_{\bm{uu}}^{(t_2)}\) at a new time \(t_2>t_1\), we perform variational updates following the online GP framework described in Section~\ref{sec:online_gp}. This ensures the posterior at time \(t_2\) remains consistent with both the new data and the previous posterior at time \(t_1\), based on \(\bm{K}_{\bm{fu}}^{(t_1)}, \bm{K}_{\bm{uu}}^{(t_1)}\). Overall, this procedure endows interdomain HiPPO-based GPs with the ability to capture long-term memory online. By viewing the induced kernel transforms as ODEs in time, we efficiently preserve the memory of past observations while adapting our variational posterior in an online fashion. Figure~\ref{fig:figure1}b illustrates the evolution of the optimal posterior $q(\bm{u}^{(x)})$ as time $x$ increases on a toy online time series regression problem with two tasks, where $x$ determines the end of the recurrent update for the prior cross and inducing covariance matrices (evolved up to $\bm{K}_{\bm{fu}}^{(x)}$ and $\bm{K}_{\bm{uu}}^{(x)}$, respectively). Furthermore, when $x>t_1$, we will update $q(\bm{u}^{(x)})$ online with the two data points from the second task by optimizing the online ELBO (Eq.~\ref{eq:online_elbo}), which gives the discrete jump at $x=t_1$. Figure~\ref{fig:figure1}d shows the posterior OHSVGP compared with the fit of the gold-standard online GP in Figure~\ref{fig:figure1}c. Notably, if $f \sim q_{t}(f)$, then $\int f(x) \phi_{m}^{(t)}(x) \mathrm{d}x \sim q(u_m^{(t)})$ (detailed derivation in Appendix~\ref{appendix:recon}). Therefore, our framework also provides a finite basis approximation of the posterior OHSVGP as a byproduct: $f = \sum_{m=1}^M u_m^{(t)}g_{m}^{(t)}(x)$, $u_m^{(t)}\sim q(u_m^{(t)})$. Figure~\ref{fig:figure1}e plots the finite basis approximation/reconstruction and it is close to the posterior OHSVGP for this simple example. 

\section{Extending OHSVGP to Multidimensional Input}
\label{sec:multi_dim}
\begin{figure}[htbp]
  \begin{subfigure}[t]{0.32\textwidth}
      \centering
      \includegraphics[width=\textwidth]{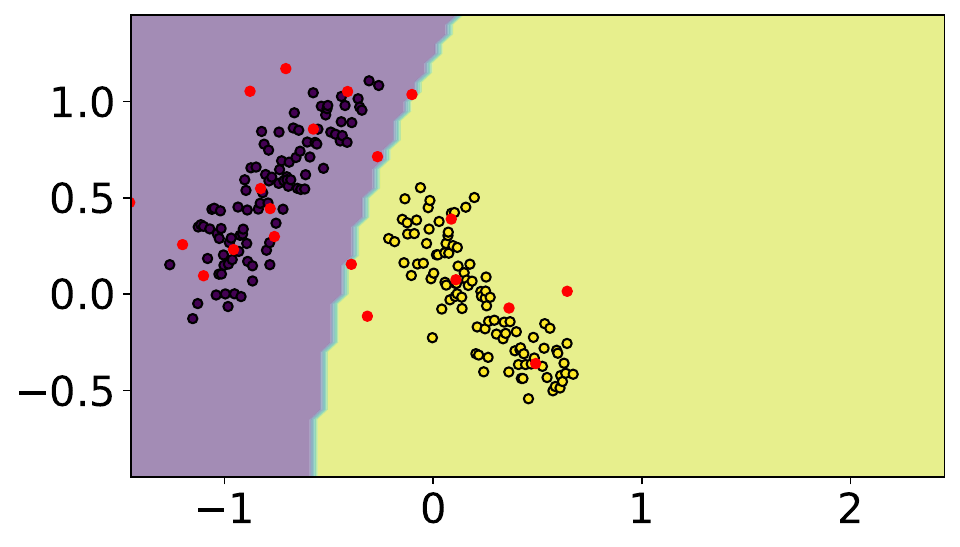}
      \caption{OSVGP, after Task 1}
  \end{subfigure}
  \hfill
  \begin{subfigure}[t]{0.32\textwidth}
      \centering
      \includegraphics[width=\textwidth]{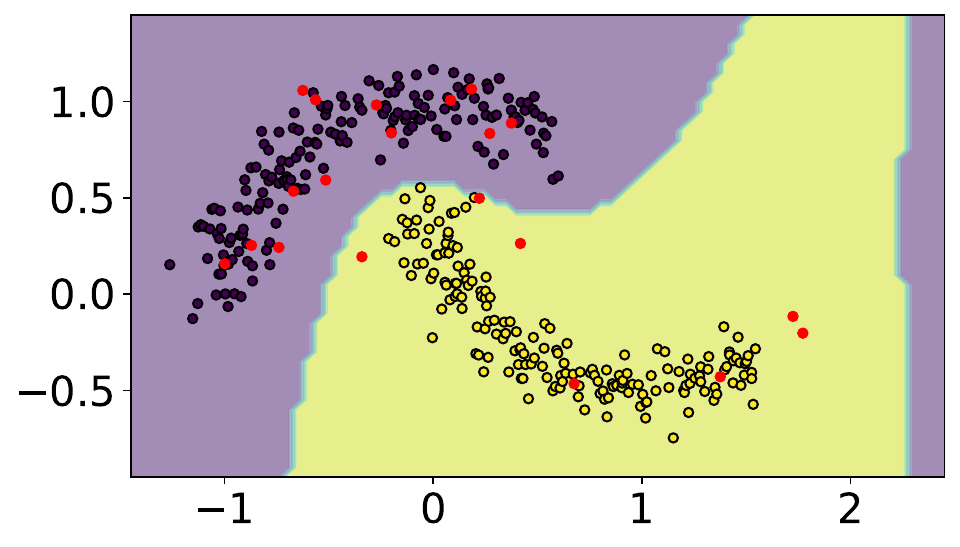}
      \caption{OSVGP, after Task 2}
  \end{subfigure}
  \hfill
  \begin{subfigure}[t]{0.32\textwidth}
      \centering
      \includegraphics[width=\textwidth]{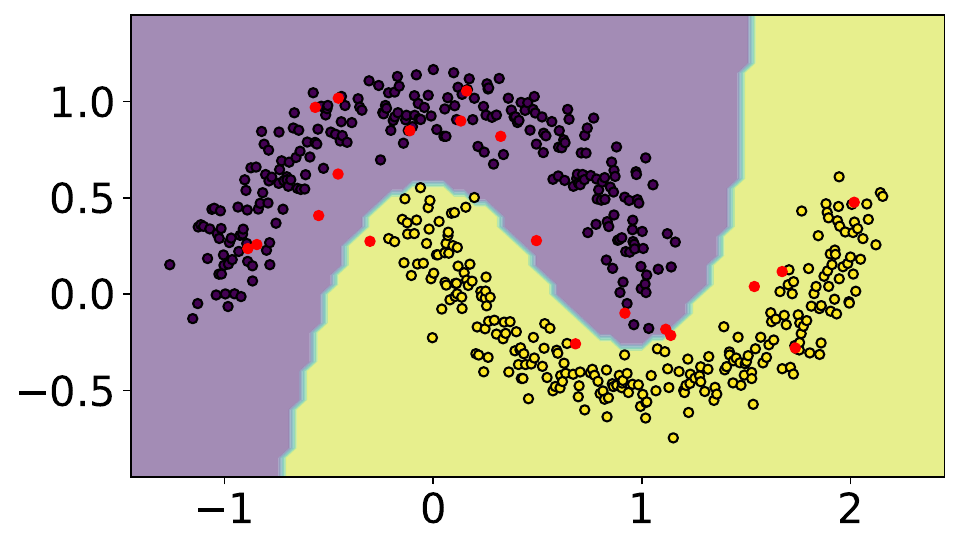}
      \caption{OSVGP, after Task 3}
  \end{subfigure}

  \begin{subfigure}[t]{0.32\textwidth}
      \centering
      \includegraphics[width=\textwidth]{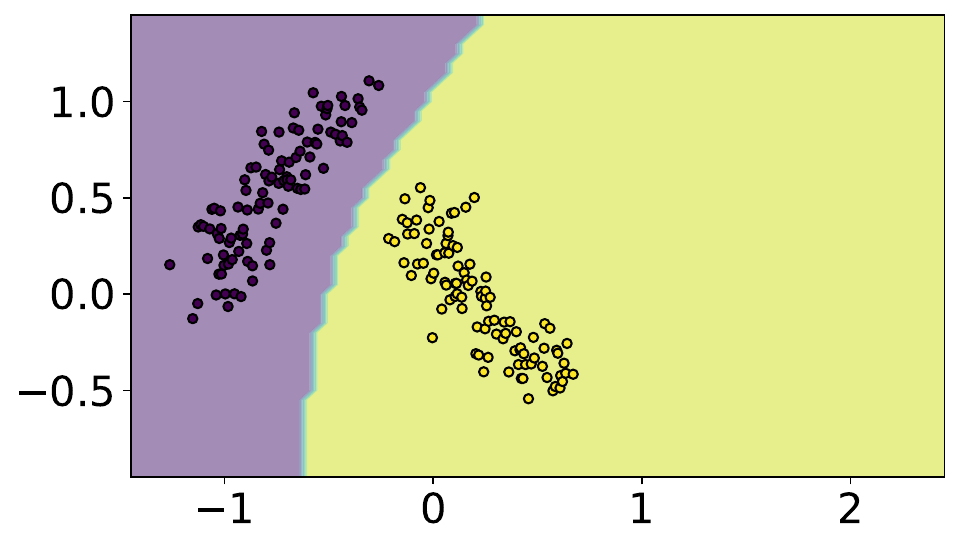}
      \caption{OHSVGP-rand, after Task 1}
  \end{subfigure}
  \hfill
  \begin{subfigure}[t]{0.32\textwidth}
      \centering
      \includegraphics[width=\textwidth]{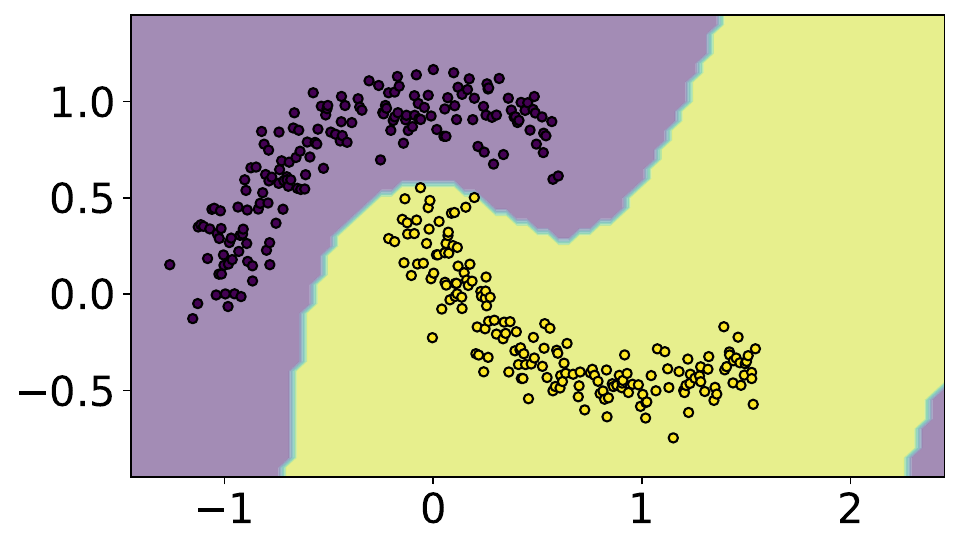}
      \caption{OHSVGP-rand, after Task 2}
  \end{subfigure}
  \hfill
  \begin{subfigure}[t]{0.32\textwidth}
      \centering
      \includegraphics[width=\textwidth]{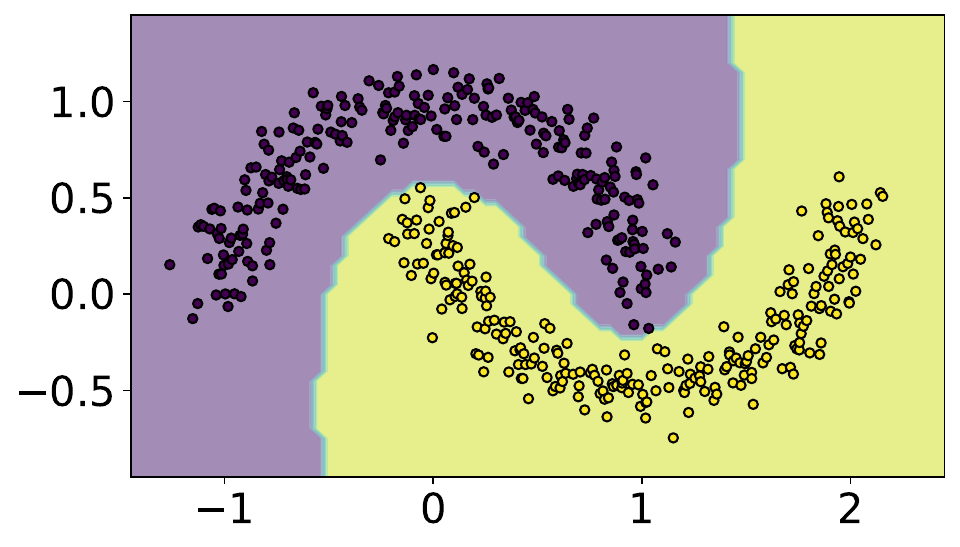}
      \caption{OHSVGP-rand, after Task 3}
  \end{subfigure}

  \begin{subfigure}[t]{0.32\textwidth}
      \centering
      \includegraphics[width=\textwidth]{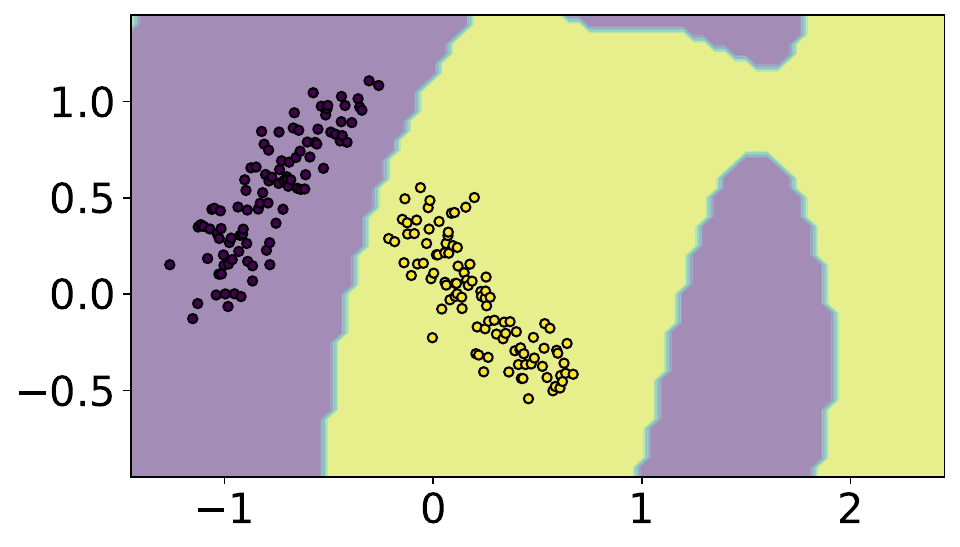}
      \caption{OHSVGP-k-max, after Task 1}
  \end{subfigure}
  \hfill
  \begin{subfigure}[t]{0.32\textwidth}
      \centering
      \includegraphics[width=\textwidth]{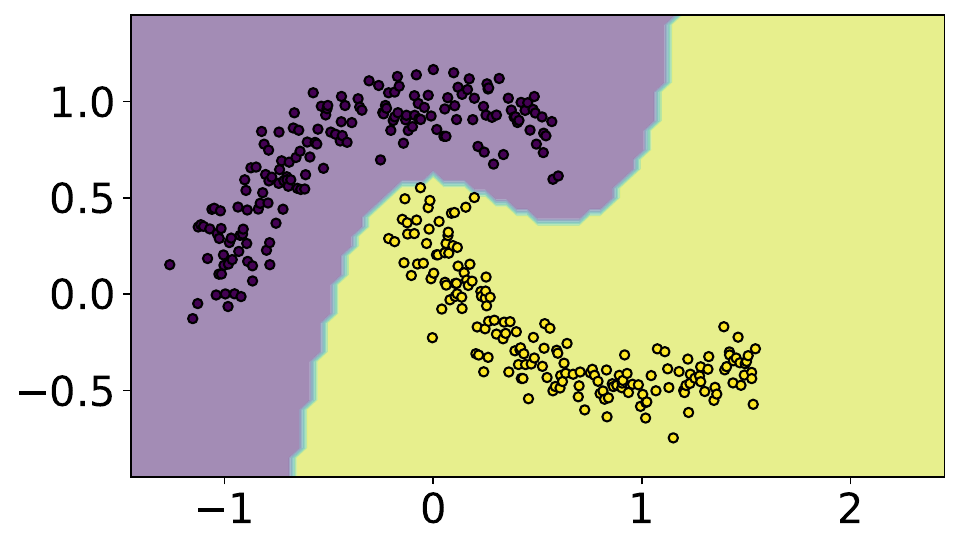}
      \caption{OHSVGP-k-max, after Task 2}
  \end{subfigure}
  \hfill
  \begin{subfigure}[t]{0.32\textwidth}
      \centering
      \includegraphics[width=\textwidth]{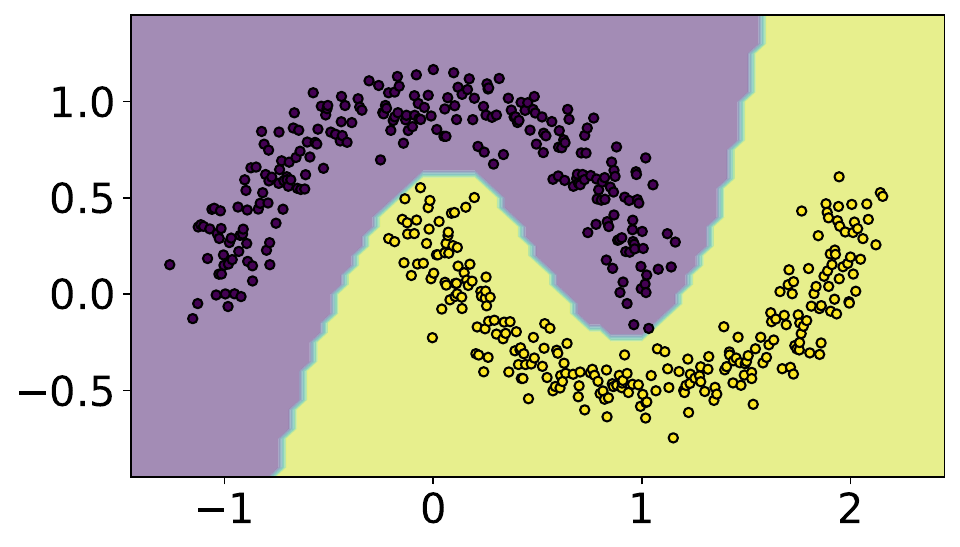}
      \caption{OHSVGP-k-max, after Task 3}
  \end{subfigure}

  \begin{subfigure}[t]{0.32\textwidth}
      \centering
      \includegraphics[width=\textwidth]{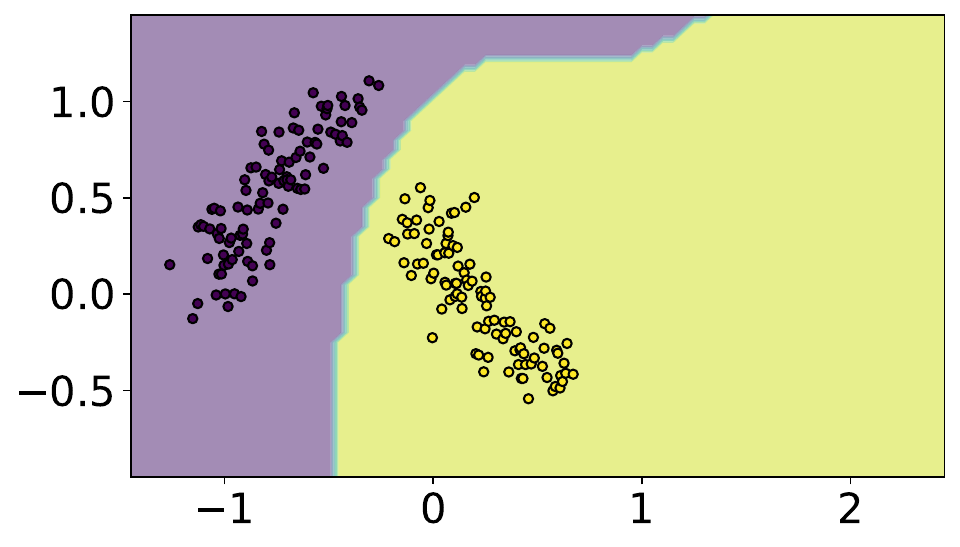}
      \caption{OHSVGP-k-min, after Task 1}
  \end{subfigure}
  \hfill
  \begin{subfigure}[t]{0.32\textwidth}
      \centering
      \includegraphics[width=\textwidth]{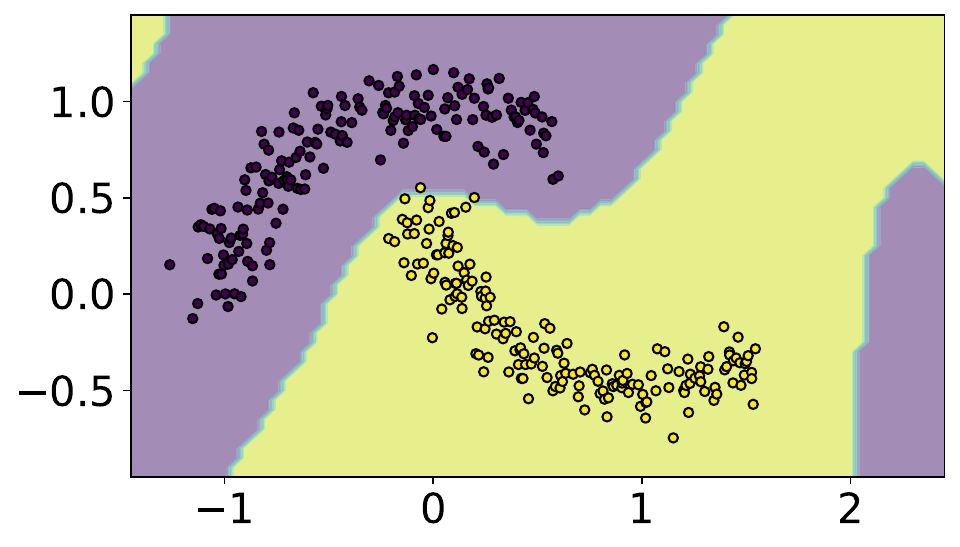}
      \caption{OHSVGP-k-min, after Task 2}
  \end{subfigure}
  \hfill
  \begin{subfigure}[t]{0.32\textwidth}
      \centering
      \includegraphics[width=\textwidth]{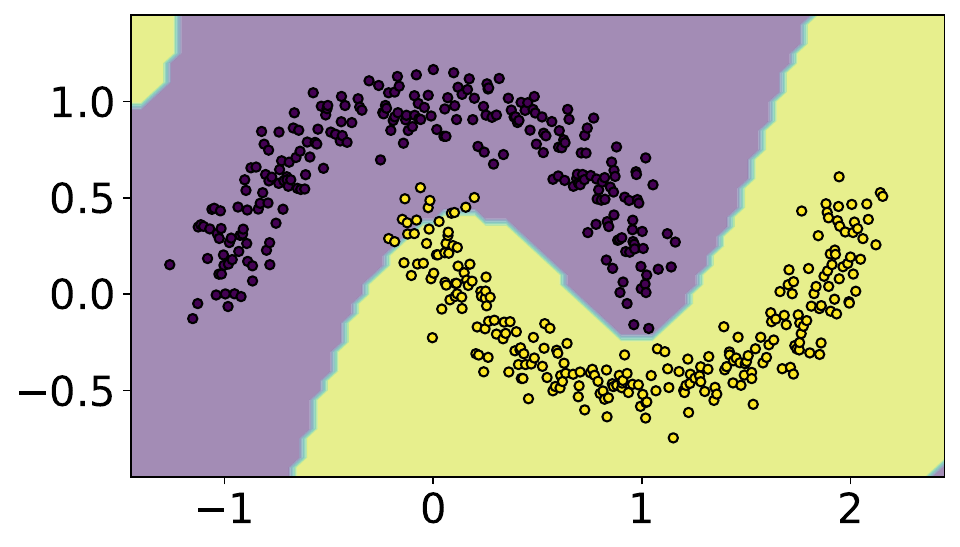}
      \caption{OHSVGP-k-min, after Task 3}
  \end{subfigure}
 
  \caption{Decision boundaries of OSVGP, and OHSVGP models with different sorting criteria after each task (3 in total) on the Two-moon dataset. For OSVGP, we visualize the inducing locations with red color.}
  \label{fig:decision_boundary_moon}
\end{figure}
For multidimensional input data, suppose there is a time order for the first batch of training points with inputs $\{\bm{x}^{(1)}_n\}_{n=1}^{N_1}$, such that $\bm{x}^{(1)}_i$ appears after $\bm{x}^{(1)}_j$ if $i>j$, and we further assume $\bm{x}^{(1)}_i$ appears at time index $i\Delta t$ (i.e., $\bm{x}(i \Delta t) = \bm{x}^{(1)}_i$), where $\Delta t$ is a user-specified constant step size. In this case, we can again obtain interdomain prior covariance matrices via HiPPO recurrence. For example, a forward Euler method applied to the ODE in Eq.~\ref{eq:kfu_ode} for $\bm{K}_{\bm{fu}}^{t}$ yields
\begin{equation}
    [\bm{K}_{\bm{fu}}^{((i+1)\Delta t)}]_{n,:} =[\bm{I}+\Delta t\bm{A}(i\Delta t)][\bm{K}_{\bm{fu}}^{(i\Delta t)}]_{n,:} + \Delta t\bm{B}(i \Delta t)k \left(\bm{x}^{(1)}_n, \bm{x}^{(1)}_i \right).
    \label{eq:multi_dim_hippo_kfu}
\end{equation}
The equation above can be viewed as a discretization (with step size $\Delta t$) of an ODE solving path integrals of the form $\int_0^{N_1\Delta t} k \left(\bm{x}^{(1)}_n, \bm{x}(s) \right)\phi_m^{(t)}(s)ds$. The $i$-th training input $\bm{x}^{(1)}_i$ is assumed to be $\bm{x}^{(1)}_i:=\bm{x}(i\Delta t)$ and thus the path integral is approximately solved with discretized recurrence based on the training inputs corresponding to $\{\bm{x}(i\Delta t)\}_{i=1}^N$. We continue the recurrence for the second task with ordered training inputs $\{\bm{x}^{(2)}_n\}_{n=1}^{N_2}$ by assigning time index $(N_1+i)\Delta t$ to its $i$-th instance. and keep the recurrence until we learn all the tasks continually. In practice, one may use a multiple of $\Delta t$ as the step size to accelerate the recurrence, e.g., instead of using all the training inputs, one can compute the recurrence based on $\{\bm{x}_1, \bm{x}_3, \bm{x}_5, \cdots\}$ only by using step size $2 \Delta t$. 

When there is no natural time order for training instances in each task, such as in standard continual learning applications, we need to sort the instances with some criterion to create pseudo time order to fit OHSVGP, similar to the practice of applying SSMs to non-sequence data modalities, e.g., SSMs, when applied to vision tasks, assign order to patches in an image for recurrence update of the memory \citep{zhu_vision_2024}. In our experiments, we show that the performance of OHSVGP, when applied to continual learning, depends on the sorting criterion used. As an illustrative example, we visualize how different sorting methods impact OHSVGP's performance in continual learning with a 2D continual classification problem. We consider fitting OHSVGPs with 20 inducing variables for a continual binary classification problem on the Two-moon dataset \citep{ganin2016domain}. The data is splitted into three tasks and we use a Bernoulli likelihood to model binary labels. We consider three different sorting criteria:
\begin{itemize}
    \item \textbf{Random,} denoted as OHSVGP-rand. The order of data points in each task is obtained via random permutation.
    \item \textbf{Kernel similarity maximization,} denoted as OHSVGP-k-max. We select the $i$-th point in task $j$ to be $\bm{x}_i^{(j)} = \argmax_{\bm{x} \in \bm{X}^{(j)}}k(\bm{x}, \bm{x}_{i-1}^{(j)})$ for $i>1$, and the first point in first task is set to be $\bm{x}_1^{(1)} = \argmax_{\bm{x} \in \bm{X}^{(1)}}k(\bm{x}, \bm{0})$. The intuition is that the signals to memorize, when computing the prior covariance matrices, tend to be more smooth if the consecutive $\bm{x}$'s are close to each other.
    \item \textbf{Kernel similarity minimization,} denoted as OHSVGP-k-min. We select the $i$-th point in task $j$ to be $\bm{x}_i^{(j)} = \argmin_{\bm{x} \in \bm{X}^{(j)}}k(\bm{x}, \bm{x}_{i-1}^{(j)})$ for $i>1$, and the first point in first task is set to be $\bm{x}_1^{(1)} = \argmin_{\bm{x} \in \bm{X}^{(1)}}k(\bm{x}, \bm{0})$. In this case, we deliberately make it difficult to memorize the signals in the recurrent computation for the prior covariance matrices.
\end{itemize}

Figure~\ref{fig:decision_boundary_moon} show the decision boundaries after each task for different OHSVGPs based on different sorting criteria. We also include the decision boundaries of an OSVGP model for reference. Both OHSVGP-k-max and OHSVGP-rand return decision boundaries achieving 100\% accuracy, while OHSVGP-k-min show catastrophic forgetting after Task 3, which suggests OHSVGP requires a sensible sorting criterion to perform well in continual learning tasks.

\section{Related Work}
\paragraph{Online sparse GPs.}
%
Previous works mainly focus on reducing the sparse approximation error with different approximate inference techniques, such as variational inference \citep{bui_streaming_2017, maddox_conditioning_2021}, expectation propagation \citep{csato2002sparse, bui_streaming_2017}, Laplace approximation \citep{maddox_conditioning_2021}, and approximation enhanced with replay buffer \citep{chang_memory_2023}. The orthogonal research problem of online update of inducing points remains relatively underexplored, and pivoted-Cholesky \citep{burt_rates_2019} as deployed in \citet{maddox_conditioning_2021, chang_memory_2023} is one of the most effective approaches for online update of inducing points up to date. We tackle this problem by taking advantage of the long-term memory capability of HiPPO to design an interdomain inducing variable based method and the associated recurrence based online update rules. Notably, our HiPPO inducing variables in principle are compatible with all the aforementioned approximate inference frameworks since only the way of computing prior covariance matrices will be different from standard online sparse GPs.
%
\paragraph{Interdomain GPs.} 
To our knowledge, OHSVGP is the first interdomain GP method in the context of online learning. Previous interdomain GPs typically construct inducing variables via integration based on a predefined measure (e.g., a uniform measure over a fixed interval \citep{hensman2018variational} or a fixed Gaussian measure \citep{lazaro-gredilla_inter-domain_2009}) to prevent diverging covariances, and this predefined measure may not cover all regions where the time indices from future tasks are, making them unsuitable for online learning. In contrast, OHSVGP bypasses this limitation by utilizing adaptive basis functions constructed based on time-dependent measure which keeps extending to the new time region as more tasks arrive.
%
\paragraph{Markovian GPs.}
Markovian GPs \citep{sarkka2019applied, wilkinson_sparse_2021} have similar recurrence structure during inference and training due to their state space SDE representation. However, Markovian GPs are tailored to one-dimensional input tasks such as time series modeling, while we extend OHSVGP to multidimensional input tasks.

\section{Experiments}
\label{sec:experiments}

\paragraph{Applications \& datasets.} We evaluate OHSVGP against baselines in the following tasks.
\begin{itemize}
\item \textbf{Time series prediction.} We consider regression benchmarks, Solar Irradiance \citep{lean2004solar}, and Audio Signal \citep{bui_tree_2014} produced from the TIMIT database \citep{Garofolo1993timit}. We preprocess the two datasets following similar procedures described in \citet{gal_improving_2015} and \citet{bui_streaming_2017}, respectively (the train-test split is different due to random splitting). In addition, we consider a  daily death‐count time series from Santa Catarina State, Southern Brazil spanning the March 2020 to February 2021 COVID‐19 pandemic, obtained from \citet{hawryluk2021gaussian}. We construct online learning tasks by splitting each dataset into 10 (5 for COVID) sequential partitions with an equal number of training instances.

\item \textbf{Continual learning.} We consider continual learning on two UCI datasets with multi-dim inputs, Skillcraft \citep{skillcraft1_master_table_dataset_272} and Powerplant \citep{combined_cycle_power_plant_294}, using the same data preprocessing procedure as in \citet{stanton_kernel_2021}. We construct two types of continual learning problems by first sorting the data points based on either the values in their first dimension or their L2 distance from the origin, and then splitting the sorted datasets into 10 sequential tasks with an equal number of training instances.

\item \textbf{High dimensional time series prediction.} We evaluate GPVAEs on hourly climate data from ERA5 \citep{cds_era5_single_levels_2023}, comprising 17 variables across randomly scattered locations around the UK from January 2020 onward. The dataset is split into 10 sequential tasks of 186 hourly time steps each.
\end{itemize}
%
\paragraph{Baseline.}
%
We compare OHSVGP with OSVGP \citep{bui_streaming_2017} and OVC (Online Variational Conditioning; \citep{maddox_conditioning_2021}). At the beginning of each task, OSVGP initialize the inducing locations by sampling from the old inducing locations and the new data inputs, while OVC initializes them via pivoted-Cholesky \citep{burt_rates_2019} and we consider both fixing the initialized inducing locations as in \citet{chang_memory_2023} (OVC) or keep training them as in \citet{maddox_conditioning_2021} (OVC-optZ). For time series regression with Gaussian likelihood, we consider OHSGPR and OSGPR (OHSVGP and OSVGP based on closed form ELBO), and we further consider OVFF (OSGPR based on variational Fourier feature (VFF), an interdomain inducing point approach from \citet{hensman2018variational}). 
%
\paragraph{Hyperparameters.}
%
Within each set of experiments, all the models are trained using Adam \citep{kingma2015adam} with the same learning rate and number of iterations. For OHSVGP, we construct inducing variables based on HiPPO-LegS \citep{gu_hippo_2020} (see Appendix~\ref{appendix:basis_measure_variant} for visualizations of using other HiPPO variants) and use 1000 RFF samples. We use ARD-RBF kernel, except for OVFF, tailored specifically to Mat\'ern kernels, where we use Mat\'ern-$\frac{5}{2}$ kernel instead. Similar to \citet{maddox_conditioning_2021}, we do not observe performance gain by keeping updating kernel hyperparameters online, and we include results with trainable kernel hyperparameters in Appendix~\ref{appendix:trainable_kernel} for time series regression, but the performance becomes unstable when number of tasks is large. Thus, we either only train the kernel hyperparameters during the initial task and keep them fixed thereafter (Section~\ref{sec:gpvae}) or obtain them from a full GP model trained over the initial task. It is also worth noting that OVFF requires computing covariances as integrals over a predefined interval covering the whole range of the time indices from all tasks (including unobserved ones), which is impractical in real online learning scenarios. For our experiments, we set the two edges of this interval to be the minimum and maximum time index among the data points from all the tasks, respectively.
%
\paragraph{Evaluations \& metrics.}
%
We report results in test NLL (reviewed in Appendix~\ref{appendix:uq_metric}), or commonly referred to as Negative Log Predictive Density (NLPD) in GP literature, and Root Mean Squared Error (RMSE) (Expected Calibration Error (ECE; \citep{Pakdaman_Naeini_Cooper_Hauskrecht_2015,guo2017calibration}) for COVID data instead). 
\begin{align}
\mathrm{NLPD} = -\frac{1}{N}\sum_{i=1}^N \log \hat{p}\bigl(y_i \mid x_i\bigr)\,, \quad
\mathrm{RMSE} = \sqrt{\frac{1}{N}\sum_{i=1}^N\bigl(y_i - \hat y_i\bigr)^2}. 
\end{align}
The Expected Calibration Error (\citep{Pakdaman_Naeini_Cooper_Hauskrecht_2015,guo2017calibration}; ECE) is defined as the mismatch between the confidence and the coverage:
\begin{equation}
\mathrm{ECE}
= \frac{1}{K}
  \sum_{k=1}^{K}
    \left\lvert
      \frac{1}{N}
      \sum_{i=1}^{N}
        \bm{1}\!\bigl(
          y_i \in \bigl[
            \hat{q}_{\,\tfrac{1-c_k}{2}}(x_i)\;,\;
            \hat{q}_{\,\tfrac{1+c_k}{2}}(x_i)
          \bigr]
        \bigr)
      \;-\;c_k,
    \right\rvert
\end{equation} where $K = 10, \quad c_k \in \{0.05,\,0.15,\,\dots,\,0.95\}$, $N \text{ is the number of test points }(x_i, y_i)$, and $\bm{1}(\cdot)$ is the indicator function. \[\hat{q}_{p}(x_i)
= \text{the empirical $p$–quantile of the $S$ predictive samples}
\;\{\,\hat{y}_i^{(s)}\}_{s=1}^S, \quad S=100.
\]
We report the mean and the associated 95\% confidence interval obtained from 5 (3 for experiments on ERA5) independent runs. 

\subsection{Online Time Series Prediction}
%
%
\begin{figure}[t]
  \begin{subfigure}[t]{0.29\textwidth}
      \centering
      \includegraphics[width=\textwidth]{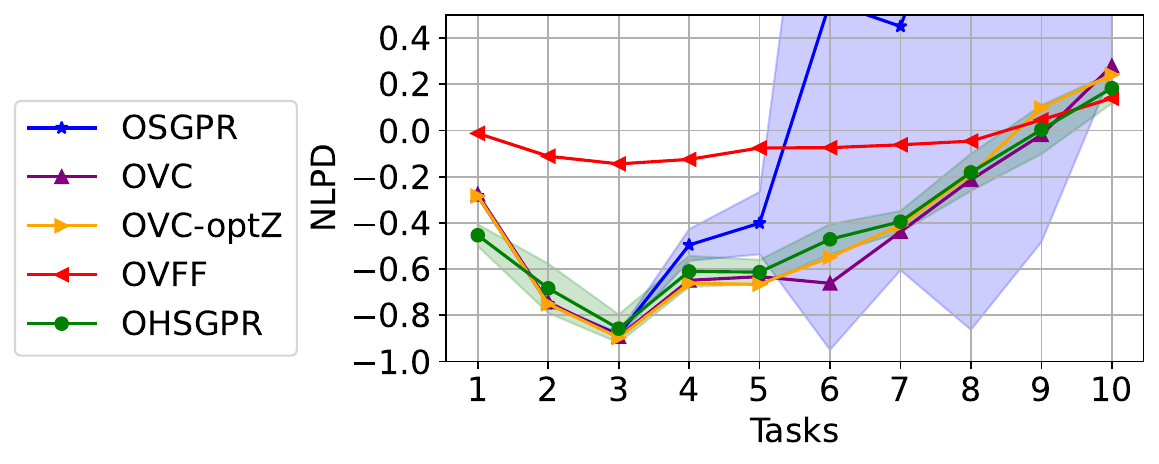}
      \caption{Solar, M=50}
      \label{fig:solar_50}
  \end{subfigure}
  \hfill
  \begin{subfigure}[t]{0.224\textwidth}
      \centering
      \includegraphics[width=\textwidth]{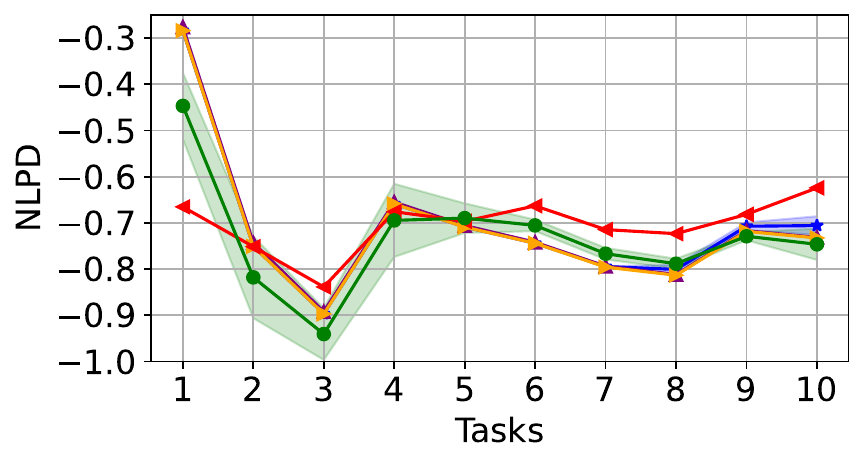}
      \caption{Solar, M=150}
      \label{fig:solar_150}
  \end{subfigure}
  \hfill
  \begin{subfigure}[t]{0.224\textwidth}
      \centering
      \includegraphics[width=\textwidth]{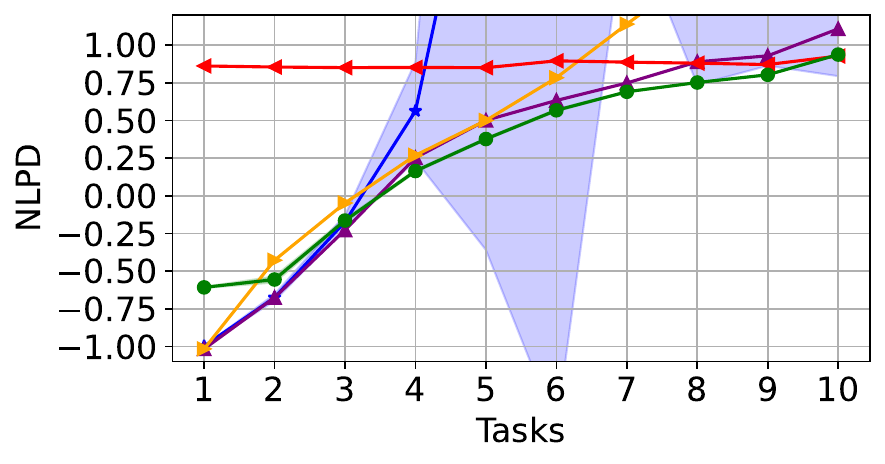}
    \caption{Audio, M=100}
    \label{fig:baseband_100}
  \end{subfigure}
  \begin{subfigure}[t]{0.224\textwidth}
      \centering
      \includegraphics[width=\textwidth]{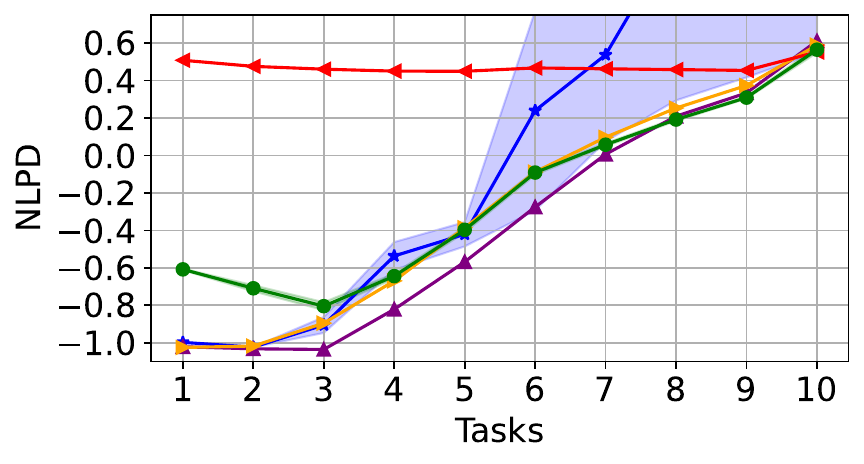}
    \caption{Audio, M=200}
     \label{fig:baseband_200}
  \end{subfigure}
  \caption{Test set NLPD over the learned tasks vs. number of learned tasks for Solar Irradiance and Audio signal prediction dataset.}
  \label{fig:time_series_regression}
\end{figure}

\begin{figure}[t]
  \begin{subfigure}[t]{0.29\textwidth}
      \centering
      \includegraphics[width=\textwidth]{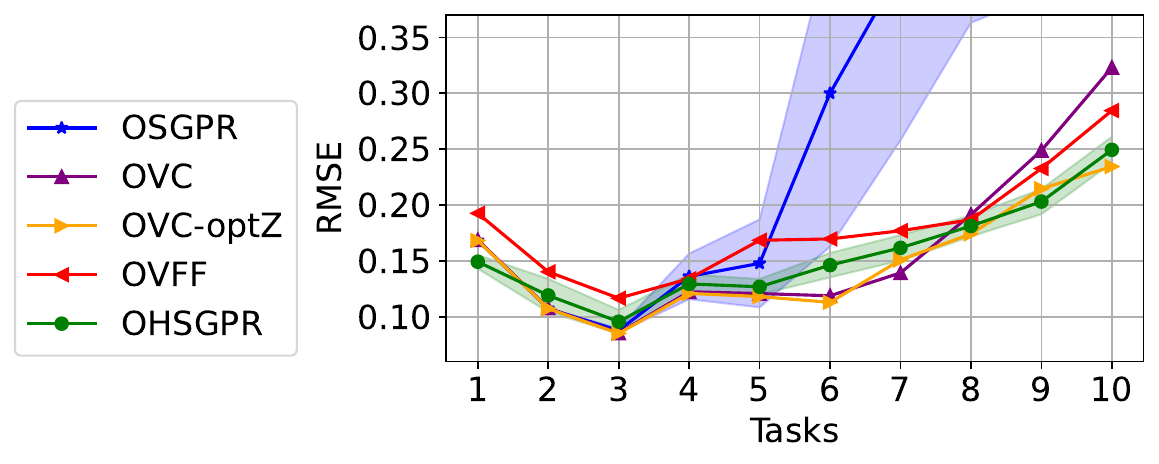}
      \caption{Solar, M=50}
  \end{subfigure}
  \hfill
  \begin{subfigure}[t]{0.224\textwidth}
      \centering
      \includegraphics[width=\textwidth]{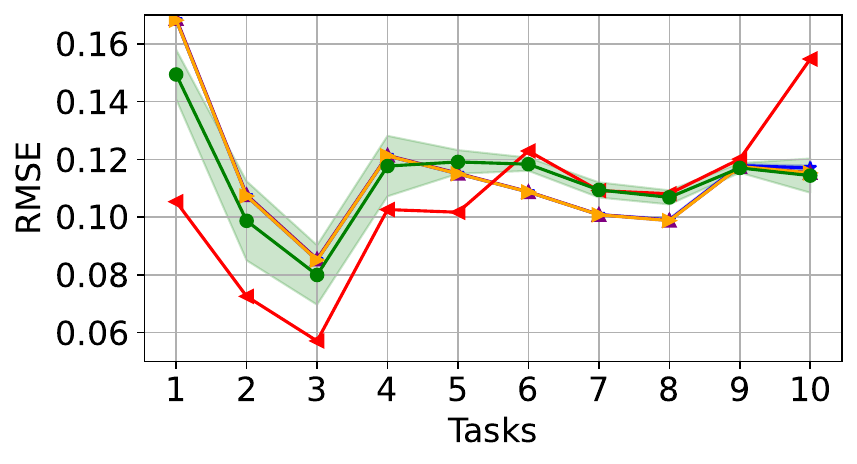}
      \caption{Solar, M=150}
  \end{subfigure}
  \hfill
  \begin{subfigure}[t]{0.224\textwidth}
      \centering
      \includegraphics[width=\textwidth]{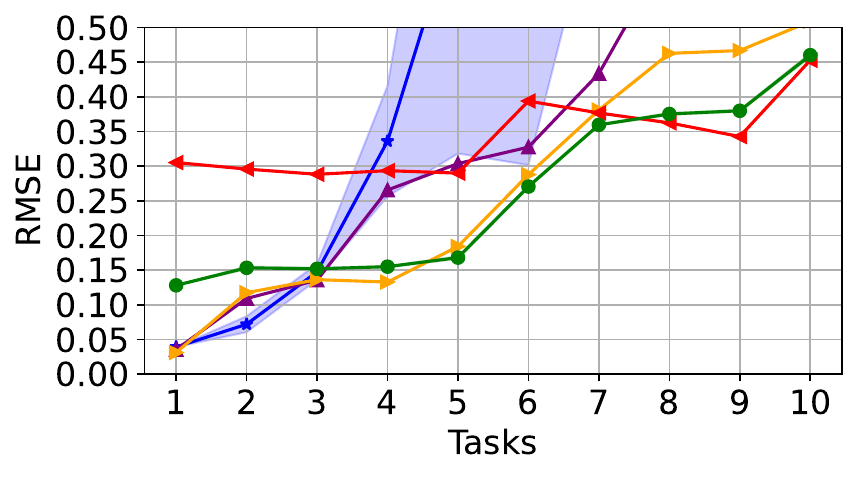}
    \caption{Audio, M=100}
  \end{subfigure}
  \begin{subfigure}[t]{0.224\textwidth}
      \centering
      \includegraphics[width=\textwidth]{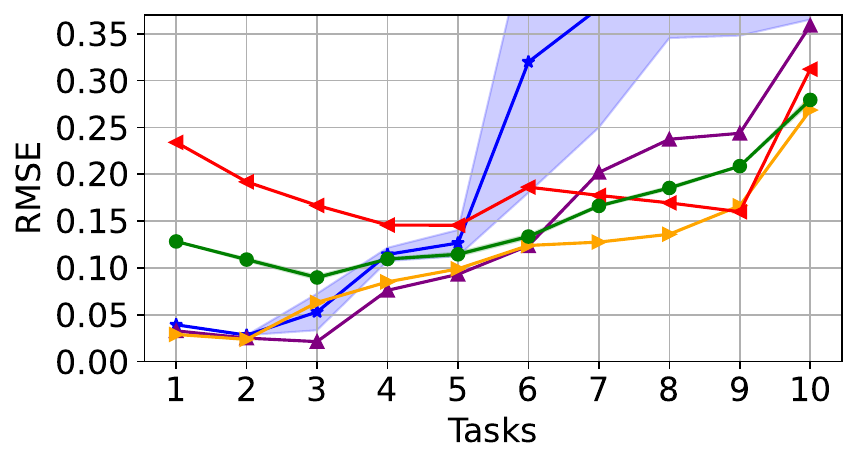}
    \caption{Audio, M=200}
  \end{subfigure}
  \caption{Test set RMSE over the learned tasks vs. number of learned tasks for Solar Irradiance and Audio signal prediction dataset.}
  \label{fig:time_series_reg_rmse}
\end{figure}

\begin{figure}[b]
  \begin{subfigure}[t]{0.304\textwidth}
      \centering
      \includegraphics[width=\textwidth]{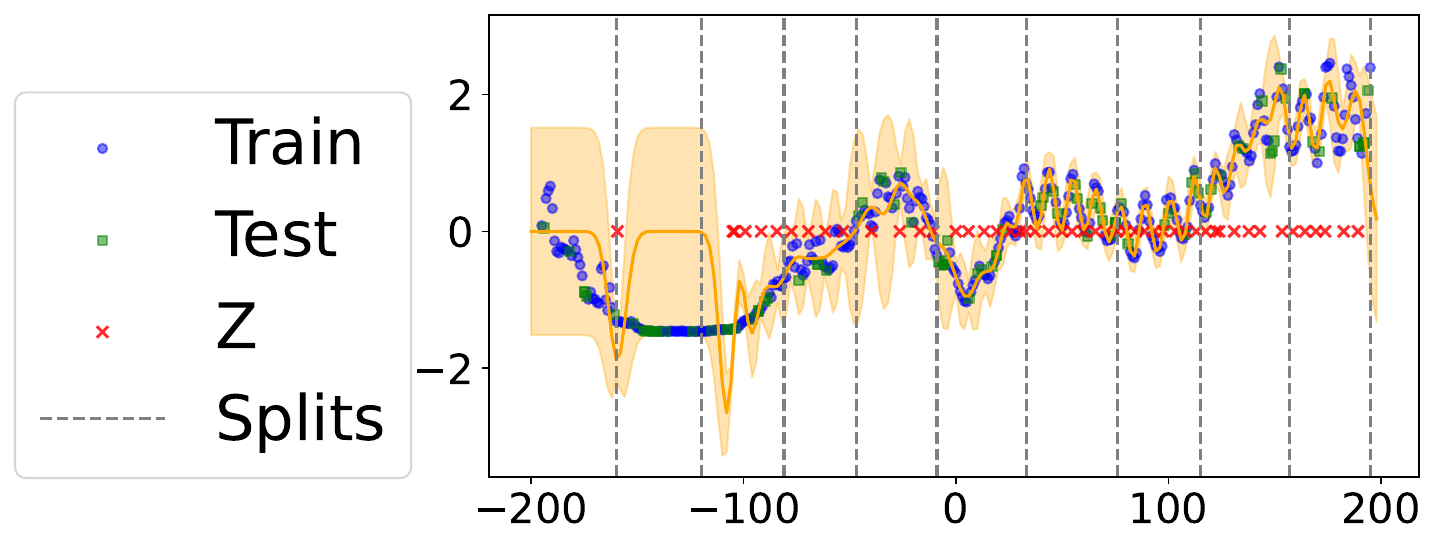}
      \caption{OSGPR}
  \end{subfigure}
  \hfill
  \begin{subfigure}[t]{0.222\textwidth}
      \centering
      \includegraphics[width=\textwidth]{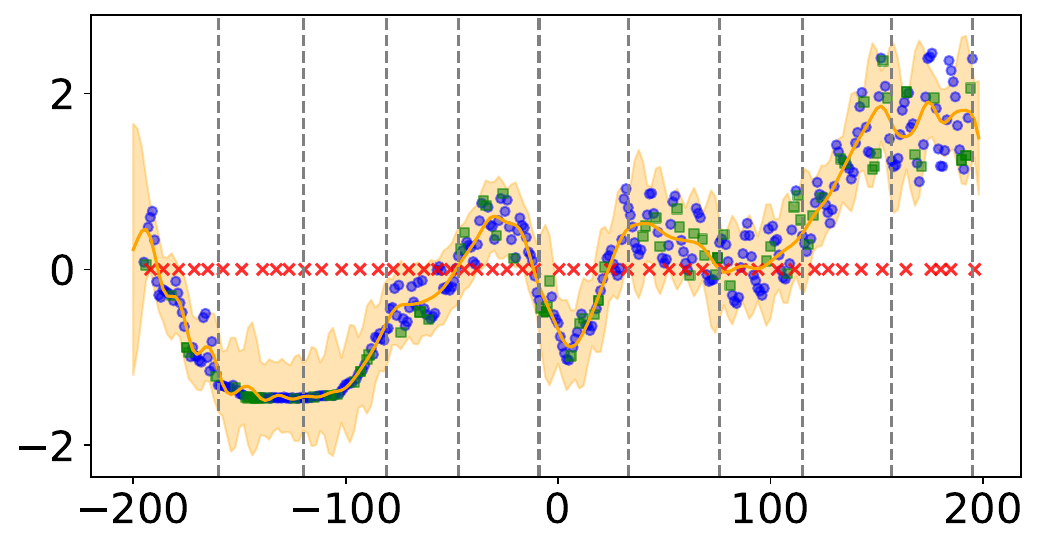}
      \caption{OVC}
  \end{subfigure}
  \hfill
  \begin{subfigure}[t]{0.222\textwidth}
      \centering
      \includegraphics[width=\textwidth]{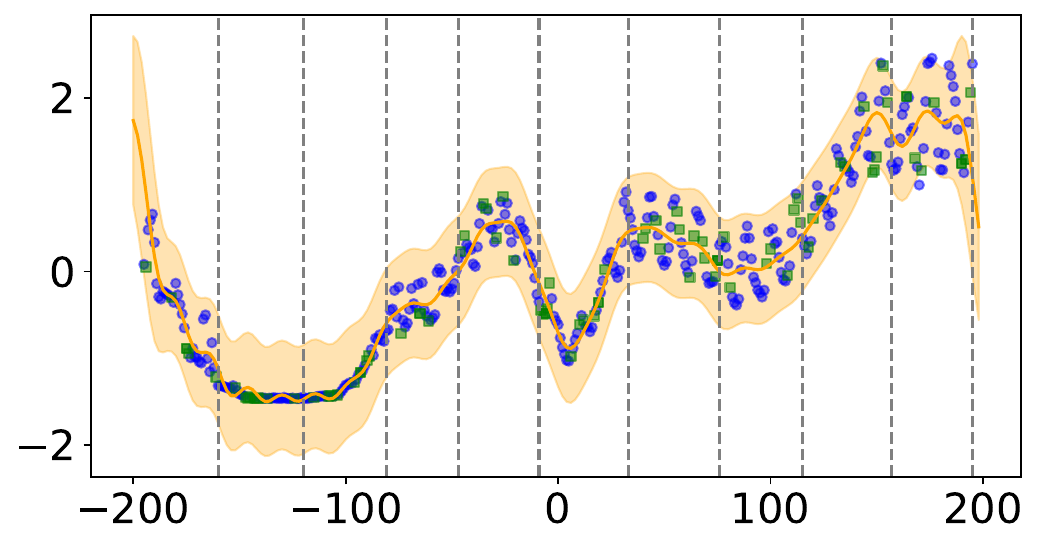}
    \caption{OVFF}
  \end{subfigure}
  \hfill
  \begin{subfigure}[t]{0.222\textwidth}
      \centering
      \includegraphics[width=\textwidth]{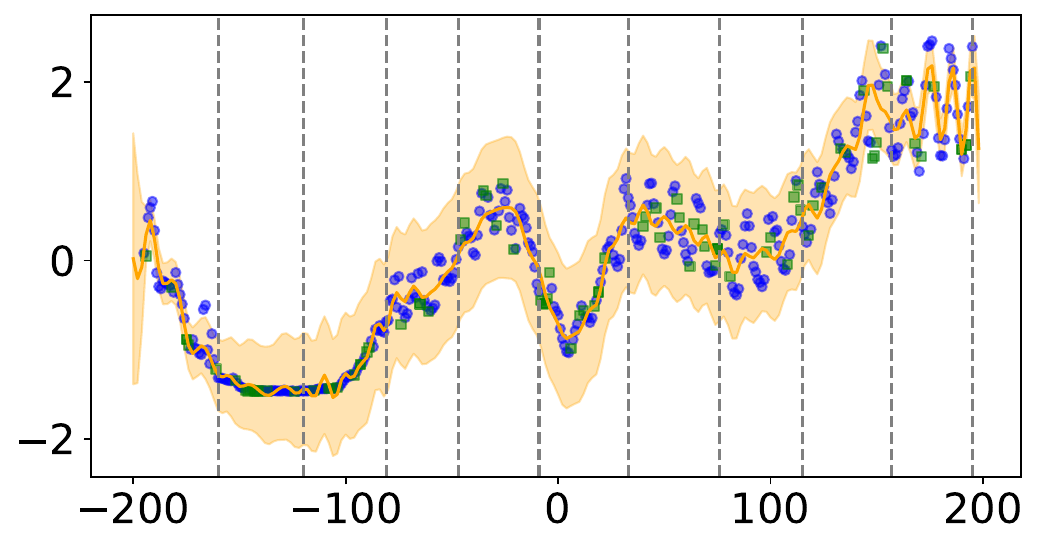}
    \caption{OHSGPR}
  \end{subfigure}
  \caption{Predictive mean $\pm$2 standard deviation of OSGPR, OVC, OVFF, and OHSGPR after task 10 of the Solar dataset. M = 50 inducing variables are used.}
  \label{fig:illustration_forgetting}
\end{figure}
\paragraph{Time series regression.}

Figure~\ref{fig:time_series_regression} and Figure~\ref{fig:time_series_reg_rmse} show NLPD and RMSE (over the past tasks) of different methods during online learning through the 10 tasks for Solar Irradiance and Audio dataset, respectively. 
Overall, OHSGPR consistently achieves the best performance with OVC performing competitively, especially as we learn more and more tasks, suggesting OHSGPR effectively preserves long-term memory through its HiPPO-based memory mechanism. 
OSGPR shows catastrophic forgetting starting around task 5, especially when the number of inducing points $M$ is small. Although OVC-optZ also initializes inducing locations with pivoted-Cholesky as OVC, with further optimization, its performance starts to degrade starting from task 6 for the audio dataset when $M=100$, which suggests the online ELBO objective cannot guarantee optimal online update of inducing locations that preserve memory. OVFF tends to perform well at the later stage. However, during the first few tasks, it underfits the data significantly compared with other methods since its inducing variables are computed via integration over a predefined interval capturing the region of all the tasks, which is unnecessarily long and suboptimal for learning at the early stage. 

In Figure~\ref{fig:illustration_forgetting}, we compare the final predictive distributions for different methods after finishing online learning all 10 tasks of Solar Irradiance. The inducing locations $\bm{Z}$ for OSGPR tend to move to the regions where the later tasks live after online training, and the prediction of OSGPR in the initial regions without sufficient inducing points becomes close to the uninformative prior GP. In contrast, OHSGPR maintains consistent performance across both early and recent time periods.

\paragraph{Infectious disease modeling}
\label{sec:exp_covid_main}
\begin{figure}[t]
  \begin{subfigure}[t]{\columnwidth}
      \centering
      \includegraphics[width=\columnwidth]{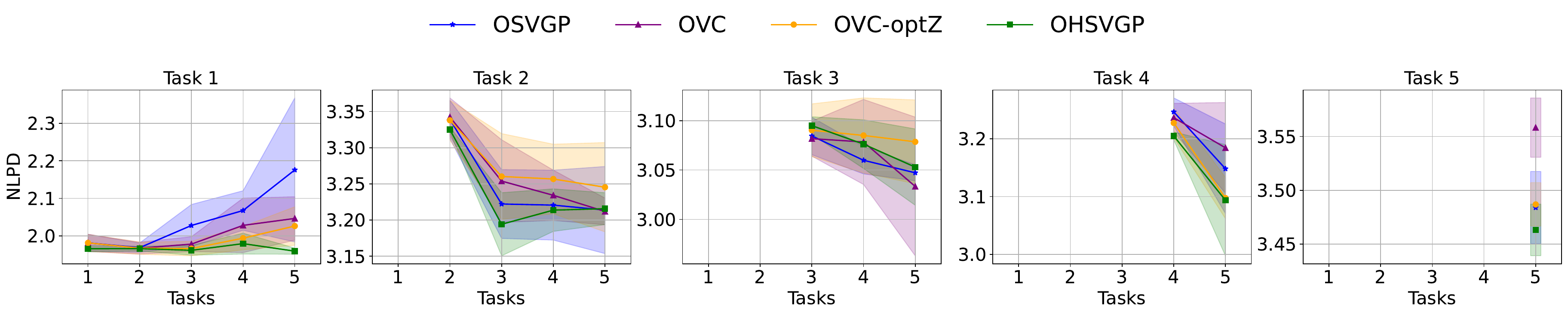}
      \caption{M=15}
  \end{subfigure}
  \hfill
  \begin{subfigure}[t]{\columnwidth}
      \centering
      \includegraphics[width=\columnwidth]{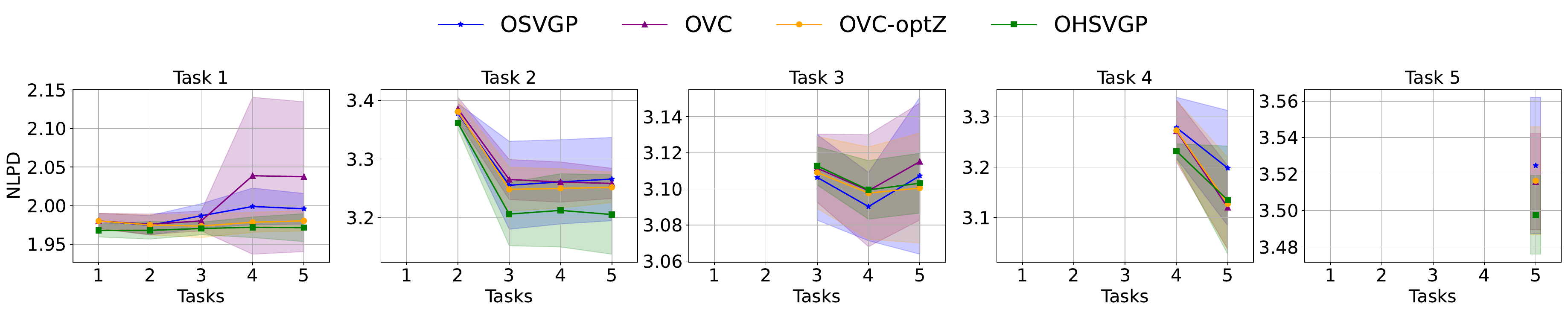}
      \caption{M=30}
  \end{subfigure}
  \caption{Test set NLPD per task after continually learning each task for all the 5 tasks on COVID dataset.}
  \label{fig:covid_nlpd_full}
\end{figure}

\begin{figure}[t]
  \begin{subfigure}[t]{\columnwidth}
      \centering
      \includegraphics[width=\columnwidth]{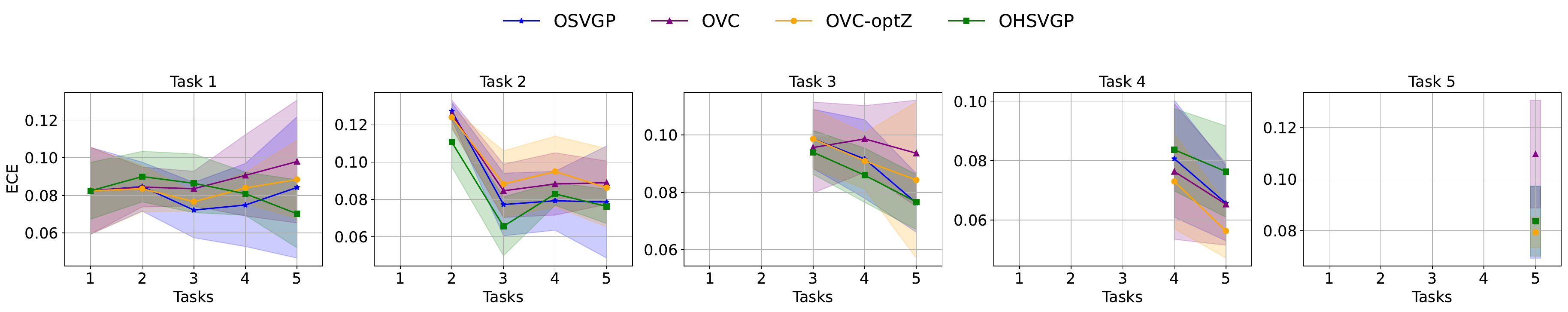}
      \caption{M=15}
  \end{subfigure}
  \hfill
  \begin{subfigure}[t]{\columnwidth}
      \centering
      \includegraphics[width=\columnwidth]{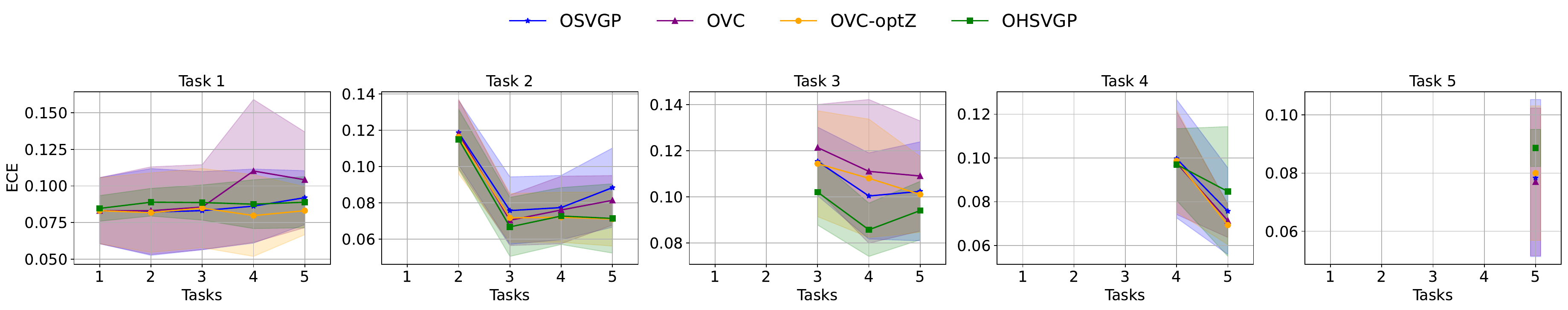}
      \caption{M=30}
  \end{subfigure}
  \caption{Test set ECE per task after continually learning each task for all the 5 tasks on COVID dataset.}
  \label{fig:covid_ece_full}
\end{figure}
We replace the Gaussian likelihood with a Negative Binomial likelihood (non‑conjugate) to capture the over‑dispersion in COVID‑19 death counts. All methods use $M\in\{15,30\}$ inducing points and are trained for 5000 iterations per task with a learning rate of 0.01. Figure~\ref{fig:covid_nlpd_full} and Figure~\ref{fig:covid_ece_full} show the change of NLPD and ECE through online learning for all five tasks, respectively (i.e., evaluation of Task $i$ after learning tasks $j=i,i+1,\cdots, 5$ for all $i$). The wide metric variance reflects the noisy nature of death‑count data as it is difficult to accurately track down COVID-19 death counts. OHSVGP achieves the best performance overall while OSVGP tend to forget Task 1 with small $M$.
\paragraph{Runtime comparison.}
\label{sec:wall_clock_run}
\begin{table}[t]
\centering
  \caption{Wall-clock accumulated runtime for learning all the tasks on a single NVIDIA RTX3090 GPU in seconds, of different models for time series prediction experiments.}
  \label{table:wall_time_time_series_prediction}
  \begin{tabular}{lccccccc}
  \toprule
   & \multicolumn{2}{c}{\textbf{Solar Irradiance}} &  \multicolumn{2}{c}{\textbf{Audio Data}} & \multicolumn{2}{c}{\textbf{COVID}}\\
      \midrule
      \multirow{2}{*}{\textbf{Method}} 
      & \multicolumn{2}{c}{ \( M \)} & \multicolumn{2}{c}{\( M \)} & \multicolumn{2}{c}{\( M \)} \\
      & \( 50 \)  & \( 150 \) 
      & \( 100 \) & \( 200 \) & \( 15 \) & \( 30 \) \\
      \midrule
      OSGPR/OSVGP  & 140 & 149  & 144 & 199 & 525 & 530\\
      OVC  & 0.450 & 0.620 & 0.558 & 0.863 & 345 & 360\\
      OVFF  & 0.327 & 0.354  & 0.295 & 0.356 & - & -\\
      OHSGPR/OHSVGP  & 0.297 & 0.394  &  0.392&  0.655 & 370 & 380\\
      \bottomrule
  \end{tabular}
\end{table}
Table~\ref{table:wall_time_time_series_prediction} shows the accumulated wall-clock runtime for different methods to  learn all the tasks. 
Unlike OSVGP and OVC-optZ, which must iteratively optimize inducing locations (for which we train e.g., 1000 iterations for time-series regression tasks), OHSVGP, OVFF (both based on interdomain inducing points), and OVC (based on one-time pivoted-Cholesky update of inducing locations for each task) bypass this cumbersome optimization. In particular, OHSVGP recurrently evolves $\bm{K}_{\bm{fu}}$ and $\bm{K}_{\bm{uu}}$ for each new task. 
For regression problems where closed-form posterior can be obtained, OHSGPR requires no training at all when considering fixed kernel hyperparameters. As a result, OHSGPR, OVC and OVFF run significantly faster, adapting to all tasks within a couple of seconds for Solar Irradiance and Audio data. For COVID data, even when free-form variational parameters of inducing variables are learned using uncollapsed ELBO, OHSVGP and OVC are still significantly faster than OSVGP since no gradient computation is required for the inducing locations.

\subsection{Continual Learning on UCI Datasets}
\label{sec:uci}
\begin{figure}[t]
  \begin{subfigure}[t]{0.95\linewidth}
      \centering
      \includegraphics[width=0.99\linewidth]{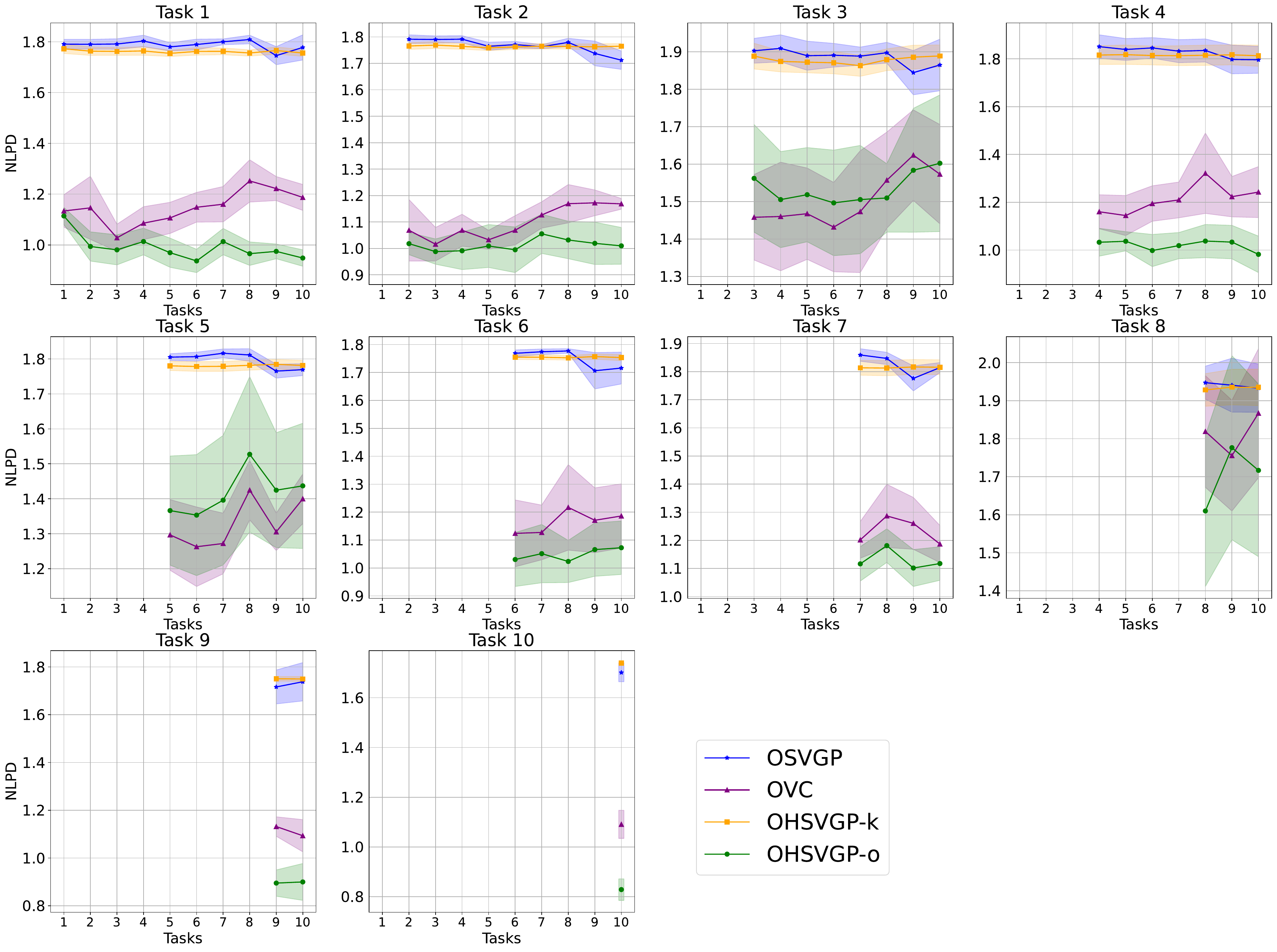}
      \caption{Skillcraft (1st dimension)}
  \end{subfigure}
  \begin{subfigure}[t]{0.95\linewidth}
      \centering
      \includegraphics[width=0.99\linewidth]{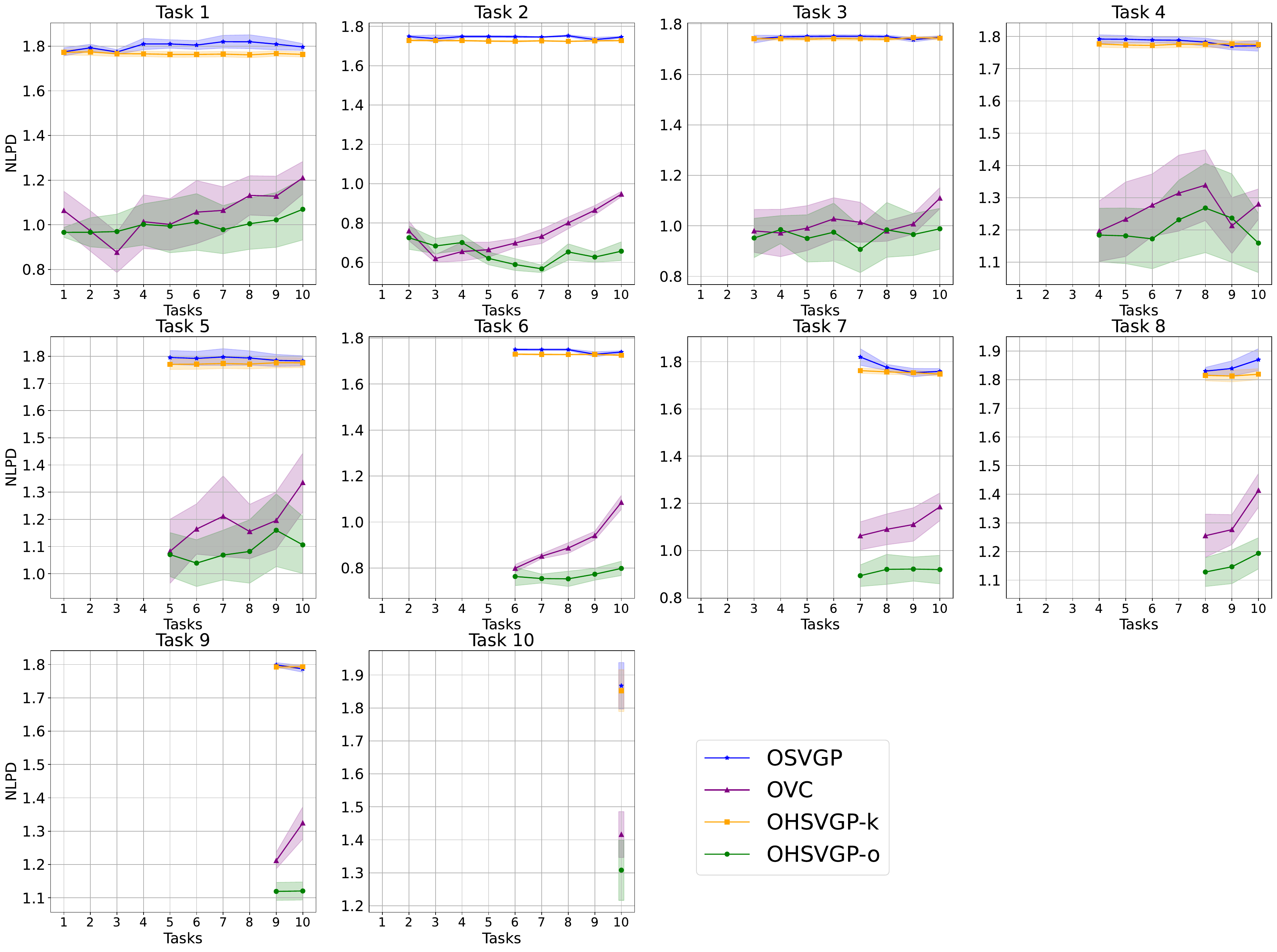}
      \caption{Skillcraft (L2)}
  \end{subfigure}
  \caption{Test set NLPD per task after continually learning each task for all the 10 tasks on UCI Skillcraft dataset.}
  \label{fig:uci_nlpd_fullres_skillcraft}
\end{figure}

\begin{figure}[t]
\centering
  \begin{subfigure}[t]{0.95\linewidth}
      \centering
      \includegraphics[width=0.99\linewidth]{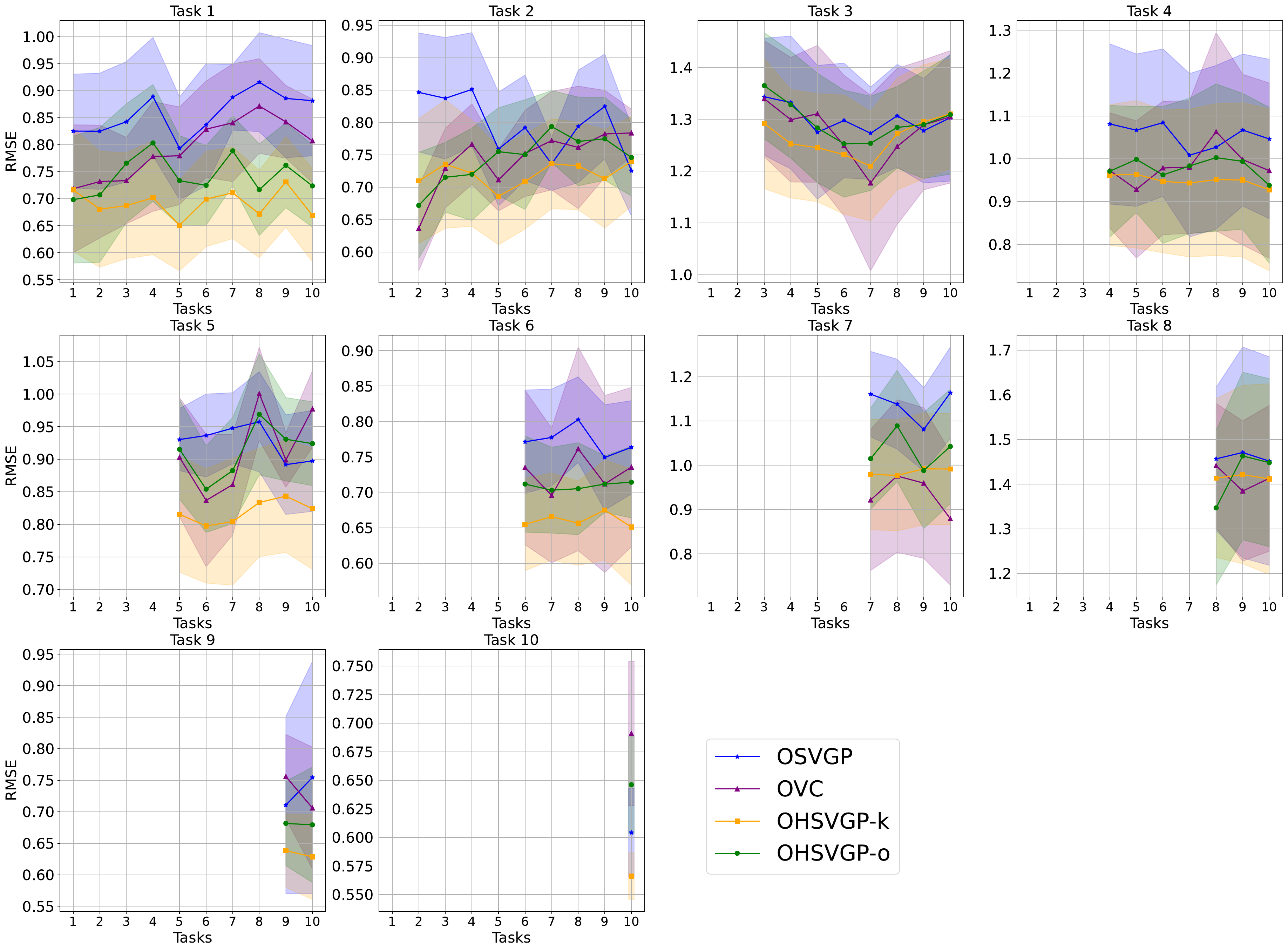}
      \caption{Skillcraft (1st dimension)}
  \end{subfigure}
  \hfill
  \begin{subfigure}[t]{0.95\linewidth}
      \centering
      \includegraphics[width=0.99\linewidth]{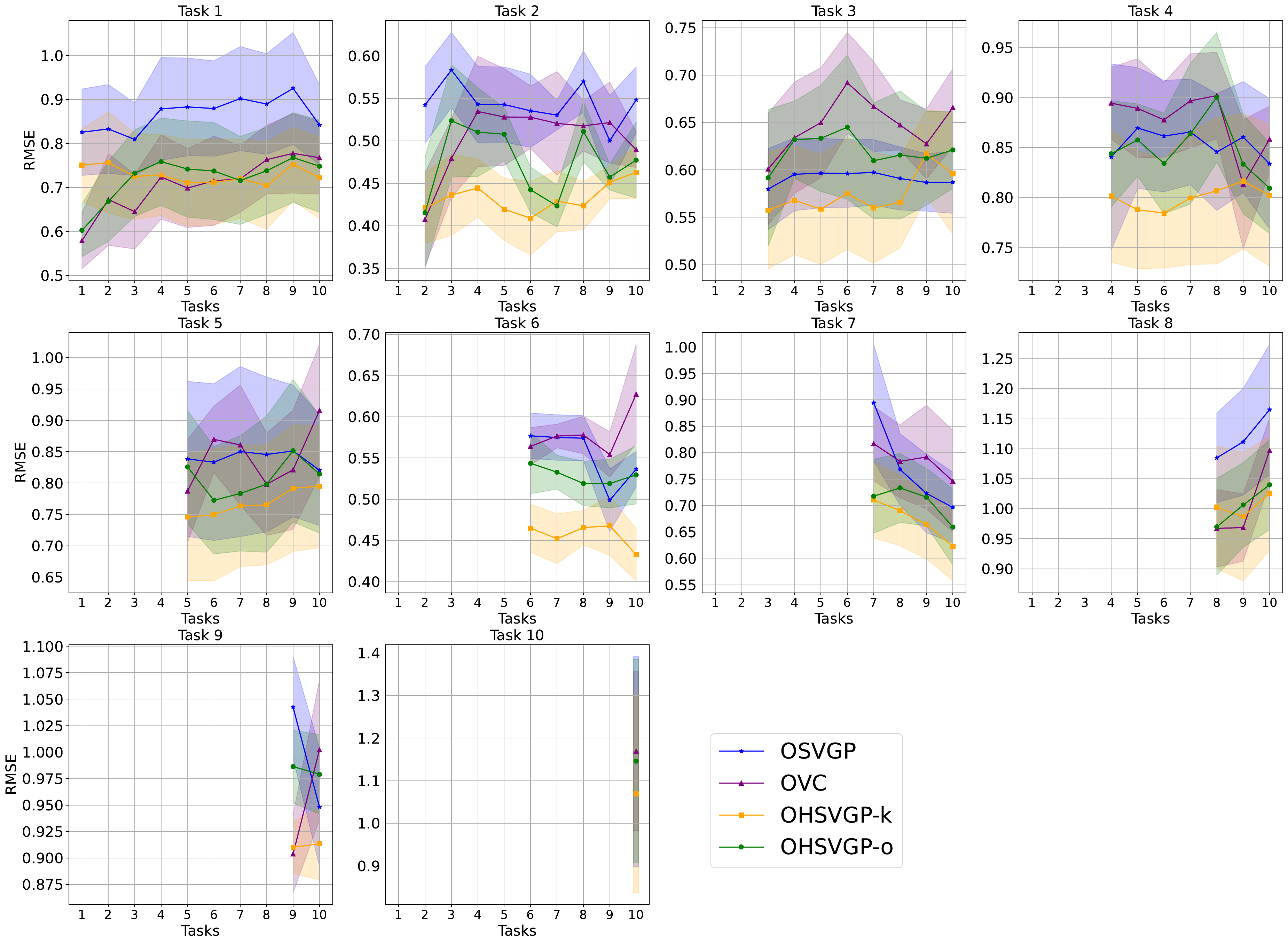}
      \caption{Skillcraft (L2)}
  \end{subfigure}
   \caption{Test set RMSE per task after continually learning each task for all the 10 tasks on UCI Skillcraft dataset.}
  \label{fig:uci_rmse_fullres_skillcraft}
\end{figure}

\begin{figure}[t]
  \begin{subfigure}[t]{0.95\linewidth}
      \centering
      \includegraphics[width=0.99\linewidth]{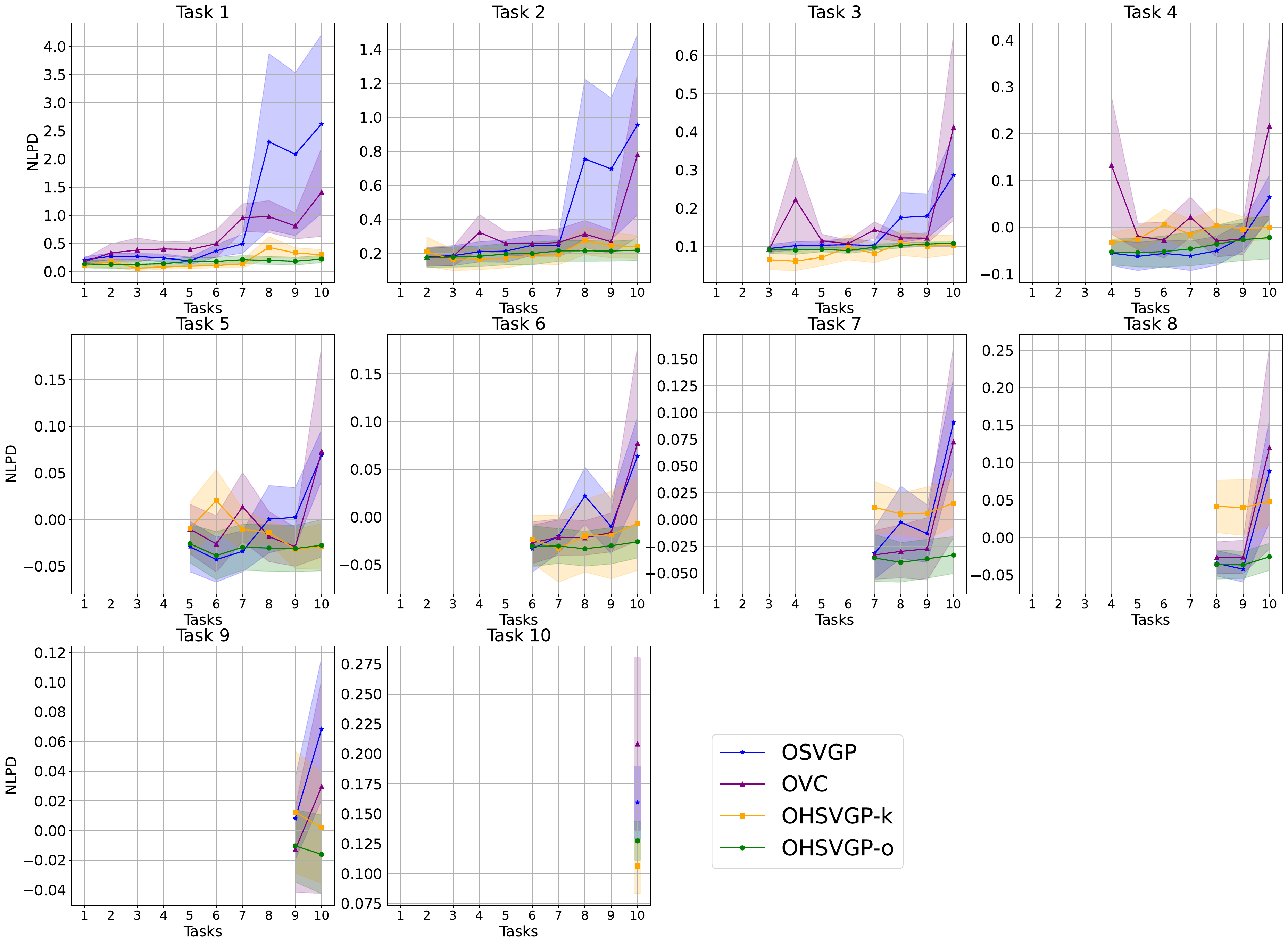}
      \caption{Powerplant (1st dimension)}
      \label{fig:powerplant_dim0_nlpd}
  \end{subfigure}
  \begin{subfigure}[t]{0.95\linewidth}
      \centering
      \includegraphics[width=0.99\linewidth]{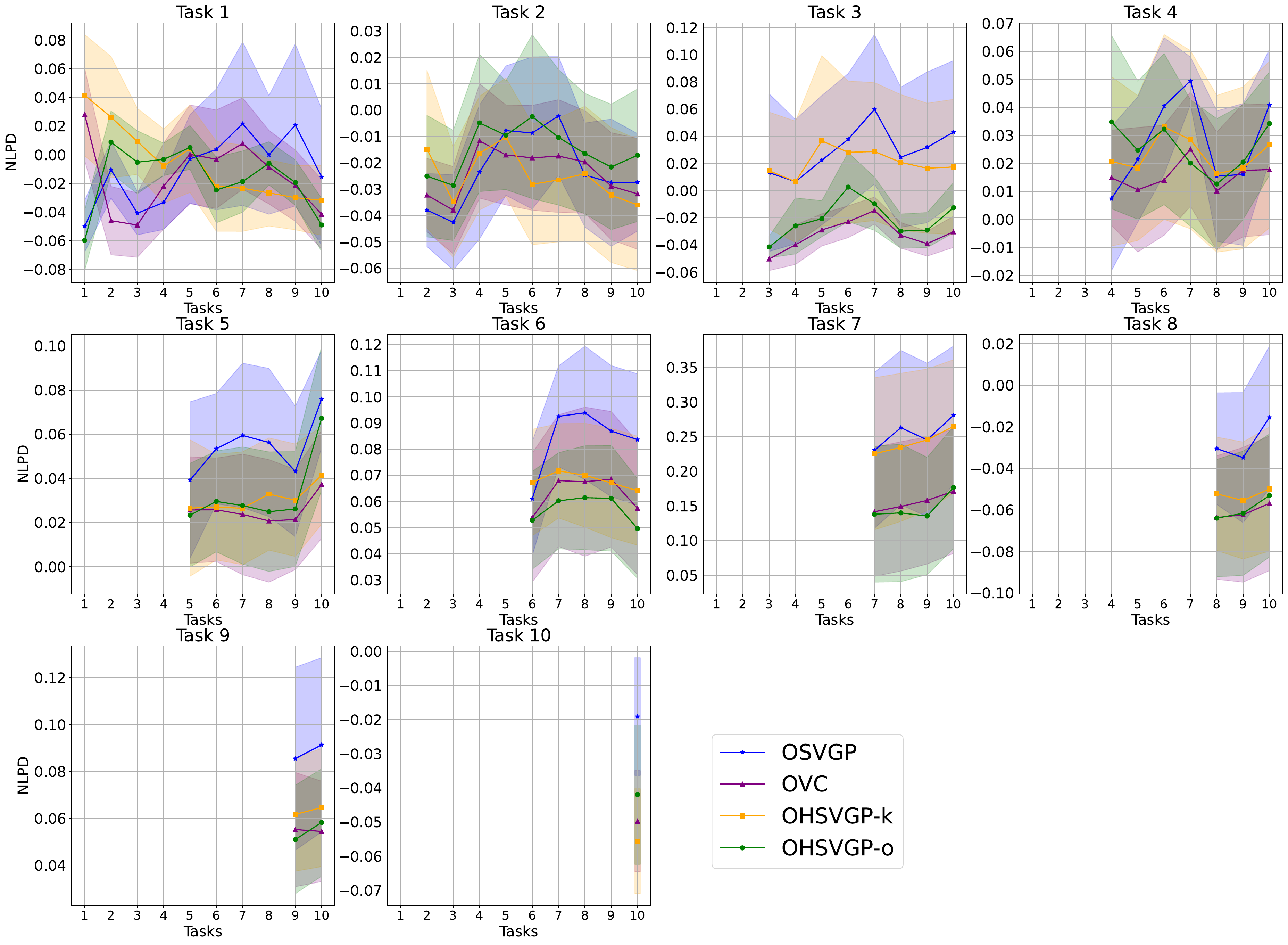}
      \caption{Powerplant (L2)}
  \end{subfigure}
  \caption{Test set NLPD per task after continually learning each task for all the 10 tasks on UCI Powerplant dataset.}
  \label{fig:uci_nlpd_fullres_powerplant}
\end{figure}

\begin{figure}[t]
\centering
  \begin{subfigure}[t]{0.95\linewidth}
      \centering
      \includegraphics[width=0.99\linewidth]{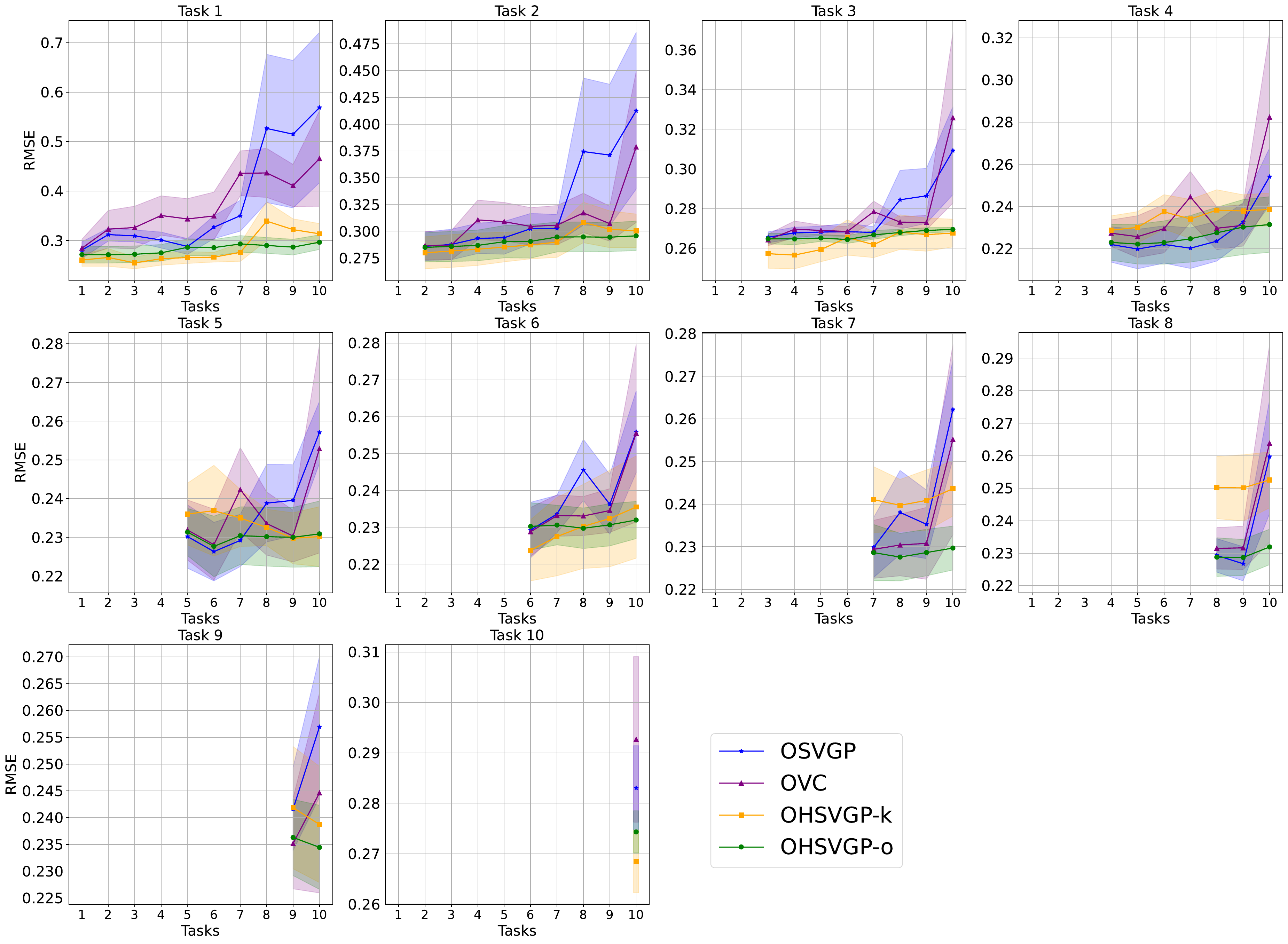}
      \caption{Powerplant (1st dimension)}
  \end{subfigure}
  \hfill
  \begin{subfigure}[t]{0.95\linewidth}
      \centering
      \includegraphics[width=0.99\linewidth]{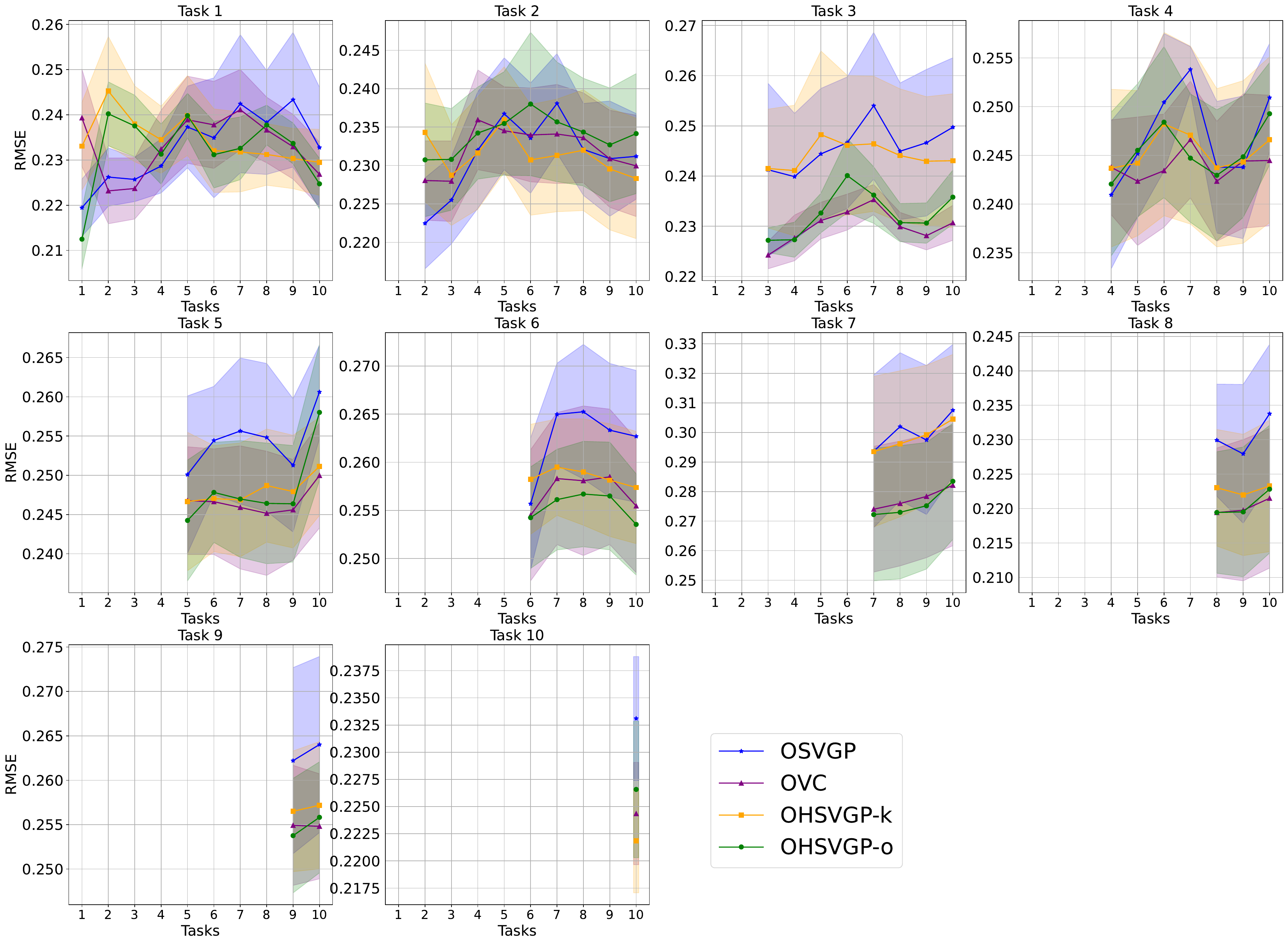}
      \caption{Powerplant (L2)}
  \end{subfigure}
  \caption{Test set RMSE per task after continually learning each task for all the 10 tasks on UCI Powerplant dataset.}
  \label{fig:uci_rmse_fullres_powerplant}
\end{figure}

We use 256 inducing variables for all methods, and for each task, we train each method for 2000 iterations with a learning rate of 0.005. We only consider OVC here since initial trials show OVC-optZ give worse results on these two datasets. As described in Section~\ref{sec:multi_dim}, within each task, OHSVGP requires sorting the data points to compute prior covariance matrices via recurrence. We consider two sorting criteria. The first one, which we call OHSVGP-o, uses the oracle order compatible with how the tasks are created (e.g., sort with L2-distance to the origin if the tasks are initially splitted based on it). In real-world problems, we typically do not have the information on how the distribution shifts from task to task. Hence, we also consider OHSVGP-k (same as OHSVGP-k-max in Section~\ref{sec:multi_dim}), which uses a heuristic sorting method based on kernel similarity: we select the $i$-th point in task $j$ to be $\bm{x}_i^{(j)} = \argmax_{\bm{x} \in \bm{X}^{(j)}}k(\bm{x}, \bm{x}_{i-1}^{(j)})$ for $i>1$, and the first point in first task is set to be $\bm{x}_1^{(1)} = \argmax_{\bm{x} \in \bm{X}^{(1)}}k(\bm{x}, \bm{0})$. Figure~\ref{fig:uci_nlpd_fullres_skillcraft} to Figure~\ref{fig:uci_rmse_fullres_powerplant} compares the two variants of OHSVGP with OSVGP and OVC. We report evaluation (in NLPD and RMSE) of Task $i$ after learning tasks $j=i,i+1,\cdots, 10$ for all $i$. Overall, OSVGP achieves the worst performance and is again prone to forgetting the older tasks, especially in Figure~\ref{fig:powerplant_dim0_nlpd}. OVC performs decently for Skillcraft but it also demonstrates catastrophic forgetting in Figure~\ref{fig:powerplant_dim0_nlpd}. While OHSVGP-k achieves similar performance as OSVGP on Skillcraft, OHSVGP-o consistently outperforms the other methods across all 4 scenarios, suggesting the importance of a sensible sorting method when applying OHSVGP for continual learning.

\clearpage
\subsection{Continual Learning for High Dimensional Time Series Prediction}
\label{sec:gpvae}

\begin{figure}[t]
  \begin{subfigure}[t]{0.95\linewidth}
      \centering
      \includegraphics[width=0.99\linewidth]{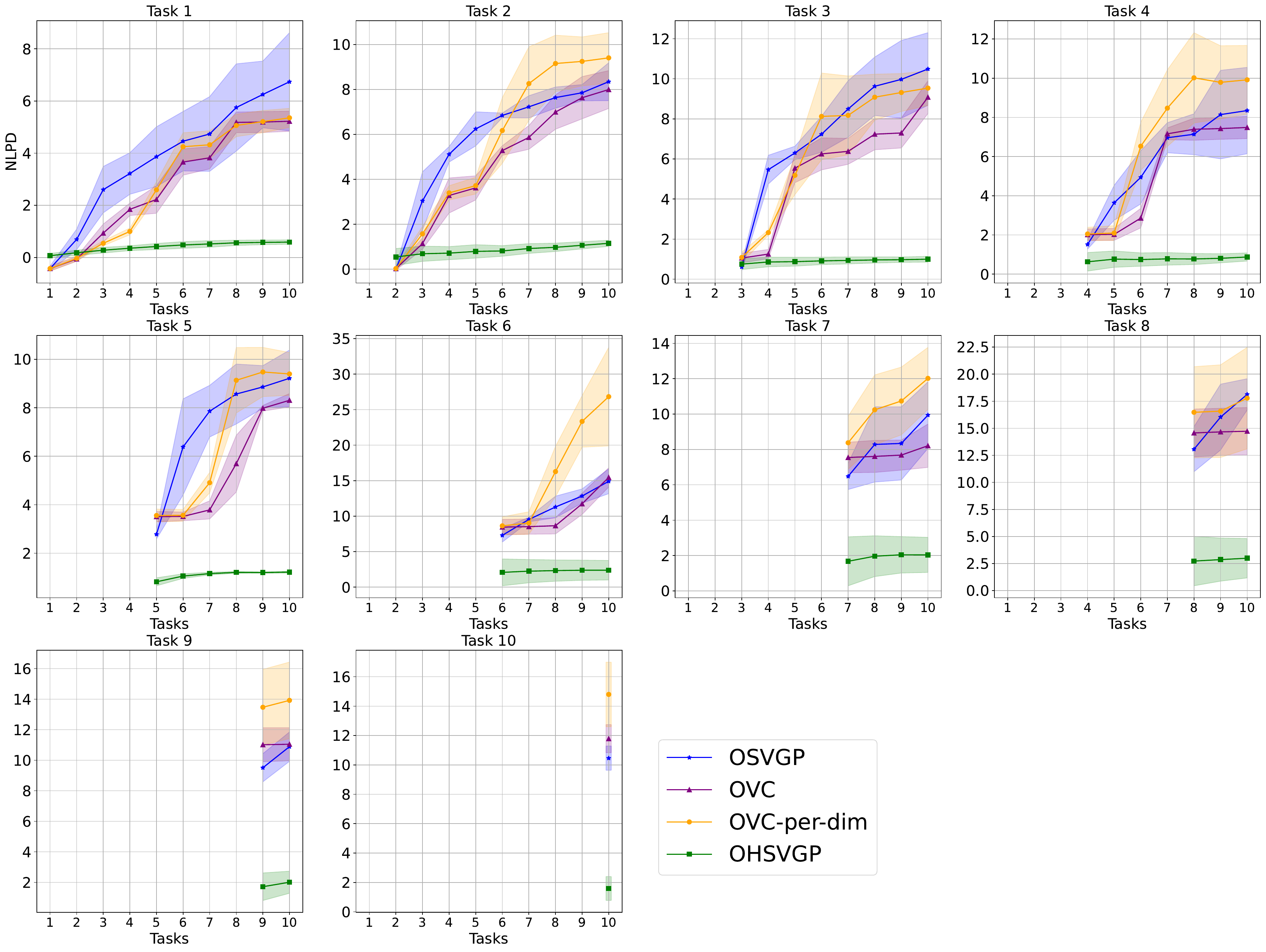}
      \caption{M=50}
  \end{subfigure}
  \hfill
  \begin{subfigure}[t]{0.95\linewidth}
      \centering
      \includegraphics[width=0.99\linewidth]{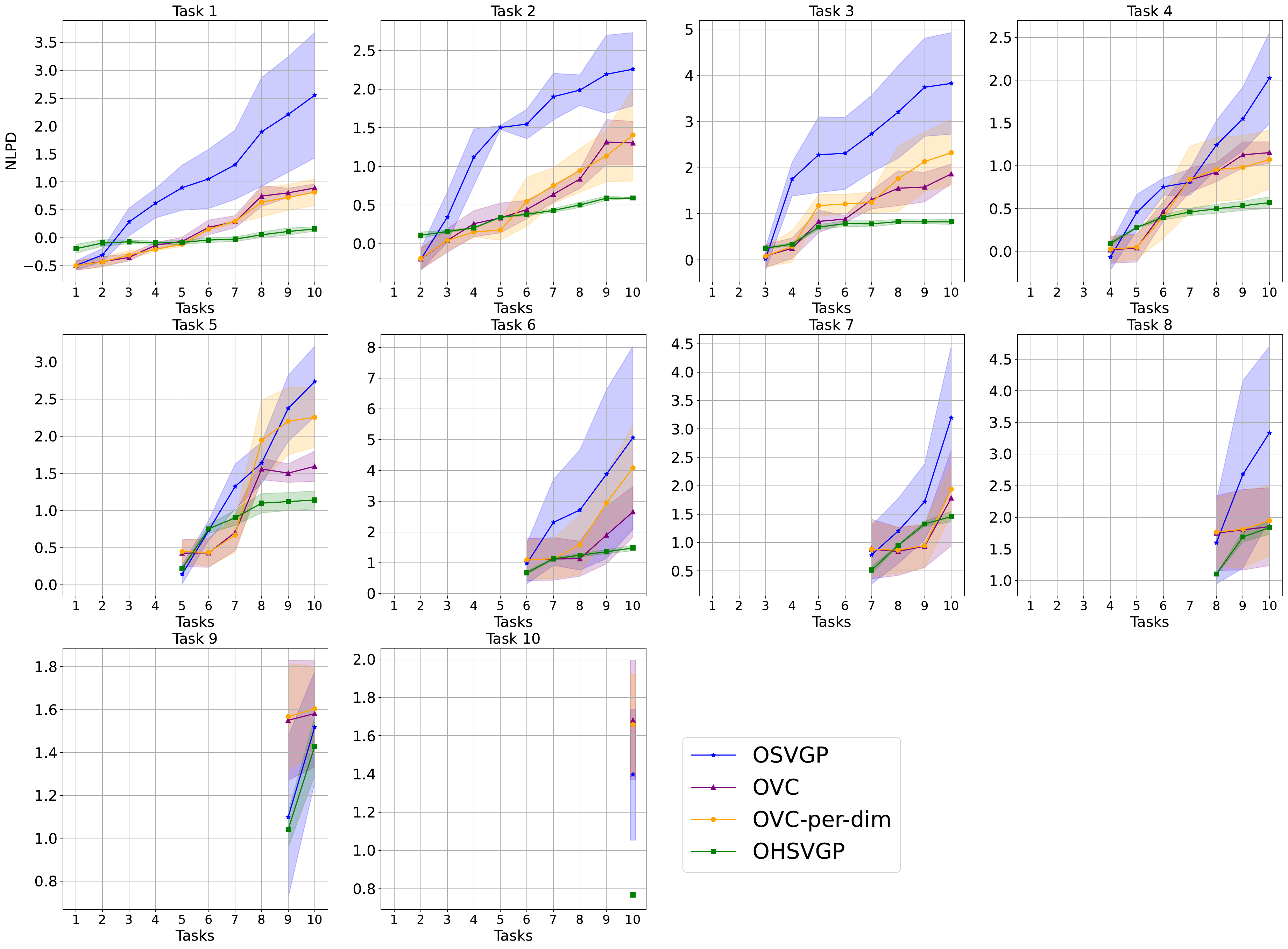}
      \caption{M=100}
  \end{subfigure}
  \caption{Test set NLPD per task after continually learning each task for all the 10 tasks on ERA5 dataset.}
   \label{fig:era_nlpd_fullres}
\end{figure}

\begin{figure}[t]
  \begin{subfigure}[t]{0.95\linewidth}
      \centering
      \includegraphics[width=0.99\linewidth]{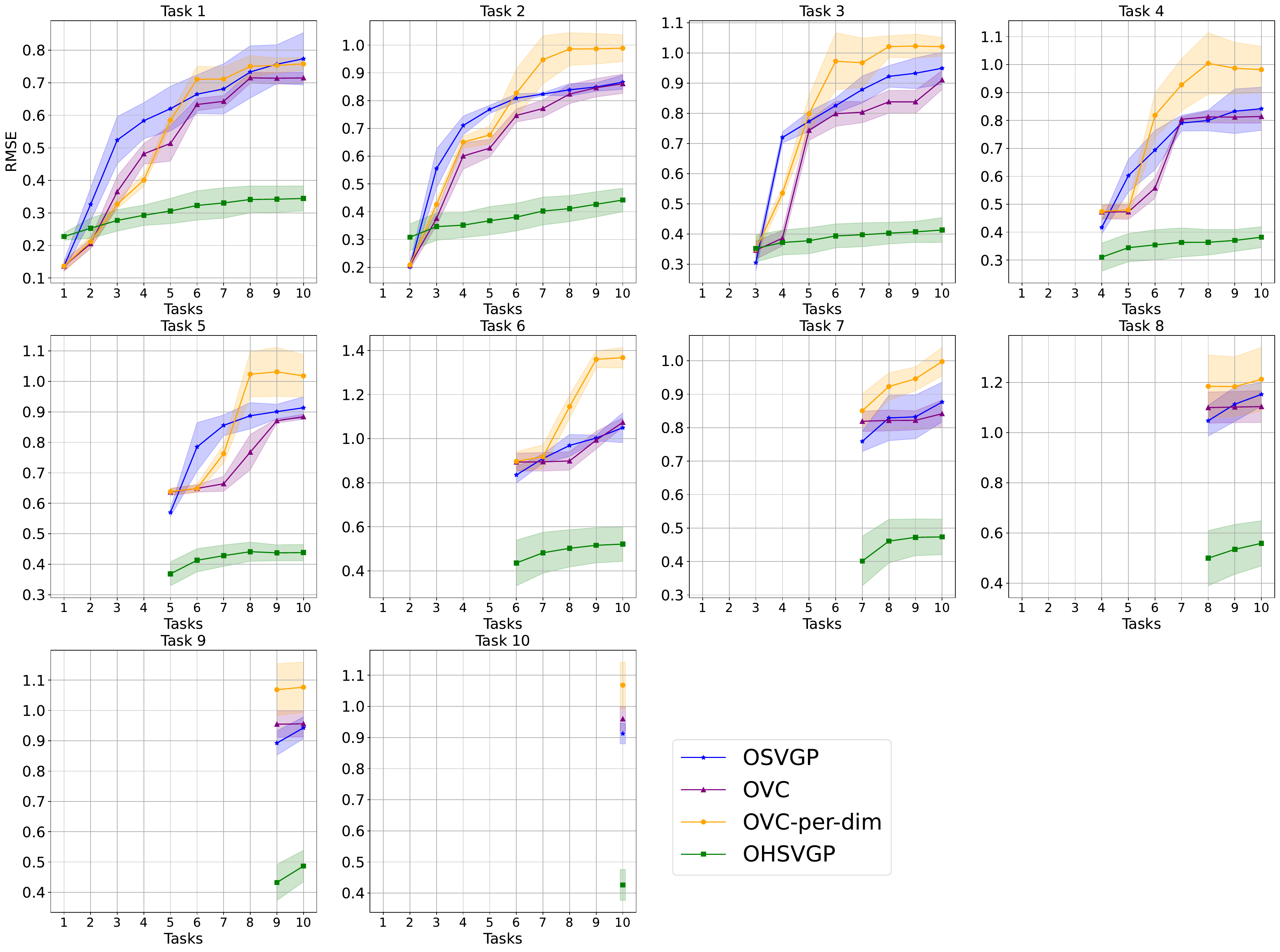}
      \caption{M=50}
  \end{subfigure}
  \hfill
  \begin{subfigure}[t]{0.95\linewidth}
      \centering
      \includegraphics[width=0.99\linewidth]{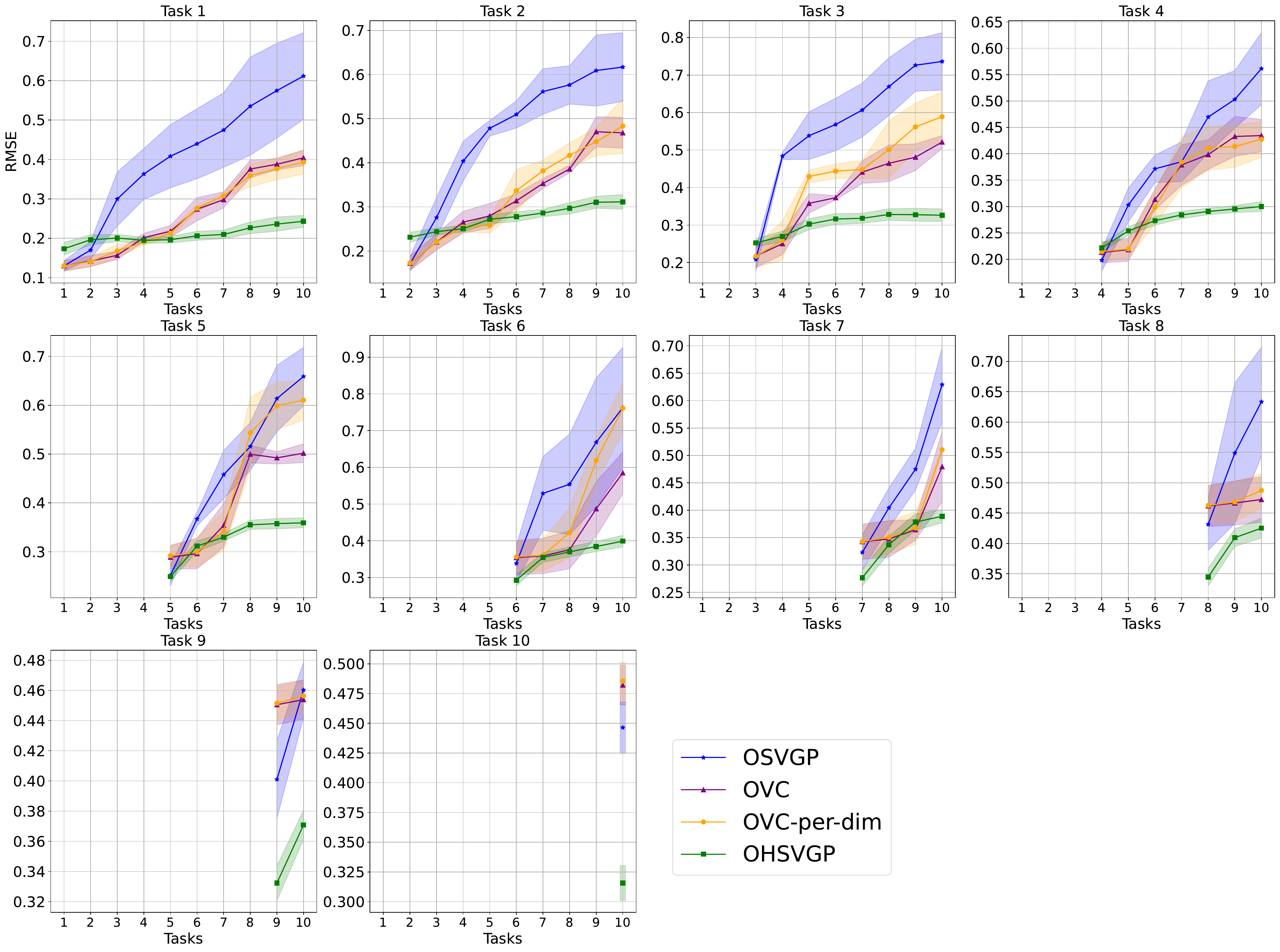}
      \caption{M=100}
  \end{subfigure}
  \caption{Test set RMSE per task after continually learning each task for all the 10 tasks on ERA5 dataset.}
  \label{fig:era_rmse_fullres}
\end{figure}

All models share a two‑layer MLP encoder–decoder, a 20‑dimensional latent space, and a multi‑output GP with independent components; we use \(M\in\{50,100\}\) and train each task for 20 epochs with learning rate 0.005 on a single NVIDIA A6000 GPU. The continual learning in SVGPVAE is achieved by further imposing Elastic Weight Consolidation (EWC; \citep{kirkpatrick2017overcoming}) loss on the encoder and decoder, which yields the vanilla baseline, Online SVGPVAE (OSVGP). Since EWC alone leaves inducing locations non-regularized, a principled online placement rule for the inducing locations will improve the model. Thus, we further consider OVC-SVGPVAE (OVC) which adjusts inducing locations online via Pivoted-Cholesky, and OVC-SVGPVAE per dimension (OVC-per-dim), which makes OVC more flexible by allocating a separate set of $M$ inducing points to every latent dimension. Our method, Online HiPPO SVGPVAE (OHSVGP), replaces standard inducing points in SVGPVAE with HiPPO inducing variables and updates them online via recurrence. Figure~\ref{fig:era_nlpd_fullres} and Figure~\ref{fig:era_rmse_fullres} plot the change of NLPD and RMSE during continual learning for all tasks, respectively. The performance of OHSVGP remains stable throughout, while the other methods all demonstrate obvious catastrophic forgetting shown by the large gaps between performances after learning current task $i$ and after learning final task 10. Two factors plausibly explain the gap: first, standard inducing points cannot adequately cover the long time axis, whereas OHSVGP ties its inducing variables to basis functions rather than time locations; second, the added encoder–decoder complexity makes optimization harder for models that must reuse a limited inducing set.  Increasing $M$ narrows the gap but scales at $\mathcal{O}(M^3)$ computational and $\mathcal{O}(M^2)$ memory cost respectively, underscoring OHSVGP’s superior efficiency.
\clearpage

\section{Conclusion}
We introduce OHSVGP, a novel online Gaussian process model that leverages the HiPPO framework for robust long-range memory in online/continual learning. By interpreting HiPPO's time-varying orthogonal projections as adaptive interdomain GP basis functions, we leverage SSM for improved online GP. This connection allows OHSVGP to harness HiPPO's efficient ODE-based recurrent updates while preserving GP-based uncertainty-aware prediction. Empirical results on a suite of online and continual learning tasks show that OHSVGP outperforms existing online sparse GP methods, especially in scenarios requiring long-term memory. Moreover, its recurrence-based covariance updates yield far lower computational overhead than OSVGP's sequential inducing point optimization. This efficient streaming capability and preservation of historical information make OHSVGP well-suited for real-world applications demanding both speed and accuracy.

\chapter{Your Image is Secretly the Last Frame of a Pseudo Video}
\label{cha:pseudovid}


\begin{tcolorbox}           [enhanced,colback=gray!5!white,colframe=gray!75!black,colbacktitle=red!80!black,fonttitle=\bfseries]
  This chapter is based on \citet{chen2025your}:
  \begin{itemize}
  \item  \underline{Wenlong Chen}$^\ast$, Wenlin Chen$^\ast$, Lapo Rastrelli, and Yingzhen Li. \textbf{Your Image is Secretly the Last Frame of a Pseudo Video}. In \textit{Deep Generative Model in Machine Learning: Theory, Principle and Efficacy (DeLTa) Workshop at ICLR}, 2025.
  \end{itemize}
  The main idea was developed by me while the last author provided useful suggestions. Both co-first authors ($\ast$) contributed to code implementation, experimentation, and paper writing under the supervision of the last author. The third author helped with the experiments related to video diffusion models. I also helped orchestrate other authors’ contributions.
\end{tcolorbox}
In the previous two chapters, we combined techniques from deep sequence modeling and probabilistic methods to build powerful predictive models. This chapter explores sequence modeling in the orthogonal domain of deep generative models. Our key observation is that one of the main differences between two classes of sequential generative models, diffusion models and hierarchical variational encoders, is that diffusion models incorporate inductive bias as direct supervision signals for their sequence of intermediate latent variables. Inspired by this observation and the empirical success of diffusion models, we propose to repeatedly apply data augmentation to images to create pseudo-video sequences and explore the possibilities of improving other image-generative models by jointly modeling the distribution of the sequence consisting of the original image and its corresponding pseudo video containing self-supervised information.

\section{Introduction}
Sequential models form a popular framework for generating images \citep{gulrajani2016pixelvae,sonderby2016ladder,ho2020denoising,liu2022flow,albergo2023stochastic,lipman2022flow,shi2023diffusion,wang2024rectified}. Instead of generating images from noise in one shot, which can be challenging, these models gradually transform noise into images using multiple intermediate steps. Among them, diffusion models \citep{sohl2015deep,ho2020denoising,song2020score} and their variants \citep{kingma2021variational,nichol2021improved,song2021denoising,rissanen2022generative, bansal2023cold, hoogeboom2023blurring} have shown impressive ability to generate high quality images in recent years. 

While diffusion models can be viewed as a special case of a traditional sequential generative model, i.e., hierarchical variational autoencoders (HVAEs) \citep{sonderby2016ladder,maaloe2019biva,vahdat2020nvae}, they tend to outperform standard HVAEs significantly in practice.
The major differences between diffusion models and standard HVAEs are two-fold. First, diffusion models tend to have much more intermediate states, which may help improve the generation quality \citep{huang2021variational}. Second, diffusion models incorporate exact self-supervised information for their intermediate states: they are supposed to match the corrupted (e.g., noisy or blurred) versions of the original target image at different corruption levels. These additional information helps regularize training and guide generation in diffusion models. On the other hand, the intermediate states in standard HVAEs are unobserved and one does not have explicit control of them. Consequently, there may be many different distributions over the intermediate states that are capable of generating images (i.e., the issue of unidentifiability \citep{locatello2019challenging,khemakhem2020variational}). The lack of identifiability of the intermediate states may pose challenges to the optimization during training since it suggests a huge hypothesis space with many sub-optimal solutions.

In this paper, we hypothesize that incorporating such self-supervised information into flexible generative models, as in diffusion models, may be one of the key reasons that they achieve good generation performance. Based on this assumption, we explore the possibility of improving other types of image generative models by extending them to video generative models and artificially injecting self-supervised information in the form of pseudo videos whose frames are created by applying data augmentation to the original images. These pseudo videos are then used to train our video generative models. After that, we compare the generation quality of the last frame of the pseudo video (corresponding to the original image) generated by the video generative model with that of the images generated by the original image generative model. Empirically, we observe improved image generation quality via pseudo video generation compared to the images directly generated by the original image generative model. Theoretically, we provide intuitions on why designing better pseudo videos with data augmentation beyond first-order Markov chains can be helpful.

The contributions of our paper are summarized below.
\begin{itemize}
    \item (Section~\ref{sec:motivation}) Our key insight is that pseudo videos created by corrupting the original target image may provide useful self-supervised information for training generative models. This is demonstrated by a comparison between diffusion models and standard HVAEs as a motivating example.
    \item (Sections~\ref{sec:phenaki} and~\ref{sec:videodiff}) We attempt to improve two popular generative model frameworks, VQVAE \citep{van2017neural} and Improved DDPM \citep{nichol2021improved}, by extending them to their video generative model counterparts and training them on pseudo videos. Empirically, we show that this procedure improves the image generation quality with pseudo videos of just a few frames. In general, our proposed framework provides a new way of scaling up image generative models with their video generative model counterparts for potential performance gain.
    \item (Section~\ref{sec:videodiff}) Theoretically, we analyze the potential issue of certain pseudo videos, including those in the form of first-order Markov chains, in autoregressive video generation frameworks. Based on our theoretical results, we propose a simple and effective approach which avoids the potential issues by constructing higher-order Markov pseudo videos.
\end{itemize}

\section{Motivation}\label{sec:motivation}
Let $\x\in\mathbb{R}^n$ (also denoted as $\z_0$) be an observed variable of interest. The task of generative modeling aims to fit a parametric model $p_{\theta}(\x)$ to estimate the data distribution $p(\x)$ using samples from $p(\x)$. We have already reviewed hierarchical variational autoencoders (HVAEs; \citep{sonderby2016ladder,maaloe2019biva,vahdat2020nvae}) and diffusion models  \citep{sohl2015deep,ho2020denoising,song2020score} in Section~\ref{sec:hvae_back} and Section~\ref{sec:diffusion_back}, respectively, and we refer the readers to these two sections if a refresh of the related background and notation is needed. Our key observation is obtained from the difference between the training objectives for these two classes of deep sequential generative models:

\begin{equation}
\begin{aligned}
 \mathcal{L}_{ELBO}^\text{HVAE}(\bm{x},\mparam, q_{\vparam})=\mathbb{E}_{q_{\vparam}(\z_{1:T}|\z_0)}\left[\log \frac{p(\z_T) \prod_{t=1}^T p_{\mparam}(\z_{t-1}|\z_t)}{q_{\vparam}(\z_{1:T}|\z_0)}\right],
\end{aligned}
\label{eq: have_loss}
\end{equation}

\begin{equation}
\begin{split}
    \mathcal{L}_{ELBO}^{\text{Diffusion}}(\x, \mparam)
    &=\mathbb{E}_{q(\z_{1:T}|\z_0)}\left[\log p_{\mparam}(\z_0|\z_1) - \sum_{t=1}^T  \text{KL}(q(\z_{t-1}|\z_t,\z_0)||p_{\mparam}(\z_t|\z_{t-1})) \right]\\
    &=\mathbb{E}_{q(\z_{1:T}|\z_0)}\left[\log p_{\mparam}(\z_0|\z_1) -\sum_{t=1}^T  \underbrace{\frac{\lVert \mu_{\mparam}(\z_t, t) - \tilde{\mu}(\z_t,\z_0) \rVert^2}{2\sigma_t^2}}_{\text{mean matching}} \right].
\end{split}
\label{eq: diffusion_loss}
\end{equation}

\subsection{Diffusion Model vs Hierarchical VAE}
\begin{figure}[t]
    \centering
    \includegraphics[width=0.6\linewidth]{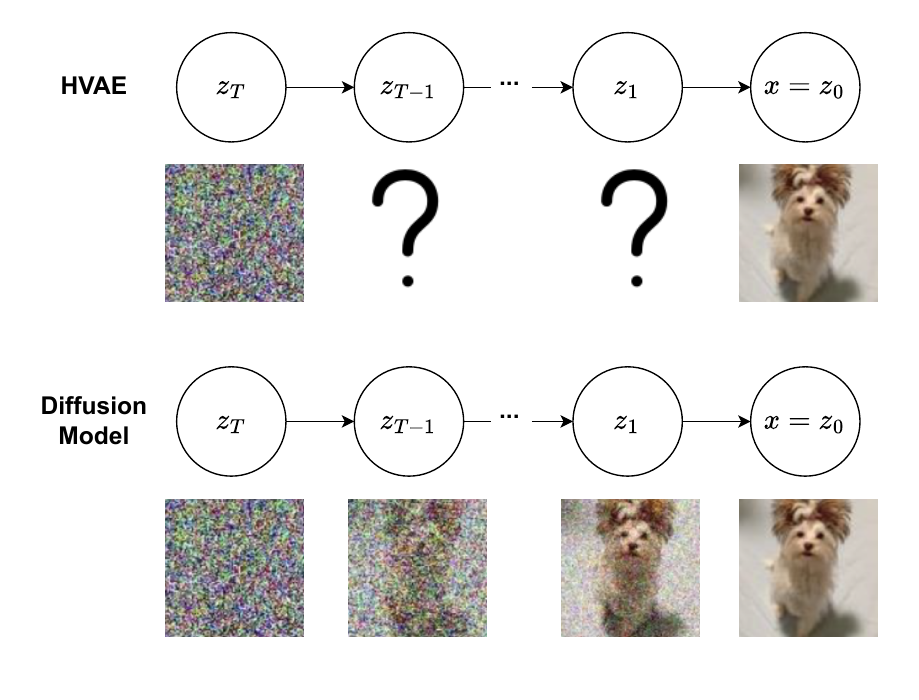}
    \caption{Diffusion model vs HVAE: compared to standard HVAEs, diffusion models incorporate inductive bias for its intermediate latent states with self-supervised signals.}
    \label{fig:diffusion_vs_hvae}
\end{figure}
Compared to the objective for training standard HVAEs (Eq.~\ref{eq: have_loss}), one can see that the objective for training diffusion models (Eq.~\ref{eq: diffusion_loss}) incorporates direct control for the intermediate states. Due to the fixed pre-defined inference model (Eq.~\ref{eq:diffusion_inference}), the objective in Eq.~\ref{eq: diffusion_loss} is simplified. In particular, its second term (mean matching) suggests that at each intermediate step $t$, the mean function $\mu_{\theta}(\z_t,t)$ in the generation model is trained by matching a noisy version $\tilde{\mu}(\z_t, \z_0)$ of the original target image $\z_0:=\x$. In contrast, the objective for standard HVAEs has no such information to impose any control over their intermediate states since their inference models are parameterized by flexible neural networks and keep being updated along with the generation model by end-to-end training. Figure~\ref{fig:diffusion_vs_hvae} illustrates this difference between diffusion models and standard HVAEs. Without the aid of the self-supervised information, the intermediate states in standard HVAEs are very flexible, which implies that they are unidentifiable in the sense that there could be many plausible distributions over them that can generate images (i.e., many sub-optimal solutions), which makes the optimization harder as $T$ becomes larger. In contrast, diffusion models may benefit from the self-supervised information (i.e., noisy images) for their intermediate states, for which optimization can be less challenging even with large $T$, since this inductive bias pins down one specific route of generation, which eliminates other solutions that are inconsistent with this inductive bias.

\begin{figure}[t]
    \centering
     \begin{subfigure}[t]{0.45\textwidth}
      \centering
      \includegraphics[width=\textwidth]{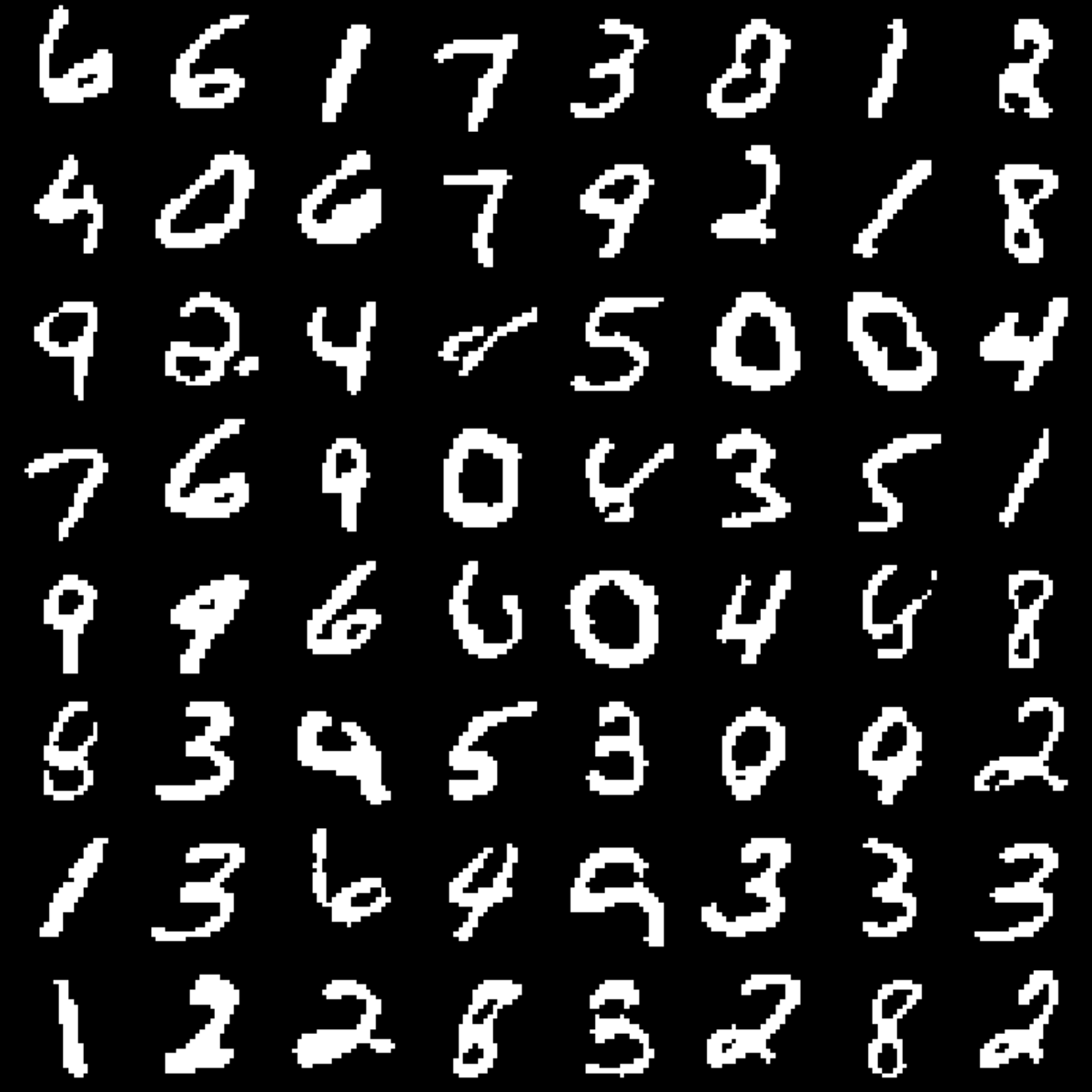}
      \caption{HVAE with heat equation encoder}
    \end{subfigure}
    \hfill
    \begin{subfigure}[t]{0.45\textwidth}
      \centering
      \includegraphics[width=\textwidth]{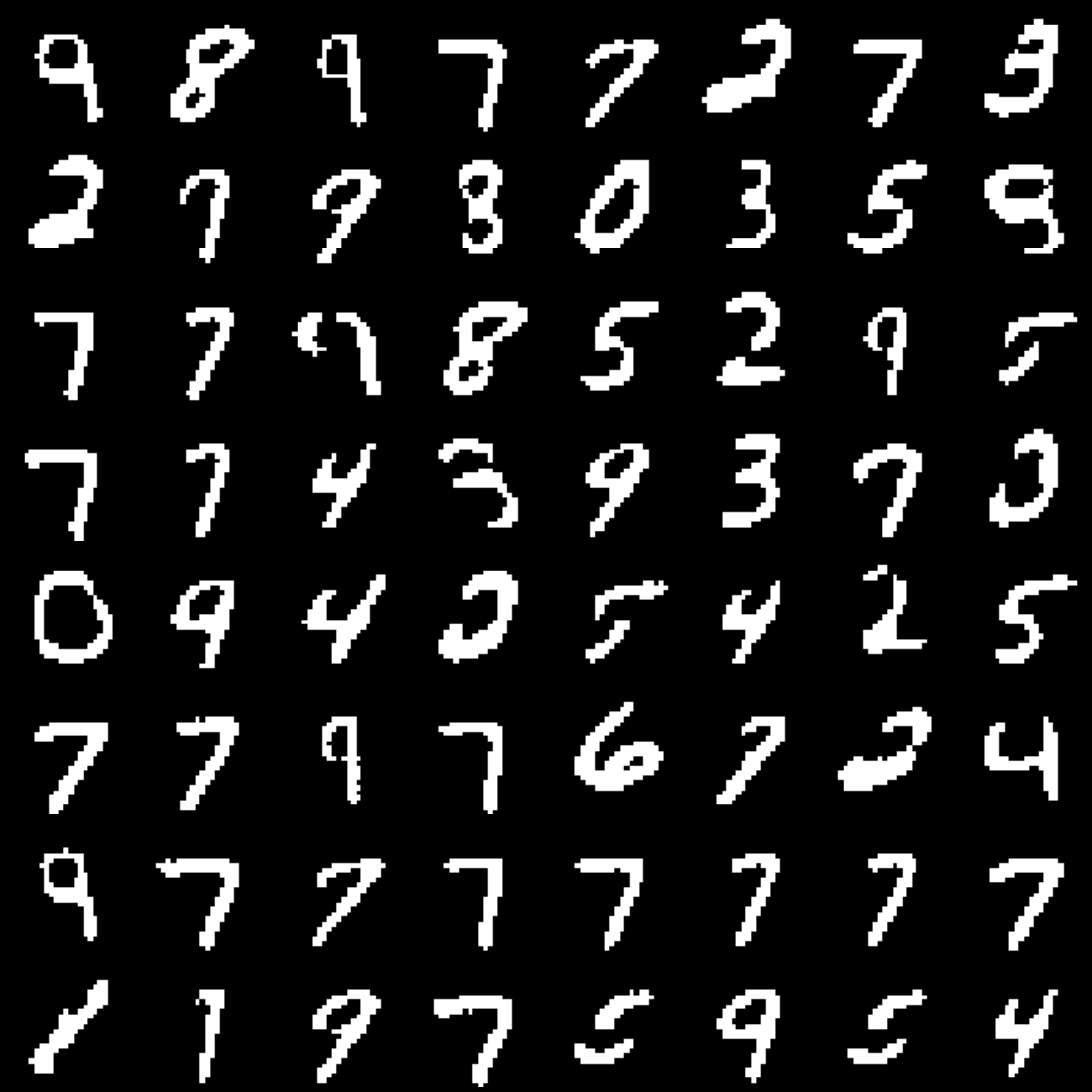}
      \caption{Standard HVAE with learnable encoder}
    \end{subfigure}
    \caption{Generated digits from HVAE with encoder fixed according to the heat equation and standard HVAE with learnable encoder. Both HVAEs use the same decoder architecture as in \citet{rissanen2022generative}}
    \label{fig:heat}
\end{figure}

\begin{table}[htbp]
    \centering
    \caption{Inception Score (IS) of generated digits from HVAE with encoder fixed according to the heat equation and standard
    HVAE with learnable encoder.}
    \begin{tabular}{c c c}
    \hline
           & HVAE with heat equation encoder & Standard HVAE with learnable encoder\\
    \hline
      IS    &  \textbf{9.32} & 7.27\\
    \hline
    \end{tabular}
    \label{table: inception}
\end{table}

To show the critical role of self-supervised information, we train an HVAE with a similar architecture as in \citet{rissanen2022generative} on the binarized MNIST dataset \citep{salimans2015markov} as a proof of concept, where the encoder is fixed according to the heat equation \citep{rissanen2022generative}:
\begin{equation}
    q(\z_{1:T}|\z_0)=\prod_{t=1}^T q(\z_t|\z_0)=\prod_{t=1}^T \mathcal{N}(\z_t|F_h(t)\z_0,\sigma_h^2),
\end{equation}
where $F_h$ is the matrix for simulating the heat equation until time $t$. This can be seen as an HVAE trained with explicit supervision signals from pseudo videos created by the heat equation. We create $T=18$ frames of pseudo videos for each training image. Figure~\ref{fig:heat} demonstrates that the HVAE trained with pseudo videos created by the heat equation can generate much sharper and diverse digits than a standard HVAE which uses the same architecture but with a learnable encoder. We also report in Table~\ref{table: inception} the Inception Score (IS) of the generated images for quantitative comparison and HVAE trained with pseudo videos created by the heat equation achieves noticeably higher IS than standard HVAE. Here, we deliberately use heat equation instead of Gaussian noise to create pseudo frames to show that there are plenty of choices to create pseudo videos that contain useful self-supervised information besides adding Gaussian noise as in standard diffusion models.

\subsection{Improving Image Generation via Pseudo Video Generation}
With the concept of pseudo videos and their effectiveness in diffusion models, we are interested in the following open generic question in this work: 
\begin{adjustwidth}{2em}{1.5em}
\textit{Is it possible to improve other types of image generative models by jointly modelling the distribution of the original image and its corresponding pseudo video which contains self-supervised information?}
\end{adjustwidth}
The answer is affirmative. In this work, we show empirical evidence of the advantages of pseudo videos on two types of generation models, namely improving VQVAE \citep{van2017neural}  (Section~\ref{sec: phenaki_exp}) and DDPM \citep{nichol2021improved} (Section~\ref{sec: video_diff_exp}) with Phenaki \citep{villegas2022phenaki} and Video Diffusion \citep{harvey2022flexible} trained on pseudo videos, respectively. Moreover, we provide theoretical arguments favouring the use of more expressive ways of creating pseudo videos in the autoregressive video generation framework (Section~\ref{sec: theory}), beyond the first-order Markov strategy as typically used in the forward process of standard diffusion models.

\section{Improved Reconstruction and Generation in VQ-VAE with Pseudo Videos}\label{sec:phenaki}
\label{sec: phenaki_exp} 
In this section, we utilize pseudo videos to improve image generation quality of Vector Quantized Variational Autoencoder (VQVAE) \citep{van2017neural}, where the latent variables $\z$ are discrete tokens. 

\subsection{Preliminaries of VQVAE}
Vector Quantized Variational Autoencoder (VQVAE), similar to VAE, is also a deep latent variable model assuming a data generative process of the form:
\begin{equation}
 \bm{x} \sim \int p_{\mparam}(\bm{x} | \latent)p_{\mparam}(\latent) d\latent.
\end{equation}
The differences of VQVAE from VAE are two folds. First, while VAE assumes $\latent$ to be continuous and $p_{\mparam}(\latent)$ to be a standard Gaussian apriori, VQVAE employs discrete latent variables $z \in \{1, \cdots, K\}$, and parameterizes $p_{\mparam}(z)$ with another deep neural network. Second, while VAE trains the generative model parameters $\mparam$ associated with the likelihood $p_{\mparam}(\bm{x}|\z)$ via approximate MLE, VQVAE follows a two-step training procedure by fitting the likelihood $p_{\mparam}(\bm{x}|z)$, and prior $p_{\mparam}(z)$, separately. Given a training set of observations $\{\x_n\}_{n=1}^N$, the first step is learning the conditional distribution $p_{\mparam}(\bm{x}|z)$, and it is achieved with an auxiliary encoder, $z_{\vparam}(\x):\mathcal{X}\rightarrow \{1, \cdots, K\}$, (similar to the concept of inference model in VAE):
\begin{equation}
    z_{\vparam}(\x) = \argmin_{i\in \{1, \cdots, K\}}||f_{\vparam}(\x)-\z^{\text{emb}}_i||,
    \label{eq:quantization}
\end{equation}
where $f_\phi(\cdot):\mathcal{X}\rightarrow\mathbb{R}^{d}$ is parameterized by a neural network and $\z^{\text{emb}}_i\in \mathbb{R}^{d}$ is a $d$-dimensional latent embedding associated with discrete code $i$. We then train the encoder and the likelihood jointly with the following loss for each observation:
\begin{equation}
\begin{split}
    \mathcal{L}_{VQVAE}(\x; \mparam, \vparam, \{\z_i\}_{i=1}^K) &= -\log p_{\mparam}(\x|k) + ||sg(f_{\vparam}(\x)) - \z_{k}^{\text{emb}}||_2^2 \\& \quad\quad + \beta||f_{\vparam}(\x) - sg(\z_{k}^{\text{emb}})||_2^2, \quad k = z_{\vparam}(\x)
\end{split}
\label{eq:vqvae_loss}
\end{equation}
where $sg(\cdot)$ denotes the ``stop gradient" operator. The first term in the objective is the negative log-likelihood which encourages accurate reconstruction, and the second term is the quantization loss, ensuring that the latent embedding is close to the encoder network outputs $f_{\phi}(\x)$, and the third term is the commitment loss which makes sure the update for encoder parameters is not too fast when $\beta<1$, so that the encoder can commit to an embedding. 

Note that the latent space can be represented by multi-dimensional discrete tokens. For example, for an RGB image $\x$ with size $H \times W \times 3$, $f_{\phi}(\cdot)$ is typically parameterized by a convolutional neural network which maps $\x$ to an embedding with size $h\times w \times d$, and each dimension of $f_{\vparam}(\x)$ is quantized into one of $K$ discrete tokens as in Eq.~\ref{eq:quantization}, so that $\z_{\vparam}(\x)$ is a tensor of size $h\times w$ with discrete elements. 

In the second step of learning $p_{\mparam}(\latent)$, we map each $\x$ in the training set into latent discrete tokens, which forms an empirical distribution of the discrete latent variables, $p(\latent) \approx p_{\text{empirical}}(\latent)=\frac{1}{N}\sum_{n=1}^N \delta_{\z_{\vparam}(\x_n)}(\z)$. Next, we can choose a particular factorization order for $p_{\text{empirical}}(\latent)$, and fit an autoregressive model \citep{oord2016conditional, oord2016pixel}, $p_{\mparam}(\latent_t|\latent_1, \cdots, \latent_{t-1})$, to predict the next discrete code given the sequence of previously generated tokens by maximum likelihood, with training set obtained from $p_{\text{empirical}}(\latent)$. The final prior model is then $p_{\mparam}(\latent) = p_{\mparam}(\latent_1) p_{\mparam}(\latent_2|\latent_1)\cdots (\latent_{T}|\latent_{t< T_z})$, where $T_z$ is the total number of discrete tokens in $\z$ (e.g., $T_z=h \times w$ in the above image-generative example).

\subsection{Experiments}
To incorporate the pseudo video sequences into the model, we employ a video generative model counterpart of VQVAE, C-ViViT \citep{villegas2022phenaki}, to compress the pseudo videos of size $T \times H \times W \times 3$ into latent discrete tokens of size $t \times h \times w$. Instead of convolutional neural network, C-ViViT includes temporal transformer layers to handle the temporal dimension and in its hidden layers, the height and width dimension are flatten into a single ``spatial dimension" of size $h \times w$, and C-ViViT also considers spatial transformer layers to compute attention along the spatial dimension. In addition to standard VQVAE loss defined in Eq.~\ref{eq:vqvae_loss}, C-ViViT also adds a GAN style adversarial loss \citep{karras2020analyzing} and an image perceptual loss \citep{johnson2016perceptual,zhang2018unreasonable} to the final objective. For the latent space created by a C-ViViT, we consider two generative models to fit a prior $p(\z)$ for the latent tokens:
\begin{itemize}
    \item VideoGPT \citep{yan2021videogpt} uses an autoregressive (AR) Transformer \citep{brown2020language} to factorize $p(\z) = \prod_{i=1}^{T_z} p(\z_i|\z_{<i})$ in an autoregressive manner with masked self-attention, where $T_z$ is the total number of the tokens, and is trained with maximum likelihood. VideoGPT is the video generative model counterpart extended from ImageGPT \citep{chen2020generative}.
    \item Phenaki \citep{villegas2022phenaki} uses a bidirectional Transformer \citep{vaswani2017attention} to predict all tokens in one shot rather than in an autoregressive manner. At each training step, one samples a masking ratio $\gamma\in(0,1)$, and the model is trained by predicting the masked tokens given the unmasked ones. During generation, all tokens are masked initially, and the model predicts all tokens simultaneously. The generation will then be refined following a few steps of re-masking and re-prediction, with a decreasing masking ratio as we proceed. Phenaki is the video generative model counterpart extended from MaskGit \citep{chang2022maskgit}.
\end{itemize}

We compare the generation quality of the last frames (corresponding to the original images) in the generated videos from the video generative model trained on pseudo videos to the images generated by the original image generative model trained on the original target images. We include experimental setups below and we also include detailed hyperparameters in Appendix~\ref{appendix:hyper_pseudo_vid}.

\textbf{Datasets.} 
We create 8-frame and 18-frame pseudo videos using images from two benchmark datasets, CIFAR10 (32 ${\times}$ 32) \citep{cifar} and CelebA (64 ${\times}$ 64) \citep{liu2015deep}, with the blurring technique from \citet{bansal2023cold}. To create 8-frame pseudo videos, we blur the images recursively 7 times with a Gaussian kernel of size 11$\times$11 and standard deviation growing exponentially at the rate of 0.05. For 18-frame pseudo videos, we blur the images 17 times with same Gaussian kernel but with a standard deviation growing exponentially at the rate of 0.01. The pseudo videos are organized such that the last frames are the original target images and the first frames are the blurriest images. Figure~\ref{fig:pseudo_video} shows an example of such a pseudo video. We use 1-frame to denote images generated by the original image VQVAE model trained on the original target images. Initially, we also tried using Gaussian noise as data augmentation but we did not manage to obtain decent results. We believe it might be because VQVAE and C-ViViT compress the data to discrete latent tokens, and if using Gaussian noise as data augmentation, it will increase the variability of the pixel values in each patch. Therefore it would be hard to compress the pseudo videos with so much variability into discrete tokens defined with finite codebook size. In contrast, blurring will reduce the variability in the image and make the pixels close to each other have similar values, which makes it easier for a VQVAE or C-ViViT with finite codebook size to compress the pseudo videos.
\begin{figure}[t]
    \centering
    \includegraphics[width=0.9\linewidth]{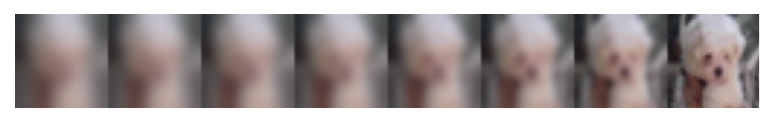}
    \caption{An example of pseudo video constructed by transforming an image of a dog using blurring.}
    \label{fig:pseudo_video}
\end{figure}

\textbf{Network Architectures.}\footnote{For 1-frame models, we have tried using deeper architectures but observed no improvement in performance, which suggests that our pseudo video framework can help further improve the performance of generative models while simply increasing the model size becomes ineffective.}
We train VQVAE based generative models with a codebook size ($K$) of 1024 for the discrete latent tokens. Specifically, we use C-ViViT as compression model (autoencoder) for pseudo videos. For a pseudo video with shape $(T, H, W, 3)$, we compress it to discrete latent tokens with shape $(\frac{T}{2}, \frac{H}{4}, \frac{W}{4})$ by extracting video patches of size $2\times4\times4$. For images with shape $(1,H,W,3)$, we use VQVAE to compress them to tokens with shape $(1,\frac{H}{4}, \frac{W}{4})$ by extracting image patches of size $4\times4$. We then consider two video generative models, VideoGPT and Phenaki (and their image generative model counterparts, ImageGPT and MaskGit), to fit the prior over the latent tokens.
\begin{itemize}
    \item \textbf{VQVAE/C-ViViT (reconstruction).} We use a similar architecture as in \citet{villegas2022phenaki}, which has a 4-layer spatial Transformer, a 4-layer temporal Transformer, with a hidden dimension of 512. For 1-frame VQVAE model, we consider a 8-layer spatial Transformer since there is no temporal dimension.
    \item \textbf{ImageGPT/VideoGPT (AR generation).} We use a similar architecture as in \citet{yan2021videogpt}, which has a 8-layer autoregressive (AR) Transformer with 4 attention heads and a hidden dimension of 144.
    \item \textbf{MaskGit/Phenaki (latent masked generation).} We use a similar architecture as in \citet{villegas2022phenaki}, which has a 6-layer bidirectional Transformer with a hidden dimension of 512.
\end{itemize}

\begin{figure}[t]
\centering
 \begin{subfigure}[t]{0.24\textwidth}
      \centering
      \includegraphics[width=\textwidth]{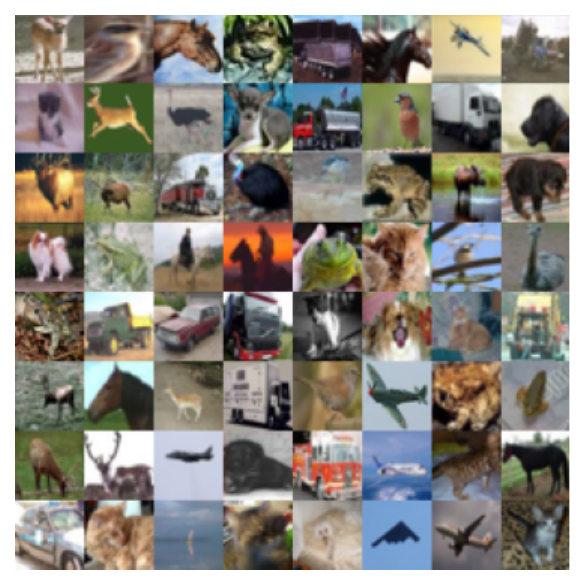}
      \caption{Ground-truth}
  \end{subfigure}
\hfill
\begin{subfigure}[t]{0.24\textwidth}
      \centering
      \includegraphics[width=\textwidth]{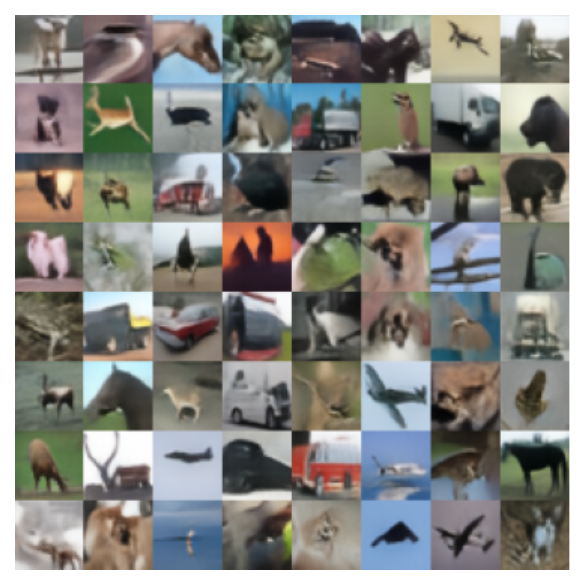}
      \caption{1-frame reconstruction}
  \end{subfigure}
\hfill
\begin{subfigure}[t]{0.24\textwidth}
      \centering
      \includegraphics[width=\textwidth]{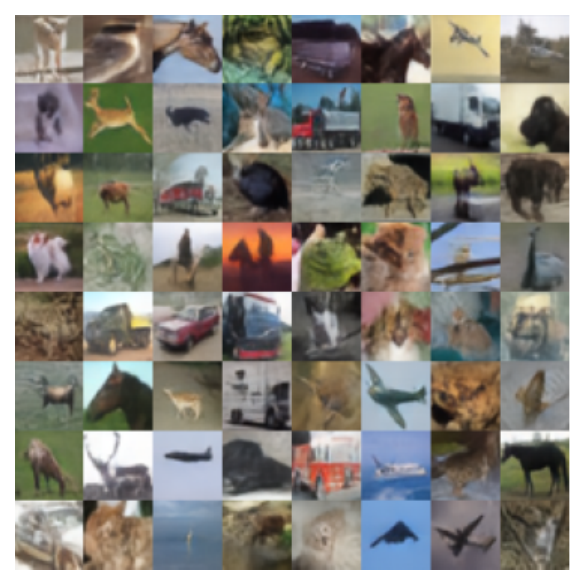}
      \caption{8-frame reconstruction}
  \end{subfigure}
\hfill
\begin{subfigure}[t]{0.24\textwidth}
      \centering
      \includegraphics[width=\textwidth]{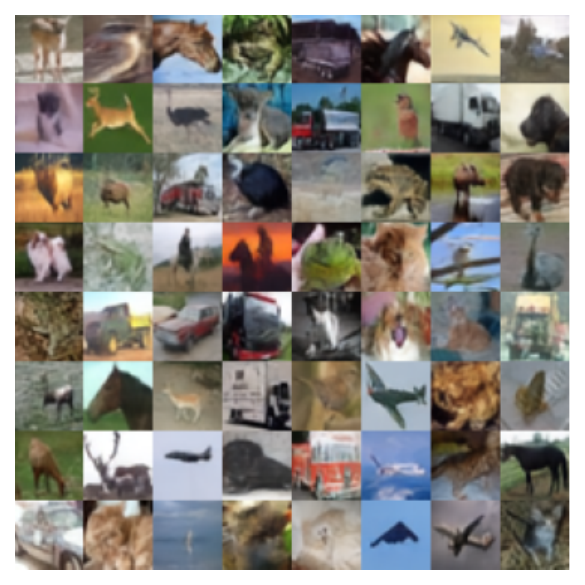}
      \caption{18-frame reconstruction}
  \end{subfigure}

\caption{Ground-truth images and reconstructed images from VQVAE/CViViT trained on CIFAR10.}
\label{fig:recon_cifar10}
\end{figure}
 
\begin{figure}[t]
\centering
 \begin{subfigure}[t]{0.24\textwidth}
      \centering
      \includegraphics[width=\textwidth]{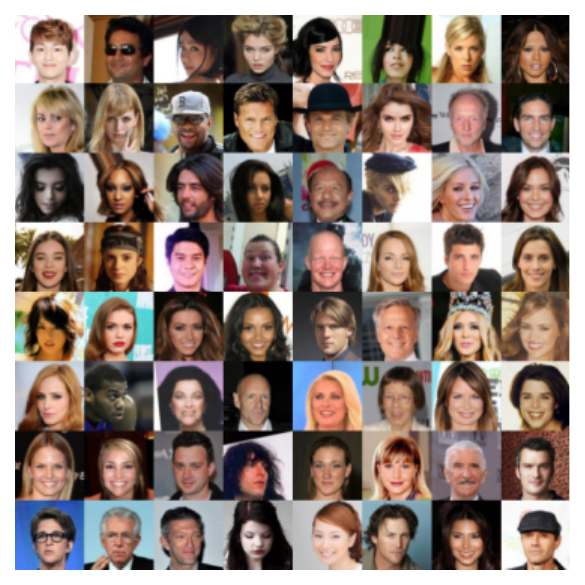}
      \caption{Ground-truth}
  \end{subfigure}
\hfill
\begin{subfigure}[t]{0.24\textwidth}
      \centering
      \includegraphics[width=\textwidth]{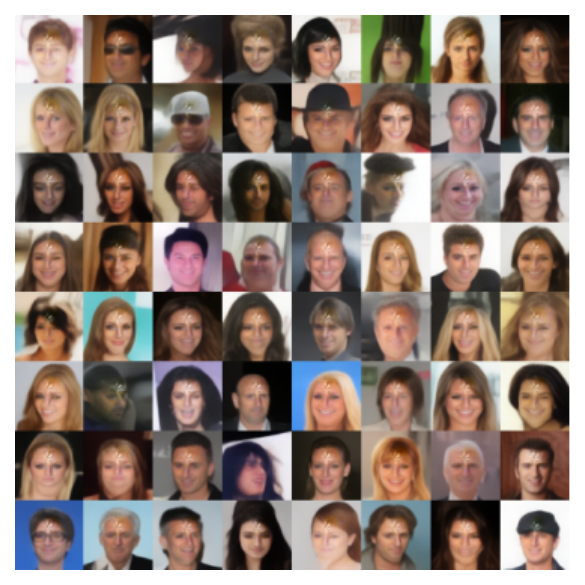}
      \caption{1-frame reconstruction}
  \end{subfigure}
\hfill
\begin{subfigure}[t]{0.24\textwidth}
      \centering
      \includegraphics[width=\textwidth]{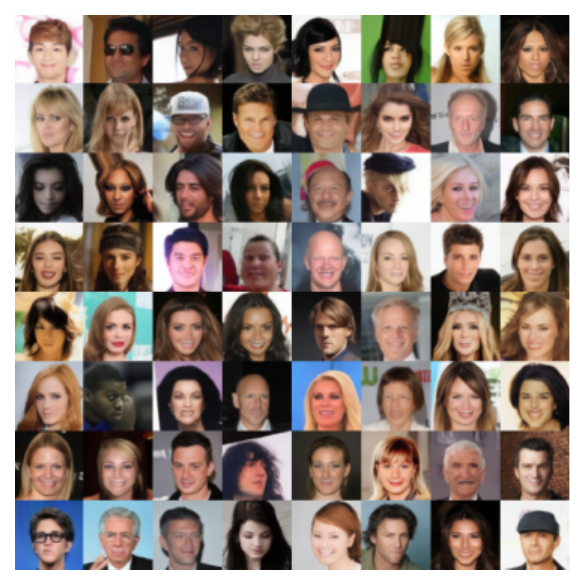}
      \caption{8-frame reconstruction}
  \end{subfigure}
\hfill
\begin{subfigure}[t]{0.24\textwidth}
      \centering
      \includegraphics[width=\textwidth]{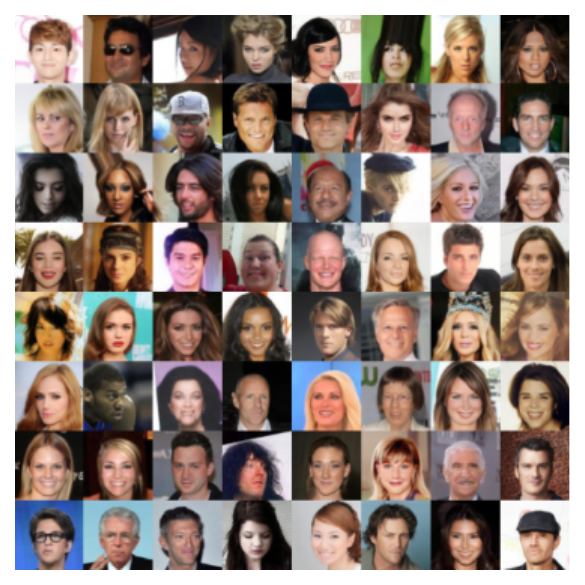}
      \caption{18-frame reconstruction}
  \end{subfigure}
\caption{Ground-truth images and reconstructed images from VQVAE/CViViT trained on CelebA.}
\label{fig:recon_celeba}
\end{figure}

\textbf{Evaluation Metric.} 
We compute Frechet Inception Distance (FID $\downarrow$) \citep{heusel2017gans} with 50k samples to evaluate the quality of images, either from the last-frames of videos generated by video models trained on pseudo videos, or images generated by image models trained on the original target images. We also report Peak Signal-to-noise Ratio (PSNR $\uparrow$) as an additional metric for reconstruction result.

\begin{table}[t]
\centering
\caption{Last-frame FID/PSNR of images produced by C-ViViT (reconstruction), VideoGPT (AR generation) and Phenaki (latent masked generation) trained on pseudo videos constructed from CIFAR10 and CelebA images. 1-frame results are obtained from their image counterparts VQVAE (reconstruction), ImageGPT (AR generation) and MaskGit (latent masked generation) trained on original CIFAR10 and CelebA images.}
\label{table:phenaki-results}
\begin{adjustbox}{width=\textwidth}
\begin{tabular}{@{}ccccccc@{}}
\toprule
\multirow{2}{*}{}        & \multicolumn{3}{c}{CIFAR10}                                                        & \multicolumn{3}{c}{CelebA}                                                  \\ \cmidrule(l){2-7} 
                         & \multicolumn{1}{c}{1-frame} & \multicolumn{1}{c}{8-frame}        & 18-frame       & \multicolumn{1}{c}{1-frame} & \multicolumn{1}{c}{8-frame} & 18-frame       \\ \midrule
Reconstruction (FID)           & \multicolumn{1}{c}{24.53}   & \multicolumn{1}{c}{13.81}          & \textbf{11.26} & \multicolumn{1}{c}{24.62}   & \multicolumn{1}{c}{5.72}    & \textbf{2.27}  \\ 
Reconstruction (PSNR)           & \multicolumn{1}{c}{19.97}   & \multicolumn{1}{c}{21.86}          & \textbf{25.89} & \multicolumn{1}{c}{20.52}   & \multicolumn{1}{c}{24.66}    & \textbf{29.68}  \\ \midrule
AR Generation (FID)            & \multicolumn{1}{c}{72.06}   & \multicolumn{1}{c}{\textbf{54.60}} & 69.23          & \multicolumn{1}{c}{32.98}   & \multicolumn{1}{c}{30.19}   & \textbf{28.08} \\
Latent Masked Generation (FID) & \multicolumn{1}{c}{49.27}   & \multicolumn{1}{c}{\textbf{35.50}} & 47.65          & \multicolumn{1}{c}{27.34}   & \multicolumn{1}{c}{16.87}   & \textbf{16.66} \\ \bottomrule
\end{tabular}
\end{adjustbox}
\end{table}

\textbf{Results.} Table \ref{table:phenaki-results} shows the last-frame FID and PSNR of C-ViViT for reconstruction and that of VideoGPT and Phenaki for AR and masked generation on CIFAR10 and CelebA, respectively. The 1-frame results correspond to the performance of their image model counterparts (i.e., VQVAE for image reconstruction, and ImageGPT and MaskGit for AR and masked image generation, respectively). We observe that pseudo videos indeed help improve the training of the C-ViViT as the reconstruction quality of the last frame is significantly improved with a few more frames. We show reconstructed CIFAR10 and CelebA images from different C-ViViT models in Figures~\ref{fig:recon_cifar10} and~\ref{fig:recon_celeba}, respectively. It can be seen that for both datasets, the reconstructed images trained with 1-frame models are relatively smoothed and this issue is resolved by using pseudo videos which produce sharper images. Quantitatively, reconstruction FID improves as more frames are used. Pseudo videos also improve the image generation performance compared to the 1-frame results. Interestingly, we see a diminishing return as we include more frames. For CelebA images, 18-frame pseudo videos help achieve the best image generation performance for both AR and latent masked generation, but 8-frame models achieve a comparable performance. For CIFAR10 images, 8-frame pseudo videos result in better image generation performance than 18-frame pseudo videos, which suggests the latent tokens have a more complex prior distribution in order to reconstruct pseudo videos with 18 frames well and therefore this prior is more difficult for VideoGPT or Phenaki to capture. In summary. while pseudo videos help improve the performance, the optimal number of frames may depend on the dataset and the augmentation strategy. This diminishing return is not a severe issue in practice since practitioners may prefer to improve the generation with just a few more pseudo frames to avoid introducing a high computational cost.

\section{Improved Generation via Higher-order Markov Pseudo Videos}\label{sec:videodiff}
Since pseudo video contains extra information on the target image, we would like to better understand what type of additional information can be leveraged to achieve better image generation quality. In practice, since there are infinitely many data augmentation strategies to create pseudo videos, we would like to study which types of data augmentation are more favorable to shed light on the practical design of pseudo videos.
\subsection{Is First-order Markov Chain the Optimal Choice for Creating Pseudo Videos?}
\label{sec: theory}
Consider pseudo video $\z_{1:T}$, where $\z_T:=\x$ is the target image\footnote{Unlike diffusion models where $\z_0$ denotes the original image, from here onwards we will denote the original image by $\z_T$ since it is the last frame of the pseudo video.}, and $\z_t$'s ($t < T$) are some noisy measurements of $\z_T$ created with some data augmentation. We show that in principle, generative models that utilize more pseudo frames to generate $\z_T$ are more likely to achieve better performance, and passing information of the target image to the pseudo frames with a first-order Markov chain as in standard diffusion models may not be the optimal choice. We demonstrate it with the following autoregressive video generation example.

Consider building a generative model $g$ that predicts $\z_T$ by taking advantage of the information in $\z_{T-1}$ alone. We train the model by minimizing the reconstruction error. The minimum of this loss is
\begin{equation}
\mathcal{L}^*_1 = \min_g \mathbb{E}_{p(\z_T, \z_{T-1})}[||\z_T-g(\z_{T-1})||_2^2] = \mathbb{E}_{p(\z_{T-1})}[\text{Var}_{p(\z_T|\z_{T-1})}(\z_T)],
\end{equation}
which is achieved at the non-parametric optimum $g^*(\z_{T-1})=\mathbb{E}_{p(\z_T|\z_{T-1})}[\z_T]$, where $\text{Var}_{p(\z_T|\z_{T-1})}(\z_T)=\mathbb{E}_{p(\z_T|\z_{T-1})}\{(\z_T-\mathbb{E}_{p(\z_T|\z_{T-1})}[\z_T])^\top(\z_T-\mathbb{E}_{p(\z_T|\z_{T-1})}[\z_T])\}$. Now consider another model $h$ that predicts $\z_T$ using both $\z_{T-1}$ and $\z_{T-2}$ by minimizing the reconstruction error again. The minimum reconstruction error this time is
\begin{equation}
    \mathcal{L}_2^* = \min_h \mathbb{E}_{p(\z_T, \z_{T-1}, \z_{T-2})}[||\z_T-h(\z_{T-1}, \z_{T-2})||_2^2] = \mathbb{E}_{p(\z_{T-1}, \z_{T-2})}[\text{Var}_{p(\z_T|\z_{T-1}, \z_{T-2})}(\z_T)],
\end{equation}
which is achieved at the non-parametric optimum $h^\ast(\z_{T-1}, \z_{T-2}) = \mathbb{E}_{p(\z_T|\z_{T-1}, \z_{T-2})}[\z_T]$. The benefit of using more pseudo frames to generate the target image can be seen from the fact that the minimum reconstruction error will never increase by using more pseudo frames since by the law of total variance,
\begin{equation}
    \mathcal{L}^*_2 -  \mathcal{L}_1^* = -\mathbb{E}_{p(\z_{T-1})}\{\text{Var}_{p(\z_{T-2}|\z_{T-1})}(\mathbb{E}_{p(\z_T|\z_{T-1}, \z_{T-2})}[\z_T])\} \leq 0.
    \label{eq: recon_ineq}
\end{equation}
Moreover, the non-optimality of creating pseudo video via first-order Markov chain becomes clear: the first-order Markov data augmentation implies that $p(\z_T|\z_{T-1})=p(\z_T|\z_{T-1}, \z_{T-2})$ and consequently $\mathcal{L}_2^* = \mathcal{L}_1^*$. More specifically, for strict inequality in Eq~\ref{eq: recon_ineq}, we need to avoid $p(\z_T|\z_{T-1}) = p(\z_T|\z_{T-1}, \z_{T-2})$, which is equivalent to avoiding the use of either first-order Markov chain $\z_T \rightarrow \z_{T-1} \rightarrow \z_{T-2}$ or $\z_T \leftarrow \z_{T-1} \rightarrow \z_{T-2}$. This analysis is informative for us to design better pseudo videos, for example through data augmentation with higher-order Markov chains. We formalize the above informal reasoning into Theorem~\ref{theorem} and provide the formal proof in Appendix~\ref{appendix:proof}.

\begin{theorem} \label{theorem}
Consider two video generative models that predict the last-frame $\z_T$ based on some previous frames. Suppose that they take the form of $\hat{\z}_T^{(g)}=g(\z_{s_1}, \z_{s_2}, \cdot\cdot\cdot, \z_{s_k})$ and $\hat{\z}_T^{(h)}=h(\z_{s_1}, \z_{s_2}, \cdot\cdot\cdot, \z_{s_l})$, respectively, where $T>s_1>\cdot\cdot\cdot>s_k>\cdot\cdot\cdot>s_l$. Then, we have
\begin{equation}
    \min_{\hat{\z}_T^{(h)}} \mathbb{E}_{p(\z_T, \z_{s_1}, \cdot\cdot\cdot, \z_{s_l})}[||\z_T-\hat{\z}_T^{(h)}||_2^2] \leq \min_{\hat{\z}_T^{(g)}} \mathbb{E}_{p(\z_T, \z_{s_1}, \cdot\cdot\cdot, \z_{s_k})}[||\z_T-\hat{\z}_T^{(g)}||_2^2],
\end{equation}
where the equality attains if $\z_T|\z_{s_1},\cdot\cdot\cdot, \z_{s_k} \,{\buildrel d \over =}\,  \z_T|\z_{s_1},\cdot\cdot\cdot, \z_{s_l}$.
\end{theorem}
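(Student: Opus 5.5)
Write $\bm{a}:=(\z_{s_1},\dots,\z_{s_k})$ for the shorter conditioning set and $\bm{b}:=(\z_{s_{k+1}},\dots,\z_{s_l})$ for the extra frames. The key observation is that, since $s_1>\cdots>s_k>\cdots>s_l$, the frames used by $g$ form a subset of those used by $h$. The plan is to first identify both minima explicitly as expected conditional variances, exactly as in the informal argument preceding the theorem, and then compare them via the law of total variance.

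\textbf{Step 1 (identify the minima).} For any measurable predictor $\hat{\z}_T = g(\bm{a})$ with $\mathbb{E}\|\z_T\|_2^2<\infty$, I would decompose around the conditional mean $\bm{\mu}_a := \mathbb{E}[\z_T\mid\bm{a}]$:
\[
\mathbb{E}\|\z_T-g(\bm{a})\|_2^2=\mathbb{E}\|\z_T-\bm{\mu}_a\|_2^2+\mathbb{E}\|\bm{\mu}_a-g(\bm{a})\|_2^2 .
\]
The cross term vanishes after conditioning on $\bm{a}$, by the tower property. Hence the minimum is $\mathcal{L}_k^*=\mathbb{E}[\mathrm{Var}(\z_T\mid\bm{a})]$, attained at $g^*=\bm{\mu}_a$, where $\mathrm{Var}$ denotes the trace-variance as defined in the paper. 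The same argument gives $\mathcal{L}_l^*=\mathbb{E}[\mathrm{Var}(\z_T\mid\bm{a},\bm{b})]$, attained at $\bm{\mu}_{ab}:=\mathbb{E}[\z_T\mid\bm{a},\bm{b}]$.

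\textbf{Step 2 (compare the minima).} Apply the conditional law of total variance, conditioning on $\bm{a}$:
\[
\mathrm{Var}(\z_T\mid\bm{a})=\mathbb{E}[\mathrm{Var}(\z_T\mid\bm{a},\bm{b})\mid\bm{a}]+\mathrm{Var}(\bm{\mu}_{ab}\mid\bm{a}).
\]
Taking expectations over $\bm{a}$ yields
\[
\mathcal{L}_l^*-\mathcal{L}_k^*=-\mathbb{E}[\mathrm{Var}(\bm{\mu}_{ab}\mid\bm{a})]\le 0,
\]
which is exactly the generalization of Eq.~\ref{eq: recon_ineq}. As a shortcut, one can also observe that every $g(\bm{a})$ is in particular a function of $(\bm{a},\bm{b})$, so the minimum over $h$ is taken over a larger class and cannot exceed the minimum over $g$. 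I would mention this as the intuition but keep the variance form, since the equality case needs it.

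\textbf{Step 3 (equality case).} If $\z_T\mid\bm{a}\,{\buildrel d \over =}\,\z_T\mid\bm{a},\bm{b}$, then $\bm{\mu}_{ab}=\bm{\mu}_a$ almost surely. This is a function of $\bm{a}$ alone, so $\mathrm{Var}(\bm{\mu}_{ab}\mid\bm{a})=0$ and equality holds. Alternatively, the conditional variances coincide pointwise, so the two expectations in Step 1 agree.

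I expect the main obstacle to be technical rather than conceptual. It consists of stating the needed assumptions (finite second moments, existence of regular conditional distributions) and reading the equality condition correctly. That condition must be interpreted as equality of the conditional laws almost surely, as a statement about the function of $(\bm{a},\bm{b})$, since $\z_T\mid\bm{a}$ and $\z_T\mid\bm{a},\bm{b}$ live on different conditioning spaces. I would also note that only sufficiency of this condition is claimed. Weaker conditions, such as equality of the conditional means alone, also suffice, so I would not attempt to prove the converse.
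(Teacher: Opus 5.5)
Your proposal is correct and follows essentially the same route as the paper: you identify each minimum as an expected conditional variance attained at the conditional mean, compare the two minima with the law of total variance, and get equality because the conditional mean then depends on $\z_{s_1},\dots,\z_{s_k}$ only. The one difference is in Step 1, where you use the orthogonal (bias--variance) decomposition instead of the paper's gradient-set-to-zero argument; this is slightly more rigorous, since it directly certifies global minimality over all measurable predictors.
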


\begin{remark} 
The minimum reconstruction errors above are obtained with non-parametric optima. In practice, this corresponds to the assumption that our neural networks $g_{\theta}$ and $h_{\phi}$ are flexible enough to accurately approximate the non-parametric optima for the theorem to hold. Besides, the analysis is based on the assumption that $\{\z_{s_i}\}_{i=1}^l$ are drawn from the ground-truth distribution, while in practice they also need to be generated with their associated previous frames, which means when the generated  $\{\hat{\z}_{s_i}\}_{i=1}^l$ are far away from their
ground-truth distribution, the theorem would not hold. Nevertheless, the analysis provides intuitions of the benefit of conditional generation with longer past contexts and the potential improvement in performance by using more expressive pseudo videos rather than the ones created with first-order Markov transition as in standard diffusion models, which we empirically verify with experiments in the following section.
\label{remark_of_theorem}
\end{remark}

\subsection{Experiments}
\label{sec: video_diff_exp}
We consider generating each frame of the pseudo videos using the information provided in the previously generated frames. In particular, we use a video diffusion model \citep{harvey2022flexible} trained by predicting frames autoregressively conditioning on the most recent previous frames in a context window of size $C$. In other words, the conditionals $p(\z_t|\z_{t-C:t-1})$ are estimated using a diffusion model parameterized by neural network taking $\z_{t-C:t-1}$ as an additional input. 
We compare the performance of video diffusion models trained on pseudo videos created by both standard first-order Markov transformation and higher-order Markov transformation to empirically verify our argument in Section~\ref{sec: theory}. We describe the experimental setup below and include detailed hyperparameters in Appendix~\ref{appendix:hyper_pseudo_vid}.

\textbf{Datasets.} 
We create 4-frame and 8-frame pseudo videos using images from CIFAR10 (32 ${\times}$ 32) and CelebA (64 ${\times}$ 64). We use Gaussian noise as data augmentation and we consider two strategies:
\begin{itemize}
    \item \textbf{First-order Markov.} We add Gaussian noise recursively 3 or 7 times to create first-order Markov pseudo videos with a linear schedule \citep{ho2020denoising} with $\beta$ ranging from 0.0001 to 0.05\footnote{During the hyperparameter selection stage, we experiment with maximal $\beta$ selected from \{0.02, 0.05, 0.1\}, and 0.05 is the one that leads to the best performance with first-order Markov data augmentation.}: $\z_{T-t} = \sqrt{1-\beta_t} \z_{T-t+1} + \sqrt{\beta_t} \epsilon$, $\epsilon \sim \mathcal{N}(0, I)$.
    \item \textbf{High-order Markov.} While using the same noise schedule to create $\z_{T-t}$, instead of adding Gaussian noise to $\z_{T-t+1}$, we use a simple strategy to create high-order Markov pseudo videos by adding Gaussian noise to the mean of $\{\z_{T-t+s}\}_{s=1}^{t}$: $\z_{T-t} = \sqrt{1-\beta_t} [\frac{1}{t}\sum_{s=1}^{t}\z_{T-t+s}] + \sqrt{\beta_t}\epsilon$, $\epsilon \sim \mathcal{N}(0, I)$.
\end{itemize}
We plot examples of pseudo videos created with the above two strategies in Figure~\ref{fig:high_order_example}. We again use 1-frame to denote the results of the image generative model counterparts, improved DDPM \citep{nichol2021improved}, trained on the original target images. We also consider blurring as the data augmentation, however, its performance is worse than the performance of using Gaussian noise (see the \textbf{Results} paragraph below). 

\begin{figure}[t]
    \centering
     \begin{subfigure}[t]{0.9\textwidth}
      \centering
      \includegraphics[width=\textwidth]{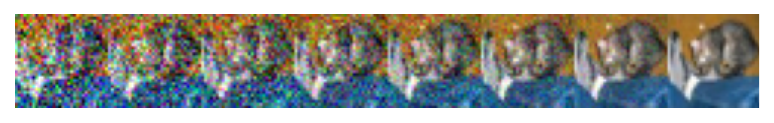}
      \caption{First-order}
    \end{subfigure}
    \hfill
    \begin{subfigure}[t]{0.9\textwidth}
      \centering
      \includegraphics[width=\textwidth]{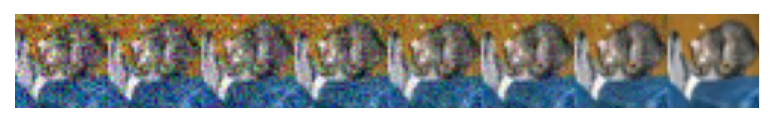}
      \caption{High-order}
    \end{subfigure}
    \hfill
    \caption{Examples of a pseudo video constructed by adding Gaussian noise to a CIFAR10 image using first-order Markov chain (top) and high-order Markov chain (bottom).}
    \label{fig:high_order_example}
    \hfill
\end{figure}

\textbf{Network Architectures.}
We use a similar UNet architecture as in \citet{harvey2022flexible}, with 2 residual blocks in each downsampling and upsampling layer and a base channel size of 128 across all models. Notice that \citet{harvey2022flexible} is built based on the same architecture as the 1-frame image diffusion model \citep{nichol2021improved}, and these hyperparameters are kept the same for the 1-frame image diffusion model. During generation, we use the ``Autoreg'' sampling scheme from \citet{harvey2022flexible} so that each frame $\z_t$ is generated by conditioning on the most recently generated frames in a context window, $\{\z_{t-c}\}_{c=1}^C$. The sizes of the context window $C$ (i.e., the time lag) are 2 and 4 for 4-frame and 8-frame models, respectively. We consider 1,000 diffusion steps every time we generate a new frame. Since 4-frame and 8-frame models jointly generate the first 2 and the first 4 frames (the initial context window) at the beginning, respectively, they use overall 3,000 and 5,000 diffusion steps to generate the whole pseudo videos, respectively. We also consider increasing the number of diffusion steps from 1,000 to 4,000 when training the 1-frame image diffusion model and compare it with the 4-frame video diffusion models with 3,000 diffusion steps in total to ensure the performance gain in video diffusion models is not simply because we have more diffusion steps overall.



\begin{figure}[t]
    \centering
    \begin{subfigure}[t]{0.45\textwidth}
      \centering
      \includegraphics[width=\textwidth]{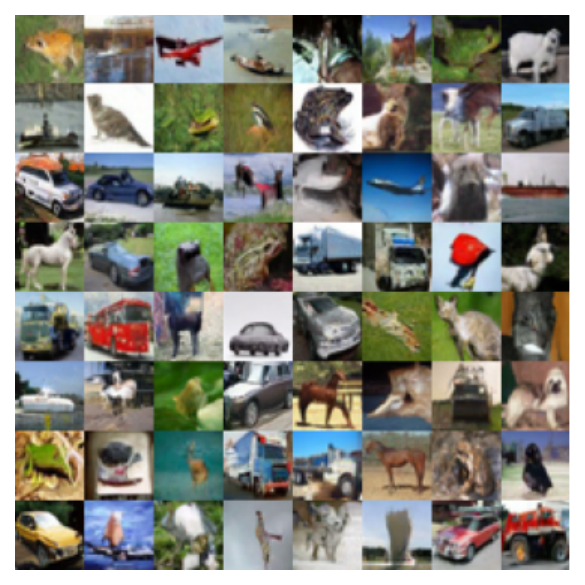}
      \caption{CIFAR10}
    \end{subfigure}
   \hfill
    \begin{subfigure}[t]{0.45\textwidth}
      \centering
      \includegraphics[width=\textwidth]{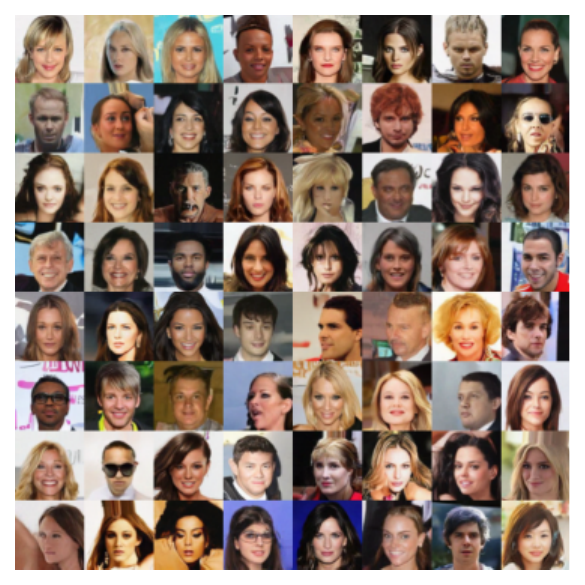}
    \caption{CelebA}
    \end{subfigure}
    \caption{Generated images from the video diffusion models trained on 4-frame high-order Markov pseudo videos of CIFAR10 and CelebA, respectively.}
    \label{fig:high_order}
    \hfill
\end{figure}

\begin{table}[t]
\centering
\caption{Last-frame FID of images generated by video diffusion models trained on pseudo videos constructed from CIFAR10 and CelebA images (with both first-order Markov or high-order Markov Gaussian noise data augmentation). 1-frame results are obtained from an image diffusion model trained on the original CIFAR10 and CelebA images with equivalent UNet architecture.}
\label{table:results-diffusion}
\begin{tabular}{@{}ccccccc@{}}
\toprule
\multirow{2}{*}{}  & \multicolumn{3}{c}{CIFAR10}                                                 & \multicolumn{3}{c}{CelebA}                                                 \\ \cmidrule(l){2-7} 
                   & \multicolumn{1}{c}{1-frame} & \multicolumn{1}{c}{4-frame}        & 8-frame & \multicolumn{1}{c}{1-frame} & \multicolumn{1}{c}{4-frame}       & 8-frame \\ \midrule
First-order Markov & \multicolumn{1}{c}{12.90}   & \multicolumn{1}{c}{17.30}          & 15.90   & \multicolumn{1}{c}{7.76}    & \multicolumn{1}{c}{13.61}         & 12.64   \\
High-order Markov  & \multicolumn{1}{c}{12.90}   & \multicolumn{1}{c}{\textbf{12.58}} & 12.80   & \multicolumn{1}{c}{7.76}    & \multicolumn{1}{c}{\textbf{6.88}} & 7.55    \\ \bottomrule
\end{tabular}
\end{table}

\textbf{Results.} 
We again compute FID (based on 10k samples) to evaluate the models. Table~\ref{table:results-diffusion} shows the last-frame FID of pseudo videos generated by video diffusion models for CIFAR10 and CelebA images, respectively. The 1-frame results correspond to the performance of their image counterparts (i.e., improved DDPM). While video diffusion models trained on first-order Markov pseudo videos do not outperform the 1-frame image diffusion model, both 4-frame and 8-frame video diffusion models trained on high-order Markov pseudo videos can achieve better results on both datasets, which empirically justify the non-optimality of first-order Markov chains in terms of passing information from the target images to the pseudo frames as shown in Section~\ref{sec: theory}, and our proposal of using more expressive pseudo videos rather than the ones created with first-order Markov chains. Notice that the 4-frame models outperform the 8-frame models, which may be due to the complex nature of the ground-truth distribution of longer pseudo videos, and thus more expressive architecture may be required to achieve optimal results (see Remark~\ref{remark_of_theorem}), while here we use the same UNet architecture across all models for fair practical comparison. Again, this U-turn should not be a severe issue in practice since practitioners may prefer to improve the generation with as few pseudo frames as possible to reduce additional computational cost. We visualize some generated images from the 4-frame models trained on CIFAR10 and CelebA in Figure~\ref{fig:high_order}.

Table~\ref{table:results-diffusion_4ksteps} compares the 4-frame model with an 1-frame model but with 4,000 diffusion steps. While the performance of the 1-frame model with more overall diffusion steps improves for CIFAR10 and outperforms the 4-frame model, its performance on CelebA is worse than the 4-frame model. Moreover, on CelebA, it even becomes worse than the baseline 1-frame model with only 1,000 diffusion steps while the 4-frame video diffusion model consistently improves the performance on both datasets, which suggests simply increasing the number of diffusion steps in an image diffusion model may not always be effective.

Instead of using Gaussian noise, we also tried using Gaussian blur to create pseudo videos as in Section~\ref{sec: phenaki_exp}. However, our experiments on CIFAT10 with Gaussian blur suggest worse results than adding Gaussian noise (see Table~\ref{table:celeba-results-diffusion_blur}), and we decided not to consider it for further experiments. This suggests that in practice the well-performed data augmentation strategies may vary across different classes of video generative models.

\begin{table}[t]
\centering
\caption{Last-frame FID of images generated by video diffusion models trained on pseudo videos constructed from CIFAR10 and CelebA images with high order Markov Gaussian noise data augmentation. 1-frame results are obtained from an image diffusion model trained on the original CIFAR10 and CelebA images. Here, the 1-frame models use 4,000 diffusion steps, while the 4-frame models use 3,000 diffusion steps overall.}
\begin{tabular}{c c c c}
\toprule
& 1-frame (1k steps) & 1-frame (4k steps) & 4-frame (3k steps overall)\\
\midrule
CIFAR10 & 12.90 & \textbf{11.95} & 12.58\\
CelebA & 7.76 & 7.87 & \textbf{6.88}\\
\bottomrule
\label{table:results-diffusion_4ksteps}
\end{tabular}
\end{table}

\begin{table}[h]
\centering
\caption{Last-frame FID of images generated by video diffusion models trained on pseudo videos constructed from CIFAR10 images with high-order Markov data augmentation (either Gaussian noise or Gaussian blur).}
\begin{tabular}{c c c }
\toprule
& 4-frame & 8-frame\\
\midrule
Gaussian noise  & \textbf{12.58} & 12.80\\
Gaussian blur & 15.33 & 22.63\\
\bottomrule
\label{table:celeba-results-diffusion_blur}
\end{tabular}
\end{table}

\section{Related Work}
\subsection{Sequential Generative Models}
Hierarchical variational autoencoders (HVAEs) \citep{sonderby2016ladder,maaloe2019biva,vahdat2020nvae,child2020very,xiao2023trading} are a class of sequential generative models constructed by stacking standard VAEs \citep{welling2014auto}. Although HVAEs represent a rich class of expressive generative models, they are hard to train in practice due to optimization difficulty, as discussed in Section~\ref{sec:motivation}.
Diffusion models \citep{sohl2015deep,ho2020denoising,song2020score,kingma2021variational,nichol2021improved,song2021denoising,rissanen2022generative, bansal2023cold, hoogeboom2023blurring} can be seen as a special case of HVAEs where the encoders are fixed, pre-defined Gaussian convolution kernels. Specifically, they essentially regress a sequence of noisy images created from the target image with self-supervision, as described in Section~\ref{sec:motivation}. Despite its similarity to HVAEs, diffusion models, and latent diffusion models \citep{rombach2022high} which apply diffusion models in the lower dimensional latent space of another latent variable model (e.g., VQVAE \citep{van2017neural}), have achieved state-of-the-art performance partially due to the additional self-supervision signal provided by the noise-corrupted images. Flow matching \citep{lipman2022flow,liu2022flow,albergo2023stochastic,gat2024discrete,wang2024rectified} is another state-of-the-art sequential generative modelling technique that trains continuous normalizing flows \citep{chen2018neural} by regressing a sequence of vector fields inducing a probability path that connects the data distribution and prior distribution with direct self-supervision. It has been show that flow matching can learn more straight trajectories than diffusion models, which requires less number of discretization steps at generation time. Furthermore, flow matching allows us to relax the Gaussian assumption for the prior distribution and thus enables coupling between arbitrary distributions \citep{albergo2023stochastic}.
In contrast, our proposed framework introduces a new family of approaches that leverage video generative models and pseudo videos with self-supervised frames to improve any given image generative models.

\subsection{Self-supervised Learning} 
Self-supervised learning \citep{liu2021self,shwartz2024compress} turns an unsupervised learning problem into a supervised learning problem by handcrafting pseudo labels for unlabeled data. There are two common approaches to self-supervised learning. 1) Contrastive learning \citep{chen2020simple,tian2020makes,wu2020mutual}, predicts whether two inputs are different augmentations of the same original data. 2) Masked learning \citep{devlin2018bert, he2022masked,fang2023eva} predicts randomly masked parts of an input given the unmasked parts. While our approach of fitting a video model to pseudo video sequences created by augmenting the original images does not belong to either of these families, it is essentially a new form of self-supervised learning since the pseudo video sequences can be seen as handcrafted pseudo labels for our model to predict, which provides the model with extra information (e.g., different fidelity of the original image).

\section{Conclusion and Discussions}\label{sec:discussion}
\label{sec:pseudo_conclusion}
\textbf{Summary.}
We drew our key insight from comparing standard HVAEs and diffusion models: the additional self-supervised information on the intermediate states provided by the noise corrupted pseudo frames in diffusion models may contribute to their success. Based on this insight, we proposed to leverage the self-supervised information from the pseudo videos constructed by applying data augmentation to the target images to improve the performance of image generative models. This was done by extending image generative models to their video generative models counterparts and training video generative models on pseudo videos. We show in our experiments that for two popular image generative models, VQVAE and Improved DDPM, their video generative model counterparts trained on pseudo videos of just a few frames can improve image generation performance, which empirically verified the benefit of the additional self-supervised information in the pseudo videos. 

\textbf{Discussions and Future Work.}
Our proposed framework provides an alternative approach of scaling up any given generative models: instead of making generative models larger by stacking more layers, we demonstrated that it was possible to improve the generation quality by turning an image generative model trained on images into its video generative model counterpart trained on pseudo videos, which is usually straightforward since many video generative models are built upon image generative models. On the other hand, this raises challenges on how to design informative pseudo videos. In autoregressive video generation frameworks, we show the potential issue of first-order Markov pseudo videos theoretically and propose to use higher-order Markov pseudo videos instead to address this issue. However, it is in general unclear what the optimal pseudo videos are within such a large design space, which we leave as a future research question. Another interesting future direction is to explore whether the same principle can be applied to other data modalities. While self-supervised signals can be easily obtained using data augmentation for images, it remains unclear whether there are proper ways to inject self-supervised information for other data modalities, such as text or molecules.
\chapter{Conclusion and Future Work}
\label{cha:conclusion}

This thesis focused on bridging deep sequence models and probabilistic methods to develop new machine learning models that enjoy the advantages from both fields. We have strived to exploit the natural connections between inductive biases in deep sequence models and existing probabilistic methods to guide our model design, enabling the proposed methods to be compatible with existing progresses in the field of deep sequence models. Chapter~\ref{cha:sgpa} and Chapter~\ref{cha:hsgp} leverage natural connections between popular deep sequence model architectures and Gaussian processes to build GP models tailored to Transformer architectures and online learning problems, respectively. Chapter~\ref{cha:pseudovid}, inspired by the comparison between diffusion models and HVAE, proposes to improve image-generative models by jointly modeling the sequence of the original image and its pseudo video constructed with self-supervised information. Below, we provide a detailed recap of our contributions made in each chapter in Section~\ref{sec:summary_final}. In Section~\ref{sec:limit_and_future}, we give a critical overview of the limits of the proposed methods and potential future works to fix them.

\section{Thesis Summary}
\label{sec:summary_final}
In Chapter~\ref{cha:sgpa}, we identified the equivalence between kernel attention and the posterior mean of SVGP: queries, keys, and values in kernel attention module are equivalent to queried input locations, inducing point locations, and variational parameters associated with the posterior mean in SVGP, respectively. Based on this key observation, we proposed sparse Gaussian process attention (SGPA) which places a GP prior for the attention output and quantifies the uncertainty of the attention output with the posterior covariance (constructed based on an additional variational covariance parameters) of the corresponding SVGP. We stack multiple SGPA layers together to build probabilistic Transformers which are essentially deep GPs, and train SGPA-based Transformers with variational inference. SGPA is tailored to Transformer architectures and quantifies uncertainty directly in the space of attention output rather than through posterior of weights as in weight-space Bayesian neural network methods. For self-attention based Transformers where standard SGPA requires input-dependent variational covariances obtained via amortization similar to the input-dependent keys and values, we further introduce decoupled SGPA which reduces the computational complexity using techniques from decoupled SVGP, where the variational covariance is constructed purely based on another set of global keys shared across all input sequences. SGPA was empirically evaluated on a suite of predictive tasks and it effectively calibrated the uncertainty in Transformer models while keeping the accuracy to be competitively high.

In Chapter~\ref{cha:hsgp}, we found that by replacing the deterministic signals with a GP prior in the HiPPO (High-order Polynomial Projection Operators) framework, we can naturally interpret the HiPPO time-varying orthogonal projections as inducing variables of an interdomain SVGP, where the basis functions are time-dependent orthogonal polynomials. These interdomain inducing variables adaptively compress the history of a GP while preserving long-term memory. Based on this, we developed online HiPPO-SVGP (OHSVGP) which leverages the long-range memory preservation capability of HiPPO for online learning tasks. OHSVGP bypassed the cumbersome optimization-based online inducing locations updates of standard online SVGP by updating the prior covariance matrices online via the efficient HiPPO-ODE recurrence, bringing an extra layer of computational efficiency to OHSVGP. Empirically, we evaluated OHSVHP on a suite of online and continual learning tasks and we showed that OHSVGP outperforms existing online sparse GP methods, especially in scenarios requiring long-term memory. The efficient streaming capability and preservation of historical information make OHSVGP well-suited for real-world online learning applications demanding both speed and accuracy.

In Chapter~\ref{cha:pseudovid}, we shifted our focus from predictive tasks to generative modeling. In particular, we explored the benefits of sequence or sequential generative modeling. We began with a comparison between two classes of sequential generative models, diffusion models and HVAEs. Our key observation is that diffusion models can be viewed as HVAEs but they impose direct supervision signals, in the form of pseudo videos created by corrupting the original target image, for the sequence of intermediate variables while standard HVAEs leave these intermediate states fully flexible, which might be one of the reasons why diffusion models beat HVAEs. Inspired by this difference, we explored the possibility of improving other types of image generative models by jointly
modeling the distribution of the original image and its corresponding pseudo video containing self-supervised information. Specifically, we extended two popular image generative model frameworks, VQVAE and Improved DDPM, to their video generative model counterparts and trained them on pseudo videos. We empirically showed that this procedure improves the image generation quality with pseudo videos of just a few frames. Moreover, we theoretically analyzed what probabilistic structure over the pseudo videos may be desirable in autoregressive video generation frameworks. This analysis allows us to identify potential issues of certain pseudo videos including those in the form of first-order Markov chains and we proposed a simple and effective approach to bypass the potential issues by constructing higher-order Markov pseudo videos.

\section{Limitations and Future Research Directions}
\label{sec:limit_and_future}
We have shown the promise of building probabilistic methods tailored to deep sequence models by keeping inductive biases in deep sequence models in mind, and we believe that this vision can inspire more future works that combine deep sequence models and probabilistic modeling, leading to mutually reinforced improvement. We list a few future research directions that may address the limitations of the methods proposed in this thesis for further improvements.

\paragraph{Extending SGPA to decoder-based Transformers.} In Chapter~\ref{cha:sgpa}, we only consider applying SGPA to encoder-based Transformers where the context in the sequence is fixed. However, many sequence modeling tasks require autoregressive prediction based on decoder-based Transformer \citep{devlin2018bert,openai2024gpt4technicalreport}. For example, in natural language applications, decoder-based Transformer are often used to predict the next token based on existing context consisting of previously generated tokens. Since the context is kept updated, the static GP prior as in Chapter~\ref{cha:sgpa} becomes unideal to represent the dynamical uncertainty associated with the changing unknown information beyond the adaptive context. Sequential Bayesian inference which keeps updating the current prior to be the (approximate) posterior at the previous time step is more suitable to model this adaptive context knowledge. To do so with variational inference \citep{nguyen2018variational}, we may replace the static ELBO training objective for encoder-based SGPA with the following online ELBO:
\begin{equation}
\begin{split}
     \mathcal{L}_{ELBO}^{(t)}&=\mathbb{E}_{q_t(\bm{F}^{L,t}|\bm{F}^{0,t},\{\bm{k}^{l,h.t}\}_{l=1, h=1}^{L,H})}[\log p(\bm{Y}^{t}|\bm{F}^{L,t})]\\
     -\text{KL}&(q_t(\bm{F}^{L,t}|\bm{F}^{0,t},\{\bm{k}^{l,h,t}\}_{l=1, h=1}^{L,H})||q_{t-1}(\bm{F}^{L,t-1}|\bm{F}^{0,t-1},\{\bm{k}^{l,h,t-1}\}_{l=1, h=1}^{L,H})),
\end{split}
\end{equation}
where $q_{t-1}$ obtained from the previous time step becomes the prior appearing in the KL regularizer term for the current step inference. Notice that both $q_t$ and $q_{t-1}$ are deep sparse GPs, but with different set of keys, so the KL regularizer in the above online ELBO can be simplified to KL terms evaluated purely based on inducing keys and values again due to the prior conditional matching condition, similar to the online ELBO for OHSVGP (Eq.~\ref{eq:online_elbo}). For autoregressive prediction tasks, the inducing keys and values at time $t$ ($\bm{k}^{l,h,t}$ and $\bm{v}^{l,h,t}$) are obtained by augmenting the previous inducing keys and values ($\bm{k}^{l,h,t-1}$ and $\bm{v}^{l,h,t-1}$) with an additional key and value computed based on the token generated most recently (a procedure known as Key-Value caching).

\paragraph{More efficient SGPA.}
Compared with standard Transformers, SGPA suffers from higher computational cost due to the additional computation of the posterior covariance. In Chapter~\ref{cha:sgpa}, we already used decoupled SVGP techniques to construct posterior covariance purely based on a fixed size of global inducing keys shared across all sequence data points. However, in order to scale SGPA up to large Transformers deployed in modern applications such as large language models, further computational cost reduction is indispensable. For example, random Fourier features (RFFs; \citep{rahimi_random_2007}) can be used to approximate the kernel to accelerate the computation of kernel matrices \citep{reid2023simplex}. Additionally, the architecture of SGPA may be tweaked to allow better inference schemes. There are equivalent yet cheaper ways to compute posterior for SGPA based on some special class of kernels. For instance, instead of computing the full covariance, posterior sampling from GP based on Markovian kernels can be computed via smoothing in a linear stochastic differential equation (SDE) framework \citep{Hartikainen2010kalman}.

\paragraph{HiPPO-SGPA.}
For decoder-based Transformers where the number of key-value pairs is increasing as we generate new tokens, it becomes tricky to construct suitable posterior covariance that balances between the flexibility of modeling the adaptive knowledge in the growing context and computational efficiency. Standard SGPA constructs variational covariance based on the growing number ($T$) of amortized inducing keys so that its size also keeps growing, which makes it computationally expensive. Decoupled SGPA constructs variational covariance based on the fixed number ($M$) of static inducing keys globally shared across all sequences, which may not be flexible enough to summarize the global knowledge in the entire dataset of sequences, especially when the number of sequences is huge (e.g., the large corpus to train large language models). Ideally, for each sequence, we want to maintain a fixed number of adaptive inducing key-value pairs that are dynamically updated, and construct fixed-size adaptive variational covariance based on them. HiPPO-SVGP, introduced in Chapter~\ref{cha:hsgp}, provides a potential option to achieve this goal since HiPPO maintains a fixed size adaptive memory state to capture the dynamical history. Given a set of queries, we can use the HiPPO recurrence as introduced in Section~\ref{sec:multi_dim} to obtain interdomain attention matrices, $\bm{K}_{\bm{f_qu}}$, and with a new query during the autoregressive prediction, the interdomain attention matrices will be updated via another recurrence step to incorporate the information from the new query (Eq.~\ref{eq:multi_dim_hippo_kfu}). The final components for SGPA computation is the fixed-size adaptive variational values and covariance based on this fixed-size interdomain inducing keys. Instead of obtaining them with projection matrices applied over the sequence inputs of the attention module, which is of adaptive size $T$, as in standard Transformer, one potential solution is to obtain them by applying another RNN over the sequence inputs to dynamically output the required fixed-size variational parameters.

\paragraph{Building probabilistic SSM based on HiPPO-SVGP.}
HiPPO-SVGP may also be extended to build probabilistic SSM. Standard SSM layers leverage structured recurrence to obtain memory state summarizing the information in the sequence and then project the memory state to obtain the output \citep{gu_s4_2022, gu_mamba_2023}. In HiPPO-SVGP, the concept of memory state (or coefficient) is equivalent to the interdomain inducing points, and they are not deterministic but are distributed according to a variational Gaussian distribution $q(\bm{u})$. Based on this, we may build probabilistic SSM layers enabling uncertainty quantification in SSM models. Still, we need to figure out sensible ways to stack these probabilistic SSM layers together to build deep models for enhanced flexibility. Moreover, although the structure of the recurrence parameters are designed based on fixed HiPPO recurrence matrices, they are trainable parameters in state-of-the-art deep SSMs  \citep{gu_s4_2022, gu_mamba_2023}. Hence, we may consider tuning the recurrence parameters in HiPPO-SVGP as well. In this case, the trained recurrence parameters will be associated with some implicit function basis and measures whose analytic forms are unknown to us.

\paragraph{Better pseudo video design.}
As we showed in Chapter~\ref{cha:pseudovid}, better pseudo video design in pseudo video generation framework may improve the image generation quality. In general, since the pseudo-video framework contains image-generative models as special cases (i.e., $T=1$), intuitively there will be a configuration of hyperparameters (type of data augmentation, architecture etc.) that makes the performance better than image (1-frame) generation. However, as we mentioned in Section~\ref{sec:pseudo_conclusion}, pseudo-video framework makes the space of hyperparameters much larger, especially the design space of data augmentation for creating pseudo-videos is huge. For different types of models and architectures, the optimal data augmentation is in general not the same (e.g., Gaussian blur data augmentation empirically works better for C-ViVIT but achieves worse results than Gaussian noise for video diffusion in Chapter~\ref{cha:pseudovid}). While we have theoretically shown some intuition that high-order Markov data augmentation is preferable in autoregressive video generation models, finding optimal data augmentation or practical suggestions of what data augmentation is more suitable for other types of models remains an open research question. Nevertheless, we believe our analysis and empirical experiments have demonstrated a proof of concept that this is a promising framework with a lot of potential to improve generation performance.

\clearpage
\renewcommand\bibname{{References}}
\bibliography{references}

\begin{thebibliography}{}

\bibitem[Albergo et~al., 2023]{albergo2023stochastic}
Albergo, M.~S., Boffi, N.~M., and Vanden-Eijnden, E. (2023).
\newblock Stochastic interpolants: A unifying framework for flows and diffusions.
\newblock {\em arXiv preprint arXiv:2303.08797}.

\bibitem[Arbel et~al., 2023]{arbel2023primer}
Arbel, J., Pitas, K., Vladimirova, M., and Fortuin, V. (2023).
\newblock A primer on {Bayesian} neural networks: Review and debates.
\newblock {\em arXiv preprint arXiv:2309.16314}.

\bibitem[Ashman et~al., 2020]{ashman_sparse_2020}
Ashman, M., So, J., Tebbutt, W., Fortuin, V., Pearce, M., and Turner, R.~E. (2020).
\newblock Sparse {Gaussian} process variational autoencoders.
\newblock {\em arXiv preprint arXiv:2010.10177}.

\bibitem[Ba et~al., 2016]{ba2016layer}
Ba, J.~L., Kiros, J.~R., and Hinton, G.~E. (2016).
\newblock Layer normalization.
\newblock In {\em arXiv preprint arXiv:1607.06450}.

\bibitem[Bansal et~al., 2023]{bansal2023cold}
Bansal, A., Borgnia, E., Chu, H.-M., Li, J., Kazemi, H., Huang, F., Goldblum, M., Geiping, J., and Goldstein, T. (2023).
\newblock Cold diffusion: Inverting arbitrary image transforms without noise.
\newblock {\em Advances in Neural Information Processing Systems}, 36.

\bibitem[Bayes, 1763]{bayes1763essay}
Bayes, T. (1763).
\newblock Lii. an essay towards solving a problem in the doctrine of chances. {By} the late rev. {Mr. Bayes}, {FRS} communicated by {Mr. Price}, in a letter to {John Canton}, {AMFR S}.
\newblock {\em Philosophical transactions of the Royal Society of London}, 53:370--418.

\bibitem[Beal, 2003]{beal:vi2003}
Beal, M.~J. (2003).
\newblock {\em Variational algorithms for approximate {B}ayesian inference}.
\newblock PhD thesis, University of London.

\bibitem[Benita et~al., 2023]{benita2023diffar}
Benita, R., Elad, M., and Keshet, J. (2023).
\newblock Diffar: Denoising diffusion autoregressive model for raw speech waveform generation.
\newblock In {\em arXiv preprint arXiv:2310.01381}.

\bibitem[Blair et~al., 2013]{skillcraft1_master_table_dataset_272}
Blair, M., Thompson, J., Henrey, A., and Chen, B. (2013).
\newblock {SkillCraft1 Master Table Dataset}.
\newblock UCI Machine Learning Repository.
\newblock {DOI}: https://doi.org/10.24432/C5161N.

\bibitem[Blundell et~al., 2015]{blundell2015bbp}
Blundell, C., Cornebise, J., Kavukcuoglu, K., and Wierstra, D. (2015).
\newblock Weight uncertainty in neural networks.
\newblock In {\em International Conference on Machine Learning}.

\bibitem[Bradshaw et~al., 2017]{bradshaw2017adversarial}
Bradshaw, J., Matthews, A. G. d.~G., and Ghahramani, Z. (2017).
\newblock Adversarial examples, uncertainty, and transfer testing robustness in {Gaussian} process hybrid deep networks.
\newblock {\em arXiv preprint arXiv:1707.02476}.

\bibitem[Brooks et~al., 2011]{brooks2011mcmc}
Brooks, S., Gelman, A., Jones, G., and Meng, X.~L. (2011).
\newblock {\em Handbook of {Markov} chain {Monte Carlo}}.
\newblock CRC press.

\bibitem[Brown et~al., 2020]{brown2020language}
Brown, T.~B., Mann, B., Ryder, N., Subbiah, M., Kaplan, J.~D., Dhariwal, P., Neelakantan, A., Shyam, P., Sastry, G., Askell, A., Agarwal, S., Herbert-Voss, A., Krueger, G., Henighan, T., Child, R., Ramesh, A., Ziegler, D.~M., Wu, J., Winter, C., Hesse, C., Chen, M., Sigler, E., Litwin, M., Gray, S., Chess, B., Clark, J., Berner, C., McCandlish, S., Radford, A., Sutskever, I., and Amodei, D. (2020).
\newblock Language models are few-shot learners.
\newblock {\em Advances in neural information processing systems}, 33:1877--1901.

\bibitem[Bui et~al., 2017]{bui_streaming_2017}
Bui, T.~D., Nguyen, C.~V., and Turner, R.~E. (2017).
\newblock Streaming sparse {Gaussian} process approximations.
\newblock In {\em Advances in {Neural} {Information} {Processing} {Systems}}.

\bibitem[Bui and Turner, 2014]{bui_tree_2014}
Bui, T.~D. and Turner, R.~E. (2014).
\newblock Tree-structured {Gaussian} process approximations.
\newblock In {\em Advances in {Neural} {Information} {Processing} {Systems}}.

\bibitem[Burda et~al., 2015]{burda2015importance}
Burda, Y., Grosse, R., and Salakhutdinov, R. (2015).
\newblock Importance weighted autoencoders.
\newblock {\em arXiv preprint arXiv:1509.00519}.

\bibitem[Burt et~al., 2019]{burt_rates_2019}
Burt, D.~R., Rasmussen, C.~E., and van~der Wilk, M. (2019).
\newblock Rates of convergence for sparse variational {Gaussian} process regression.
\newblock In {\em International Conference on Machine Learning (ICML)}.

\bibitem[Campbell et~al., 2023]{campbell2023trans}
Campbell, A., Harvey, W., Weilbach, C., De~Bortoli, V., Rainforth, T., and Doucet, A. (2023).
\newblock Trans-dimensional generative modeling via jump diffusion models.
\newblock In {\em Advances in Neural Information Processing Systems}.

\bibitem[Casale et~al., 2018]{casale_gaussian_2018}
Casale, F.~P., Dalca, A., Saglietti, L., Listgarten, J., and Fusi, N. (2018).
\newblock Gaussian process prior variational autoencoders.
\newblock {\em Advances in neural information processing systems}, 31.

\bibitem[Cesa-Bianchi and Lugosi, 2006]{cesabianchi2006prediction}
Cesa-Bianchi, N. and Lugosi, G. (2006).
\newblock {\em Prediction, learning, and games}.
\newblock Cambridge University Press.

\bibitem[Chang et~al., 2022]{chang2022maskgit}
Chang, H., Zhang, H., Jiang, L., Liu, C., and Freeman, W.~T. (2022).
\newblock Maskgit: Masked generative image transformer.
\newblock In {\em Proceedings of the IEEE/CVF Conference on Computer Vision and Pattern Recognition}, pages 11315--11325.

\bibitem[Chang et~al., 2023]{chang_memory_2023}
Chang, P.~E., Verma, P., John, S., Solin, A., and Khan, M.~E. (2023).
\newblock Memory-based dual {Gaussian} processes for sequential learning.
\newblock In {\em International {Conference} on {Machine} {Learning}}.

\bibitem[Chen et~al., 2020a]{chen2020generative}
Chen, M., Radford, A., Child, R., Wu, J., Jun, H., Luan, D., and Sutskever, I. (2020a).
\newblock Generative pretraining from pixels.
\newblock In {\em International conference on machine learning}, pages 1691--1703. PMLR.

\bibitem[Chen et~al., 2018]{chen2018neural}
Chen, R.~T., Rubanova, Y., Bettencourt, J., and Duvenaud, D.~K. (2018).
\newblock Neural ordinary differential equations.
\newblock {\em Advances in neural information processing systems}, 31.

\bibitem[Chen et~al., 2020b]{chen2020simple}
Chen, T., Kornblith, S., Norouzi, M., and Hinton, G. (2020b).
\newblock A simple framework for contrastive learning of visual representations.
\newblock In {\em International conference on machine learning}, pages 1597--1607. PMLR.

\bibitem[Chen et~al., 2025a]{chen2025your}
Chen, W., Chen, W., Rastrelli, L., and Li, Y. (2025a).
\newblock Your image is secretly the last frame of a pseudo video.
\newblock In {\em Deep Generative Model in Machine Learning: Theory, Principle and Efficacy (DeLTa) Workshop at ICLR}.

\bibitem[Chen et~al., 2025b]{chen2025recurrent}
Chen, W., Kiyohara, N., Zhu, H. B.~H., Curran-Sebastian, J., Bhatt, S., and Li, Y. (2025b).
\newblock Recurrent memory for online interdomain {Gaussian} processes.
\newblock In {\em Advances in Neural Information Processing Systems (NeurIPS)}.

\bibitem[Chen et~al., 2024]{chen2024post}
Chen, W., Klochkov, Y., and Liu, Y. (2024).
\newblock Post-hoc bias scoring is optimal for fair classification.
\newblock In {\em International Conference on Learning Representations (ICLR)}.

\bibitem[Chen et~al., 2025c]{chen2025bayesian}
Chen, W., Li, B., Zhang, R., and Li, Y. (2025c).
\newblock Bayesian computation in deep learning.
\newblock In {\em arXiv preprint arXiv:2502.18300}.

\bibitem[Chen and Li, 2023]{chen2023calibrating}
Chen, W. and Li, Y. (2023).
\newblock Calibrating transformers via sparse {Gaussian} processes.
\newblock In {\em International Conference on Learning Representations (ICLR)}.

\bibitem[Cheng and Boots, 2017]{cheng2017variational}
Cheng, C.-A. and Boots, B. (2017).
\newblock Variational inference for {Gaussian} process models with linear complexity.
\newblock In {\em Advances in Neural Processing Information Systems}.

\bibitem[Child, 2021]{child2020very}
Child, R. (2021).
\newblock Very deep {VAE}s generalize autoregressive models and can outperform them on images.
\newblock In {\em International Conference on Learning Representations}.

\bibitem[Cinquin et~al., 2021]{cinquin2021bayestrans}
Cinquin, T., Immer, A., Horn, M., and Fortuin, V. (2021).
\newblock Pathologies in priors and inference for {Bayesian} transformers.
\newblock In {\em NeurIPS 2021 "I can't believe it's not better" workshop}.

\bibitem[Coker et~al., 2022]{coker_aistats2022}
Coker, B., Bruinsma, W.~P., Burt, D.~R., Pan, W., and Doshi-Velez, F. (2022).
\newblock Wide mean-field variational {Bayesian} neural networks ignore the data.
\newblock In {\em International Conference on Artificial Intelligence and Statistics}.

\bibitem[Csató and Opper, 2002]{csato2002sparse}
Csató, L. and Opper, M. (2002).
\newblock Sparse on-line {Gaussian} processes.
\newblock {\em Neural Computation}, 14(3):641--668.

\bibitem[Damianou and Lawrence, 2013]{damia2013dgp}
Damianou, A.~C. and Lawrence, N.~D. (2013).
\newblock Deep {Gaussian} processes.
\newblock In {\em International Conference on Artificial Intelligence and Statistics}.

\bibitem[Dao and Gu, 2024]{dao_mamba2_2024}
Dao, T. and Gu, A. (2024).
\newblock Transformers are {SSM}s: Generalized models and efficient algorithms through structured state space duality.
\newblock In {\em International Conference on Machine Learning (ICML)}.

\bibitem[Devlin et~al., 2019]{devlin2018bert}
Devlin, J., Chang, M.-W., Lee, K., and Toutanova, K. (2019).
\newblock {BERT}: Pre-training of deep bidirectional transformers for language understanding.
\newblock In {\em Conference of the North American Chapter of the Association for Computational Linguistics}.

\bibitem[Dosovitskiy et~al., 2021]{dosovitskiy2020image}
Dosovitskiy, A., Beyer, L., Kolesnikov, A., Weissenborn, D., Zhai, X., Unterthiner, T., Dehghani, M., Minderer, M., Heigold, G., and Gelly, S. (2021).
\newblock An image is worth 16x16 words: Transformers for image recognition at scale.
\newblock In {\em International Conference on Learning Representations}.

\bibitem[Dutordoir et~al., 2020]{dutordoir2020sparse}
Dutordoir, V., Durrande, N., and Hensman, J. (2020).
\newblock Sparse {Gaussian} processes with spherical harmonic features.
\newblock In {\em International Conference on Machine Learning (ICML)}.

\bibitem[Duvenaud et~al., 2011]{add2011david}
Duvenaud, D.~K., Nickisch, H., and Rasmussen, C. (2011).
\newblock Additive {Gaussian} processes.
\newblock In {\em Advances in Neural Information Processing Systems}.

\bibitem[Dwivedi et~al., 2020]{dwivedi2020benchmarkgnns}
Dwivedi, V.~P., Joshi, C.~K., Luu, A.~T., Laurent, T., Bengio, Y., and Bresson, X. (2020).
\newblock Benchmarking graph neural networks.
\newblock {\em arXiv preprint arXiv:2003.00982}.

\bibitem[Fan et~al., 2020]{bam}
Fan, X.~F., Zhang, S., Chen, B., and Zhou, M. (2020).
\newblock Bayesian attention modules.
\newblock In {\em Advances in Neural Information Processing Systems}.

\bibitem[Fang et~al., 2023]{fang2023eva}
Fang, Y., Wang, W., Xie, B., Sun, Q., Wu, L., Wang, X., Huang, T., Wang, X., and Cao, Y. (2023).
\newblock Eva: Exploring the limits of masked visual representation learning at scale.
\newblock In {\em Proceedings of the IEEE/CVF Conference on Computer Vision and Pattern Recognition}, pages 19358--19369.

\bibitem[Foong et~al., 2020]{foong2020expressive}
Foong, A. Y.~K., Burt, D., Li, Y., and Turner, R. (2020).
\newblock On the expressiveness of approximate inference in {Bayesian} neural networks.
\newblock In {\em Advances in Neural Information Processing Systems}.

\bibitem[Fortuin et~al., 2020]{fortuin_gpvae_2020}
Fortuin, V., Baranchuk, D., Rätsch, G., and Mandt, S. (2020).
\newblock {GP-VAE}: {Deep} probabilistic time series imputation.
\newblock In {\em International {Conference} on {Artificial} {Intelligence} and {Statistics}}, pages 1651--1661. PMLR.

\bibitem[Gal, 2016]{gal2016thesis}
Gal, Y. (2016).
\newblock Uncertainty in deep learning.
\newblock {\em PhD dissertation, University of Cambridge}.

\bibitem[Gal and Ghahramani, 2016]{gal2016mcdrop}
Gal, Y. and Ghahramani, Z. (2016).
\newblock Dropout as a {Bayesian} approximation: Representing model uncertainty in deep learning.
\newblock In {\em International Conference on Machine Learning}.

\bibitem[Gal and Turner, 2015]{gal_improving_2015}
Gal, Y. and Turner, R.~E. (2015).
\newblock Improving the {Gaussian} process sparse spectrum approximation by representing uncertainty in frequency inputs.
\newblock In {\em International Conference on Machine Learning (ICML)}.

\bibitem[Ganin et~al., 2016]{ganin2016domain}
Ganin, Y., Ustinova, E., Ajakan, H., Germain, P., Larochelle, H., Laviolette, F., March, M., and Lempitsky, V. (2016).
\newblock Domain-adversarial training of neural networks.
\newblock {\em Journal of Machine Learning Research}, 17(59):1--35.

\bibitem[Garifolo et~al., 1993]{Garofolo1993timit}
Garifolo, J., Lamel, L., Fisher, W., Fiscus, J., Pallett, D., Dahlgren, N., and Zue, V. (1993).
\newblock {TIMIT} acoustic-phonetic continuous speech corpus {LDC93S1}.
\newblock In {\em Philadelphia: Linguistic Data Consortium}.

\bibitem[Gat et~al., 2024]{gat2024discrete}
Gat, I., Remez, T., Shaul, N., Kreuk, F., Chen, R.~T., Synnaeve, G., Adi, Y., and Lipman, Y. (2024).
\newblock Discrete flow matching.
\newblock {\em arXiv preprint arXiv:2407.15595}.

\bibitem[Gelfand, 2000]{gelfand2000gibbs}
Gelfand, A.~E. (2000).
\newblock Gibbs sampling.
\newblock {\em Journal of the American statistical Association}, 95(452):1300--1304.

\bibitem[Gershman and Goodman, 2014]{gershman2014amortized}
Gershman, S.~J. and Goodman, N.~D. (2014).
\newblock Amortized inference in probabilistic reasoning.
\newblock {\em Proceedings of the Annual Meeting of the Cognitive Science Society}, 36.

\bibitem[Gneiting et~al., 2007]{gneit2007proper}
Gneiting, T., Balabdaoui, F., and Raftery, A.~E. (2007).
\newblock Probabilistic forecasts, calibration and sharpness.
\newblock {\em Journal of the Royal Statistical Society: Series B (Statistical Methodology)}, 69(2):243--268.

\bibitem[Graves et~al., 2013a]{graves2013hybrid}
Graves, A., Jaitly, N., and Mohamed, A.-r. (2013a).
\newblock Hybrid speech recognition with deep bidirectional {LSTM}.
\newblock In {\em 2013 IEEE workshop on automatic speech recognition and understanding}, pages 273--278. IEEE.

\bibitem[Graves et~al., 2013b]{graves2013speech}
Graves, A., Mohamed, A.-r., and Hinton, G.~E. (2013b).
\newblock Speech recognition with deep recurrent neural networks.
\newblock In {\em 2013 IEEE international conference on acoustics, speech and signal processing}.

\bibitem[Gu and Dao, 2023]{gu_mamba_2023}
Gu, A. and Dao, T. (2023).
\newblock Mamba: {Linear}-time sequence modeling with selective state spaces.
\newblock {\em arXiv preprint arXiv:2312.00752}.

\bibitem[Gu et~al., 2020]{gu_hippo_2020}
Gu, A., Dao, T., Ermon, S., Rudra, A., and Ré, C. (2020).
\newblock {HiPPO}: {Recurrent} memory with optimal polynomial projections.
\newblock In {\em Advances in {Neural} {Information} {Processing} {Systems}}.

\bibitem[Gu et~al., 2022]{gu_s4_2022}
Gu, A., Goel, K., and R\'e, C. (2022).
\newblock Efficiently modeling long sequences with structured state spaces.
\newblock In {\em The International Conference on Learning Representations}.

\bibitem[Gu et~al., 2023]{gu_httyh_2023}
Gu, A., Johnson, I., Timalsina, A., Rudra, A., and Re, C. (2023).
\newblock How to train your {HIPPO}: State space models with generalized orthogonal basis projections.
\newblock In {\em International Conference on Learning Representations}.

\bibitem[Gulrajani et~al., 2017]{gulrajani2016pixelvae}
Gulrajani, I., Kumar, K., Ahmed, F., Taiga, A.~A., Visin, F., Vazquez, D., and Courville, A. (2017).
\newblock Pixel{VAE}: A latent variable model for natural images.
\newblock In {\em International Conference on Learning Representations}.

\bibitem[Guo et~al., 2017]{guo2017calibration}
Guo, C., Pleiss, G., Sun, Y., and Weinberger, K.~Q. (2017).
\newblock On calibration of modern neural networks.
\newblock In {\em International Conference on Machine Learning}.

\bibitem[Hartikainen and Särkkä, 2010]{Hartikainen2010kalman}
Hartikainen, J. and Särkkä, S. (2010).
\newblock Kalman filtering and smoothing solutions to temporal {Gaussian} process regression models.
\newblock In {\em IEEE International Workshop on Machine Learning for signal processing}.

\bibitem[Harvey et~al., 2022]{harvey2022flexible}
Harvey, W., Naderiparizi, S., Masrani, V., Weilbach, C., and Wood, F. (2022).
\newblock Flexible diffusion modeling of long video.
\newblock In {\em Advances in neural information processing systems}.

\bibitem[Hawryluk et~al., 2021]{hawryluk2021gaussian}
Hawryluk, I., Hoeltgebaum, H., Mishra, S., Miscouridou, X., Schnekenberg, R.~P., Whittaker, C., Vollmer, M., Flaxman, S., Bhatt, S., and Mellan, T.~A. (2021).
\newblock Gaussian process nowcasting: application to covid-19 mortality reporting.
\newblock In {\em Uncertainty in Artificial Intelligence}, pages 1258--1268. PMLR.

\bibitem[He et~al., 2022]{he2022masked}
He, K., Chen, X., Xie, S., Li, Y., Doll{\'a}r, P., and Girshick, R. (2022).
\newblock Masked autoencoders are scalable vision learners.
\newblock In {\em Proceedings of the IEEE/CVF conference on computer vision and pattern recognition}, pages 16000--16009.

\bibitem[He et~al., 2015]{https://doi.org/10.48550/arxiv.1512.03385}
He, K., Zhang, X., Ren, S., and Sun, J. (2015).
\newblock Deep residual learning for image recognition.
\newblock {\em arXiv:1512.03385}.

\bibitem[Hendrycks and Dietterich, 2019]{cifarc}
Hendrycks, D. and Dietterich, T. (2019).
\newblock Benchmarking neural network robustness to common corruptions and perturbations.
\newblock In {\em International Conference on Learning Representations}.

\bibitem[Hensman et~al., 2018]{hensman2018variational}
Hensman, J., Durrande, N., and Solin, A. (2018).
\newblock Variational {Fourier} features for {Gaussian} processes.
\newblock {\em Journal of Machine Learning Research}, 18(151):1--52.

\bibitem[Hensman et~al., 2013]{hensman2013gpbig}
Hensman, J., Fusi, N., and Lawrence, N.~D. (2013).
\newblock Gaussian processes for big data.
\newblock In {\em The Conference on Uncertainty in Artificial Intelligence}.

\bibitem[Hensman et~al., 2015]{hensman2015sgpclass}
Hensman, J., Matthews, A. G. d.~G., and Ghahramani, Z. (2015).
\newblock Scalable variational {Gaussian} process classification.
\newblock In {\em International Conference on Artificial Intelligence and Statistics}.

\bibitem[Hersbach et~al., 2023]{cds_era5_single_levels_2023}
Hersbach, H., Bell, B., Berrisford, P., Biavati, G., Hor\'{a}nyi, A., Mu\~{n}oz Sabater, J., Nicolas, J., Peubey, C., Radu, R., Rozum, I., Schepers, D., Simmons, A., Soci, C., Dee, D., and Th\'{e}paut, J.~N. (2023).
\newblock {ERA5} hourly data on single levels from 1940 to present.
\newblock Copernicus Climate Change Service (C3S) Climate Data Store (CDS).
\newblock \url{https://doi.org/10.24381/cds.adbb2d47}.

\bibitem[Heusel et~al., 2017]{heusel2017gans}
Heusel, M., Ramsauer, H., Unterthiner, T., Nessler, B., and Hochreiter, S. (2017).
\newblock Gans trained by a two time-scale update rule converge to a local nash equilibrium.
\newblock In {\em Advances in neural information processing systems}.

\bibitem[Ho et~al., 2020]{ho2020denoising}
Ho, J., Jain, A., and Abbeel, P. (2020).
\newblock Denoising diffusion probabilistic models.
\newblock {\em Advances in neural information processing systems}.

\bibitem[Hoogeboom and Salimans, 2023]{hoogeboom2023blurring}
Hoogeboom, E. and Salimans, T. (2023).
\newblock Blurring diffusion models.
\newblock In {\em International Conference on Learning Representations}.

\bibitem[Horn and Johnson, 1991]{horn1991topics}
Horn, R.~A. and Johnson, C.~R. (1991).
\newblock {\em Topics in Matrix Analysis}.
\newblock Cambridge University Press.

\bibitem[Huang et~al., 2021]{huang2021variational}
Huang, C.-W., Lim, J.~H., and Courville, A.~C. (2021).
\newblock A variational perspective on diffusion-based generative models and score matching.
\newblock {\em Advances in Neural Information Processing Systems}, 34:22863--22876.

\bibitem[Jafrasteh et~al., 2022]{daniel2022inputde}
Jafrasteh, B., Villacampa-Calvo, C., and Hernandez-Lobato, D. (2022).
\newblock Input dependent sparse {Gaussian} processes.
\newblock In {\em International Conference on Machine Learning}.

\bibitem[Jayasekera et~al., 2025]{jayasekera2025variational}
Jayasekera, I.~S., Si, J., Valdettaro, F., Chen, W., Faisal, A.~A., and Li, Y. (2025).
\newblock Variational uncertainty decomposition for in-context learning.
\newblock In {\em Advances in Neural Information Processing Systems (NeurIPS)}.

\bibitem[Jazbec et~al., 2021]{jazbec_scalable_2021}
Jazbec, M., Ashman, M., Fortuin, V., Pearce, M., Mandt, S., and Rätsch, G. (2021).
\newblock Scalable {Gaussian} process variational autoencoders.
\newblock In {\em International {Conference} on {Artificial} {Intelligence} and {Statistics}}, pages 3511--3519. PMLR.

\bibitem[Johnson et~al., 2016]{johnson2016perceptual}
Johnson, J., Alahi, A., and Fei-Fei, L. (2016).
\newblock Perceptual losses for real-time style transfer and super-resolution.
\newblock In {\em Computer Vision--ECCV 2016: 14th European Conference, Amsterdam, The Netherlands, October 11-14, 2016, Proceedings, Part II 14}, pages 694--711. Springer.

\bibitem[Jordan et~al., 1999]{jordan1999vi}
Jordan, M.~I., Ghahramani, Z., Jaakkola, T.~S., and Saul, L.~K. (1999).
\newblock An introduction to variational methods for graphical models.
\newblock {\em Machine Learning}, 37(2):183--233.

\bibitem[Jumper et~al., 2021]{jumper2021highly}
Jumper, J., Evans, R., Pritzel, A., Green, T., Figurnov, M., Ronneberger, O., Tunyasuvunakool, K., Bates, R., Žídek, A., Potapenko, A., Bridgland, A., et~al. (2021).
\newblock Highly accurate protein structure prediction with alphafold.
\newblock {\em Nature}, 596:583--589.

\bibitem[Jung et~al., 2025a]{jung2025compact2}
Jung, Y., Lee, H., Chen, W., Möllenhoff, T., Li, Y., Lee, J., and Khan, M.~E. (2025a).
\newblock Compact memory for continual logistic regression.
\newblock In {\em Advances in Neural Information Processing Systems (NeurIPS)}.

\bibitem[Jung et~al., 2025b]{jung2025compact}
Jung, Y., Lee, H., Chen, W., Möllenhoff, T., Li, Y., Lee, J., and Khan, M.~E. (2025b).
\newblock Compact memory for k-prior based continual learning.
\newblock In {\em Symposium on Advances in Approximate {Bayesian} Inference (AABI)}.

\bibitem[Kapoor et~al., 2021]{kapoor2021variational}
Kapoor, S., Karaletsos, T., and Bui, T.~D. (2021).
\newblock Variational auto-regressive {Gaussian} processes for continual learning.
\newblock In {\em International {Conference} on {Machine} {Learning}}.

\bibitem[Karras et~al., 2020]{karras2020analyzing}
Karras, T., Laine, S., Aittala, M., Hellsten, J., Lehtinen, J., and Aila, T. (2020).
\newblock Analyzing and improving the image quality of stylegan.
\newblock In {\em Proceedings of the IEEE/CVF conference on computer vision and pattern recognition}, pages 8110--8119.

\bibitem[Khemakhem et~al., 2020]{khemakhem2020variational}
Khemakhem, I., Kingma, D., Monti, R., and Hyvarinen, A. (2020).
\newblock Variational autoencoders and nonlinear {ICA}: A unifying framework.
\newblock In {\em International conference on artificial intelligence and statistics}, pages 2207--2217. PMLR.

\bibitem[Kim et~al., 2022]{kim2021transformers}
Kim, J., Nguyen, T.~D., Min, S., Cho, S., Lee, M., Lee, H., and Hong, S. (2022).
\newblock Pure transformers are powerful graph learners.
\newblock {\em arXiv}, abs/2207.02505.

\bibitem[Kingma et~al., 2021]{kingma2021variational}
Kingma, D., Salimans, T., Poole, B., and Ho, J. (2021).
\newblock Variational diffusion models.
\newblock {\em Advances in neural information processing systems}, 34:21696--21707.

\bibitem[Kingma and Ba, 2015]{kingma2015adam}
Kingma, D.~P. and Ba, J. (2015).
\newblock Adam: A method for stochastic optimization.
\newblock In {\em International Conference on Learning Representations}.

\bibitem[Kingma and Welling, 2014]{welling2014auto}
Kingma, D.~P. and Welling, M. (2014).
\newblock Auto-encoding variational {Bayes}.
\newblock In {\em International Conference on Learning Representations}.

\bibitem[Kirkpatrick et~al., 2017]{kirkpatrick2017overcoming}
Kirkpatrick, J., Pascanu, R., Rabinowitz, N., Veness, J., Desjardins, G., Rusu, A.~A., Milan, K., Quan, J., Ramalho, T., Grabska-Barwinska, A., et~al. (2017).
\newblock Overcoming catastrophic forgetting in neural networks.
\newblock {\em Proceedings of the national academy of sciences}, 114(13):3521--3526.

\bibitem[Klambauer et~al., 2017]{klambauer2017self}
Klambauer, G., Unterthiner, T., Mayr, A., and Hochreiter, S. (2017).
\newblock Self-normalizing neural networks.
\newblock In {\em Advances in Neural Information Processing Systems}.

\bibitem[Kristiadi et~al., 2020]{kris2020bayes}
Kristiadi, A., Hein, M., and Hennig, P. (2020).
\newblock Being {Bayesian}, even just a bit, fixes overconfidence in relu networks.
\newblock In {\em International Conference on Machine Learning}.

\bibitem[Krizhevsky et~al., 2009]{cifar}
Krizhevsky, A., Nair, V., and Hinton, G. (2009).
\newblock Cifar-10 and {CIFAR}-100 datasets.

\bibitem[Krizhevsky et~al., 2012]{krizhevsky2012imagenet}
Krizhevsky, A., Sutskever, I., and Hinton, G.~E. (2012).
\newblock Imagenet classification with deep convolutional neural networks.
\newblock In {\em Advances in Neural Information Processing Systems}.

\bibitem[Lakshminarayanan et~al., 2017]{bala2017ensemble}
Lakshminarayanan, B., Pritzel, A., and Blundell, C. (2017).
\newblock Simple and scalable predictive uncertainty estimation using deep ensembles.
\newblock In {\em Advances in neural information processing systems}.

\bibitem[Lean, 2004]{lean2004solar}
Lean, J. (2004).
\newblock Solar irradiance reconstruction.
\newblock In {\em Data contribution series \# 2004-035, IGBP PAGES/World Data Center for Paleoclimatology NOAA/NGDC Paleoclimatology Program, Boulder, CO, USA}.

\bibitem[LeCun and Hinton, 2015]{lecun2015dl}
LeCun, Y.and~Bengio, Y. and Hinton, G. (2015).
\newblock Deep learning.
\newblock {\em Nature}, 521:436--444.

\bibitem[Leibfried et~al., 2020]{leibfried_tutorial_2020}
Leibfried, F., Dutordoir, V., John, S., and Durrande, N. (2020).
\newblock A tutorial on sparse {Gaussian} processes and variational inference.
\newblock {\em arXiv preprint arXiv:2012.13962}.

\bibitem[Li et~al., 2024]{li2024videomamba}
Li, K., Li, X., Wang, Y., He, Y., Wang, Y., Wang, L., and Qiao, Y. (2024).
\newblock Videomamba: State space model for efficient video understanding.
\newblock In {\em arXiv preprint arXiv:2403.06977}.

\bibitem[Li, 2018]{li2018approx}
Li, Y. (2018).
\newblock Approximate inference: New visions.
\newblock {\em PhD dissertation, University of Cambridge}.

\bibitem[Lipman et~al., 2023]{lipman2022flow}
Lipman, Y., Chen, R. T.~Q., Ben-Hamu, H., Nickel, M., and Le, M. (2023).
\newblock Flow matching for generative modeling.
\newblock In {\em The Eleventh International Conference on Learning Representations}.

\bibitem[Liu et~al., 2020]{liu2020sngp}
Liu, J.~Z., Lin, Z., Padhy, S., Tran, D., Bedrax-Weiss, T., and Lakshminarayanan, B. (2020).
\newblock Simple and principled uncertainty estimation with deterministic deep learning via distance awareness.
\newblock In {\em Advances in Neural Information Processing Systems}.

\bibitem[Liu et~al., 2022]{liu2022flow}
Liu, X., Gong, C., and Liu, Q. (2022).
\newblock Flow straight and fast: Learning to generate and transfer data with rectified flow.
\newblock {\em arXiv preprint arXiv:2209.03003}.

\bibitem[Liu et~al., 2021]{liu2021self}
Liu, X., Zhang, F., Hou, Z., Mian, L., Wang, Z., Zhang, J., and Tang, J. (2021).
\newblock Self-supervised learning: Generative or contrastive.
\newblock {\em IEEE transactions on knowledge and data engineering}, 35(1):857--876.

\bibitem[Liu et~al., 2015]{liu2015deep}
Liu, Z., Luo, P., Wang, X., and Tang, X. (2015).
\newblock Deep learning face attributes in the wild.
\newblock In {\em International Conference on Computer Vision (ICCV)}.

\bibitem[Locatello et~al., 2019]{locatello2019challenging}
Locatello, F., Bauer, S., Lucic, M., Rätsch, G., Gelly, S., Schölkopf, B., and Bachem, O. (2019).
\newblock Challenging common assumptions in the unsupervised learning of disentangled representations.
\newblock In {\em International Conference on Machine Learning}.

\bibitem[Lou et~al., 2024]{lou2024discrete}
Lou, A., Meng, C., and Ermon, S. (2024).
\newblock Discrete diffusion modeling by estimating the ratios of the data distribution.
\newblock In {\em International Conference on Machine Learning (ICML)}.

\bibitem[Lázaro-Gredilla and Figueiras-Vidal, 2009]{lazaro-gredilla_inter-domain_2009}
Lázaro-Gredilla, M. and Figueiras-Vidal, A. (2009).
\newblock Inter-domain {Gaussian} processes for sparse inference using inducing features.
\newblock In {\em Advances in {Neural} {Information} {Processing} {Systems}}.

\bibitem[Ma and Hern{\'a}ndez-Lobato, 2021]{ma2021functional}
Ma, C. and Hern{\'a}ndez-Lobato, J.~M. (2021).
\newblock Functional variational inference based on stochastic process generators.
\newblock In {\em Advances in Neural Information Processing Systems}.

\bibitem[Ma et~al., 2015]{ma2015recipe}
Ma, Y., Chen, T., and Fox, E. (2015).
\newblock A complete recipe for stochastic gradient {MCMC}.
\newblock In {\em Advances in Neural Information Processing Systems}.

\bibitem[Maal{\o}e et~al., 2019]{maaloe2019biva}
Maal{\o}e, L., Fraccaro, M., Li{\'e}vin, V., and Winther, O. (2019).
\newblock Biva: A very deep hierarchy of latent variables for generative modeling.
\newblock {\em Advances in neural information processing systems}, 32.

\bibitem[Maas et~al., 2011]{imdb}
Maas, A.~L., Daly, R.~E., Pham, P.~T., Huang, D.;~Ng, A.~Y., and Potts, C. (2011).
\newblock Learning word vectors for sentiment analysis.
\newblock In {\em NAACL-HLT}.

\bibitem[Maddox et~al., 2021]{maddox_conditioning_2021}
Maddox, W.~J., Stanton, S., and Wilson, A.~G. (2021).
\newblock Conditioning sparse variational {Gaussian} processes for online decision-making.
\newblock In {\em Advances in {Neural} {Information} {Processing} {Systems}}.

\bibitem[Matthews, 1975]{mcc}
Matthews, B.~W. (1975).
\newblock Comparison of the predicted and observed secondary structure of t4 phage lysozyme.
\newblock {\em Biochimica et Biophysica Acta (BBA)-Protein Structure, 405(2): 442–451}.

\bibitem[McCulloch and Pitts, 1943]{mcCulloch1943logical}
McCulloch, W.~S. and Pitts, W. (1943).
\newblock A logical calculus of the ideas immanent in nervous activity.
\newblock {\em The bulletin of mathematical biophysics}, 5:115--133.

\bibitem[Mercer, 1909]{mercer1909functions}
Mercer, J. (1909).
\newblock Functions of positive and negative type and their connection with the theory of integral equations.
\newblock {\em Philosophical Transactions of the Royal Society}, A(209):415--446.

\bibitem[Metropolis et~al., 1953]{metropolis1953equation}
Metropolis, N., Rosenbluth, A.~W., Rosenbluth, M.~N., and Teller, A.~H. (1953).
\newblock Equation of state calculations by fast computing machines.
\newblock {\em The journal of chemical physics 21.6}.
\newblock 1087--1092.

\bibitem[Mukhoti et~al., 2020]{mukhoti2020calibrating}
Mukhoti, J., Kulharia, V., Sanyal, A., Golodetz, S., Torr, P.~H., and Dokania, P.~K. (2020).
\newblock Calibrating deep neural networks using focal loss.
\newblock In {\em Advances in Neural Information Processing Systems}.

\bibitem[Naeini et~al., 2015]{Pakdaman_Naeini_Cooper_Hauskrecht_2015}
Naeini, M.~P., Cooper, G.~F., and Hauskrecht, M. (2015).
\newblock Obtaining well calibrated probabilities using {Bayesian} binning.
\newblock {\em Proceedings of the AAAI Conference on Artificial Intelligence}, 29.

\bibitem[Nair and Hinton, 2010]{hinton2010relu}
Nair, V. and Hinton, G.~E. (2010).
\newblock Rectified linear units improve restricted {Boltzmann} machines.
\newblock In {\em International Conference on Machine Learning}.

\bibitem[Nguyen et~al., 2018]{nguyen2018variational}
Nguyen, C.~V., Li, Y., Bui, T.~D., and Turner, R.~E. (2018).
\newblock Variational continual learning.
\newblock In {\em International Conference on Learning Representations}.

\bibitem[Nichol and Dhariwal, 2021]{nichol2021improved}
Nichol, A. and Dhariwal, P. (2021).
\newblock Improved denoising diffusion probabilistic models.
\newblock {\em arXiv preprint arXiv:2102.09672}.

\bibitem[OpenAI et~al., 2024a]{openai2024gpt4ocard}
OpenAI, :, Hurst, A., Lerer, A., Goucher, A.~P., Perelman, A., Ramesh, A., Clark, A., Ostrow, A., Welihinda, A., Hayes, A., Radford, A., et~al. (2024a).
\newblock Gpt-4o system card.
\newblock In {\em arXiv preprint arXiv:2410.21276}.

\bibitem[OpenAI et~al., 2024b]{openai2024gpt4technicalreport}
OpenAI, Achiam, J., Adler, S., Agarwal, S., Ahmad, L., Akkaya, I., Aleman, F.~L., Almeida, D., et~al. (2024b).
\newblock Gpt-4 technical report.
\newblock In {\em arXiv preprint arXiv:2303.08774}.

\bibitem[Papamarkou et~al., 2024]{papamarkou2024positionbayesiandeeplearning}
Papamarkou, T., Skoularidou, M., Palla, K., Aitchison, L., Arbel, J., Dunson, D., Filippone, M., Fortuin, V., et~al. (2024).
\newblock Position: {Bayesian} deep learning is needed in the age of large scale ai.
\newblock {\em arXiv preprint arXiv:2402.00809}.

\bibitem[Paszke et~al., 2019]{NEURIPS2019_9015}
Paszke, A., Gross, S., Massa, F., Lerer, A., Bradbury, J., Chanan, G., Killeen, T., Lin, Z., Gimelshein, N., Antiga, L., Desmaison, A., Kopf, A., Yang, E., DeVito, Z., Raison, M., Tejani, A., Chilamkurthy, S., Steiner, B., Fang, L., Bai, J., and Chintala, S. (2019).
\newblock Pytorch: An imperative style, high-performance deep learning library.
\newblock In {\em Advances in Neural Information Processing Systems 32}, pages 8024--8035. Curran Associates, Inc.

\bibitem[Peters et~al., 2018]{elmo2018}
Peters, M., Neumann, M., Iyyer, M., Gardner, M., Clark, C., Lee, K., and Zettlemoyer, L. (2018).
\newblock Deep contextualized word representations.
\newblock In {\em Conference of the North American Chapter of the Association for Computational Linguistics}.

\bibitem[Quan and Li, 2024]{quan2024multichannel}
Quan, C. and Li, X. (2024).
\newblock Multichannel long-term streaming neural speech enhancement for static and moving speakers.
\newblock In {\em arXiv preprint arXiv:2403.07675}.

\bibitem[Rahimi and Recht, 2007]{rahimi_random_2007}
Rahimi, A. and Recht, B. (2007).
\newblock Random features for large-scale kernel machines.
\newblock In {\em Advances in {Neural} {Information} {Processing} {Systems}}.

\bibitem[Rasmussen and Williams, 2006]{rasmussen2006gp}
Rasmussen, C.~E. and Williams, C. K.~I. (2006).
\newblock {\em Gaussian processes for machine learning}.
\newblock The MIT Press.
\newblock {ISBN:} 0-262-18253-X.

\bibitem[Reid et~al., 2023]{reid2023simplex}
Reid, I., Choromanski, K., Likhosherstov, V., and Weller, A. (2023).
\newblock Simplex random features.
\newblock In {\em International Conference on Machine Learning}.

\bibitem[Rezende et~al., 2014]{rezende:vae2014}
Rezende, D.~J., Mohamed, S., and Wierstra, D. (2014).
\newblock Stochastic backpropagation and approximate inference in deep generative models.
\newblock In {\em Proceedings of the 31st International Conference on Machine Learning}, pages 1278--1286.

\bibitem[Rissanen et~al., 2023]{rissanen2022generative}
Rissanen, S., Heinonen, M., and Solin, A. (2023).
\newblock Generative modelling with inverse heat dissipation.
\newblock {\em International Conference on Learning Representations}.

\bibitem[Ritter et~al., 2018]{ritter2018online}
Ritter, H., Botev, A., and Barber, D. (2018).
\newblock Online structured laplace approximations for overcoming catastrophic forgetting.
\newblock In {\em Advances in Neural Information Processing Systems}, volume~31.

\bibitem[Ritter et~al., 2021]{ritter2021sparse}
Ritter, H., Kukla, M., Zhang, C., and Li, Y. (2021).
\newblock Sparse uncertainty representation in deep learning with inducing weights.
\newblock {\em Advances in Neural Information Processing Systems}, 34:6515--6528.

\bibitem[Roberts et~al., 2013]{roberts_gaussian_2013}
Roberts, S., Osborne, M., Ebden, M., Reece, S., Gibson, N., and Aigrain, S. (2013).
\newblock Gaussian processes for time-series modelling.
\newblock {\em Philosophical Transactions of the Royal Society A: Mathematical, Physical and Engineering Sciences, 371(1984):20110550}.

\bibitem[Rombach et~al., 2022]{rombach2022high}
Rombach, R., Blattmann, A., Lorenz, D., Esser, P., and Ommer, B. (2022).
\newblock High-resolution image synthesis with latent diffusion models.
\newblock In {\em Proceedings of the IEEE/CVF conference on computer vision and pattern recognition}, pages 10684--10695.

\bibitem[Rosenblatt, 1958]{rosenblatt1958perceptron}
Rosenblatt, F. (1958).
\newblock The perceptron: a probabilistic model for information storage and organization in the brain.
\newblock {\em Psychological review}, 65:386.

\bibitem[Rudin, 1994]{rudin_fourier_1994}
Rudin, W. (1994).
\newblock Fourier analysis on groups.
\newblock {\em Wiley Classics Library. Wiley-Interscience New York, reprint edition}.

\bibitem[Salimans, 2016]{salimans2016structured}
Salimans, T. (2016).
\newblock A structured variational auto-encoder for learning deep hierarchies of sparse features.
\newblock {\em arXiv preprint arXiv:1602.08734}.

\bibitem[Salimans et~al., 2015]{salimans2015markov}
Salimans, T., Kingma, D., and Welling, M. (2015).
\newblock {Markov} chain {Monte Carlo} and variational inference: Bridging the gap.
\newblock In {\em Proceedings of the International Conference on Machine Learning (ICML)}.

\bibitem[Salimbeni et~al., 2018]{sali2018ortho}
Salimbeni, H., Cheng, C.-A., Boots, B., and Deisenroth, M. (2018).
\newblock Orthogonally decoupled variational {Gaussian} processes.
\newblock In {\em Advances in Neural Information Processing Systems}.

\bibitem[Salimbeni and Deisenroth, 2017]{salimbeni2017doubly}
Salimbeni, H. and Deisenroth, M. (2017).
\newblock Doubly stochastic variational inference for deep {Gaussian} processes.
\newblock In {\em Advances in Neural Information Processing Systems}.

\bibitem[S{\"a}rkk{\"a} and Solin, 2019]{sarkka2019applied}
S{\"a}rkk{\"a}, S. and Solin, A. (2019).
\newblock {\em Applied stochastic differential equations}, volume~10.
\newblock Cambridge University Press.

\bibitem[Shi et~al., 2023]{shi2023diffusion}
Shi, Y., De~Bortoli, V., Campbell, A., and Doucet, A. (2023).
\newblock Diffusion schr{\"o}dinger bridge matching.
\newblock {\em Advances in Neural Information Processing Systems}, 36.

\bibitem[Shwartz~Ziv and LeCun, 2024]{shwartz2024compress}
Shwartz~Ziv, R. and LeCun, Y. (2024).
\newblock To compress or not to compress—self-supervised learning and information theory: A review.
\newblock {\em Entropy}, 26(3):252.

\bibitem[Snelson and Ghahramani, 2005]{snelson2005sgp}
Snelson, E. and Ghahramani, Z. (2005).
\newblock Sparse {Gaussian} processes using pseudo-inputs.
\newblock In {\em Advances in Neural Information Processing Systems}.

\bibitem[Sohl-Dickstein et~al., 2015]{sohl2015deep}
Sohl-Dickstein, J., Weiss, E., Maheswaranathan, N., and Ganguli, S. (2015).
\newblock Deep unsupervised learning using nonequilibrium thermodynamics.
\newblock In {\em International conference on machine learning}, pages 2256--2265. PMLR.

\bibitem[S{\o}nderby et~al., 2016]{sonderby2016ladder}
S{\o}nderby, C.~K., Raiko, T., Maal{\o}e, L., S{\o}nderby, S.~K., and Winther, O. (2016).
\newblock Ladder variational autoencoders.
\newblock {\em Advances in neural information processing systems}, 29.

\bibitem[Song et~al., 2021a]{song2021denoising}
Song, J., Meng, C., and Ermon, S. (2021a).
\newblock Denoising diffusion implicit models.
\newblock In {\em International Conference on Learning Representations}.

\bibitem[Song et~al., 2021b]{song2020score}
Song, Y., Sohl-Dickstein, J., Kingma, D.~P., Kumar, A., Ermon, S., and Poole, B. (2021b).
\newblock Score-based generative modeling through stochastic differential equations.
\newblock In {\em International Conference on Learning Representations}.

\bibitem[Stanton et~al., 2021]{stanton_kernel_2021}
Stanton, S., Maddox, W.~J., Delbridge, I., and Wilson, A.~G. (2021).
\newblock Kernel interpolation for scalable online {Gaussian} processes.
\newblock In {\em International {Conference} on {Artificial} {Intelligence} and {Statistics}}. PMLR.

\bibitem[Sun et~al., 2021]{hkd2021sun}
Sun, S., Shi, J., Wilson, A.~G., and Grosse, R. (2021).
\newblock Scalable variational {Gaussian} processes via harmonic kernel decomposition.
\newblock In {\em International Conference on Machine Learning}.

\bibitem[Sun et~al., 2019]{sun2019functional}
Sun, S., Zhang, G., Shi, J., and Grosse, R. (2019).
\newblock Functional variational {Bayesian} neural networks.
\newblock In {\em International Conference on Learning Representation}.

\bibitem[Tfekci and Kaya, 2014]{combined_cycle_power_plant_294}
Tfekci, P. and Kaya, H. (2014).
\newblock {Combined Cycle Power Plant}.
\newblock UCI Machine Learning Repository.
\newblock {DOI}: https://doi.org/10.24432/C5002N.

\bibitem[Tian et~al., 2020]{tian2020makes}
Tian, Y., Sun, C., Poole, B., Krishnan, D., Schmid, C., and Isola, P. (2020).
\newblock What makes for good views for contrastive learning?
\newblock {\em Advances in neural information processing systems}, 33:6827--6839.

\bibitem[Titsias, 2009]{svgp2009titsias}
Titsias, M. (2009).
\newblock Variational learning of inducing variables in sparse {Gaussian} processes.
\newblock In {\em Artificial Intelligence and Statistics}.

\bibitem[Ton et~al., 2018]{ton2018spatial}
Ton, J.-F., Flaxman, S., Sejdinovic, D., and Bhatt, S. (2018).
\newblock Spatial mapping with {Gaussian} processes and nonstationary fourier features.
\newblock {\em Journal of Spatial Statistics}, 28:59--78.

\bibitem[Touvron et~al., 2022]{Touvron2022DeiTIR}
Touvron, H., Cord, M., and Jegou, H. (2022).
\newblock Deit iii: Revenge of the vit.
\newblock {\em arXiv preprint arXiv:2204.07118}.

\bibitem[Tran et~al., 2019]{tran2019bl}
Tran, D., Dusenberry, M.~W., van~der Wilk, M., and Hafner, D. (2019).
\newblock Bayesian layers: A module for neural network uncertainty.
\newblock {\em arXiv:1812.03973}.

\bibitem[Tran et~al., 2021]{tran2021swsgp}
Tran, G.-L., Milios, D., Michiardi, P., and Filippone, M. (2021).
\newblock Sparse within sparse {Gaussian} processes using neighbor information.
\newblock In {\em International Conference on Machine Learning}.

\bibitem[Tsai et~al., 2019]{tsai2019TransformerDissection}
Tsai, Y.-H.~H., Bai, S., Yamada, M., Morency, L.-P., and Salakhutdinov, R. (2019).
\newblock Transformer dissection: An unified understanding for transformer's attention via the lens of kernel.
\newblock In {\em EMNLP}.

\bibitem[Vahdat and Kautz, 2020]{vahdat2020nvae}
Vahdat, A. and Kautz, J. (2020).
\newblock Nvae: A deep hierarchical variational autoencoder.
\newblock {\em Advances in neural information processing systems}, 33:19667--19679.

\bibitem[van~den Oord et~al., 2016a]{oord2016conditional}
van~den Oord, A., Kalchbrenner, N., Espeholt, L., kavukcuoglu, K., Vinyals, O., and Graves, A. (2016a).
\newblock Conditional image generation with pixelcnn decoders.
\newblock In {\em Advances in Neural Information Processing Systems}.

\bibitem[van~den Oord et~al., 2016b]{oord2016pixel}
van~den Oord, A., Kalchbrenner, N., and kavukcuoglu, K. (2016b).
\newblock Pixel recurrent neural networks.
\newblock In {\em International Conference on Machine Learning}.

\bibitem[Van Den~Oord et~al., 2017]{van2017neural}
Van Den~Oord, A., Vinyals, O., et~al. (2017).
\newblock Neural discrete representation learning.
\newblock {\em Advances in neural information processing systems}, 30.

\bibitem[Van~der Wilk et~al., 2020]{van_der_wilk_framework_2020}
Van~der Wilk, M., Dutordoir, V., John, S., Artemev, A., Adam, V., and Hensman, J. (2020).
\newblock A framework for interdomain and multioutput {Gaussian} processes.
\newblock {\em arXiv preprint arXiv:2003.01115}.

\bibitem[Vaswani et~al., 2017]{vaswani2017attention}
Vaswani, A., Shazeer, N., Parmar, N., Uszkoreit, J., Jones, L., Gomez, A.~N., Kaiser, {\L}., and Polosukhin, I. (2017).
\newblock Attention is all you need.
\newblock {\em Advances in neural information processing systems}, 30.

\bibitem[Villegas et~al., 2022]{villegas2022phenaki}
Villegas, R., Babaeizadeh, M., Kindermans, P.-J., Moraldo, H., Zhang, H., Saffar, M.~T., Castro, S., Kunze, J., and Erhan, D. (2022).
\newblock Phenaki: Variable length video generation from open domain textual descriptions.
\newblock In {\em International Conference on Learning Representations}.

\bibitem[Wang et~al., 2024]{wang2024rectified}
Wang, F.-Y., Yang, L., Huang, Z., Wang, M., and Li, H. (2024).
\newblock Rectified diffusion: Straightness is not your need in rectified flow.
\newblock {\em arXiv preprint arXiv:2410.07303}.

\bibitem[Warstadt et~al., 2019]{cola}
Warstadt, A., Singh, A., and Bowman, S.~R. (2019).
\newblock Neural network acceptability judgments.
\newblock In {\em Transactions of the Association for Computational Linguistics}.

\bibitem[Wen et~al., 2024]{wen2024mitigating}
Wen, B., Xu, C., Wolfe, R., Wang, L.~L., and Howe, B. (2024).
\newblock Mitigating overconfidence in large language models: A behavioral lens on confidence estimation and calibration.
\newblock In {\em NeurIPS 2024 Workshop on Behavioral Machine Learning}.

\bibitem[Wilkinson et~al., 2021]{wilkinson_sparse_2021}
Wilkinson, W., Solin, A., and Adam, V. (2021).
\newblock Sparse algorithms for {Markovian} {Gaussian} processes.
\newblock In {\em International {Conference} on {Artificial} {Intelligence} and {Statistics}}, pages 1747--1755. PMLR.

\bibitem[Wilson et~al., 2016]{agw2015dkl}
Wilson, A.~G., Hu, Z., Salakhutdinov, R., and Xing, E.~P. (2016).
\newblock Deep kernel learning.
\newblock In {\em International Conference on Artificial Intelligence and Statistics}.

\bibitem[Wilson and Izmailov, 2022]{wilson2022bayesian}
Wilson, A.~G. and Izmailov, P. (2022).
\newblock Bayesian deep learning and a probabilistic perspective of generalization.
\newblock {\em arXiv preprint arXiv:2002.08791}.

\bibitem[Wilson et~al., 2020]{wilsonl2020eff}
Wilson, J.~T., Borovitskiy, V., Terenin, A., Mostowsky, P., and Deisenroth, M.~P. (2020).
\newblock Efficiently sampling functions from {Gaussian} process posteriors.
\newblock In {\em International Conference on Machine Learning}.

\bibitem[Wu et~al., 2020]{wu2020mutual}
Wu, M., Zhuang, C., Mosse, M., Yamins, D., and Goodman, N. (2020).
\newblock On mutual information in contrastive learning for visual representations.
\newblock {\em arXiv preprint arXiv:2005.13149}.

\bibitem[Xiao and Bamler, 2023]{xiao2023trading}
Xiao, T.~Z. and Bamler, R. (2023).
\newblock Trading information between latents in hierarchical variational autoencoders.
\newblock In {\em The Eleventh International Conference on Learning Representations}.

\bibitem[Xiong et~al., 2024]{xiong2024can}
Xiong, M., Hu, Z., Lu, X., Li, Y., Fu, J., He, J., and Hooi, B. (2024).
\newblock Can {LLMs} express their uncertainty? an empirical evaluation of confidence elicitation in {LLMs}.
\newblock In {\em International Conference on Learning Representations (ICLR)}.

\bibitem[Xue et~al., 2021]{xue2021bayestran}
Xue, B., Yu, J., Xu, J., Liu, S., Hu, S., Ye, Z., Geng, M., Liu, X., and Meng, H. (2021).
\newblock Bayesian transformer language models for speech recognition.
\newblock {\em arXiv:2102.04754}.

\bibitem[Yan et~al., 2021]{yan2021videogpt}
Yan, W., Zhang, Y., Abbeel, P., and Srinivas, A. (2021).
\newblock Videogpt: Video generation using vq-vae and transformers.
\newblock {\em arXiv preprint arXiv:2104.10157}.

\bibitem[Zeni et~al., 2024]{zeni2024mattergengenerativemodelinorganic}
Zeni, C., Pinsler, R., Zügner, D., Fowler, A., Horton, M., Fu, X., Shysheya, S., Crabbé, J., Sun, L., Smith, J., Nguyen, B., Schulz, H., Lewis, S., Huang, C.-W., Lu, Z., Zhou, Y., Yang, H., Hao, H., Li, J., Tomioka, R., and Xie, T. (2024).
\newblock Mattergen: a generative model for inorganic materials design.
\newblock In {\em arXiv preprint arXiv:2312.03687}.

\bibitem[Zhang et~al., 2018a]{zhang2018advances}
Zhang, C., B{\"u}tepage, J., Kjellstr{\"o}m, H., and Mandt, S. (2018a).
\newblock Advances in variational inference.
\newblock {\em IEEE transactions on pattern analysis and machine intelligence}, 41(8):2008--2026.

\bibitem[Zhang et~al., 2018b]{zhang2018unreasonable}
Zhang, R., Isola, P., Efros, A.~A., Shechtman, E., and Wang, O. (2018b).
\newblock The unreasonable effectiveness of deep features as a perceptual metric.
\newblock In {\em Proceedings of the IEEE conference on computer vision and pattern recognition}, pages 586--595.

\bibitem[Zhang et~al., 2019]{zhang2019cyclical}
Zhang, R., Li, C., Zhang, J., Chen, C., and Wilson, A.~G. (2019).
\newblock Cyclical stochastic gradient {MCMC} for {Bayesian} deep learning.
\newblock In {\em International Conference on Learning Representations}.

\bibitem[Zhu et~al., 2023]{zhu_markovian_2023}
Zhu, H., Rodas, C.~B., and Li, Y. (2023).
\newblock Markovian {Gaussian} process variational autoencoders.
\newblock In {\em International {Conference} on {Machine} {Learning}}.

\bibitem[Zhu et~al., 2024]{zhu_vision_2024}
Zhu, L., Liao, B., Zhang, Q., Wang, X., Liu, W., and Wang, X. (2024).
\newblock Vision {Mamba}: Efficient visual representation learning with bidirectional state space model.
\newblock In {\em International Conference on Machine Learning (ICML)}.

\end{thebibliography}
\bibliographystyle{apalike}

\clearpage
\appendix
\chapter{Supplementary Material for Chapter~\ref{cha:back}}

\section{HiPPO Variants in Addition to HiPPO-LegS}
\label{appendix:hippo_measure_basis}

In addition to HiPPO-LegS, we also provide here the formulations of a few other HiPPO variants \citep{gu_hippo_2020} based on non-uniform measures.

\paragraph{HiPPO-LegT.} 
HiPPO-LegT uses a uniform measure over the most recent context window with length $\theta$, $\omega^{(t)}(x) = \frac{1}{\theta}\bm{1}_{[t-{\theta},t]}(x)$ and scaled Legendre polynomials with input domain adapted to $[t-\theta,t]$, \(g_m^{(t)}(x) = (2 m+1)^{1 / 2} P_m\left(\frac{2 (x-t)}{\theta}+1\right)\), as basis functions, where $P_m(\cdot)$ is the standard $m$-th Legendre polynomial with input domain $[-1, 1]$. The formulas of matrices $\bm{A}(t) \in \mathbb{R}^{M \times M}$ and $\bm{B}(t) \in \mathbb{R}^{M \times 1}$ for the corresponding linear ODE are:
\begin{equation}
\bm{A}(t) = -\frac{1}{\theta}\bm{A}, \quad
[\bm{A}]_{mk} = \begin{cases}
    \sqrt{(2m+1)(2k+1)} & \text{if } m \geq k \\
    (-1)^{m-k}\sqrt{(2m+1)(2k+1)} & \text{if } m < k
\end{cases}
\end{equation}
and
\begin{equation}
[\bm{B}(t)]_m =\frac{[\bm{B}]_m}{\theta} =\frac{\sqrt{2m+1}}{\theta},
\end{equation}

\paragraph{HiPPO-LagT.}
HiPPO-LagT uses an exponential decay measure, which assigns more importance to more recent history, $\omega^{(t)}(x) = \exp \left( -(t-x) \right)\bm{1}_{(-\infty,t]}(x)$ and scaled Laguerre polynomials with input domain adapted to $(-\infty,t]$, \(g_m^{(t)}(x) = L_m\left(t-x\right)\), as basis functions, where $L_m(\cdot)$ is the standard $m$-th Laguerre polynomial with input domain $[0, \infty)$. The formulas of matrices $\bm{A}(t) \in \mathbb{R}^{M \times M}$ and $\bm{B}(t) \in \mathbb{R}^{M \times 1}$ for the corresponding linear ODE are:
\begin{equation}
[\bm{A}(t)]_{mk} = 
[\bm{A}]_{mk} = \begin{cases}
    -1 & \text{if } m \geq k \\
    0 & \text{if } m < k
\end{cases}
\end{equation}
and
\begin{equation}
[\bm{B}(t)]_m =[\bm{B}]_m =1,
\end{equation}

\paragraph{HiPPO-FouT.}
Similar to HiPPO-LegT, HiPPO-FouT also uses the uniform measure over a sliding window window, $\omega^{(t)}(x) = \frac{1}{\theta}\bm{1}_{[t-{\theta},t]}(x)$, but is based on scaled Fourier basis functions with input domain adapted to $[t-\theta,t]$, $g_m^{(t)}(x) = \exp \left( 2\pi i m \frac{t-x}{\theta}\right)$, as basis functions. The formulas of matrices $\bm{A}(t) \in \mathbb{R}^{M \times M}$ and $\bm{B}(t) \in \mathbb{R}^{M \times 1}$ for the corresponding linear ODE are:
\begin{equation}
[\bm{A}(t)]_{mk} = 
[\bm{A}]_{mk} = \begin{cases}
    -\frac{1}{\theta} & \text{if } m \neq k \\
    \frac{2\pi i m-1}{\theta} & \text{if } m = k
\end{cases}
\end{equation}
and
\begin{equation}
[\bm{B}(t)]_m =[\bm{B}]_m =\frac{1}{\theta},
\end{equation}
\chapter{Supplementary Material for Chapter~\ref{cha:sgpa}}

\section{Derivations}

\subsection{ELBO Derivations for SVGP}
\label{a4:svgp_elbo}
For completeness, we include the full derivation of ELBO as shown in Eq.~\ref{eq:elbo_svgp} for standard SVGP \citep{svgp2009titsias, hensman2013gpbig}. With $M$ inducing points pairs $\{(\bm{z}_m, u_m)\}_{m=1}^M$, the prior distribution of $[\bm{f}, \bm{u}]^\top$ is:
\begin{equation}
    p(\bm{f}, \bm{u}|\bm{X},\bm{Z})= \mathcal{N}(\bm{0}, \begin{pmatrix}
     \bm{K}_{\bm{XX}} & \bm{K}_{\bm{XZ}} \\
     \bm{K}_{\bm{Z}\bm{X}} & \bm{K}_{\bm{ZZ}}
    \end{pmatrix}).
\end{equation}
With prior conditional matching assumption (see Eq.~\ref{eq:prior_conditional_matching}), the approximate posterior conditional distribution of function values $\bm{f}$ for inputs $\bm{X}$ given inducing points $\bm{u}$ is the same as the prior conditional distribution: 
\begin{equation}
    q(\bm{f}|\bm{u,Z,X})=p(\bm{f}|\bm{u,Z,X}).
\end{equation}
Under prior conditional matching assumption, $q(\bm{f},\bm{u}|\bm{Z},\bm{X})=p(\bm{f}|\bm{u},\bm{Z},\bm{X})q(\bm{u})$, where $q(\bm{u})=\mathcal{N}(\bm{m_u},\bm{S_u})$. Suppose the observation likelihood is $p(\bm{y}|\bm{f})$, ELBO can be simplified as follows:
\begin{equation}
    \begin{split}
        \mathcal{L}_{ELBO}&=\mathbb{E}_{q(\bm{f},\bm{u}|\bm{Z}, \bm{X})}[\log\frac{p(\bm{y},\bm{f},\bm{u}|\bm{Z},\bm{X})}{q(\bm{f},\bm{u}|\bm{Z},\bm{X})}]\\
        &=\mathbb{E}_{q(\bm{f},\bm{u}|\bm{Z}, \bm{X})}[\log\frac{p(\bm{y}|\bm{f})\cancel{p(\bm{f}|\bm{u,Z,X})}p(\bm{u}|\bm{Z})}{\cancel{p(\bm{f}|\bm{u},\bm{Z},\bm{X})}q(\bm{u})}]\\
        &=\int (\int p(\bm{f}|\bm{u,Z,X})q(\bm{u}) d\bm{u})  \log p(\bm{y}|\bm{f}) d\bm{f}  \\& \qquad\qquad\qquad +\int q(\bm{u})\log\frac{p(\bm{u}|\bm{Z})}{q(\bm{u})}\cancel{\int p(\bm{f}|\bm{u,Z,X}) d\bm{f}} d\bm{u} \\
        &=\underbrace{\mathbb{E}_{q(\bm{f}|\bm{X},\bm{Z})}[\log p(\bm{y|f})]}_{\text{ELL}} -\underbrace{\text{KL}(q(\bm{u})||p(\bm{u}|\bm{Z}))}_{\text{KL regularizer}}.
    \end{split}
\label{eqa:svgp_elbo_derivation}
\end{equation} 
Here $q(\bm{f}|\bm{X}, \bm{Z})=\int p(\bm{f}|\bm{u,Z,X})q(\bm{u}) d \bm{u}$ is a Gaussian and is given as:
\begin{equation}
q(\bm{f}|\bm{X}, \bm{Z})=\mathcal{N}(\bm{K_{X Z}K_{ZZ}^{-1}m_u},\bm{K_{X X}+K_{X Z}K_{ZZ}^{-1}(S_u-K_{ZZ})K_{ZZ}^{-1}K_{ZX}}).
\label{eqa:svgp}
\end{equation}
With Gaussian likelihood, the first term in ELBO can be evaluated analytically. Otherwise we estimate the first term using Monte-Carlo samples $\bm{f} \sim q(\bm{f}| \bm{X}, \bm{Z})$. The second term is a KL-divergence between two Gaussian distributions. Thus, it admits a closed form:
\begin{equation}
\text{KL}(q(\bm{u})||p(\bm{u|Z}))=\frac{1}{2}\left[Tr(\bm{K_{ZZ}}^{-1}\bm{S_u})+\bm{m_u}^\top \bm{K_{ZZ}}^{-1}\bm{m_u}+\log\frac{|\bm{K_{ZZ}}|}{|\bm{S_u}|}\right] +\text{const}.
\end{equation}
In standard SGPA the ELBO objective remains almost the same, except that as the variational mean is reparameterized to $\bm{v} := \bm{K}_{\bm{ZZ}}^{-1} \bm{m}_{\bm{u}}$, the mean of $q(\bm{f} | \bm{X}, \bm{Z})$ becomes $\bm{K_{X Z}}\bm{v}$, and the quadratic term in $\text{KL}(q(\bm{u})||p(\bm{u|Z}))$ becomes $\bm{v}^\top \bm{K_{ZZ}}\bm{v}$.

\subsection{Derivation of ELBO for Transformers Based on SGPA}
\label{a4:sgpa_obj}
An $L$-layer Transformer based on SGPA is a deep GP \citep{damia2013dgp}, and we train it using the doubly stochastic variational inference framework \citep{salimbeni2017doubly}. For each input sequence $\bm{F}^0 := \bm{X}$, the joint distribution for $\bm{Y}, \{\bm{F}^l \}_{l=1}^L, \{\bm{u}_{a\cup g}^{l,h}\}_{l=1,h=1}^{L,H}$ is:
\begin{equation}
\begin{split}
    p(\bm{Y}, &\{\bm{F}^l\}_{l=1}^L, \{\bm{u}_{a\cup g}^{l,h}\}_{l=1,h=1}^{L,H}|\bm{F}^0) =\\ &p(\bm{Y}|\bm{F}^L)
    [\prod_{l=1}^L p(\bm{F}^l| \{\bm{u}_{a\cup g}^{l,h}\}_{h=1}^{H}, \bm{F}^{l-1})
    p(\{\bm{u}_{a\cup g}^{l,h}\}_{h=1}^{H}| \{\bm{k}_{g}^{l,h}\}_{h=1}^{H}, \bm{F}^{l-1})],
\end{split}
\end{equation}
where $p(\{\bm{u}_{a\cup g}^{l,h}\}_{h=1}^{H}| \{\bm{k}_{g}^{l,h}\}_{h=1}^{H}, \bm{F}^{l-1}) = \prod_{h=1}^H p(\bm{u}_{a\cup g}^{l,h}| \bm{k}_{g}^{l,h}, \bm{F}^{l-1})$ since we assume the prior for inducing points factorizes across heads in each layer. Note here the amortized keys $\bm{k}_a^{l, h}$ depend on $\bm{F}^{l-1}$ in a deterministic manner, therefore we drop the amortized key terms in the conditioning. Assuming prior conditional matching, i.e, 
\begin{equation}
q(\bm{F}^l| \{\bm{u}_{a\cup g}^{l,h}\}_{h=1}^{H}, \bm{F}^{l-1})=p(\bm{F}^l| \{\bm{u}_{a\cup g}^{l,h}\}_{h=1}^{H}, \bm{F}^{l-1}),
\end{equation}
the joint approximate posterior for $ \{\bm{F}^l\}_{l=1}^L, \{\bm{u}_{a\cup g}^{l,h}\}_{l=1,h=1}^{L,H}$ is:
\begin{equation}
\begin{split}
    q(\{\bm{F}^l\}_{l=1}^L, \{\bm{u}_{a\cup g}^{l,h}\}_{l=1,h=1}^{L,H}|\bm{F}^0) =
    \prod_{l=1}^L p(\bm{F}^l| \{\bm{u}_{a\cup g}^{l,h}\}_{h=1}^{H}, \bm{F}^{l-1})
    q(\{\bm{u}_{a\cup g}^{l,h}\}_{h=1}^{H}| \{\bm{k}_{g}^{l,h}\}_{h=1}^{H}, \bm{F}^{l-1}),
\end{split}
\end{equation}
where $q(\{\bm{u}_{a\cup g}^{l,h}\}_{h=1}^{H}| \{\bm{k}_{g}^{l,h}\}_{h=1}^{H}, \bm{F}^{l-1}) = \prod_{h=1}^H q(\bm{u}_{a\cup g}^{l,h}| \bm{k}_{g}^{l,h}, \bm{F}^{l-1})$ since we let the approximate distribution for $\{\bm{u}_{a\cup g}^{l,h}\}_{h=1}^{H}$ also factorizes across heads. The ELBO is derived in a similar manner as the single-layer GP case (Eq.~\ref{eqa:svgp_elbo_derivation}) and again the conditional distribution terms in $q$ and $p$ cancel with each other. This simplifies the ELBO to:
\begin{equation}
    \begin{split}
        \mathcal{L}_{ELBO}&=\mathbb{E}_{q(\{\bm{F}^l\}_{l=1}^L, \{\bm{u}_{a\cup g}^{l,h}\}_{l=1,h=1}^{L,H}|\bm{F}^0)} [\log \frac{p(\bm{Y}, \{\bm{F}^l\}_{l=1}^L, \{\bm{u}_{a\cup g}^{l,h}\}_{l=1,h=1}^{L,H}|\bm{F}^0)}{q(\{\bm{F}^l\}_{l=1}^L, \{\bm{u}_{a\cup g}^{l,h}\}_{l=1,h=1}^{L,H}|\bm{F}^0)}]\\
        &=\mathbb{E}_{q(\bm{F}^L|\bm{F}^0,\{\bm{k}^{l,h}_{g}\}_{l=1, h=1}^{L,H})}[\log p(\bm{Y}|\bm{F}^L)]\\ 
        &\quad-\sum_{l=1}^L \sum_{h=1}^H \mathbb{E}_{q(\bm{F}^{l}|\bm{F}^0,\{\bm{k}^{j,h}_{g}\}_{j=1, h=1}^{l,H}))}\{\text{KL}(q(\bm{u}_{a \cup g}^{l,h}|\bm{k}^{l,h}_{g},\bm{F}^{l-1})||p(\bm{u}_{a \cup g}^{l,h}|\bm{k}^{l,h}_{g},\bm{F}^{l-1}))\},
    \end{split}
\end{equation}
where 
\begin{equation}
\begin{split}
q(\bm{F}^l|\bm{F}^0,&\{\bm{k}^{j,h}_{g}\}_{j=1, h=1}^{l,H})=\\&\int \prod_{j=1}^l p(\bm{F}^j| \{\bm{u}_{a\cup g}^{j,h}\}_{h=1}^{H}, \bm{F}^{j-1})
    q(\{\bm{u}_{a\cup g}^{j,h}\}_{h=1}^{H}| \{\bm{k}_{g}^{j,h}\}_{h=1}^{H}, \bm{F}^{j-1}) d\bm{u}_{a\cup g}^{1:l,1:H} d\bm{F}^{1:l-1}.
\end{split}
\end{equation}
Both terms in the ELBO can be estimated using samples generated iteratively through each layer using the reparameterization trick. For the second ``regularization'' term, the KL-divergence within the expectation admits a simplified form as we assume for each attention output dimension ($d$), an independent decoupled SVGP is fitted: 
\begin{equation}
\begin{split}
&\quad \text{KL}(q(\bm{u}_{a \cup g}^{l,h}|\bm{k}^{l,h}_{g},\bm{F}^{l-1})||p(\bm{u}_{a \cup g}^{l,h}|\bm{k}^{l,h}_{g},\bm{F}^{l-1}))\\
&=\frac{1}{2}\sum_{d=1}^D {\Large\{} [\bm{v}_{a}^{l,h}]_{:,d}^\top(\bm{K}_{\bm{k}_{a}^{l,h} \bm{k}_{a}^{l,h}}- \bm{K}_{\bm{k}_{a}^{l,h} \bm{k}_{g}^{l,h}} \bm{K}^{{-1}}_{\bm{k}_{g}^{l,h} \bm{k}_{g}^{l,h}} \bm{K}_{\bm{k}_{g}^{l,h} \bm{k}_{a}^{l,h}})[\bm{v}_{a}^{l,h}]_{:,d} + [\bm{v}_{g}^{l,h}]_{:,d}^\top \bm{K}_{\bm{k}_{g}^{l,h} \bm{k}_{g}^{l,h}} [\bm{v}_{g}^{l,h}]_{:,d}\\
&\qquad+\text{Tr}([\bm{S}_{g}^{l,h}]_{:,:,d} \bm{K}^{-1}_{\bm{k}_{g}^{l,h} \bm{k}_{g}^{l,h}}) -\log\det[\bm{S}_{g}^{l,h}]_{:,:,d} + \log\det\bm{K}_{\bm{k}_{g}^{l,h} \bm{k}_{g}^{l,h}} 
 {\Large\}} +\text{const},
\end{split}
\end{equation}
where $D$ is the total number of attention output dimensions and $M_g$ is the number of global inducing points for each head.

\section{Uncertainty Calibration Metrics}
\label{appendix:uq_metric}
We use two types of widely used metrics to assess the quality of uncertainty estimates for in-distribution data, proper scoring rule and calibration error. Specifically, negative log-likelihood is a proper scoring rule based metric, and expected calibration error (ECE) and maximum calibration error (MCE) are two calibration error based metrics.

\subsection{Proper Scoring Rule}
A proper scoring rule \citep{gneit2007proper} is a function taking a distribution and a data point as inputs. Suppose the data generating distribution is $p_{d}$, and the predictive distribution provided by our model is $\hat{p}$, then for each observation $\bm{x}_i$ we can obtain a score, $S(\hat{p},\bm{x}_i)$, based on the predictive distribution. To measure the deviation of $\hat{p}$ from $p_d$, we take the expectation of the aforementioned scores over $p_d$: $S(\hat{p},p_d) = \mathbb{E}_{p_d}[S(\hat{p},\bm{x})]$. For a proper scoring rule, $S(p_d,p_d)\leq S(\hat{p},p_d)\text{,}\quad \forall \hat{p}$, so the minimum of $S(\hat{p},p_d)$ can be achieved when $\hat{p}=p_d$. For a scoring rule which is strictly proper, $S(p_d,p_d)$ is the only global minimum. This justifies why it is proper for assessing the quality of predictive distribution. In practice, $S(\hat{p},p_d)$ is analytically intractable since the ground-truth $p_d$ is unknown, and we typically resort to Monte-Carlo to estimation. NLL is one popular metric computed based on a proper scoring rule, specifically the logarithm score. Given a test set $\{(\mathbf{x}_i, \mathbf{y}_i)\}_{n=1}^N$, NLL is computed via:
\[\mathrm{NLL} = -\mathbb{E}_{p_d}[\log \hat{p}(\bm{y}|\bm{x})]\approx -\frac{1}{N}\sum_{n=1}^N \log \hat{p}(\bm{y}_n \mid \bm{x}_n).\]

\subsection{Calibration Error}
Calibration error \citep{Pakdaman_Naeini_Cooper_Hauskrecht_2015,guo2017calibration} measures the deviation between the model confidence in classification correctness and the ground truth probability of correctness. The predicted probability, $\hat{p}$, for the class the data point is classified to reflects the model's confidence in the classification correctness. For a well-calibrated model, for each confidence level $\hat{p}$, the proportion $p$ of the data points belonging to this ``$\hat{p}$-confidence group" that are classified correctly should be equal to $\hat{p}$. Calibration error is then computed based on the absolute difference between $\hat{p}$ and $p$. The ECE is the expectation of this difference w.r.t. the distribution of confidence $\hat{p}$:
\begin{equation}
    ECE = \mathbb{E}_{\hat{p}}\left[|\hat{p}-p|\right]
\end{equation}
The MCE is the maximum of these differences:
\begin{equation}
    MCE = \argmax_{\hat{p}}|\hat{p}-p|
\end{equation}
In practice, since we do not have sufficient amount of data for each confidence level, a biased estimation of calibration errors is obtained using histogram binning \citep{Pakdaman_Naeini_Cooper_Hauskrecht_2015,guo2017calibration}. We partition the domain of $\hat{p}$, $[0,1]$, into 15 bins of equal size as in \citet{guo2017calibration}.
%
%
\section{Additional Experiments}
\label{a2}
\subsection{Empirical Comparison Between Standard SGPA and Decoupled SGPA}
In our preliminary experiments, we compare performance of ViT based on standard SGPA versus decoupled SGPA for image classification on CIFAR10 without data augmentation. We also consider decoupled SGPA based on DSVGP \citep{cheng2017variational}. 
According to Eq.~\ref{eq:dsvgp_predcitive}, the posterior mean and covariance formula for decoupled SGPA based on DSVGP \citep{cheng2017variational} is given as follows:
\begin{equation}
\begin{split}
\bm{m}_{d}&=\bm{K}_{\bm{q} \bm{k}_a}[\bm{v}_{a}]_{:,d}, \\
\bm{\Sigma}_{d}&= \bm{K}_{\bm{q} \bm{q}} + \bm{K}_{\bm{q} \bm{k}_{g}}\bm{K}_{\bm{k}_{g}\bm{k}_{g}}^{-1}([\bm{S}_{g}]_{:,:,d}-\bm{K}_{\bm{k}_{g}\bm{k}_{g}})\bm{K}^{-1}_{\bm{k}_{g}\bm{k}_{g}}\bm{K}_{\bm{k}_{g}\bm{q}},
\end{split}
\label{eq:naivedsvgp}
\end{equation}
Table~\ref{table:ssgpa} shows ViT based on standard SGPA and decoupled SGPA based on DSVGP \citep{cheng2017variational} achieve worse performance than decouple SGPA based on ODSVGP \citep{sali2018ortho}. In particular, standard SGPA considerably underfits the data. Therefore we only consider decoupled SGPA based on ODSVGP \citep{sali2018ortho} for the rest of the experiments.
\begin{table}[htbp]
\centering
\caption{Test set accuracy and NLL of ViTs based on standard and two variants of decoupled SGPA, trained on CIFAR10 without data augmentation.}
 \begin{tabular}{  c  c c    } 
\hline
 Model & Accuracy & NLL \\ 
 \hline
   Standard SGPA & 0.6435$\pm$0.0039 & 1.0159$\pm$0.0065\\
   Decoupled SGPA \citep{cheng2017variational} & 0.7513$\pm$0.0014 &	0.7877$\pm$0.0045\\
  Decoupled SGPA \citep{sali2018ortho} & \textbf{0.7787$\pm$0.0024}&	\textbf{0.6968$\pm$0.0032}\\
\hline
\end{tabular}
\label{table:ssgpa}
\end{table}

\label{a2:ssgpa}

\subsection{Graph Property Regression with ZINC Dataset}
\begin{figure}[t]
\centering
\includegraphics[width=0.99\linewidth]{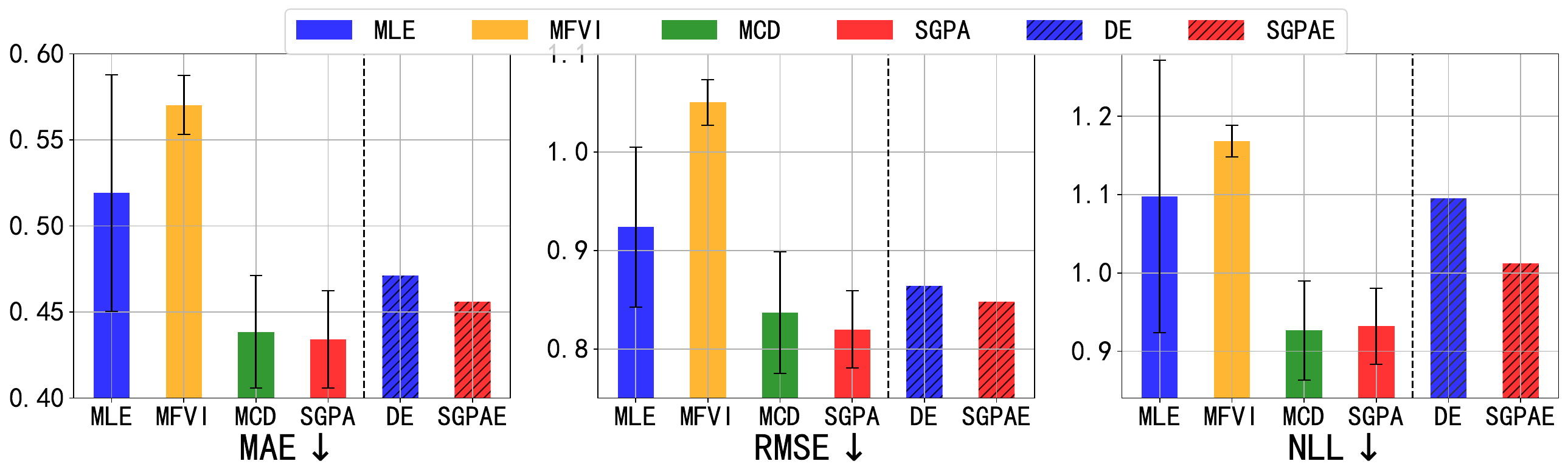}
\caption{Test set regression error and NLL metrics for Transformers trained on ZINC.}
\label{fig:zinc_in}
\end{figure}

\begin{figure}[b]
\centering
\includegraphics[width=0.99\linewidth]{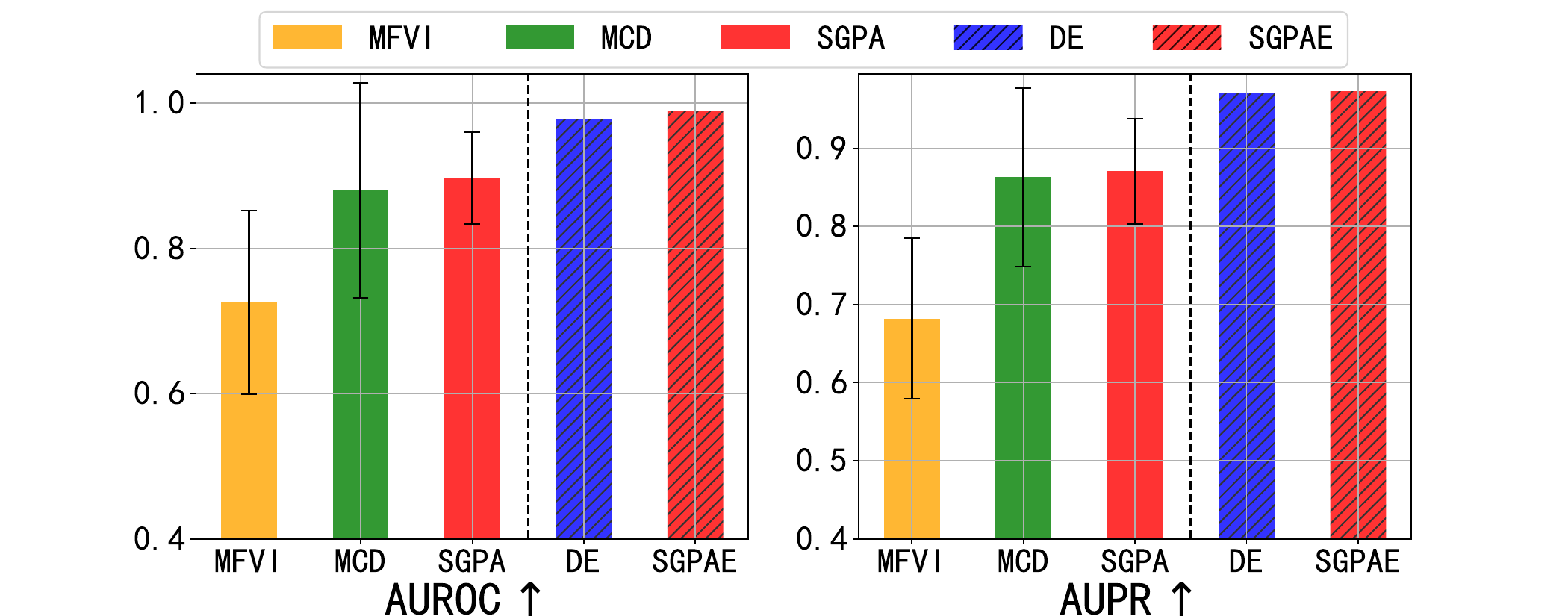}
\caption{AUROC and AUPR metrics for OOD detection using Transformers trained on ZINC.}
\label{fig:zinc_ood}
\end{figure}
\label{a2:zinc}
For graph property regression, we assume a Laplace likelihood with a trainable scale parameter $b$ (i.e., the density of the observation likelihood is $g(y|f)=\frac{1}{2b}\exp(-\frac{|y-f|}{b})$, where $f$ is the scalar function value output by the Transformer). We compute mean-absolute-error (MAE), root-mean-square error (RMSE) and negative-log-likelihood (NLL) to evaluate the models, with results presented in Figure~\ref{fig:zinc_in}. Moreover, we use predictive variances as scores and evaluate OOD detection performance in Figure~\ref{fig:zinc_ood}. Note that MLE is useless for OOD detection in this case since it produces homogeneous predictive variances for all instances. We use a synthetic OOD dataset generated from test set: for each test instance, we remove the existing edges from the adjacency matrix and add edges between nodes that are not originally connected. Within ``single-model'' methods,  SGPA and MCD achieve much better results than MLE and MFVI. For this task, the difference in performance between SGPA and MCD is negligible. However, when compared to MCD, SGPA performs more robustly as it returns smaller error bars. Ensemble methods outperform ``single-model" methods in OOD detection with SGPAE achieving the best result. Interestingly, for in-distribution calibration, they achieve worse performance than SGPA and MCE.
\section{Detailed Experimental Setups and Hyperparameters}
\textbf{Training settings shared across experiments.}
For MLE and MCD, we initially considered both dropout rates 0.1 and 0.2, but we found models trained with dropout rate 0.1 consistently outperformed models trained with dropout rate 0.2 in terms of accuracy in our preliminary experiments for image classification. Therefore, we decided to use dropout rate 0.1 for all methods except MFVI. For each layer, we use mean-pooling strategy. The non-linear mapping $G_{\phi^l}$ at each attention layer is parameterized by a 2-layer MLP as in \citet{vaswani2017attention}. For models with kernel-based attentions, we use exponential kernel for sentiment analysis and linguistic acceptability, \citep{tsai2019TransformerDissection}: $k(\bm{x},\bm{x}')=\sigma_f^2\exp(\sum_{j=1}^D \frac{x_j x_j'}{\sigma_j^2})$, and we use ARD-RBF kernel \citep{rasmussen2006gp} for image classification and graph property regression: $k(\bm{x},\bm{x}')=\sigma_f^2\exp(-\frac{1}{2}\sum_{j=1}^D \frac{(x_j-x_j')^2}{\sigma_j^2})$, where $D$ is the dimension of $\bm{x}$ and $\bm{x}'$, $\sigma_f^2$ is the output variance, and $\sigma_j$ is the length-scale for the $j$-th dimension. For MFVI, MCD, SNGP and SGPA, predictive uncertainty is estimated using 10 Monte Carlo samples. 
\begin{itemize}[leftmargin=15pt, topsep=-1pt]
\item \textbf{Initialization:} we train all our models from scratch (i.e. all parameters are randomly initialized without any pretraining). Apart from the global inducing points parameters, the rest of the parameters are the same as in standard transformers, and are initialized via the default method of the deep learning platform (we use Pytorch \citep{NEURIPS2019_9015}). Each dimension of global inducing locations and the global variational mean are randomly initialized with standard Gaussian. For the Cholesky factor used to parameterize the global variational covariance, we randomly initialize each element in the lower triangular part and each element in the log-diagonal with a standard Gaussian. 
    \item \textbf{Optimization:} all the models are trained using ADAM optimizer \citep{kingma2015adam}, and for each input sequence in a batch, we only draw one sample to estimate the ELBO (eq.~\ref{eq:elbo}). In our experiments, we observe no optimization issue: the ELBO is consistently minimized during the training without significant spikes.
\end{itemize}

\textbf{Sentiment analysis with IMDB \citep{imdb}.} 
We consider 5 different splits, each includes 35,000 training, 5,000 validation, and 10,000 test instances. The maximum number of tokens in each input sequence is 512. Architecture-wise, we use a Transformer with 1 MHSA layer and 8 attention heads, same embedding dimension and hidden dimension of 128. For SGPA we use 50 global inducing points for each head. We train all the models (except post-hoc methods, TS and KFLLLA) for 20 epochs with batch size 32 and with a initial learning rate 0.001 which decays linearly to 0.0001. The best model is selected based on the validation accuracy computed after every training epoch. 

\textbf{Linguistic acceptability with CoLA \citep{cola}.}
The 516 OOD samples provided by the original dataset are used to assess the models' OOD robustness. Within each of the 5 independent runs, the remaining 9,078 in-distribution samples are randomly split into 7,262 training and 1,816 in-distribution test instances. We use a Transformer with 2 MHSA layers, each with 4 attention heads, an embedding dimension of 128 and a hidden dimension of 256. For the input embeddings, we use ELMO-style representation \citep{elmo2018}. For SGPA we use 5 global inducing points for each head. We train all models (except post-hoc method KFLLLA) for 50 epochs with batch size 32 and with a initial learning rate 0.0005 which decays linearly to 0.00001 and we use the model from the final epoch for evaluation. 

\textbf{Image classification with CIFAR10 and CIFAR100 \citep{cifar}.}
For both CIFAR10 and CIFAR100 datasets, we randomly split the original training set into 4,5000 training and 5,000 validation instances, and test on the original 10,000 test instances. The input images are tokenized with a patch size of $4 \times 4$. For CIFAR10 without data augmentation, we use a ViT \citep{dosovitskiy2020image} with 5 MHSA layers, each with 4 attention heads and a hidden dimension of 128. For all the other experiments, we use a ViT with 6 layers 4 attention heads, a hidden dimension of 256. We train all models (except post-hoc methods, TS and KFLLLA) except SGPA for 600 epochs with batch size 100 and with initial learning rate of 0.0005 which decays linearly to 0.00001. For SGPA, we use 32 global inducing points for each head, and we use the parameters from the 100th epoch of MLE to initialize the deep kernel hyperparameters, and continue training for 500 epochs. The best model is selected based on the validation accuracy computed every 10 epochs. For experiments with data augmentation, we consider the same data augmentation strategy (3-Augment) as in \cite{Touvron2022DeiTIR}.

\textbf{Graph property regression with ZINC \citep{dwivedi2020benchmarkgnns}.} The results are presented in~\ref{a2:zinc} which are averaged from 3 independent runs. We use the same split as in \citep{dwivedi2020benchmarkgnns}, resulting in 10,000 training, 1,000 validation, and 1,000 test instances. Instead of applying graph-specific modifications to the network architecture, we use the feature engineering technique proposed in \citet{kim2021transformers} to transform each graph into a sequence of input embeddings. We consider Transformers with 8 layer and 8 attention heads, same embedding dimension and hidden dimension of 80. For SGPA we use 10 global inducing points for each head. We train all models for 500 epochs with batch size 64 and with an initial learning rate 0.0004 which decays linearly to 0.000002. The best model is selected based on the validation accuracy computed at the end of every 10 epochs. 
\label{a1:hyperparam}
\section{Running Time}
\label{appendix:sgpa_time}
We analyze the wall-clock training and inference time of SGPA here.
In Table~\ref{table:inf_time}, we present the computational time for a single batch at the inference stage with 10 Monte Carlo samples for CoLA (batch size = 227) and CIFAR10 (batch size = 200) (results obtained using a single Nvidia GTX 2080 Ti GPU card). For SGPA, we first pay an one-off cost of inverting the kernel matrices related to global inducing points ($\bm{K}_{\bm{k}_g \bm{k}_g}^{-1}$). Once this is done, we treat them as constant matrices and plug them in Eq.~\ref{eq:odsvgp}. When generating samples that are passed to the next layer, we diagonalize the covariance ($\bm{\Sigma}_d$) in Eq.~\ref{eq:odsvgp} to avoid the costly computation of the Cholesky factor of $\bm{\Sigma}_d$.

The computational cost depends on the number of global inducing points used. For CoLA, we only used 5 global inducing points for each head, and the relative difference between inference times for MCD and SGPA is less than that for CIFAR10, where we use 32 global inducing points for each head. 
It is noteworthy that we haven’t done extensive hyperparameter tuning for the number of global inducing points. It is likely that SGPA can still work well with a smaller number of global inducing points. For example, for CIFAR10, we later on also trained an SGPA model with 16 global inducing points for each head, and we did not see a considerable performance drop (Accuracy: 0.7790, NLL: 0.7259, ECE: 0.0625, MCE: 0.0819). In this case, the inference time can be further reduced from 0.986 to 0.807.

\begin{table}[!htb]
\centering
\caption{The computational time (in $s$) for a single batch at the inference stage with 10 Monte Carlo samples for CoLA (batch size = 227) and CIFAR10 (batch size = 200) (results obtained using a single Nvidia GTX 2080 Ti GPU card).}
 \begin{tabular}{c  c  c c  } 
\hline
& MCD & MFVI & SGPA \\ 
 \hline
 CoLA & 1.839 & 1.882 & 2.358\\
  CIFAR10 (32 $\bm{k}_g$) & 0.234 & 0.395 & 0.986\\
  CIFAR10 (16 $\bm{k}_g$) & 0.234 & 0.395 & 0.807\\
\hline
\end{tabular}
\label{table:inf_time}
\end{table}

In Table~\ref{table:train_time} we present the training time (in $s$) of one epoch for SGPA and MLE on CoLA (batch size = 32) and CIFAR10 (batch size = 100) (results obtained using a single Nvidia GTX 2080 Ti GPU card):
\begin{table}[!htb]
\centering
\caption{The training time (in $s$) of one epoch for SGPA and MLE on CoLA (batch size = 32) and CIFAR10 (batch size = 100) (results obtained using a single Nvidia GTX 2080 Ti GPU card).}
 \begin{tabular}{c  c  c   } 
\hline
& MLE & SGPA \\ 
 \hline
 CoLA & 17.834 & 20.089\\
  CIFAR10 (32 $\bm{k}_g$) & 30.593 & 138.609 \\
  CIFAR10 (16 $\bm{k}_g$) & 30.593 & 109.298\\
\hline
\end{tabular}
\label{table:train_time}
\end{table}

\section{Connection with Sparse within Sparse GP}
\label{a3:swsgp}
Unlike standard sparse GPs where the inducing points are shared across all inputs, SGPA consists of a set of input-dependent (or ``amortized'') inducing locations $\{\bm{k}_{a}^{l,h}\}$ and the corresponding variational parameters, which means we are using a different mean function for each input sequence. Consequently, SGPA based Transformers can not perfectly model the correlation between input sequences. Instead, they can only provide marginal uncertainty for each input sequence. Nevertheless, empirically we found that correlation might not be critical in applications such as text or image classification. 

Interestingly, SGPA can be considered as an instantiation of sparse-within-sparse Gaussian process (SWSGP) \citep{tran2021swsgp,daniel2022inputde}, which allows adaptive inducing points for each input. Suppose the index set (in our case the embedding space) is $\bm{\chi}$, and $p(\bm{Z})$ is a distribution over $M$-element subset of $\bm{\chi}$ (ie. each random draw will give us $M$ inducing locations from $\bm{\chi}$). The joint prior and approximate posterior become $p(\bm{f,u,Z})=p(\bm{f}|\bm{u})p(\bm{u}|\bm{Z})p(\bm{Z})$, and $q(\bm{u},\bm{Z})=q(\bm{u}|\bm{Z})p(\bm{Z})$, respectively. In \cite{tran2021swsgp}, for each input $\bm{x}$, they propose to select its $M$-nearest neighbors taken from the training inputs as inducing locations, so that $q(\bm{Z})$ is a delta distribution conditioned on $\bm{x}$, and $q(\bm{u}|\bm{Z})$ is the marginal variational distribution over function values evaluated at the selected inducing locations. In contrast, for each input sequence $\bm{x}$, in layer $l$, the inducing locations used by Transformer based on SGPA consist both input-dependent ones ($\{\bm{k}_{a}^{l,h}\}_{h=1}^H$), which are obtained from $\bm{x}$ using neural network as in \citet{daniel2022inputde}, and global inducing locations $\{\bm{k}_g^{l,h}\}_{h=1}^H$, which are shared across all input sequences.

Note that the input-dependent inducing points used during test time may not be encountered in training. Instead, we rely on the learned neural network to amortize them \citep{daniel2022inputde}. Therefore the fitted mean function may not be consistent when test sequence includes tokens far away from the tokens in training sequences (as $\bm{v}_{a}$ may not be consistent). 
Empirically, we found this inconsistency issue to be minor for in-distribution test sequences. Furthermore, we argue that for OOD inputs, although the fitted posterior mean might be unreliable, the uncertainty still increases since the posterior covariance is fully determined by the global inducing points which exhibits no inconsistency issue. Intuitively, the global keys $\{\bm{k}_g^{l,h}\}_{l=1,h=1}^{L,H}$, shared across all input sequences, play a similar role as the inducing locations in standard SVGP: they summarize the training set but focus on the uncertainty behavior only. As a result, the posterior variance in Eq.~\ref{eq:odsvgp} still increases for queries that are less similar to the global keys as measured by the kernel. By propagating the uncertainty through each layer, we can still obtain increased uncertainty for input sequences that are very different from the training data, so that users can still be notified ``when the model does not know'' \citep{gal2016thesis}.

\section{Results in Tables}
We present numerical results (mean$\pm$standard error) for all experiments in tables.

We show in-distribution results in Tables~\ref{table:imdb} to~\ref{table:zinc}.

We show OOD robustness results in Tables~\ref{table:cola_ood} to~\ref{table:ood_rob_end}.

We show OOD detection results in Tables~\ref{table:ood_detect_start} to~\ref{table:ood_detect_end}.

\begin{table}[!htb]
\centering
\caption{In-distribution performance: sentiment analysis with IMDB}
 \begin{tabular}{c  c  c c c c   } 
\hline
 & Model & Accuracy & NLL &ECE&MCE \\ 
 \hline
  \multirow{ 1}{*}{SDP} & MLE & 0.8806$\pm$0.0016 	& 0.3267$\pm$0.0068 & 0.0548$\pm$0.0029 &	0.1432$\pm$0.0093\\
  \hline
  \multirow{ 4}{*}{Kernel}& MLE & 0.8822$\pm$0.0019 	& 0.3202$\pm$0.0136 	& 0.0519$\pm$0.0055 	& 0.1568$\pm$0.0273\\
  &TS &  0.8822$\pm$0.0019&	0.3202$\pm$0.0136	& 0.0519$\pm$0.0055 & 0.1568$\pm$0.0004\\
  &MFVI & 0.8799$\pm$0.0022 & 0.3476$\pm$0.0230 & 0.0585$\pm$0.0106 & 0.1493$\pm$0.0271\\
  &MCD &0.8817$\pm$0.0014&	0.2986$\pm$0.0070&	0.0285$\pm$0.0058&	0.0910$\pm$0.0218\\
  &KFLLLA &  0.8822$\pm$0.0019&	0.4366$\pm$0.0015	& 0.1905$\pm$0.0034 & 0.2263$\pm$0.0048\\
  &SNGP &  0.8783$\pm$0.0013&	0.3499$\pm$0.0080	& 0.0646$\pm$0.0037 & 0.1741$\pm$0.0106\\
  &SGPA &  0.8875$\pm$0.0004&	0.2823$\pm$0.0027	& 0.0215$\pm$0.0034 & 0.0647$\pm$0.0105\\
\hline
\end{tabular}
\label{table:imdb}
\end{table}

\begin{table}[!htb]
\centering
\caption{In-distribution performance: linguistic acceptability with CoLA}
 \begin{tabular}{c    c c c c c  } 
\hline
 & Model & MCC & NLL &ECE&MCE \\ 
 \hline
  \multirow{ 1}{*}{SDP} & MLE & 25.0408$\pm$0.8223 & 1.8976$\pm$0.0223 &	0.2654$\pm$0.0039  & 0.3887$\pm$0.0230\\
  \hline
  \multirow{ 4}{*}{Kernel}& MLE 	& 26.0722$\pm$0.4300 	& 2.0083$\pm$0.0432 & 0.2643$\pm$0.0077 & 0.3702$\pm$0.0217\\
  &MFVI  & 21.0864$\pm$1.8124 & 1.0809$\pm$0.0092 & 0.2018$\pm$0.0080 & 0.266$\pm$0.0113\\
  &MCD  &26.4399$\pm$0.4939  &	0.9121$\pm$0.0154 & 0.2113$\pm$0.0049& 0.3233$\pm$0.0066\\
  &KFLLLA  &26.0295$\pm$0.4050  &	0.6023$\pm$0.0077 & 0.0241$\pm$0.0044& 0.1063$\pm$0.0204\\
  &SNGP  &27.8715$\pm$1.2584  &	1.2075$\pm$0.0418 & 0.2325$\pm$0.0067& 0.3230$\pm$0.0146\\
  &SGPA &  	27.2380$\pm$1.1188	& 0.9070$\pm$0.0207 & 0.2076$\pm$0.0082 & 0.3062$\pm$0.0097\\
\hline
\end{tabular}
\label{table:cola_in}
\end{table}

\begin{table}[!htb]
\centering
\caption{In-distribution performance: image classification for CIFAR10 without data augmentation}
 \begin{tabular}{c  c  c c c c   } 
\hline
 & Model & Accuracy & NLL &ECE&MCE \\ 
 \hline
  \multirow{ 1}{*}{SDP} & MLE & 0.7439$\pm$0.0015 & 2.2940$\pm$0.0277 & 0.2703$\pm$0.0021 &  0.4509$\pm$0.0079\\
  \hline
  \multirow{ 4}{*}{Kernel}& MLE & 0.7811$\pm$0.0019 & 1.3395$\pm$0.0135 & 0.2082$\pm$0.0010	& 0.3724$\pm$0.0103\\
  &TS &0.7811$\pm$0.0019 &0.8254$\pm$0.0077 & 0.1357$\pm$0.0012 & 0.2499$\pm$0.0081\\
  &MFVI &0.7202$\pm$0.0024 &0.7995$\pm$0.0073	& 0.0331$\pm$0.0023 & 0.0664$\pm$0.0064\\
  &MCD & 0.7910$\pm$0.0015	&0.7789$\pm$0.0063	& 0.0799$\pm$0.0012 &  0.1501$\pm$0.0055\\
  &KFLLLA & 0.7752$\pm$0.0019	&0.7201$\pm$0.0242	& 0.0710$\pm$0.0067&0.1497$\pm$0.0217\\
  &SNGP & 0.7837$\pm$0.0025	&0.7534$\pm$0.0031	& 0.0745$\pm$0.0003 & 0.1443$\pm$0.0049\\
  &SGPA & 0.7787$\pm$0.0024	&0.6968$\pm$0.0032	& 0.0589$\pm$0.0012 & 0.0825$\pm$0.0066\\
  &DE & 0.8235 & 0.6934 & 0.0504 &  0.0699\\
  &SGPAE &0.8172 & 0.5657 & 0.0184 &  0.0579\\
\hline
\end{tabular}
\label{table:cifar10_woaug}
\end{table}

\begin{table}[!htb]
\centering
\caption{In-distribution performance: image classification for CIFAR10 with data augmentation}
 \begin{tabular}{c  c  c c c c   } 
\hline
 & Model & Accuracy & NLL &ECE&MCE \\ 
 \hline
  \multirow{ 1}{*}{SDP} & MLE & 0.8898$\pm$0.0009&	0.6073$\pm$0.0330&	0.1526$\pm$0.0043&	0.3250$\pm$0.0118\\
  \hline
  \multirow{ 4}{*}{Kernel}& MLE & 0.8442$\pm$0.0057&	0.4700$\pm$0.0127	&0.0408$\pm$0.0031&	0.0951$\pm$0.0116\\
  & TS & 0.8442$\pm$0.0057& 0.4552$\pm$0.0176 & 0.0326$\pm$0.0028 & 0.0532$\pm$0.0077	\\
  &MFVI &0.6967$\pm$0.0022	&0.8643$\pm$0.0078&	0.0428$\pm$0.0072&	0.0635$\pm$0.0083\\
  &MCD  &0.8437$\pm$0.0074&	0.4541$\pm$0.0190	&0.0122$\pm$0.0013	&0.0428$\pm$0.0049\\
  &KFLLLA & 0.8445$\pm$0.0057&	0.4554$\pm$0.0158&	0.0279$\pm$0.0022&	0.0570$\pm$0.0090\\
  &SNGP & 0.8286$\pm$0.0118&	0.4912$\pm$0.0351&	0.0076$\pm$0.0007&	0.0398$\pm$0.0051\\
  &SGPA & 0.8475$\pm$0.0048&	0.4489$\pm$0.0121&	0.0086$\pm$0.0013&	0.0395$\pm$0.0037\\
  &DE & 0.8654 & 0.4062 & 0.0079 & 0.0724\\
  &SGPAE & 0.8691 & 0.3885 & 0.0068 & 0.0421\\
\hline
\end{tabular}
\label{table:cifar10_aug}
\end{table}

\begin{table}[!htb]
\centering
\caption{In-distribution performance: image classification for CIFAR100 without data augmentation}
 \begin{tabular}{c  c  c c c c   } 
\hline
 & Model & Accuracy & NLL &ECE&MCE \\ 
 \hline
  \multirow{ 1}{*}{SDP} & MLE & 0.4893$\pm$0.0022&	5.7677$\pm$0.0492&	0.5276$\pm$0.0021&	0.6216$\pm$0.0076\\
  \hline
  \multirow{ 4}{*}{Kernel}& MLE & 0.5216$\pm$0.0100	&4.3898$\pm$0.0829&	0.4061$\pm$0.0023 & 0.5696$\pm$0.0093\\
  & TS & 0.5216$\pm$0.0100	&  2.5863$\pm$0.0527 & 0.2769$\pm$0.0013 &  0.4067$\pm$0.0032\\
  &MFVI & 0.4117$\pm$0.0015&	2.4195$\pm$0.0113	& 0.0753$\pm$0.0002& 0.1021$\pm$0.0057\\
  &MCD & 0.5341$\pm$0.0096&	2.7122$\pm$0.0329	& 0.1791$\pm$0.0013 & 0.2651$\pm$0.0116\\
  &KFLLLA & 0.5084$\pm$0.0103&	3.2984$\pm$0.0210&	0.1682$\pm$0.0054 & 0.7690$\pm$0.0159\\
  &SNGP & 0.4715$\pm$0.0039&	2.8296$\pm$0.0338&	0.2451$\pm$0.0005 & 0.3028$\pm$0.0082\\
  &SGPA & 0.5302$\pm$0.0071&	2.5643$\pm$0.0813&	0.1730$\pm$0.0013 &0.2463$\pm$0.0071\\
  &DE & 0.5815 & 2.4887 & 0.1026 &  0.1381\\
  &SGPAE & 0.5700 & 1.9434 & 0.0467& 0.0704\\
\hline
\end{tabular}
\label{table:cifar100_woaug}
\end{table}

\begin{table}[!htb]
\centering
\caption{In-distribution performance: image classification for CIFAR100 with data augmentation}
 \begin{tabular}{c  c  c c c c   } 
\hline
 & Model & Accuracy & NLL &ECE&MCE \\ 
 \hline
  \multirow{ 1}{*}{SDP} & MLE & 0.5984$\pm$0.0024	&1.8743$\pm$0.2032	&0.1879$\pm$0.0006&	0.2792$\pm$0.0512\\
  \hline
  \multirow{ 4}{*}{Kernel}& MLE & 0.6053$\pm$0.0048	&1.5254$\pm$0.0195	&0.1066$\pm$0.0036	&0.1578$\pm$0.0159\\
  & TS & 0.6053$\pm$0.0048	&1.4512$\pm$0.0251	&0.0303$\pm$0.0027	&0.0825$\pm$0.0109\\
  &MFVI &0.4496$\pm$0.0045&	2.1132$\pm$0.0105&	0.0527$\pm$0.0063&	0.1339$\pm$0.0156\\
  &MCD & 0.6149$\pm$0.0074&	1.4255$\pm$0.0275&	0.0231$\pm$0.0032&	0.0468$\pm$0.0056\\
  &KFLLLA & 0.5994$\pm$0.0038&	1.4661$\pm$0.0206&	0.0198$\pm$0.0022&	0.0474$\pm$0.0071\\
  &SNGP & 0.5645$\pm$0.0106&	1.5839$\pm$0.0416&	0.0254$\pm$0.0018&	0.0424$\pm$0.0060\\
  &SGPA &0.6154$\pm$0.0010&	1.4486$\pm$0.0055	&0.0364$\pm$0.0055&	0.0612$\pm$0.0067\\
  &DE &0.6471 &	1.3023	&0.0565&	0.0986\\
  &SGPEA &0.65&	1.2761	&0.0191&	0.0593\\
\hline
\end{tabular}
\label{table:cifar100_aug}
\end{table}

\begin{table}[!htb]
\centering
\caption{In-distribution performance: graph property regression with ZINC dataset}
 \begin{tabular}{c  c  c c  c   } 
\hline
 & Model & MAE & RMSE  & NLL \\ 
 \hline
  \multirow{ 1}{*}{SDP} & MLE & 0.5733$\pm$0.0084 & 1.1459$\pm$0.0139 &	1.1459$\pm$0.0139\\
  \hline
  \multirow{ 3}{*}{Kernel}& MLE & 0.5191$\pm$0.0344 	& 1.0977$\pm$0.0869 	&  1.0977$\pm$0.0869\\
  &MCD &0.4385$\pm$0.0163&	0.8369$\pm$0.0308& 0.9264$\pm$0.0317\\
  &MFVI &  0.5702$\pm$0.0086 & 1.1050$\pm$0.0115 & 1.1683$\pm$0.0101 \\
  &SGPA &  0.4341$\pm$0.0142&	0.8199$\pm$0.0195	 & 0.9319$\pm$0.0242\\
  &DE &0.4711	&0.8635	&	1.0949\\
  &SGPAE &0.4557	&0.8479	&1.0118\\
\hline
\end{tabular}
\label{table:zinc}
\end{table}

\begin{table}[!htb]
\centering
\caption{OOD robustness: linguistic acceptability with CoLA dataset}
 \begin{tabular}{c   c c c c c  } 
\hline
 & Model &  MCC & NLL &ECE&MCE \\ 
 \hline
  \multirow{ 1}{*}{SDP} & MLE  	& 19.4835$\pm$2.0234 & 2.0574$\pm$0.0516 &	0.2914$\pm$0.0098  & 0.4913$\pm$0.0418\\
  \hline
  \multirow{ 4}{*}{Kernel}& MLE &  18.6467$\pm$1.6116 	& 2.3565$\pm$0.0974 & 0.2909$\pm$0.0067 & 0.3864$\pm$0.0303\\
  &MFVI &  16.6363$\pm$1.5793 & 1.1349$\pm$0.0445 & 0.2184$\pm$0.0035 & 0.3289$\pm$0.0151\\
  &MCD &	20.4774$\pm$1.4727 &	1.0114$\pm$0.0234 & 0.2370$\pm$0.0075 & 0.3439$\pm$0.0177\\
  &KFLLLA &	18.6250$\pm$1.6321 &	0.6299$\pm$0.0067 & 0.0361$\pm$0.0065 & 0.0876$\pm$0.0065\\
  &SNGP &	 23.6410$\pm$1.5986	& 1.3070$\pm$0.0412 & 0.2537$\pm$0.0049 & 0.3828$\pm$0.0189\\
  &SGPA &  22.9190$\pm$1.5986	& 0.9514$\pm$0.0261 & 0.2257$\pm$0.0069 & 0.3399$\pm$0.0294\\
\hline
\end{tabular}
\label{table:cola_ood}
\end{table}

\begin{table}[!htb]
\centering
\caption{Accuracy of CIFAR10-C for ViTs trained on clean data without data augmentation.}
\begin{adjustbox}{width=\textwidth}
\begin{tabular}{c c c c c c c}
\hline
& \multicolumn{6}{c}{Skew Intensity}\\
 & Model &                1 &                2 &              3 &               4 &               5 \\
 \hline
   \multirow{ 1}{*}{SDP} & MLE &  0.7120$\pm$0.0011 &  0.6704$\pm$0.0011 &  0.6359$\pm$0.0014 &  0.5883$\pm$0.0017 &  0.5354$\pm$0.0015 \\
  \hline
   \multirow{ 4}{*}{Kernel}&   MLE &  0.7254$\pm$0.0019 &  0.6632$\pm$0.0028 &  0.6139$\pm$0.0023 &  0.5494$\pm$0.0028 &  0.4863$\pm$0.0027 \\
 &  MFVI &  0.6606$\pm$0.0093 &  0.6109$\pm$0.0083 &  0.5730$\pm$0.0077 &  0.5266$\pm$0.0073 &  0.4778$\pm$0.0065 \\
 &   MCD &  0.7327$\pm$0.0013 &  0.6717$\pm$0.0015 &  0.6229$\pm$0.0016 &  0.5603$\pm$0.0019 &  0.4982$\pm$0.0020 \\
  &  KFLLLA &  0.7186$\pm$0.0014 &  0.6580$\pm$0.0035 &  0.6092$\pm$0.0038 &  0.5462$\pm$0.0042 &  0.4825$\pm$0.0038 \\
 &       SNGP &  0.7281$\pm$0.0024 &  0.6676$\pm$0.0025 &  0.6208$\pm$0.0020 &  0.5622$\pm$0.0020 &  0.5006$\pm$0.0019 \\
 &  SGPA &  0.7175$\pm$0.0038 &  0.6559$\pm$0.0042 &  0.6067$\pm$0.0040 &  0.5466$\pm$0.0033 &  0.4832$\pm$0.0046 \\
  &         DE &             0.7700 &             0.7089 &             0.6585 &             0.5908 &             0.5246 \\
 &      SGPAE &             0.7516 &             0.6911 &             0.6401 &             0.5779 &             0.5173 \\
   \hline
\end{tabular}
\end{adjustbox}
\end{table}

\begin{table}[!htb]
\centering
\caption{Accuracy of CIFAR10-C for ViTs trained on clean data with data augmentation.}
\begin{adjustbox}{width=\textwidth}
\begin{tabular}{c c c c c c c}
\hline
& \multicolumn{6}{c}{Skew Intensity}\\
 & Model &                1 &                2 &              3 &               4 &               5 \\
 \hline
   \multirow{ 1}{*}{SDP} 
& MLE &  0.8516$\pm$0.0010 &  0.8139$\pm$0.0011 &  0.7752$\pm$0.0010 &  0.7165$\pm$0.0013 &  0.6500$\pm$0.0012 \\
   \hline
   \multirow{ 4}{*}{Kernel}&   MLE &  0.7944$\pm$0.0061 &  0.7499$\pm$0.0074 &  0.7090$\pm$0.0082 &  0.6465$\pm$0.0085 &  0.5807$\pm$0.0094 \\
 &  MFVI &  0.6605$\pm$0.0016 &  0.6222$\pm$0.0024 &  0.5875$\pm$0.0028 &  0.5377$\pm$0.0032 &  0.4913$\pm$0.0040 \\
 &   MCD &  0.7939$\pm$0.0068 &  0.7480$\pm$0.0082 &  0.7052$\pm$0.0089 &  0.6415$\pm$0.0092 &  0.5751$\pm$0.0097 \\
  &  KFLLLA &  0.7949$\pm$0.0030 &  0.7508$\pm$0.0037 &  0.7100$\pm$0.0041 &  0.6480$\pm$0.0043 &  0.5821$\pm$0.0047 \\
 &       SNGP &  0.7783$\pm$0.0062 &  0.7336$\pm$0.0069 &  0.6932$\pm$0.0069 &  0.6314$\pm$0.0062 &  0.5710$\pm$0.0057 \\
 &  SGPA &  0.7910$\pm$0.0027 &  0.7426$\pm$0.0021 &  0.6946$\pm$0.0075 &  0.6356$\pm$0.0033 &  0.5712$\pm$0.0020 \\
 &         DE &             0.8237 &             0.7819 &             0.7428 &             0.6795 &             0.6130 \\
 &      SGPAE &             0.8105 &             0.7646 &             0.7220 &             0.6581 &             0.5914 \\
   \hline
\end{tabular}
\end{adjustbox}
\end{table}

\begin{table}[!htb]
\centering
\caption{Accuracy of CIFAR100-C for ViTs trained on clean data without data augmentation.}
\begin{adjustbox}{width=\textwidth}
\begin{tabular}{c c c c c c c}
\hline
& \multicolumn{6}{c}{Skew Intensity}\\
 & Model &                1 &                2 &              3 &               4 &               5 \\
 \hline
   \multirow{ 1}{*}{SDP} 
&   MLE &  0.3947$\pm$0.0022 &  0.3484$\pm$0.0018 &  0.3223$\pm$0.0014 &  0.2805$\pm$0.0013 &  0.2402$\pm$0.0009 \\
\hline
   \multirow{ 4}{*}{Kernel} &   MLE &  0.4456$\pm$0.0075 &  0.3769$\pm$0.0058 &  0.3373$\pm$0.0049 &  0.2835$\pm$0.0034 &  0.2339$\pm$0.0029 \\
 &  MFVI &  0.3380$\pm$0.0020 &  0.2832$\pm$0.0022 &  0.2540$\pm$0.0018 &  0.2155$\pm$0.0014 &  0.1803$\pm$0.0015 \\
 &   MCD &  0.4526$\pm$0.0074 &  0.3840$\pm$0.0061 &  0.3446$\pm$0.0052 &  0.2913$\pm$0.0037 &  0.2403$\pm$0.0031 \\
  &  KFLLLA &  0.4350$\pm$0.0070 &  0.3656$\pm$0.0053 &  0.3251$\pm$0.0045 &  0.2717$\pm$0.0030 &  0.2229$\pm$0.0025 \\
 &       SNGP &  0.4053$\pm$0.0020 &  0.3455$\pm$0.0014 &  0.3093$\pm$0.0015 &  0.2624$\pm$0.0015 &  0.2203$\pm$0.0011 \\
 &  SGPA &  0.4472$\pm$0.0050 &  0.3764$\pm$0.0035 &  0.3371$\pm$0.0027 &  0.2828$\pm$0.0021 &  0.2317$\pm$0.0022 \\
  &         DE &             0.5054 &             0.4289 &             0.3858 &             0.3234 &             0.2690 \\
 &      SGPAE &             0.4848 &             0.4104 &             0.3675 &             0.3087 &             0.2539 \\
 \hline
\end{tabular}
\end{adjustbox}
\end{table}

\begin{table}[!htb]
\centering
\caption{Accuracy of CIFAR100-C for ViTs trained on clean data with data augmentation.}
\begin{adjustbox}{width=\textwidth}
\begin{tabular}{c c c c c c c}
\hline
& \multicolumn{6}{c}{Skew Intensity}\\
 & Model &                1 &                2 &              3 &               4 &               5 \\
 \hline
   \multirow{ 1}{*}{SDP} &
 MLE &  0.5384$\pm$0.0030 &  0.4854$\pm$0.0038 &  0.4464$\pm$0.0041 &  0.3930$\pm$0.0037 &  0.3410$\pm$0.0029 \\
 \hline
   \multirow{ 4}{*}{Kernel} &   MLE &  0.5327$\pm$0.0018 &  0.4700$\pm$0.0018 &  0.4264$\pm$0.0017 &  0.3690$\pm$0.0017 &  0.3134$\pm$0.0013 \\
 &  MFVI &  0.3991$\pm$0.0019 &  0.3535$\pm$0.0017 &  0.3224$\pm$0.0019 &  0.2836$\pm$0.0017 &  0.2444$\pm$0.0014 \\
 &   MCD &  0.5397$\pm$0.0020 &  0.4775$\pm$0.0018 &  0.4349$\pm$0.0015 &  0.3777$\pm$0.0011 &  0.3213$\pm$0.0010 \\
 &  KFLLLA &  0.5196$\pm$0.0019 &  0.4577$\pm$0.0018 &  0.4148$\pm$0.0017 &  0.3581$\pm$0.0014 &  0.3035$\pm$0.0014 \\
 &       SNGP &  0.4979$\pm$0.0038 &  0.4440$\pm$0.0035 &  0.4042$\pm$0.0034 &  0.3534$\pm$0.0029 &  0.3066$\pm$0.0023 \\
 &  SGPA &  0.5189$\pm$0.0054 &  0.4603$\pm$0.0010 &  0.4181$\pm$0.0013 &  0.3625$\pm$0.0014 &  0.3061$\pm$0.0014 \\
 &         DE &             0.5742 &             0.5104 &             0.4649 &             0.4039 &             0.3449 \\
 &      SGPAE &             0.5592 &             0.4936 &             0.4476 &             0.3908 &             0.3316 \\
 \hline
\end{tabular}
\end{adjustbox}
\end{table}

\begin{table}[!htb]
\centering
\caption{NLL of CIFAR10-C for ViTs trained on clean data without data augmentation.}
\begin{adjustbox}{width=\textwidth}
\begin{tabular}{c c c c c c c}
\hline
& \multicolumn{6}{c}{Skew Intensity}\\
 & Model &                1 &                2 &              3 &               4 &               5 \\
 \hline
   \multirow{ 1}{*}{SDP}&MLE &  2.6411$\pm$0.0202 &  3.1620$\pm$0.0202 &  3.6449$\pm$0.0231 &  4.3843$\pm$0.0283 &  5.2626$\pm$0.0287 \\
 \hline
   \multirow{ 4}{*}{Kernel}&   MLE &  1.7906$\pm$0.0179 &  2.3460$\pm$0.0321 &  2.8899$\pm$0.0385 &  3.6899$\pm$0.0544 &  4.5003$\pm$0.0811 \\
 &  MFVI &  0.9761$\pm$0.0274 &  1.1345$\pm$0.0267 &  1.2737$\pm$0.0259 &  1.4705$\pm$0.0237 &  1.7038$\pm$0.0222 \\
 &   MCD &  1.1466$\pm$0.0099 &  1.5153$\pm$0.0168 &  1.8926$\pm$0.0234 &  2.4692$\pm$0.0315 &  3.1086$\pm$0.0431 \\
  &  KFLLLA &  0.9573$\pm$0.0498 &  1.2092$\pm$0.0577 &  1.4435$\pm$0.0683 &  1.7889$\pm$0.0937 &  2.1537$\pm$0.1198 \\
 &       SNGP &  1.0579$\pm$0.0066 &  1.3681$\pm$0.0115 &  1.6570$\pm$0.0129 &  2.0807$\pm$0.0136 &  2.5271$\pm$0.0146 \\
 &  SGPA &  0.9603$\pm$0.0128 &  1.2340$\pm$0.0213 &  1.4972$\pm$0.0283 &  1.8951$\pm$0.0383 &  2.3812$\pm$0.0586 \\
  &         DE &             0.9317 &             1.2424 &             1.5716 &             2.0911 &             2.6373 \\
 &      SGPAE &             0.7816 &             0.9985 &             1.2125 &             1.5365 &             1.9349 \\
\hline
\end{tabular}
\end{adjustbox}
\end{table}

\begin{table}[!htb]
\centering
\caption{NLL of CIFAR10-C for ViTs trained on clean data with data augmentation.}
\begin{adjustbox}{width=\textwidth}
\begin{tabular}{c c c c c c c}
\hline
& \multicolumn{6}{c}{Skew Intensity}\\
 & Model &                1 &                2 &              3 &               4 &               5 \\
 \hline
   \multirow{ 1}{*}{SDP}&MLE &  0.8306$\pm$0.0422 &  1.0712$\pm$0.0559 &  1.3665$\pm$0.0753 &  1.8795$\pm$0.1085 &  2.5810$\pm$0.1571 \\
 \hline
   \multirow{ 4}{*}{Kernel}&   MLE &  0.6364$\pm$0.0140 &  0.7960$\pm$0.0207 &  0.9614$\pm$0.0296 &  1.2404$\pm$0.0416 &  1.5818$\pm$0.0538 \\
 &  MFVI &  0.9616$\pm$0.0044 &  1.0706$\pm$0.0061 &  1.1802$\pm$0.0098 &  1.3612$\pm$0.0160 &  1.5629$\pm$0.0248 \\
 &   MCD &  0.6067$\pm$0.0180 &  0.7579$\pm$0.0236 &  0.9144$\pm$0.0304 &  1.1707$\pm$0.0363 &  1.4977$\pm$0.0420 \\
 &  KFLLLA &  0.6104$\pm$0.0080 &  0.7565$\pm$0.0108 &  0.9044$\pm$0.0144 &  1.1496$\pm$0.0183 &  1.4487$\pm$0.0230 \\
 &       SNGP &  0.6439$\pm$0.0187 &  0.7874$\pm$0.0218 &  0.9304$\pm$0.0225 &  1.1773$\pm$0.0223 &  1.4671$\pm$0.0225 \\
 &  SGPA &  0.6204$\pm$0.0054 &  0.7739$\pm$0.0054 &  0.9298$\pm$0.0047 &  1.1873$\pm$0.0118 &  1.5058$\pm$0.0226 \\
 &         DE &             0.5641 &             0.7028 &             0.8441 &             1.0737 &             1.3595 \\
 &      SGPAE &             0.5230 &             0.6503 &             0.7814 &             1.0027 &             1.2835 \\
 \hline
\end{tabular}
\end{adjustbox}
\end{table}

\begin{table}[!htb]
\centering
\caption{NLL of CIFAR100-C for ViTs trained on clean data without data augmentation.}
\begin{adjustbox}{width=\textwidth}
\begin{tabular}{c c c c c c c}
\hline
& \multicolumn{6}{c}{Skew Intensity}\\
 & Model &                1 &                2 &              3 &               4 &               5 \\
 \hline
   \multirow{ 1}{*}{SDP}&MLE &  7.4649$\pm$0.0475 &  8.5233$\pm$0.0524 &  9.1964$\pm$0.0527 &  10.4061$\pm$0.0555 &  11.8540$\pm$0.0649 \\
 \hline\multirow{ 4}{*}{Kernel}&   MLE &  5.5126$\pm$0.0578 &  6.7077$\pm$0.0416 &  7.5503$\pm$0.0237 &   9.0093$\pm$0.0353 &  10.5647$\pm$0.0699 \\
 &  MFVI &  3.0138$\pm$0.0132 &  3.4762$\pm$0.0136 &  3.7921$\pm$0.0135 &   4.3295$\pm$0.0153 &   4.9042$\pm$0.0200 \\
 &   MCD &  3.8996$\pm$0.0266 &  4.8412$\pm$0.0178 &  5.5397$\pm$0.0269 &   6.7984$\pm$0.0591 &   8.2454$\pm$0.0959 \\
  &  KFLLLA &  3.4366$\pm$0.0172 &  3.5660$\pm$0.0160 &  3.6457$\pm$0.0152 &  3.7645$\pm$0.0135 &   3.8862$\pm$0.0115 \\
 &       SNGP &  3.6404$\pm$0.0297 &  4.3067$\pm$0.0360 &  4.7994$\pm$0.0446 &  5.6242$\pm$0.0502 &   6.5029$\pm$0.0504 \\
 &  SGPA &  3.6565$\pm$0.1290 &  4.5188$\pm$0.1401 &  5.1522$\pm$0.1510 &   6.3096$\pm$0.1784 &   7.6306$\pm$0.2161 \\
  &         DE &             3.1930 &             4.0004 &             4.6072 &             5.7357 &              6.9765 \\
 &      SGPAE &             2.8407 &             3.5393 &             4.0466 &             4.9890 &              6.0803 \\
 \hline
\end{tabular}
\end{adjustbox}
\end{table}

 \begin{table}[!htb]
\centering
\caption{NLL of CIFAR100-C for ViTs trained on clean data with data augmentation.}
\begin{adjustbox}{width=\textwidth}
\begin{tabular}{c c c c c c c}
\hline
& \multicolumn{6}{c}{Skew Intensity}\\
 & Model &                1 &                2 &              3 &               4 &               5 \\
 \hline
   \multirow{ 1}{*}{SDP}&MLE &  2.2595$\pm$0.2276 &  2.6313$\pm$0.2735 &  2.9786$\pm$0.3148 &  3.5393$\pm$0.3720 &  4.1865$\pm$0.4426 \\
 \hline\multirow{ 4}{*}{Kernel}&   MLE &  1.9695$\pm$0.0167 &  2.3721$\pm$0.0260 &  2.7423$\pm$0.0346 &  3.3355$\pm$0.0500 &  3.9813$\pm$0.0602 \\
 &  MFVI &  2.3501$\pm$0.0073 &  2.5702$\pm$0.0084 &  2.7571$\pm$0.0112 &  3.0582$\pm$0.0123 &  3.3805$\pm$0.0147 \\
 &   MCD &  1.8337$\pm$0.0048 &  2.1874$\pm$0.0073 &  2.5076$\pm$0.0112 &  3.0263$\pm$0.0205 &  3.6196$\pm$0.0273 \\
  &  KFLLLA &  1.8645$\pm$0.0092 &  2.2008$\pm$0.0095 &  2.4935$\pm$0.0095 &  2.9465$\pm$0.0112 &  3.4418$\pm$0.0135 \\
 &       SNGP &  1.9167$\pm$0.0161 &  2.2067$\pm$0.0162 &  2.4753$\pm$0.0163 &  2.9166$\pm$0.0133 &  3.3598$\pm$0.0116 \\
 &  SGPA &  1.9307$\pm$0.0075 &  2.3154$\pm$0.0104 &  2.6598$\pm$0.0130 &  3.2105$\pm$0.0151 &  3.8575$\pm$0.0180 \\
  &         DE &             1.7354 &             2.0787 &             2.3857 &             2.8802 &             3.4604 \\
 &      SGPAE &             1.6583 &             1.9839 &             2.2802 &             2.7592 &             3.2976 \\
 \hline
\end{tabular}
\end{adjustbox}
\end{table}

\begin{table}[!htb]
\centering
\caption{ECE of CIFAR10-C for ViTs trained on clean data without data augmentation.}
\begin{adjustbox}{width=\textwidth}
\begin{tabular}{c c c c c c c}
\hline
& \multicolumn{6}{c}{Skew Intensity}\\
 & Model &                1 &                2 &              3 &               4 &               5 \\
 \hline
   \multirow{ 1}{*}{SDP}&
MLE &  0.2919$\pm$0.0021 &  0.3297$\pm$0.0021 &  0.3623$\pm$0.0023 &  0.3964$\pm$0.0026 &  0.4458$\pm$0.0030 \\
\hline\multirow{ 4}{*}{Kernel}& MLE & 0.2514$\pm$0.0019 & 0.3003$\pm$0.0020 & 0.3396$\pm$0.0026 & 0.3898$\pm$0.0028 & 0.4355$\pm$0.0032 \\
 & MFVI & 0.0324$\pm$0.0013 & 0.0767$\pm$0.0019 & 0.1031$\pm$0.0021 & 0.1273$\pm$0.0025 & 0.1684$\pm$0.0025 \\
 & MCD & 0.1191$\pm$0.0011 & 0.1589$\pm$0.0013 & 0.1928$\pm$0.0013 & 0.2390$\pm$0.0015 & 0.2831$\pm$0.0017 \\
 & KFLLLA & 0.1044$\pm$0.0012 & 0.1437$\pm$0.0017 & 0.1827$\pm$0.0017 & 0.2306$\pm$0.0029 & 0.2674$\pm$0.0035 \\
 & SNGP & 0.1095$\pm$0.0011 & 0.1477$\pm$0.0013 & 0.1860$\pm$0.0015 & 0.2337$\pm$0.0015 & 0.2724$\pm$0.0018 \\
 & SGPA & 0.0894$\pm$0.0013 & 0.1305$\pm$0.0018 & 0.1653$\pm$0.0020 & 0.2137$\pm$0.0024 & 0.2562$\pm$0.0024 \\
 & DE & 0.0771 & 0.1095 & 0.1406 & 0.1826 & 0.2168 \\
 & SGPAE & 0.0323 & 0.0644 & 0.0946 & 0.1381 & 0.1736 \\
\hline
\end{tabular}
\end{adjustbox}
\end{table}

\begin{table}[!htb]
\centering
\caption{ECE of CIFAR10-C for ViTs trained on clean data with data augmentation.}
\begin{adjustbox}{width=\textwidth}
\begin{tabular}{c c c c c c c}
\hline
& \multicolumn{6}{c}{Skew Intensity}\\
 & Model &                1 &                2 &              3 &               4 &               5 \\
 \hline
   \multirow{ 1}{*}{SDP}&
MLE &  0.1731$\pm$0.0015 &  0.1962$\pm$0.0015 &  0.2094$\pm$0.0017 &  0.2582$\pm$0.0021 &  0.3046$\pm$0.0031 \\
\hline\multirow{ 4}{*}{Kernel} & MLE & 0.0666$\pm$0.0014 & 0.0920$\pm$0.0016 & 0.1170$\pm$0.0018 & 0.1605$\pm$0.0027 & 0.2090$\pm$0.0036 \\
 & MFVI & 0.0295$\pm$0.0006 & 0.0519$\pm$0.0008 & 0.0715$\pm$0.0015 & 0.1097$\pm$0.0019 & 0.1534$\pm$0.0026 \\
 & MCD & 0.0245$\pm$0.0006 & 0.0433$\pm$0.0008 & 0.0657$\pm$0.0013 & 0.1084$\pm$0.0017 & 0.1545$\pm$0.0023 \\
 & KFAC-LLLA & 0.0352$\pm$0.0006 & 0.0587$\pm$0.0008 & 0.0797$\pm$0.0014 & 0.1175$\pm$0.0018 & 0.1634$\pm$0.0024 \\
 & SNGP & 0.0260$\pm$0.0004 & 0.0368$\pm$0.0009 & 0.0555$\pm$0.0016 & 0.0929$\pm$0.0025 & 0.1370$\pm$0.0035 \\
 & SGPA & 0.0217$\pm$0.0004 & 0.0368$\pm$0.0009 & 0.0565$\pm$0.0017 & 0.0971$\pm$0.0027 & 0.1421$\pm$0.0038 \\
 & DE & 0.0291 & 0.0369 & 0.0549 & 0.0904 & 0.1338 \\
 & SGPAE & 0.0224 & 0.0351 & 0.0498 & 0.0821 & 0.1196 \\
 \hline
\end{tabular}
\end{adjustbox}
\end{table}

\begin{table}[!htb]
\centering
\caption{ECE of CIFAR100-C for ViTs trained on clean data without data augmentation.}
\begin{adjustbox}{width=\textwidth}
\begin{tabular}{c c c c c c c}
\hline
& \multicolumn{6}{c}{Skew Intensity}\\
 & Model &                1 &                2 &              3 &               4 &               5 \\
 \hline
   \multirow{ 1}{*}{SDP}&
MLE &  0.5735$\pm$0.0032 &  0.6016$\pm$0.0028 &  0.6272$\pm$0.0031 &  0.6450$\pm$0.0027 &  0.6799$\pm$0.0024 \\
\hline\multirow{ 4}{*}{Kernel} & MLE & 0.4588$\pm$0.0028 & 0.5095$\pm$0.0030 & 0.5393$\pm$0.0028 & 0.5826$\pm$0.0019 & 0.6209$\pm$0.0020 \\
 & MFVI & 0.1014$\pm$0.0012 & 0.1458$\pm$0.0011 & 0.1592$\pm$0.0012 & 0.2092$\pm$0.0013 & 0.2575$\pm$0.0023 \\
 & MCD & 0.2376$\pm$0.0010 & 0.2799$\pm$0.0009 & 0.3086$\pm$0.0017 & 0.3562$\pm$0.0023 & 0.4012$\pm$0.0031 \\
 & KFAC-LLLA & 0.1753$\pm$0.0029 & 0.1740$\pm$0.0031 & 0.1553$\pm$0.0029 & 0.1478$\pm$0.0020 & 0.1427$\pm$0.0020 \\
 & SNGP & 0.2427$\pm$0.0010 & 0.2839$\pm$0.0010 & 0.3236$\pm$0.0017 & 0.3621$\pm$0.0023 & 0.4056$\pm$0.0031 \\
 & SGPA & 0.2337$\pm$0.0012 & 0.2763$\pm$0.0011 & 0.3045$\pm$0.0012 & 0.3520$\pm$0.0012 & 0.3963$\pm$0.0023 \\
 & DE & 0.1696 & 0.2117 & 0.2400 & 0.2877 & 0.3283 \\
 & SGPAE & 0.0972 & 0.1390 & 0.1665 & 0.2149 & 0.2605 \\
\hline
\end{tabular}
\end{adjustbox}
\end{table}

\begin{table}[!htb]
\centering
\caption{ECE of CIFAR100-C for ViTs trained on clean data with data augmentation.}
\begin{adjustbox}{width=\textwidth}
\begin{tabular}{c c c c c c c}
\hline
& \multicolumn{6}{c}{Skew Intensity}\\
 & Model &                1 &                2 &              3 &               4 &               5 \\
 \hline
   \multirow{ 1}{*}{SDP}&
MLE &  0.2179$\pm$0.0022 &  0.2485$\pm$0.0020 &  0.2901$\pm$0.0023 &  0.3327$\pm$0.0025 &  0.3816$\pm$0.0025 \\
 \hline\multirow{ 4}{*}{Kernel}&MLE & 0.1459$\pm$0.0020 & 0.1793$\pm$0.0021 & 0.2067$\pm$0.0022 & 0.2454$\pm$0.0023 & 0.2797$\pm$0.0023 \\
 & MFVI & 0.0478$\pm$0.0032 & 0.0409$\pm$0.0038 & 0.0510$\pm$0.0040 & 0.0681$\pm$0.0042 & 0.0824$\pm$0.0044 \\
 & MCD & 0.0614$\pm$0.0021 & 0.0919$\pm$0.0025 & 0.1197$\pm$0.0028 & 0.1596$\pm$0.0028 & 0.1979$\pm$0.0032 \\
 & KFAC-LLLA & 0.0493$\pm$0.0045 & 0.0635$\pm$0.0057 & 0.0862$\pm$0.0053 & 0.1323$\pm$0.0045 & 0.1764$\pm$0.0047 \\
 & SNGP & 0.0368$\pm$0.0019 & 0.0543$\pm$0.0025 & 0.0779$\pm$0.0027 & 0.1192$\pm$0.0025 & 0.1429$\pm$0.0031 \\
 & SGPA & 0.0742$\pm$0.0040 & 0.1060$\pm$0.0045 & 0.1341$\pm$0.0046 & 0.1733$\pm$0.0044 & 0.2124$\pm$0.0045 \\
 & DE & 0.0355 & 0.0562 & 0.0780 & 0.1144 & 0.1440 \\
 & SGPAE & 0.0434 & 0.0511 & 0.0680 & 0.1002 & 0.1308 \\
\hline
\end{tabular}
\end{adjustbox}
\end{table}

\begin{table}[!htb]
\centering
\caption{MCE of CIFAR10-C for ViTs trained on clean data without data augmentation.}
\begin{adjustbox}{width=\textwidth}
\begin{tabular}{c c c c c c c}
\hline
& \multicolumn{6}{c}{Skew Intensity}\\
 & Model &                1 &                2 &              3 &               4 &               5 \\
 \hline
   \multirow{ 1}{*}{SDP}&
MLE &  0.4666$\pm$0.0027 &  0.4840$\pm$0.0026 &  0.5041$\pm$0.0018 &  0.5280$\pm$0.0019 &  0.5534$\pm$0.0020 \\
\hline\multirow{ 4}{*}{Kernel} &   MLE &  0.4038$\pm$0.0028 &  0.4362$\pm$0.0020 &  0.4655$\pm$0.0020 &  0.5051$\pm$0.0033 &  0.5442$\pm$0.0036 \\
 &  MFVI &  0.0598$\pm$0.0112 &  0.0852$\pm$0.0150 &  0.1125$\pm$0.0147 &  0.1586$\pm$0.0147 &  0.2011$\pm$0.0149 \\
 &   MCD &  0.1902$\pm$0.0021 &  0.2402$\pm$0.0025 &  0.2807$\pm$0.0026 &  0.3413$\pm$0.0036 &  0.3949$\pm$0.0041 \\
 & KFLLLA &  0.1987$\pm$0.0251 &  0.2362$\pm$0.0241 &  0.2729$\pm$0.0238 &  0.3248$\pm$0.0220 &  0.3733$\pm$0.0210 \\
 &  SNGP &  0.1854$\pm$0.0018 &  0.2337$\pm$0.0021 &  0.2710$\pm$0.0011 &  0.3229$\pm$0.0021 &  0.3798$\pm$0.0008 \\
 &  SGPA &  0.1273$\pm$0.0042 &  0.1801$\pm$0.0046 &  0.2212$\pm$0.0061 &  0.2807$\pm$0.0057 &  0.3340$\pm$0.0047 \\
 &         DE &             0.1051 &             0.1402 &             0.1813 &             0.2476 &             0.3091 \\
&      SGPAE &             0.0570 &             0.0816 &             0.1173 &             0.1757 &             0.2208 \\
\hline
\end{tabular}
\end{adjustbox}
\end{table}

\begin{table}[!htb]
\centering
\caption{MCE of CIFAR10-C for ViTs trained on clean data with data augmentation.}
\begin{adjustbox}{width=\textwidth}
\begin{tabular}{c c c c c c c}
\hline
& \multicolumn{6}{c}{Skew Intensity}\\
 & Model &                1 &                2 &              3 &               4 &               5 \\
 \hline
   \multirow{ 1}{*}{SDP}&
MLE &  0.3500$\pm$0.0125 &  0.3649$\pm$0.0126 &  0.3870$\pm$0.0123 &  0.4186$\pm$0.0122 &  0.4561$\pm$0.0129 \\
 \hline\multirow{ 4}{*}{Kernel}&   MLE &  0.1188$\pm$0.0093 &  0.1451$\pm$0.0091 &  0.1776$\pm$0.0090 &  0.2341$\pm$0.0092 &  0.2886$\pm$0.0090 \\
 &  MFVI &  0.0772$\pm$0.0048 &  0.0965$\pm$0.0025 &  0.1283$\pm$0.0052 &  0.1903$\pm$0.0067 &  0.2261$\pm$0.0090 \\
 &   MCD &  0.0566$\pm$0.0032 &  0.0788$\pm$0.0031 &  0.1110$\pm$0.0048 &  0.1720$\pm$0.0062 &  0.2298$\pm$0.0057 \\
  &  KFLLLA &  0.0808$\pm$0.0031 &  0.1035$\pm$0.0028 &  0.1328$\pm$0.0030 &  0.1845$\pm$0.0036 &  0.2396$\pm$0.0036 \\
 &       SNGP &  0.0557$\pm$0.0013 &  0.0797$\pm$0.0029 &  0.1093$\pm$0.0039 &  0.1764$\pm$0.0040 &  0.2318$\pm$0.0044 \\
 &  SGPA &  0.0569$\pm$0.0022 &  0.0781$\pm$0.0026 &  0.1059$\pm$0.0030 &  0.1601$\pm$0.0054 &  0.2167$\pm$0.0050 \\
  &         DE &             0.0868 &             0.0855 &             0.1003 &             0.1334 &             0.1705 \\
 &      SGPAE &             0.0627 &             0.0659 &             0.0882 &             0.1262 &             0.1776 \\
\hline
\end{tabular}
\end{adjustbox}
\end{table}

\begin{table}[!htb]
\centering
\caption{MCE of CIFAR100-C for ViTs trained on clean data without data augmentation.}
\begin{adjustbox}{width=\textwidth}
\begin{tabular}{c c c c c c c}
\hline
& \multicolumn{6}{c}{Skew Intensity}\\
 & Model &                1 &                2 &              3 &               4 &               5 \\
 \hline
   \multirow{ 1}{*}{SDP}&
MLE &  0.6608$\pm$0.0014 &  0.6852$\pm$0.0014 &  0.7009$\pm$0.0022 &  0.7251$\pm$0.0020 &  0.7542$\pm$0.0011 \\
\hline\multirow{ 4}{*}{Kernel} &   MLE &  0.6037$\pm$0.0032 &  0.6428$\pm$0.0031 &  0.6649$\pm$0.0027 &  0.7037$\pm$0.0007 &  0.7385$\pm$0.0010 \\
 &  MFVI &  0.1925$\pm$0.0031 &  0.2594$\pm$0.0039 &  0.3069$\pm$0.0055 &  0.3809$\pm$0.0044 &  0.4562$\pm$0.0027 \\
 &   MCD &  0.3424$\pm$0.0042 &  0.4103$\pm$0.0047 &  0.4554$\pm$0.0045 &  0.5255$\pm$0.0052 &  0.5902$\pm$0.0038 \\
  &  KFLLLA &  0.7691$\pm$0.0165 &  0.7632$\pm$0.0181 &  0.7512$\pm$0.0194 &  0.7248$\pm$0.0146 &  0.6699$\pm$0.0228 \\
 &       SNGP &  0.3631$\pm$0.0042 &  0.4201$\pm$0.0044 &  0.4624$\pm$0.0039 &  0.5177$\pm$0.0043 &  0.5749$\pm$0.0037 \\
 &  SGPA &  0.3217$\pm$0.0057 &  0.3886$\pm$0.0083 &  0.4353$\pm$0.0093 &  0.5047$\pm$0.0083 &  0.5726$\pm$0.0072 \\
  &         DE &             0.1767 &             0.2257 &             0.2655 &             0.3522 &             0.4202 \\
 &      SGPAE &             0.1247 &             0.1766 &             0.2209 &             0.3142 &             0.3965 \\
\hline
\end{tabular}
\end{adjustbox}
\end{table}

\begin{table}[!htb]
\centering
\caption{MCE of CIFAR100-C for ViTs trained on clean data with data augmentation.}
\begin{adjustbox}{width=\textwidth}
\begin{tabular}{c c c c c c c}
\hline
& \multicolumn{6}{c}{Skew Intensity}\\
 & Model &                1 &                2 &              3 &               4 &               5 \\
 \hline
   \multirow{ 1}{*}{SDP}&
MLE &  0.3142$\pm$0.0470 &  0.3487$\pm$0.0458 &  0.3816$\pm$0.0451 &  0.4308$\pm$0.0412 &  0.4839$\pm$0.0387 \\
\hline\multirow{ 4}{*}{Kernel}&   MLE &  0.2237$\pm$0.0136 &  0.2650$\pm$0.0122 &  0.3098$\pm$0.0131 &  0.3733$\pm$0.0117 &  0.4381$\pm$0.0140 \\
 &  MFVI &  0.1330$\pm$0.0083 &  0.1173$\pm$0.0064 &  0.1275$\pm$0.0051 &  0.1751$\pm$0.0056 &  0.2208$\pm$0.0069 \\
 &   MCD &  0.0995$\pm$0.0066 &  0.1418$\pm$0.0067 &  0.1845$\pm$0.0067 &  0.2568$\pm$0.0081 &  0.3240$\pm$0.0082 \\
  &  KFLLLA &  0.0893$\pm$0.0040 &  0.1278$\pm$0.0048 &  0.1689$\pm$0.0057 &  0.2355$\pm$0.0067 &  0.2999$\pm$0.0076 \\
 &       SNGP &  0.0636$\pm$0.0012 &  0.0972$\pm$0.0013 &  0.1377$\pm$0.0015 &  0.2007$\pm$0.0017 &  0.2620$\pm$0.0012 \\
 &  SGPA &  0.1120$\pm$0.0022 &  0.1582$\pm$0.0034 &  0.2062$\pm$0.0040 &  0.2776$\pm$0.0028 &  0.3455$\pm$0.0030 \\
  &         DE &             0.0920 &             0.0971 &             0.1220 &             0.1906 &             0.2444 \\
 &      SGPAE &             0.0722 &             0.0888 &             0.1088 &             0.1766 &             0.2291 \\
\hline
\end{tabular}
\end{adjustbox}
\label{table:ood_rob_end}
\end{table}

\begin{table}[!htb]
\centering
\caption{AUROC and AUPR metrics for OOD detection using ViTs trained on CIFAR10 without data augmentation.}
\begin{adjustbox}{width=\textwidth}
\begin{tabular}{c c c c c c c c}
\hline
& &  \multicolumn{2}{c}{OOD: CIFAR100}&  \multicolumn{2}{c}{OOD: SVHN}&  \multicolumn{2}{c}{OOD: MINIIMAGENET}\\
 & Model &   AUROC&AUPR  &AUROC&AUPR&  AUROC&AUPR\\
 \hline
   \multirow{ 1}{*}{SDP}&
MLE &  0.6893$\pm$0.0018 &  0.6433$\pm$0.0023 &  0.6983$\pm$0.0109 &  0.8206$\pm$0.0074 &  0.7115$\pm$0.0018 &  0.9219$\pm$0.0006 \\
 \hline\multirow{ 4}{*}{Kernel}&   MLE &  0.7208$\pm$0.0029 &  0.6774$\pm$0.0029 &  0.7477$\pm$0.0118 &  0.8518$\pm$0.0068 &  0.7325$\pm$0.0016 &  0.9300$\pm$0.0004 \\
 &  MFVI &  0.7177$\pm$0.0055 &  0.6722$\pm$0.0050 &  0.7059$\pm$0.0195 &  0.7944$\pm$0.0134 &  0.7531$\pm$0.0036 &  0.9407$\pm$0.0008 \\
 &   MCD &  0.7426$\pm$0.0016 &  0.7046$\pm$0.0016 &  0.6860$\pm$0.0138 &  0.8045$\pm$0.0083 &  0.7443$\pm$0.0019 &  0.9356$\pm$0.0006 \\
  &  KFLLLA &  0.7228$\pm$0.0056 &  0.6889$\pm$0.0055 &  0.7599$\pm$0.0105 &  0.8345$\pm$0.0069 &  0.7855$\pm$0.0046 &  0.9487$\pm$0.0015 \\
 &       SNGP &  0.7391$\pm$0.0016 &  0.7023$\pm$0.0029 &  0.7135$\pm$0.0125 &  0.8258$\pm$0.0083 &  0.7391$\pm$0.0050 &  0.9341$\pm$0.0015 \\
 &  SGPA &  0.7498$\pm$0.0010 &  0.7111$\pm$0.0023 &  0.7198$\pm$0.0085 &  0.8125$\pm$0.0067 &  0.7501$\pm$0.0013 &  0.9382$\pm$0.0004 \\
  &         DE &  0.7757$\pm$0.0000 &  0.7423$\pm$0.0000 &  0.7819$\pm$0.0000 &  0.8512$\pm$0.0000 &  0.8007$\pm$0.0000 &  0.9520$\pm$0.0000 \\
 &      SGPAE &  0.7872$\pm$0.0000 &  0.7482$\pm$0.0000 &  0.7600$\pm$0.0000 &  0.8619$\pm$0.0000 &  0.8031$\pm$0.0000 &  0.9529$\pm$0.0000 \\
\hline
\end{tabular}
\end{adjustbox}
\label{table:ood_detect_start}
\end{table}

\begin{table}[!htb]
\centering
\caption{AUROC and AUPR metrics for OOD detection using ViTs trained on CIFAR10 with data augmentation.}
\begin{adjustbox}{width=\textwidth}
\begin{tabular}{c c c c c c c c}
\hline
& &  \multicolumn{2}{c}{OOD: CIFAR100}&  \multicolumn{2}{c}{OOD: SVHN}&  \multicolumn{2}{c}{OOD: MINIIMAGENET}\\
 & Model &   AUROC&AUPR  &AUROC&AUPR&  AUROC&AUPR\\
 \hline
   \multirow{ 1}{*}{SDP}&
MLE &  0.8245$\pm$0.0007 &  0.7780$\pm$0.0013 &  0.8738$\pm$0.0020 &  0.9308$\pm$0.0015 &  0.8375$\pm$0.0009 &  0.9585$\pm$0.0005 \\
\hline\multirow{ 4}{*}{Kernel} &   MLE &  0.7908$\pm$0.0034 &  0.7523$\pm$0.0028 &  0.8748$\pm$0.0023 &  0.9338$\pm$0.0015 &  0.8148$\pm$0.0027 &  0.9555$\pm$0.0005 \\
 &  MFVI &  0.7009$\pm$0.0043 &  0.6530$\pm$0.0045 &  0.7652$\pm$0.0179 &  0.8468$\pm$0.0109 &  0.7506$\pm$0.0073 &  0.9379$\pm$0.0027 \\
 &   MCD &  0.7995$\pm$0.0038 &  0.7614$\pm$0.0030 &  0.8754$\pm$0.0040 &  0.9304$\pm$0.0029 &  0.8282$\pm$0.0029 &  0.9595$\pm$0.0007 \\
 &  KFLLLA &  0.7989$\pm$0.0041 &  0.7617$\pm$0.0036 &  0.8884$\pm$0.0036 &  0.9415$\pm$0.0022 &  0.8273$\pm$0.0034 &  0.9591$\pm$0.0007 \\
 &       SNGP &  0.7889$\pm$0.0092 &  0.7543$\pm$0.0104 &  0.8776$\pm$0.0044 &  0.9326$\pm$0.0032 &  0.8125$\pm$0.0072 &  0.9551$\pm$0.0020 \\
 &  SGPA &  0.8007$\pm$0.0014 &  0.7630$\pm$0.0013 &  0.8746$\pm$0.0061 &  0.9281$\pm$0.0043 &  0.8319$\pm$0.0014 &  0.9600$\pm$0.0004 \\
  &         DE &  0.8230$\pm$0.0000 &  0.7869$\pm$0.0000 &  0.9195$\pm$0.0000 &  0.9575$\pm$0.0000 &  0.8555$\pm$0.0000 &  0.9666$\pm$0.0000 \\
 &      SGPAE &  0.8264$\pm$0.0000 &  0.7917$\pm$0.0000 &  0.9004$\pm$0.0000 &  0.9452$\pm$0.0000 &  0.8606$\pm$0.0000 &  0.9677$\pm$0.0000 \\
\hline
\end{tabular}
\end{adjustbox}
\end{table}

\begin{table}[!htb]
\centering
\caption{AUROC and AUPR metrics for OOD detection using ViTs trained on CIFAR100 without data augmentation.}
\begin{adjustbox}{width=\textwidth}
\begin{tabular}{c c c c c c c c}
\hline
& &  \multicolumn{2}{c}{OOD: CIFAR10}&  \multicolumn{2}{c}{OOD: SVHN}&  \multicolumn{2}{c}{OOD: MINIIMAGENET}\\
 & Model &   AUROC&AUPR  &AUROC&AUPR&  AUROC&AUPR\\
 \hline
   \multirow{ 1}{*}{SDP}&
MLE &  0.6091$\pm$0.0023 &  0.5756$\pm$0.0030 &  0.6242$\pm$0.0014 &  0.7856$\pm$0.0015 &  0.6512$\pm$0.0018 &  0.9042$\pm$0.0007 \\
 \hline\multirow{ 4}{*}{Kernel}&   MLE &  0.6355$\pm$0.0052 &  0.5983$\pm$0.0045 &  0.6725$\pm$0.0111 &  0.8171$\pm$0.0062 &  0.6582$\pm$0.0035 &  0.9071$\pm$0.0010 \\
 &  MFVI &  0.6277$\pm$0.0036 &  0.5900$\pm$0.0037 &  0.6242$\pm$0.0108 &  0.7729$\pm$0.0079 &  0.6496$\pm$0.0054 &  0.9051$\pm$0.0024 \\
 &   MCD &  0.6729$\pm$0.0069 &  0.6310$\pm$0.0076 &  0.6054$\pm$0.0057 &  0.7597$\pm$0.0038 &  0.6859$\pm$0.0014 &  0.9155$\pm$0.0005 \\
  &  KFLLLA &  0.6297$\pm$0.0059 &  0.5920$\pm$0.0071 &  0.6733$\pm$0.0159 &  0.8101$\pm$0.0086 &  0.6915$\pm$0.0057 &  0.9194$\pm$0.0017 \\
 &       SNGP &  0.6448$\pm$0.0016 &  0.6080$\pm$0.0020 &  0.6764$\pm$0.0096 &  0.8093$\pm$0.0063 &  0.6823$\pm$0.0023 &  0.9161$\pm$0.0009 \\
 &  SGPA &  0.6712$\pm$0.0052 &  0.6297$\pm$0.0057 &  0.6454$\pm$0.0147 &  0.7854$\pm$0.0099 &  0.6911$\pm$0.0028 &  0.9170$\pm$0.0009 \\
  &         DE &  0.6848$\pm$0.0000 &  0.6377$\pm$0.0000 &  0.7388$\pm$0.0000 &  0.8425$\pm$0.0000 &  0.7394$\pm$0.0000 &  0.9300$\pm$0.0000 \\
 &      SGPAE &  0.6925$\pm$0.0000 &  0.6461$\pm$0.0000 &  0.6961$\pm$0.0000 &  0.8213$\pm$0.0000 &  0.7398$\pm$0.0000 &  0.9304$\pm$0.0000 \\
\hline
\end{tabular}
\end{adjustbox}
\end{table}

\begin{table}[!htb]
\centering
\caption{AUROC and AUPR metrics for OOD detection using ViTs trained on CIFAR100 with data augmentation.}
\begin{adjustbox}{width=\textwidth}
\begin{tabular}{c c c c c c c c}
\hline
& &  \multicolumn{2}{c}{OOD: CIFAR10}&  \multicolumn{2}{c}{OOD: SVHN}&  \multicolumn{2}{c}{OOD: MINIIMAGENET}\\
 & Model &   AUROC&AUPR  &AUROC&AUPR&  AUROC&AUPR\\
 \hline
   \multirow{ 1}{*}{SDP}&
MLE &  0.6778$\pm$0.0022 &  0.6312$\pm$0.0001 &  0.7652$\pm$0.0093 &  0.8769$\pm$0.0055 &  0.7430$\pm$0.0065 &  0.9346$\pm$0.0020 \\
 \hline\multirow{ 4}{*}{Kernel}&   MLE &  0.6899$\pm$0.0026 &  0.6486$\pm$0.0023 &  0.7570$\pm$0.0056 &  0.8670$\pm$0.0025 &  0.7536$\pm$0.0038 &  0.9374$\pm$0.0012 \\
 &  MFVI &  0.6403$\pm$0.0042 &  0.5924$\pm$0.0035 &  0.7485$\pm$0.0106 &  0.8640$\pm$0.0068 &  0.7269$\pm$0.0099 &  0.9317$\pm$0.0030 \\
 &   MCD &  0.7047$\pm$0.0025 &  0.6568$\pm$0.0029 &  0.7972$\pm$0.0040 &  0.8885$\pm$0.0023 &  0.7780$\pm$0.0023 &  0.9440$\pm$0.0008 \\
  &  KFLLLA &  0.6924$\pm$0.0011 &  0.6510$\pm$0.0022 &  0.8164$\pm$0.0043 &  0.9104$\pm$0.0053 &  0.7726$\pm$0.0084 &  0.9435$\pm$0.0024 \\
 &       SNGP &  0.6813$\pm$0.0034 &  0.6388$\pm$0.0034 &  0.7616$\pm$0.0090 &  0.8701$\pm$0.0052 &  0.7486$\pm$0.0048 &  0.9367$\pm$0.0014 \\
 &  SGPA &  0.7056$\pm$0.0012 &  0.6586$\pm$0.0011 &  0.8120$\pm$0.0035 &  0.8971$\pm$0.0027 &  0.7755$\pm$0.0012 &  0.9430$\pm$0.0004 \\
  &         DE &  0.7171$\pm$0.0000 &  0.6718$\pm$0.0000 &  0.8459$\pm$0.0000 &  0.9173$\pm$0.0000 &  0.8076$\pm$0.0000 &  0.9517$\pm$0.0000 \\
 &      SGPAE &  0.7198$\pm$0.0000 &  0.6719$\pm$0.0000 &  0.8686$\pm$0.0000  & 0.9298$\pm$0.0000 &  0.8152$\pm$0.0000 &  0.9536$\pm$0.0000 \\
\hline
\end{tabular}
\end{adjustbox}
\end{table}

\begin{table}[!htb]
\centering
\caption{AUROC and AUPR metrics for OOD detection using Transformers trained on ZINC.}
\begin{tabular}{c c c c }
\hline
 & Model &  AUROC & AUPR\\
 \hline
   \multirow{ 1}{*}{SDP}
 &   MCD &  0.4566$\pm$0.0072 &  0.4889$\pm$0.0029 \\
 \hline\multirow{ 3}{*}{Kernel}&  MFVI &  0.7254$\pm$0.0632 &  0.6821$\pm$0.0512 \\
 &   MCD &  0.8797$\pm$0.0739 &  0.8629$\pm$0.0571 \\
 &  SGPA &  0.8968$\pm$0.0317 &  0.8705$\pm$0.0335 \\
  &     DE &  0.9783$\pm$0.0000 &  0.9697$\pm$0.0000 \\
 &  SGPAE &  0.9884$\pm$0.0000 &  0.9727$\pm$0.0000 \\
\hline
\end{tabular}
\label{table:ood_detect_end}
\end{table}

\label{a5:table}
\chapter{Supplementary Material for Chapter~\ref{cha:hsgp}}

\section{Computing the Prior Inducing Covariance $\mathbf{K}_{\mathbf{uu}}^{(t)}$ as Direct ODE Evolution}
\label{appendix:direct-ode-hippo-legs}
Although in the experiments in the main text, we compute $\bm{K}_{\bm{uu}}^{(t)}$ based on RFF approximation, here for completeness, we provide the detailed derivation for an alternative approach of computing it, which directly evolves $\bm{K}_{\bm{uu}}^{(t)}$ as a matrix ODE, when the inducing functions are defined via HiPPO-LegS. Recall that
\begin{equation}
[\bm{K}_{\bm{uu}}^{(t)}]_{\ell,m}
=
\iint
k(x, x^{\prime}) \phi_{\ell}^{(t)}(x)\phi_{m}^{(t)}(x^{\prime}) \mathrm{d}x \mathrm{d}x^{\prime},
\label{eq:kuu_definition}
\end{equation}
where $\phi_{\ell}^{(t)}(x) = g_{\ell}^{(t)}(x)\,\omega^{(t)}(x)$ are the time-varying basis functions under the HiPPO-LegS framework.

Differentiating $[\bm{K}_{\bm{uu}}^{(t)}]_{\ell,m}$ with respect to $t$ gives
\begin{equation}
\frac{d}{dt}\,[\bm{K}_{\bm{uu}}^{(t)}]_{\ell,m}
=
\iint
k(x,x^{\prime})
\frac{\partial}{\partial t}
\left[\phi_{\ell}^{(t)}(x)\phi_{m}^{(t)}(x^{\prime})\right]
\mathrm{d}x\mathrm{d}x^{\prime}.
\end{equation}
Applying the product rule:
\begin{equation}
\frac{\mathrm{d}}{\mathrm{d}t}[\bm{K}_{\bm{uu}}^{(t)}]_{\ell,m}
=
\iint
k(x,x^{\prime})\frac{\partial}{\partial t}\phi_{\ell}^{(t)}(x)\phi_{m}^{(t)}(x^{\prime})\mathrm{d}x\mathrm{d}x^{\prime}
+
\iint
k(x,x^{\prime})\phi_{\ell}^{(t)}(x)\frac{\partial}{\partial t}\phi_{m}^{(t)}(x^{\prime})\mathrm{d}x\mathrm{d}x^{\prime}.
\end{equation}
In HiPPO-LegS, each $\phi_{\ell}^{(t)}(x)$ obeys an ODE governed by lower order scaled Legendre polynomials on $[0,t]$ and a Dirac delta boundary term at $x=t$, as we show in Section~\ref{sec:hippo}. Concretely,
\begin{equation}
\begin{split}
\frac{\partial}{\partial t}\phi_{\ell}^{(t)}(x)
&=
-\frac{\sqrt{2\ell+1}}{t}
{\Large[}
\frac{\ell+1}{\sqrt{2\ell+1}}\phi_{\ell}^{(t)}(x)
+
\sqrt{2\ell-1}\phi_{\ell-1}^{(t)}(x)\\ & \qquad+ \sqrt{2\ell-3}\phi_{\ell-2}^{(t)}(x)\cdots
{\Large]}
+
\frac{1}{t}\,\delta_{t}(x),
\end{split}
\end{equation}
where $\delta_{t}(x)$ is the Dirac delta at $x=t$.

Substituting this expression into the integrals yields the boundary terms of the form $\int k(t, x^{\prime})\,\phi_{m}^{(t)}(x^{\prime})\,\mathrm{d}x^{\prime}$, along with lower-order terms involving $\{[\bm{K}_{\bm{uu}}^{(t)}]_{\ell,m},[\bm{K}_{\bm{uu}}^{(t)}]_{\ell-1,m},\cdots\}$, etc. Summarizing in matrix form leads to
\begin{equation}
\label{eq:kuu_ode}
\frac{\mathrm{d}}{\mathrm{d}t}\,\bm{K}_{\bm{uu}}^{(t)}
=
\left[
\bm{A}(t) \,\bm{K}_{\bm{uu}}^{(t)}
+
\bm{K}_{\bm{uu}}^{(t)}\bm{A}(t)^\top
\right]
+
\frac{1}{t}
\left[
\tilde{\bm{B}}(t)+\tilde{\bm{B}}(t)^\top
\right],
\end{equation}
where the $lm$-th entry of $\bm{K}_{\bm{uu}}^{(t)} \in \mathbb{R}^{M \times M}$ is $[\bm{K}_{\bm{uu}}^{(t)}]_{\ell,m}$, $\bm{A}(t) \in \mathbb{R}^{M\times M}$ is the same lower-triangular matrix from the HiPPO-LegS framework defined in Eq.~\ref{eq:hippo_matrix_a}, and $\tilde{\bm{B}}(t) \in \mathbb{R}^{M\times M}$ is built from the boundary contributions as
\begin{equation}
\label{eq:boundary_matrix}
\tilde{\bm{B}}(t) = \bm{c}(t) \bm{1}_M,
\end{equation}
where $\bm{1}_M \in \mathbb{R}^{1\times M}$ is a row vector of ones of size $M$ and $\bm{c}(t) \in \mathbb{R}^{M \times 1}$ is the coefficient vector with each element being
\begin{equation}
    [\bm{c}(t)]_{\ell}
    =
    \int\!
    k\left(t, x\right)\,\phi_{\ell}^{(t)}(x)\,\mathrm{d}x.
\end{equation}
After discretizing in $t$ (e.g.\ an Euler scheme), one can recurrently update $\bm{K}_{\bm{uu}}^{(t)}$ and the boundary vector $\bm{c}(t)$ over time.

\paragraph{Unstability of directly evolving $\mathbf{K}_{\mathbf{uu}}^{(t)}$ as ODE.}
\label{appendix:direct-ode-hippo-legs-unstability}
Empirically, we find that the direct ODE approach is less stable compared with RFF approach. Intuitively, it can be seen from the difference in the forms of their evolutions, especially in the first term. In RFF approach, the first term of the evolution of Fourier feature is of the form $\bm{A}(t)\,\bm{Z}_{w}^{(t)}$, which includes evolving vectors with the operator $\mathcal{L}_1: \bm{X} \rightarrow \bm{A}(t)\bm{X}$. In direct ODE approach, the first term in the direct evolution of $\bm{K}_{\bm{uu}}^{(t)}$ is of the form $\bm{A}(t) \,\bm{K}_{\bm{uu}}^{(t)}
+
\bm{K}_{\bm{uu}}^{(t)}\bm{A}(t)^\top$, which requires the Lyapunov operator $\mathcal{L}_2:\bm{X}\rightarrow \bm{A}(t)\bm{X} + \bm{XA}(t)^\top$. The critical difference is that $\mathcal{L}_2$ has eigenvalues $\{\lambda_i + \lambda_j\}$ (where $\lambda_i$ and $\lambda_j$ are eigenvalues of $\bm{A}(t)$) \citep{horn1991topics}, while $\mathcal{L}_1$ has eigenvalues $\{\lambda_i\}$. Since HiPPO-LegS uses a lower-triangular $\bm{A}(t)$ with negative diagonal entries, the eigenvalues are all negative $\lambda_i<0$. Hence, the eigenvalues of the Lyapunov operator $\mathcal{L}_2$ are approximately as twice negative as the eigenvalues of $\mathcal{L}_1$, leading to a stiff ODE system with poorer numerical conditioning.

\section{Finite Basis Approximation of Posterior OHSVGP}
\label{appendix:recon}

Here, we show that $q(\bm{u}^{(t)})$ is the distribution of HiPPO coefficients of the posterior OHSVGP $q_t(f)$. From Eq.~\ref{eq:q_predictive_svgp}, the posterior of the function values evaluated at arbitrary indices $\bm{X}$ is 
\(
    q_t(\bm{f_X}) = \mathcal{N}(\bm{f_X}; \bK^{(t)}_{\bm{f_X}\bu}\bK_{\bu\bu}^{(t)-1}\bm{m}^{(t)}_{\bu}, \bK_{\bm{f_X} \bm{f_X}} - \bK^{(t)}_{\bm{f_X}\bu}\bK_{\bu\bu}^{(t)-1}[\bK^{(t)}_{\bu\bu} - \bS^{(t)}_{\bu}]\bK_{\bu\bu}^{(t)-1}\bK^{(t)}_{\bu \bm{f_X}}).
\)

Based on this, we compute the mean of the $m$-th HiPPO coefficient for $q_t(f)$ as follows,
\begin{equation}
    \begin{split}
        & \quad E_{q_t(f)}\left[\int f(x)\phi_m^{(t)}(x) dx\right]\\
        &= \int E_{q_t(f)}\left[f(x)\right]\phi_m^{(t)}(x) dx\\
        &=\left(\int\bK^{(t)}_{f_x\bu}\bK_{\bu\bu}^{(t)-1} \phi_m^{(t)}(x) dx\right) \bm{m}^{(t)}_{\bu}\\
        &= \left[\int \int k(x, x')
        \begin{pmatrix}
        \phi_1^{(t)}(x')\\
        \cdot\cdot\cdot\\
        \phi_M^{(t)}(x')\\
        \end{pmatrix}
        dx'\bK_{\bu\bu}^{(t)-1}\phi_m^{(t)}(x) dx\right]  \bm{m}^{(t)}_{\bu}\\
        &= \left[\int \int k(x, x')
        \begin{pmatrix}
        \phi_1^{(t)}(x')\phi_m^{(t)}(x)\\
        \cdot\cdot\cdot\\
        \phi_M^{(t)}(x')\phi_m^{(t)}(x)\\
        \end{pmatrix}
         dx'dx \right] \bK_{\bu\bu}^{(t)-1}\bm{m}^{(t)}_{\bu}\\
         & = \left[ \bK_{\bu\bu}^{(t)} \right]_{m,:}\bK_{\bu\bu}^{(t)-1}\bm{m}^{(t)}_{\bu} \\
         &=\left[\bm{m}^{(t)}_{\bu}\right]_m,\\
    \end{split}
\end{equation}
which is exactly the variational mean of $q(u_m^{(t)})$. Similarly, the covariance between the $l$-th and the $m$-th HiPPO coefficient for $q_t(f)$ can be computed as
\begin{equation}
    \begin{split}
    & \quad E_{q_t(f)}\left[ \left(\int f(x)\phi_l^{(t)}(x)dx \right)  \left( \int f(x')\phi_m^{(t)}(x')dx' \right ) \right] -  \left[\bm{m}^{(t)}_{\bu}\right]_l \left[\bm{m}^{(t)}_{\bu}\right]_m \\
    &=\int\int E_{q_t(f)}\left[ f(x)  f(x') \right]\phi_l^{(t)}(x)\phi_m^{(t)}(x')dxdx' -  \left[\bm{m}^{(t)}_{\bu}\right]_l \left[\bm{m}^{(t)}_{\bu}\right]_m \\
   &= \int\int \left( k(x, x') - \bK^{(t)}_{f_x \bu}\bK_{\bu\bu}^{(t)-1}[\bK^{(t)}_{\bu\bu} - \bS^{(t)}_{\bu}]\bK_{\bu\bu}^{(t)-1}\bK^{(t)}_{\bu f_{x'}} \right) \phi_l^{(t)}(x)\phi_m^{(t)}(x')dx dx' \\
   & \quad \quad \quad+ \int\cancel{\int  E_{q_t(f)}\left[f(x)\right] E_{q_t(f)}\left[f(x')\right] \phi_l^{(t)}(x)\phi_m^{(t)}(x')dx dx'} -  \cancel{\left[\bm{m}^{(t)}_{\bu}\right]_l \left[\bm{m}^{(t)}_{\bu}\right]_m}\\  
   &=\left[\bK_{\bu\bu}^{(t)}\right]_{lm} - \left( \int \bK^{(t)}_{f_x\bu}\bK_{\bu\bu}^{(t)-1} \phi_l^{(t)}(x) dx\right)  [\bK^{(t)}_{\bu\bu} - \bS^{(t)}_{\bu}] \left(\int  \phi_m^{(t)}(x') \bK_{\bu\bu}^{(t)-1} \bK^{(t)}_{\bu f_{x'}}dx'\right)\\
   &=\cancel{\left[\bK_{\bu\bu}^{(t)}\right]_{lm}} - \cancel{\left[ \bK_{\bu\bu}^{(t)} \right]_{l,:}\bK_{\bu\bu}^{(t)-1}\bK_{\bu\bu}^{(t)} \bK_{\bu\bu}^{(t)-1}\left[\bK_{\bu\bu}^{(t)} \right]_{:,m} }+ \left[ \bK_{\bu\bu}^{(t)} \right]_{l,:} \bK_{\bu\bu}^{(t)-1} \bS^{(t)}_{\bu} \bK_{\bu\bu}^{(t)-1} \left[ \bK_{\bu\bu}^{(t)} \right]_{:,m}\\
   & = \left[\bS^{(t)}_{\bu}\right]_{lm},
    \end{split}
\end{equation}
which is exactly the variational covariance between $u_l^{(t)}$ and $u_m^{(t)}$ in $q(\bm{u}^{(t)})$.

Hence, if $f \sim q_t(f)$, then $ \int f(x) \phi_{m}^{(t)}(x) \mathrm{d}x \sim q(u_m^{(t)})$, which implies that we can approximate the posterior OHSVGP with finite basis: $f = \sum_{m=1}^M u_m^{(t)}g_{m}^{(t)}(x)$, $u_m^{(t)}\sim q(u_m^{(t)})$.
\section{SVGPVAE Model Details}
\label{appendix:gpvae}
With SVGPVAE, we utilize \citet{jazbec_scalable_2021} and notation from \citet{zhu_markovian_2023}, and we have the following encoder-decoder model:
\begin{align}
    p(\by_{1:T})=p(\mathbf{f}_{1:T}) \prod_{t=1}^T p(\by_{t}|\mathbf{f}_t),
\end{align}
with likelihood $p(\by_t|\mathbf{f}_t)=\mathcal{N}(\by_t|\varphi(\mathbf{f}_t), \sigma^2 I)$ and decoder network $\varphi:\mathbb{R}^{L}\rightarrow\mathbb{R}^{d_y}$. The encoder $\phi:\mathbb{R}^{d_y}\rightarrow\mathbb{R}^{2L}$ yields $(\tilde{\mathbf{y}}_{t}^{1:L}, \tilde{\mathbf{v}_t}^{1:L})=\phi(\by_{t})$. $\mathbf{f}_t$ follows an $L$-dimensional multi-output GP and its approximate posterior is given by:
\begin{align}
q(\mathbf{f}_{1:T}) &= \prod_{l=1}^L p(\mathbf{f}^l_{1:T}|\bu^l_m)q(\bu^l),\quad q(\bu^l) = \mathcal{N}(\bu^l | \mathbf{m}^l, \mathbf{A}^l),\\
\mathbf{S}^l&=\mathbf{K}_{\bu\bu}^l + \mathbf{K}_{\bu\mathbf{f}}^l \text{diag}(\tilde{\bv}^l_{1:T})^{-1}\mathbf{K}_{\mathbf{f}\bu}^l,
\quad \mathbf{m}^l = \mathbf{K}_{\bu\bu}^l (\mathbf{S}^l)^{-1}\mathbf{K}_{\bu\mathbf{f}}^l\text{diag}(\tilde{\bv}^l_{1:T})^{-1}\tilde{\by}^l_{1:T},\\
\mathbf{A}^l&=\mathbf{K}_{\bu\bu}^l(\mathbf{S}^l)^{-1}\mathbf{K}_{\bu\bu}^l,
\end{align}
where $p(\mathbf{f}^l_{1:T}|\mathbf{u}_m^l)$ is the 
prior conditional distribution. 

Following \citet{jazbec_scalable_2021}, the objective function is defined as:
\begin{align}
    \mathcal{L}_{\text{SVGPVAE}}(\theta) = \sum_{t=1}^T\big[ \mathbb{E}_{q(\mathbf{f}_t)}\log p(\mathbf{y}_t|\mathbf{f}_t) - \log \mathcal{N}(\mathbf{f}_t| \tilde{\mathbf{y}}_{t}, \tilde{\mathbf{v}}_{t} )\big] + \sum_{l=1}^L \mathcal{L}_H^l,
\end{align}
where $\mathcal{L}_H^l$ is the "Hensman" ELBO described in Equation~7 of \citet{jazbec_scalable_2021}. Since the variational parameters $\mathbf{m}^l$ and $\mathbf{S}^l$, and the likelihood are all amortized by neural networks, we further add EWC (Elastic Weight Consolidation; \citep{kirkpatrick2017overcoming}) regularization for both encoder and decoder networks to the loss above for continual learning.
\section{Additional Results}

\subsection{Results for Time Series Regression with Trainable Kernel Hyperparameters}
\label{appendix:trainable_kernel}
\begin{figure}[t]
  \begin{subfigure}[t]{0.29\textwidth}
      \centering
      \includegraphics[width=\textwidth]{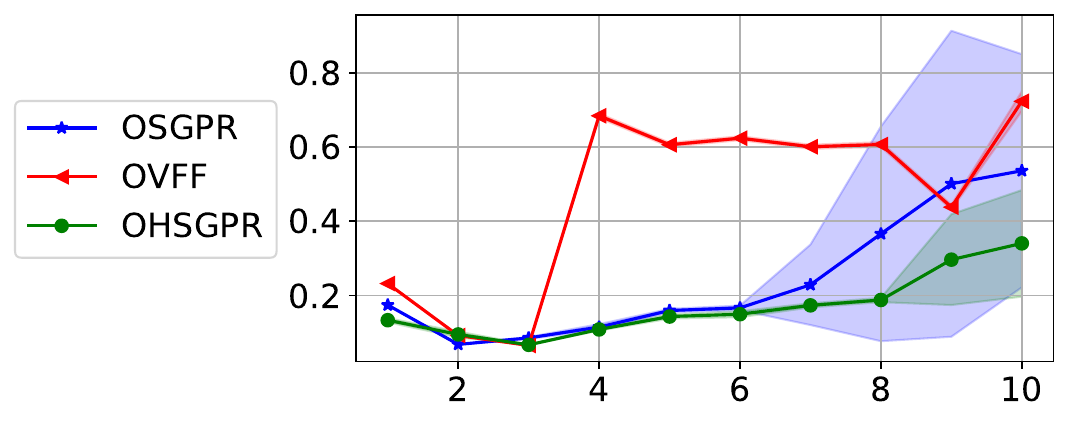}
      \caption{Solar, M=50}
  \end{subfigure}
  \hfill
  \begin{subfigure}[t]{0.224\textwidth}
      \centering
      \includegraphics[width=\textwidth]{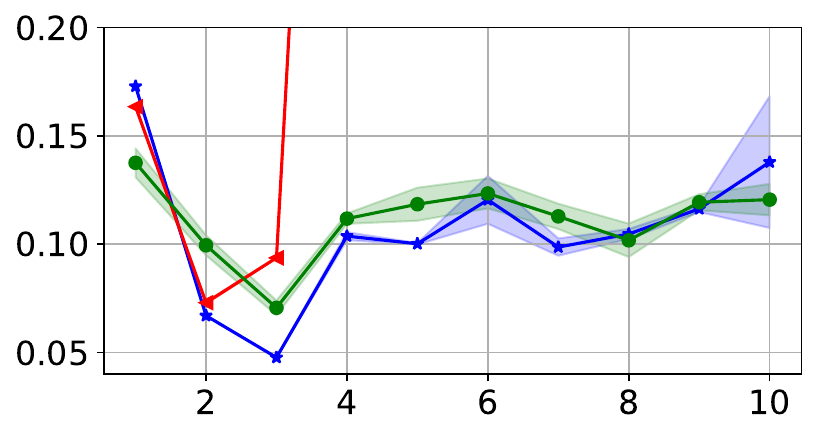}
      \caption{Solar, M=150}
  \end{subfigure}
  \hfill
  \begin{subfigure}[t]{0.224\textwidth}
      \centering
      \includegraphics[width=\textwidth]{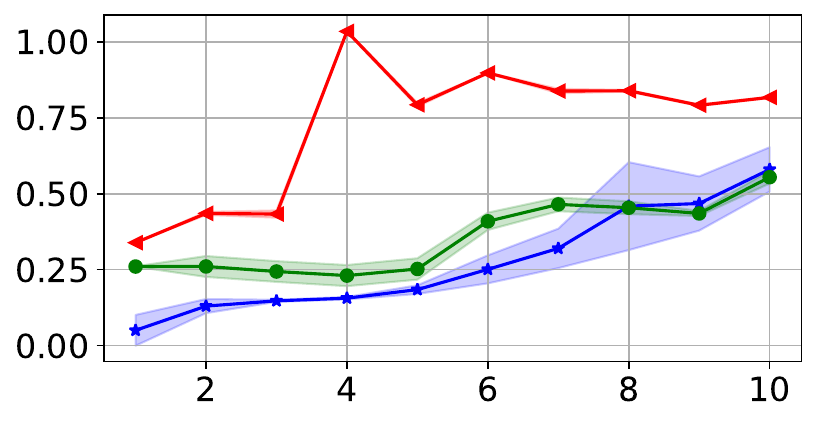}
    \caption{Audio, M=100}
  \end{subfigure}
  \begin{subfigure}[t]{0.224\textwidth}
      \centering
      \includegraphics[width=\textwidth]{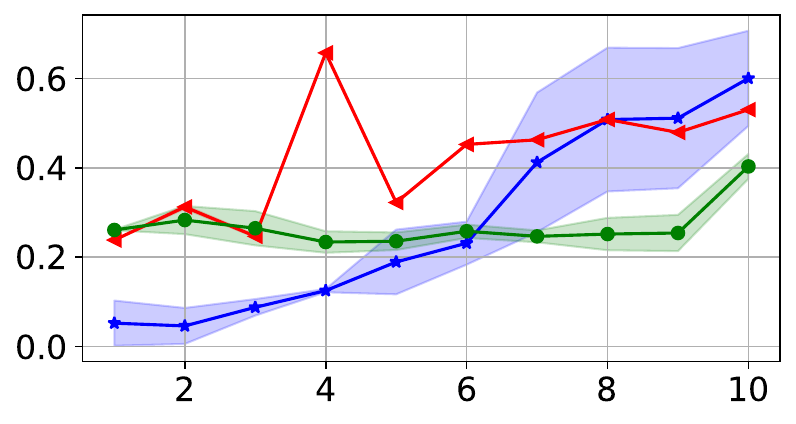}
    \caption{Audio, M=200}
  \end{subfigure}
  \caption{Test set RMSE over the learned tasks vs. number of learned tasks for Solar Irradiance and Audio signal prediction dataset (keep updating kernel hyperparameters).}
  \label{fig:time_series_rmse_traink}
\end{figure}

\begin{figure}[htbp]
  \begin{subfigure}[t]{0.29\textwidth}
      \centering
      \includegraphics[width=\textwidth]{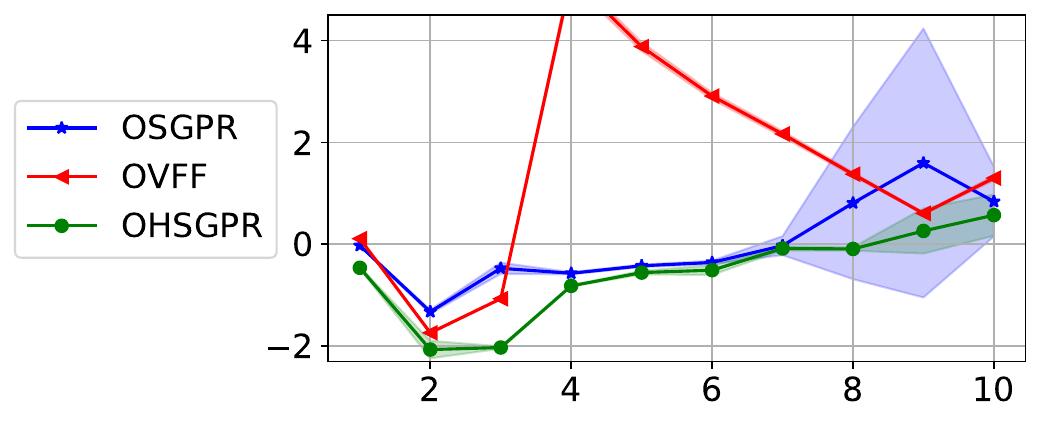}
      \caption{Solar, M=50}
  \end{subfigure}
  \hfill
  \begin{subfigure}[t]{0.224\textwidth}
      \centering
      \includegraphics[width=\textwidth]{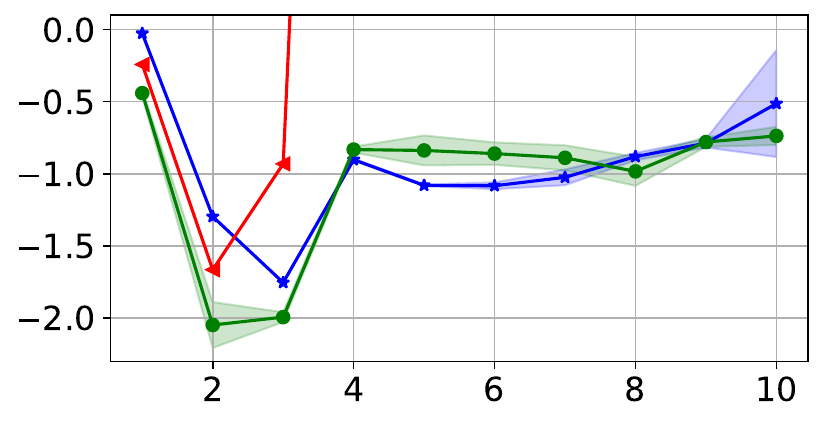}
      \caption{Solar, M=150}
  \end{subfigure}
  \hfill
  \begin{subfigure}[t]{0.224\textwidth}
      \centering
      \includegraphics[width=\textwidth]{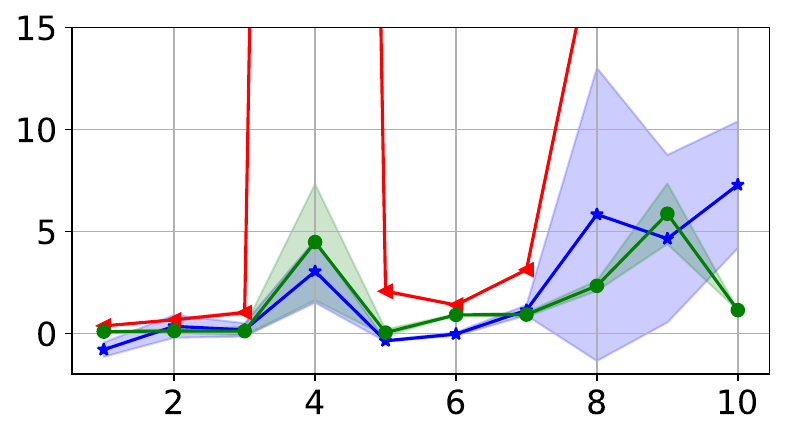}
    \caption{Audio, M=100}
  \end{subfigure}
  \begin{subfigure}[t]{0.224\textwidth}
      \centering
      \includegraphics[width=\textwidth]{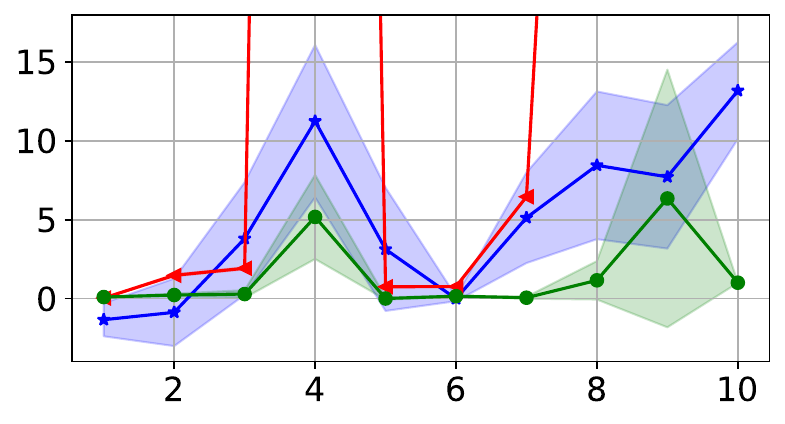}
    \caption{Audio, M=200}
  \end{subfigure}
  \caption{Test set NLPD over the learned tasks vs. number of learned tasks for Solar Irradiance and Audio signal prediction dataset (keep updating kernel hyperparameters).}
   \label{fig:time_series_nlpd_traink}
\end{figure}
Figure~\ref{fig:time_series_rmse_traink} and~\ref{fig:time_series_nlpd_traink} show RMSE and NLPD results for time series regression experiments based on trainable kernel hyperparameters (i.e., keep optimizing kernel hyperparameters online in all the tasks). Notice that OVC is only compatible with fixed kernel \citep{maddox_conditioning_2021}, so we don't consider it here. Compared with the results based on fixed kernel in the main text, here all methods show less stable performance. Previous works either find a well-performed fixed kernel \citep{maddox_conditioning_2021} or consider generalized VI by scaling the KL terms in the online ELBO objective with a positive factor requiring careful tuning to mitigate the unstable online optimization of kernel hyperparameters \citep{stanton_kernel_2021, kapoor2021variational}.

\subsection{Comparison of Basis–measure Variants}
\label{appendix:basis_measure_variant}

Figure~\ref{fig:hippo_variants} shows the results of OHSGPR applied to a toy time-series regression dataset, where the data is split chronologically into three equal segments, used as Tasks 1–3 in an online learning setup. The figure compares the effect of several variants of the HiPPO operators \citep{gu_hippo_2020, gu_httyh_2023} when used for OHSGPR. Subfigures (a–c) correspond to HiPPO-LegS as used in all of our main experiments. Subfigures (d–f) apply HiPPO-LegT, (g–i) apply HiPPO-LagT, based on the Laguerre polynomial basis, and (j–l) apply HiPPO-FouT, based on Fourier basis functions. The detailed formulation of these HiPPO variants can be found in Section~\ref{sec:hippo} and Appendix~\ref{appendix:hippo_measure_basis}. While OHSGPR-LegS successfully memorizes all the past tasks, OHSGPR-LegT, OHSGPR-LagT and OHSGPR-FouT all demonstrate catastrophic forgetting to certain degree since instead of using the uniform measure over the past (as is used in HiPPO-LegS), they are based on measures which place more mass over the recent history. LegT and FouT use a fixed-length sliding window measure, while LagT uses exponentially decaying measure, which assigns more importance to recent history.

\begin{figure}[htbp]
  \begin{subfigure}[t]{0.32\textwidth}
      \centering
      \includegraphics[width=\textwidth]{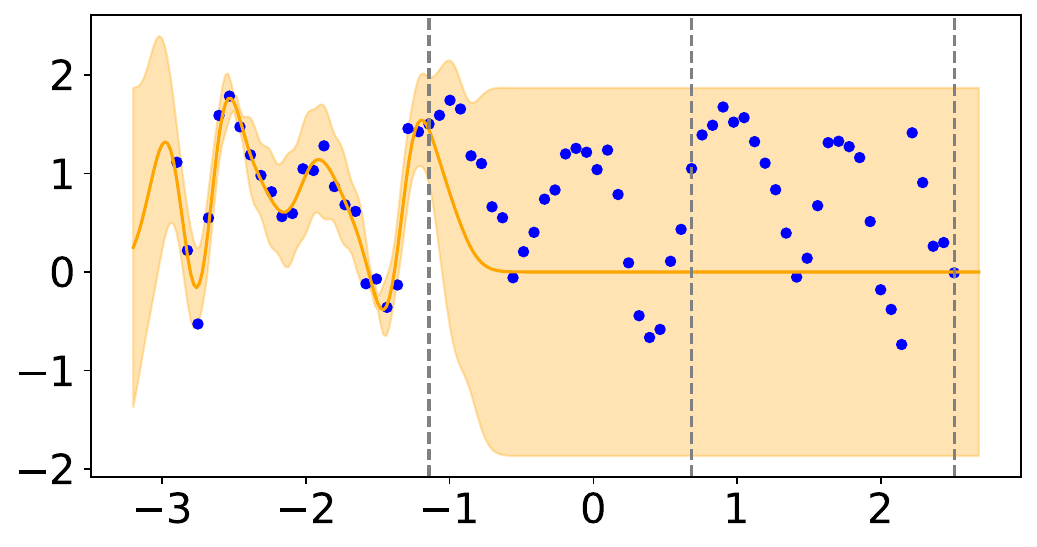}
      \caption{OHSGPR-LegS, after Task~1}
  \end{subfigure}
  \hfill
  \begin{subfigure}[t]{0.32\textwidth}
      \centering
      \includegraphics[width=\textwidth]{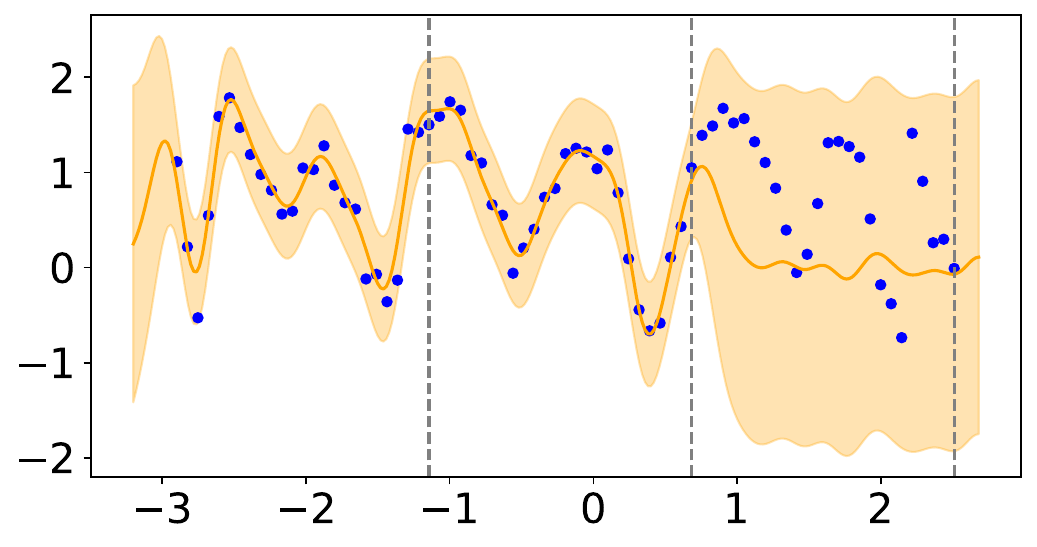}
      \caption{OHSGPR-LegS, after Task~2}
  \end{subfigure}
  \hfill
  \begin{subfigure}[t]{0.32\textwidth}
      \centering
      \includegraphics[width=\textwidth]{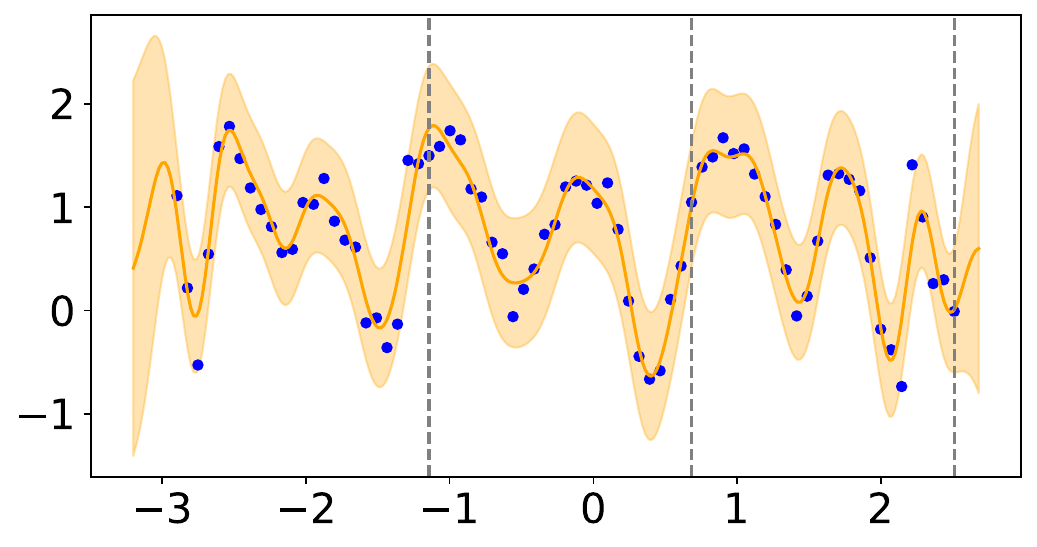}
      \caption{OHSGPR-LegS, after Task~3}
  \end{subfigure}

  \begin{subfigure}[t]{0.32\textwidth}
      \centering
      \includegraphics[width=\textwidth]{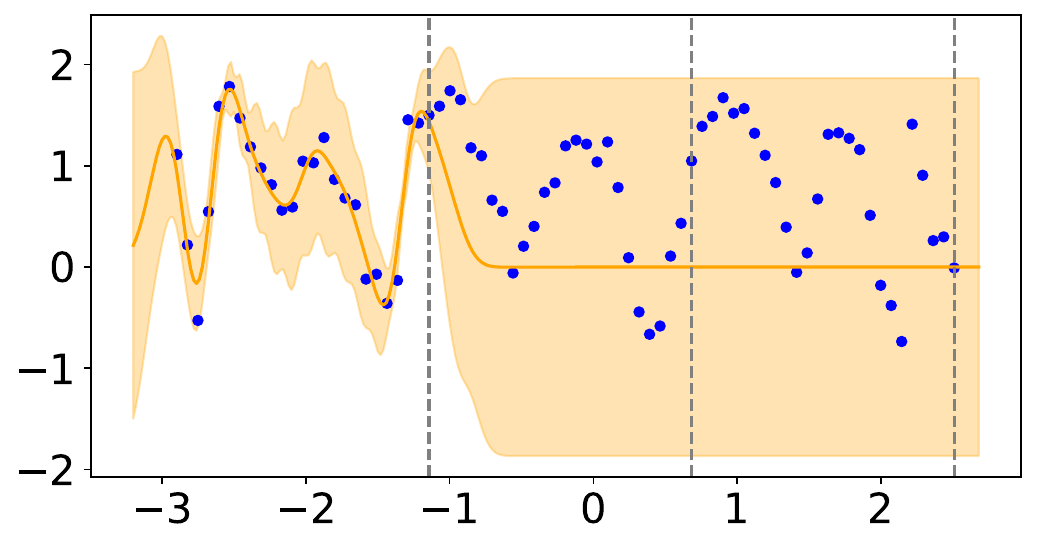}
      \caption{OHSGPR-LegT, after Task~1}
  \end{subfigure}
  \hfill
  \begin{subfigure}[t]{0.32\textwidth}
      \centering
      \includegraphics[width=\textwidth]{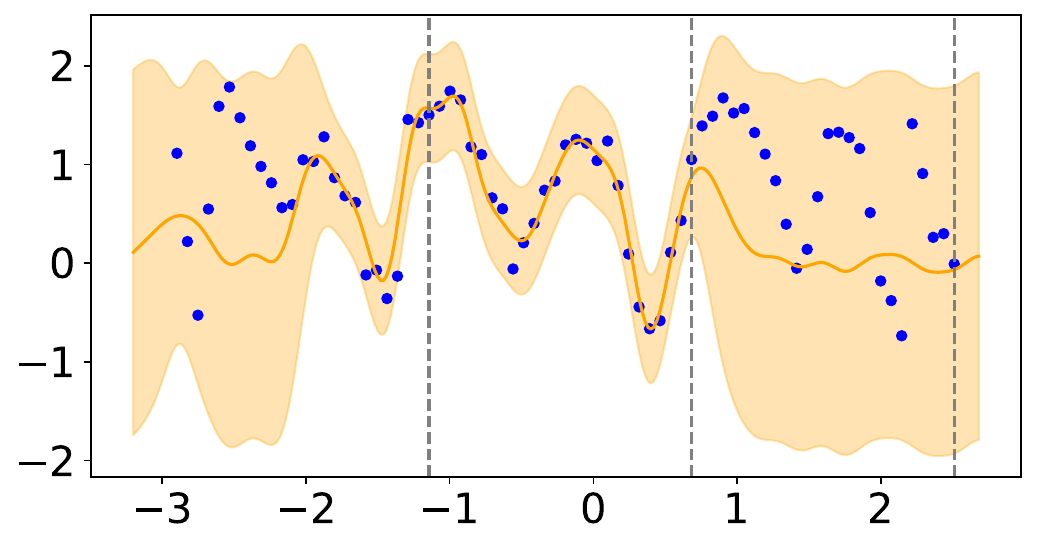}
      \caption{OHSGPR-LegT, after Task~2}
  \end{subfigure}
  \hfill
  \begin{subfigure}[t]{0.32\textwidth}
      \centering
      \includegraphics[width=\textwidth]{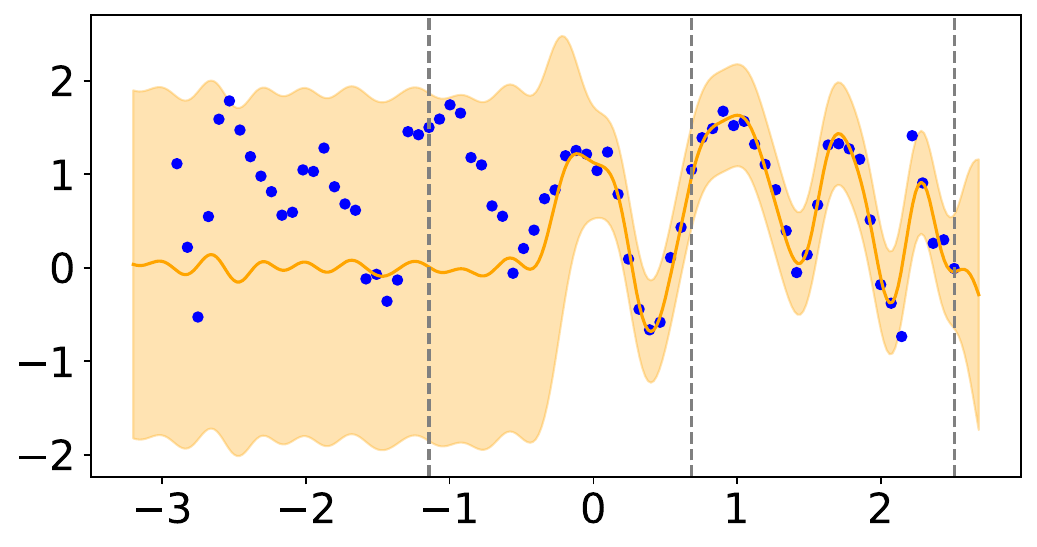}
      \caption{OHSGPR-LegT, after Task~3}
  \end{subfigure}

  \begin{subfigure}[t]{0.32\textwidth}
      \centering
      \includegraphics[width=\textwidth]{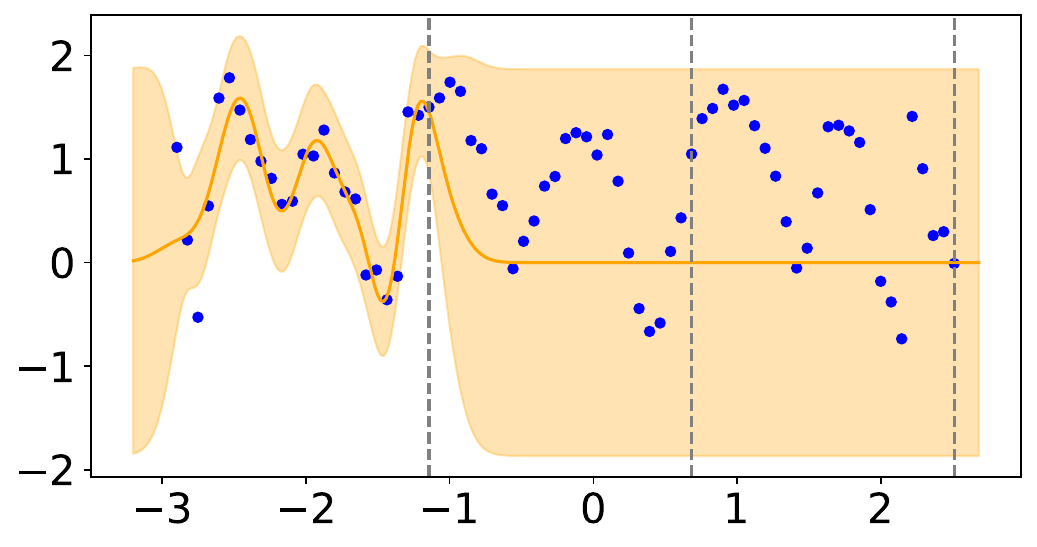}
      \caption{OHSGPR-LagT, after Task~1}
  \end{subfigure}
  \hfill
  \begin{subfigure}[t]{0.32\textwidth}
      \centering
      \includegraphics[width=\textwidth]{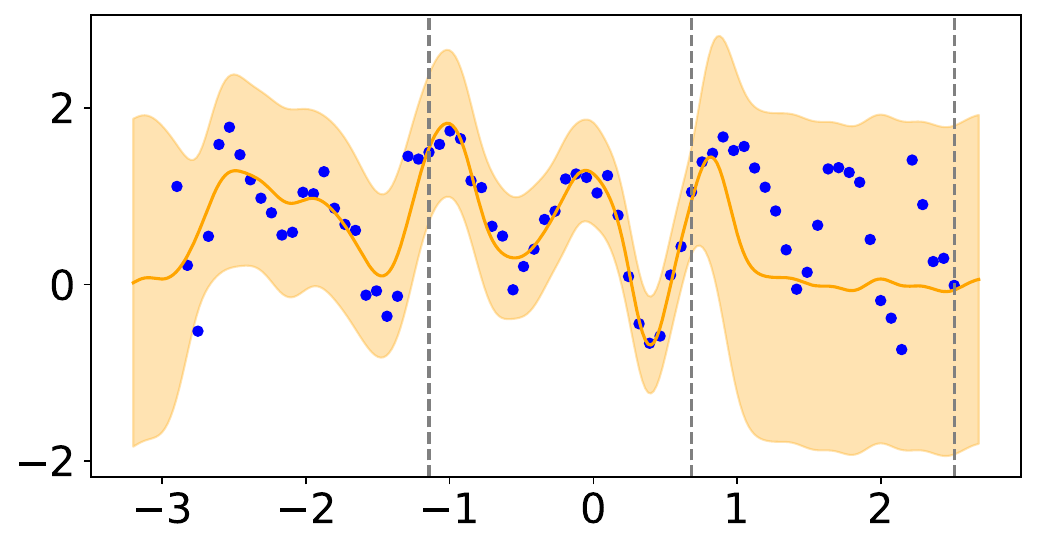}
      \caption{OHSGPR-LagT, after Task~2}
  \end{subfigure}
  \hfill
  \begin{subfigure}[t]{0.32\textwidth}
      \centering
      \includegraphics[width=\textwidth]{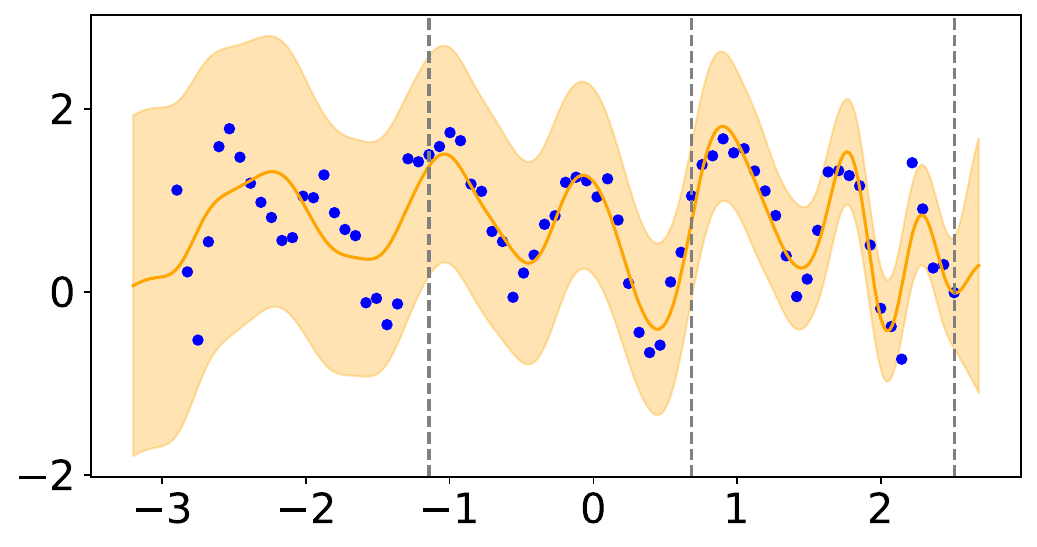}
      \caption{OHSGPR-LagT, after Task~3}
  \end{subfigure}
 
  \begin{subfigure}[t]{0.32\textwidth}
      \centering
      \includegraphics[width=\textwidth]{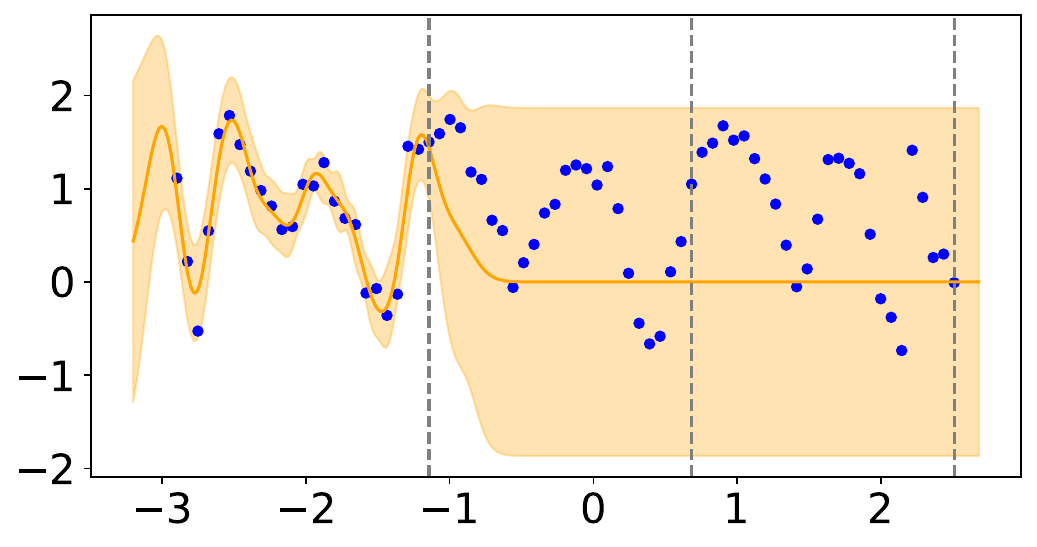}
      \caption{OHSGPR-FouT, after Task 1}
  \end{subfigure}
  \hfill
  \begin{subfigure}[t]{0.32\textwidth}
      \centering
      \includegraphics[width=\textwidth]{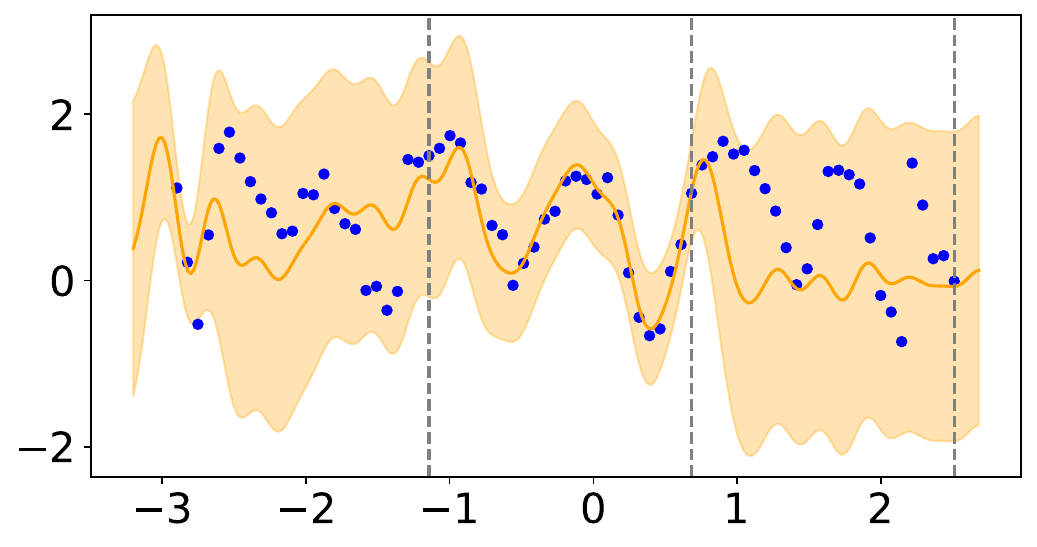}
      \caption{OHSGPR-FouT, after Task 2}
  \end{subfigure}
  \hfill
  \begin{subfigure}[t]{0.32\textwidth}
      \centering
      \includegraphics[width=\textwidth]{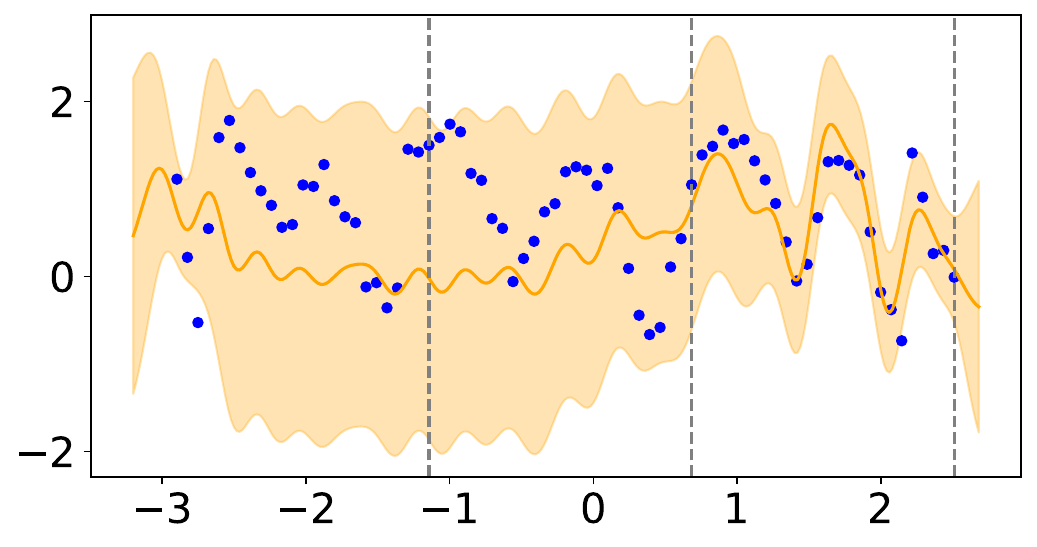}
      \caption{OHSGPR-FouT, after Task 3}
  \end{subfigure}
 
  \caption{Comparison of OHSGPR based on different HiPPO variants on a toy online regression dataset.}
  \label{fig:hippo_variants}
\end{figure}
\chapter{Supplementary Material for Chapter~\ref{cha:pseudovid}}

\section{Proof of Theorem~\ref{theorem}}\label{appendix:proof}
\begin{proof}
To derive $\hat{\z}_T^{(g)*}(\z_{s_1}, \cdot\cdot\cdot, \z_{s_k})=\argmin_{\hat{\z}_T^{(g)}} \mathbb{E}_{p(\z_T, \z_{s_1}, \cdot\cdot\cdot, \z_{s_k})}[||\z_T-\hat{\z}_T^{(g)}||^2_2]$, we first compute the gradient, 
\begin{equation}
\begin{split}
\nabla_{\hat{\z}_T^{(g)}}\mathbb{E}_{p(\z_T, \z_{s_1}, \cdot\cdot\cdot, \z_{s_k})}[||\z_T-\hat{\z}_T^{(g)}||^2_2]&=\mathbb{E}_{p(\z_{s_1}, \cdot\cdot\cdot, \z_{s_k})}\{\nabla_{\hat{\z}_T^{(g)}} \mathbb{E}_{p(\z_T| \z_{s_1}, \cdot\cdot\cdot, \z_{s_k})}[||\z_T-\hat{\z}_T^{(g)}||^2_2]\}\\
&=2\mathbb{E}_{p(\z_{s_1}, \cdot\cdot\cdot, \z_{s_k})}\{\mathbb{E}_{p(\z_T|\z_{s_1}, \cdot\cdot\cdot, \z_{s_k})}[\z_T -\hat{\z}_T^{(g)}]\}.\\
\end{split}
\end{equation}
Setting the above gradient to 0 gives us
\begin{equation}
\begin{split}
\mathbb{E}_{p(\z_T|\z_{s_1}, \cdot\cdot\cdot, \z_{s_k})}[\z_T -\hat{\z}_T^{(g)}] = 0 \implies \hat{\z}_T^{(g)*}(\z_{s_1}, \cdot\cdot\cdot, \z_{s_k}) = \mathbb{E}_{p(\z_T|\z_{s_1}, \cdot\cdot\cdot, \z_{s_k})}[\z_T].
\end{split}
\end{equation}
The minimum reconstruction error is obtained by plugging  $\hat{\z}_T^{(g)*}(\z_{s_1}, \cdot\cdot\cdot, \z_{s_k})$ in $\mathbb{E}_{p(\z_T, \z_{s_1}, \cdot\cdot\cdot, \z_{s_k})}[||\z_T-\hat{\z}_T^{(g)}||^2_2]$,
\begin{equation}
\begin{split}
    \min_{\hat{\z}_T^{(g)}} \mathbb{E}_{p(\z_T, \z_{s_1}, \cdot\cdot\cdot, \z_{s_k})}[||\z_T-\hat{\z}_T^{(g)}||^2_2]&=\mathbb{E}_{p(\z_T, \z_{s_1}, \cdot\cdot\cdot, \z_{s_k})}[||\z_T-\mathbb{E}_{p(\z_T|\z_{s_1}, \cdot\cdot\cdot, \z_{s_k})}[\z_T]||^2_2]\\
    &= \mathbb{E}_{p(\z_{s_1}, \cdot\cdot\cdot, \z_{s_k})}[\text{Var}_{p(\z_T| \z_{s_1}, \cdot\cdot\cdot, \z_{s_k})}(\z_T)].
\end{split}
\end{equation}

Similarly, 
\begin{equation}
    \begin{split}
         \min_{\hat{\z}_T^{(h)}} \mathbb{E}_{p(\z_T, \z_{s_1}, \cdot\cdot\cdot, \z_{s_l})}[||\z_T-\hat{\z}_T^{(h)}||^2_2] &= \mathbb{E}_{p(\z_{s_1}, \cdot\cdot\cdot, \z_{s_l})}[\text{Var}_{p(\z_T| \z_{s_1}, \cdot\cdot\cdot, \z_{s_l})}(\z_T)],\\
         \hat{\z}_T^{(h)*}(\z_{s_1}, \cdot\cdot\cdot, \z_{s_l}) &= \mathbb{E}_{p(\z_T|\z_{s_1}, \cdot\cdot\cdot, \z_{s_l})}[\z_T],
    \end{split}
\end{equation}

We now show that the reconstruction error can never increase with more previous frames as inputs by observing that with $T>s_1>\cdot\cdot\cdot>s_k>\cdot\cdot\cdot>s_l$,
\begin{equation}
    \begin{split}
        \min_{\hat{\z}_T^{(h)}} &\mathbb{E}_{p(\z_T, \z_{s_1}, \cdot\cdot\cdot, \z_{s_l})}[||\z_T-\hat{\z}_T^{(h)}||^2_2] = \mathbb{E}_{p(\z_{s_1}, \cdot\cdot\cdot, \z_{s_l})}[\text{Var}_{p(\z_T| \z_{s_1}, \cdot\cdot\cdot, \z_{s_l})}(\z_T)]\\
        &\leq 
        \mathbb{E}_{p(\z_{s_1}, \cdot\cdot\cdot, \z_{s_k})}[\text{Var}_{p(\z_T| \z_{s_1}, \cdot\cdot\cdot, \z_{s_k})}(\z_T)]=\min_{\hat{\z}_T^{(g)}} \mathbb{E}_{p(\z_T, \z_{s_1}, \cdot\cdot\cdot, \z_{s_k})}[||\z_T-\hat{\z}_T^{(g)}||^2_2]. 
    \end{split}
\end{equation}
Indeed,
\begin{equation}
    \begin{split}
        &\quad \mathbb{E}_{p(\z_{s_1}, \cdot\cdot\cdot, \z_{s_l})}[\text{Var}_{p(\z_T| \z_{s_1}, \cdot\cdot\cdot, \z_{s_l})}(\z_T)] - 
        \mathbb{E}_{p(\z_{s_1}, \cdot\cdot\cdot, \z_{s_k})}[\text{Var}_{p(\z_T| \z_{s_1}, \cdot\cdot\cdot, \z_{s_k})}(\z_T)]\\
        &= \mathbb{E}_{p(\z_{s_1}, \cdot\cdot\cdot, \z_{s_k})}\{\mathbb{E}_{p(\z_{s_{k+1}}, \cdot\cdot\cdot, \z_{s_l} |\z_{s_1}, \cdot\cdot\cdot, \z_{s_k})}[\text{Var}_{p(\z_T| \z_{s_1}, \cdot\cdot\cdot, \z_{s_l} )}(\z_T)] - \text{Var}_{p(\z_T| \z_{s_1}, \cdot\cdot\cdot, \z_{s_k})} (\z_T)\}\\
        &=  -\mathbb{E}_{p(\z_{s_1}, \cdot\cdot\cdot, \z_{s_k})}\{\text{Var}_{p(\z_{s_{k+1}}\cdot\cdot\cdot,\z_{s_l}|\z_{s_1}, \cdot\cdot\cdot, \z_{s_k})}(\mathbb{E}_{p(\z_T|\z_{s_1}, \cdot\cdot\cdot, \z_{s_l})}[\z_T])\} \quad\quad \text{(Law of total variance)}\\
        &\leq 0.
    \end{split}
\end{equation}

Notice that if $\z_T|\z_{s_1},\cdot\cdot\cdot, \z_{s_k} \,{\buildrel d \over =}\,  \z_T|\z_{s_1},\cdot\cdot\cdot, \z_{s_l}$, then the above difference will become 0:
\begin{equation}
    \begin{split}
        &\quad \text{Var}_{p(\z_{s_{k+1}}\cdot\cdot\cdot,\z_{s_l}|\z_{s_1}, \cdot\cdot\cdot, \z_{s_k})}(\mathbb{E}_{p(\z_T|\z_{s_1}, \cdot\cdot\cdot, \z_{s_l})}[\z_T]) \\
    &=\text{Var}_{p(\z_{s_{k+1}}\cdot\cdot\cdot,\z_{s_l}|\z_{s_1}, \cdot\cdot\cdot, \z_{s_k})}(\mathbb{E}_{p(\z_T|\z_{s_1}, \cdot\cdot\cdot, \z_{s_k})}[\z_T]) \\
        &=0,
    \end{split}
\end{equation}
since $\mathbb{E}_{p(\z_T|\z_{s_1}, \cdot\cdot\cdot, \z_{s_k})}[\z_T]$ is a function of $\z_{s_1}, \cdot\cdot\cdot, \z_{s_k}$ only.

Therefore, for strict inequality, it is necessary to avoid $\ConditionallyIndependent{\z_T} {\z_{s_{k+1}}, \cdot\cdot\cdot, \z_{s_l}}  {\z_{s_1}, \cdot\cdot\cdot, \z_{s_k}}$, which includes first-order Markov chain ($\z_T \rightarrow \cdot\cdot\cdot \z_{s_k} \rightarrow \z_{s_l}$) as a special case.

\end{proof}

\section{Hyperparameters}
\label{appendix:hyper_pseudo_vid}
During our experiments, we first choose the largest number of frames to be the largest one that we can train on our GPU (an NVIDIA RTX A6000) and they are 18 and 8 for CViViT and Video Diffusion, respectively. Then we consider including the results with the number of frames approximately being half of the maximal one (8 and 4 for CViViT and Video Diffusion, respectively). The hyperparameters for our experiments are shown in the tables below.

\begin{table}[htbp]
    \centering
    \caption{Hyperparamters used for C-ViViT architecture and optimizer.}
    \begin{tabular}{c c c c}
    \hline
         & 1-frame & 8-frame & 18-frame \\
    \hline
      Number of spatial layers & 8 & 4 & 4  \\
      Number of temporal layers & - & 4 & 4 \\
      Embedding dimension & 512 & 512 & 512\\
      Hidden dimension & 512 & 512 & 512\\
      Number of heads & 8 & 8 & 8\\ 
      Learning rate & 1e-4 & 1e-4 & 1e-4 \\
      Learning rate scheduler & Cosine decay& Cosine decay& Cosine decay\\
      Number of training steps & 100k&  100k& 100k \\
      Batch size & 64 & 64 & 64\\
    \hline
    \end{tabular}
\end{table}

\begin{table}[htbp]
    \centering
    \caption{Hyperparamters used for VideoGPT architecture and optimizer.}
    \begin{tabular}{c c c c}
    \hline
         & 1-frame & 8-frame & 18-frame \\
    \hline
      Number of layers & 8 & 8 & 8  \\
      Embedding dimension & 144 & 144 & 144\\
      Hidden dimension & 144 & 144 & 144\\
      Number of heads & 4 & 4 & 4\\ 
      Learning rate & 1e-4 & 1e-4 & 1e-4 \\
      Learning rate scheduler & Cosine decay& Cosine decay& Cosine decay\\
      Number of training steps & 100k&  100k& 100k \\
      Batch size & 64 & 64 & 64\\
    \hline
    \end{tabular}
\end{table}

\begin{table}[htbp]
    \centering
    \caption{Hyperparamters used for Phenaki architecture and optimizer.}
    \begin{tabular}{c c c c}
    \hline
         & 1-frame & 8-frame & 18-frame \\
    \hline
      Number of layers & 6 & 6 & 6  \\
      Embedding dimension & 512 & 512 & 512\\
      Hidden dimension & 512 & 512 & 512\\
      Number of heads & 8 & 8& 8\\ 
      Learning rate & 1e-4 & 1e-4 & 1e-4 \\
      Learning rate scheduler & Cosine decay& Cosine decay& Cosine decay\\
      Number of training steps & 200k&  200k& 200k \\
      Batch size & 64 & 64 & 64\\
    \hline
    \end{tabular}
\end{table}

\begin{table}[htbp]
    \centering
    \caption{Hyperparamters used for UNet architecture and optimizer for Video diffusion.}
    \begin{tabular}{c c c c}
    \hline
         & 1-frame & 4-frame & 8-frame \\
    \hline
      Number of downsampling/upsampling layers & 4 & 4 & 4\\
      Number of residual blocks & 2 & 2 & 2  \\
      Base channel size & 128 & 128 & 128\\
      Number of diffusion steps per generating a frame & 1000 & 1000 & 1000\\
      Learning rate & 1e-4 & 1e-4 & 1e-4 \\
      Number of training steps & 500k&  500k& 500k \\
      Batch size & 32 & 32 & 32\\
    \hline
    \end{tabular}
\end{table}
\end{document}